\newcolumntype{.}{D{.}{.}{-1}}
\newcolumntype{d}[1]{D{.}{.}{#1}}
\theoremstyle{plain}
\newtheorem{theorem}{Theorem}
\newtheorem{proposition}{Proposition}
\newtheorem{assumption}{Assumption}
\newtheorem{lemma}{Lemma}
\newtheorem{remark}{Remark}
\newcommand{\qed}{\hfill \ensuremath{\Box}}
\newcommand{\indep}{\mbox{$\perp\!\!\!\perp$}}
\providecommand{\norm}[1]{\lVert#1\rVert}
\newenvironment{proof}{\vspace{1ex}\noindent{\bf Proof}\hspace{0.5em}}
{\hfill\qed\vspace{1ex}}
\newtheorem{definition}{Definition}
\newcommand\E{\mathbb{E}}
\newcommand\R{\mathbb{R}}
\renewcommand\P{\mathbb{P}}
\newcommand\cP{\mathcal{P}}
\newcommand\V{\mathbb{V}}
\newcommand\bR{\bm{R}}
\newcommand\bW{\bm{W}}
\newcommand\bz{\bm{z}}
\newcommand\bZ{\bm{Z}}
\newcommand\bY{\bm{Y}}
\newcommand\boldf{\bm{f}}
\newcommand\cT{\mathcal{T}}
\newcommand\cL{\mathcal{L}}
\newcommand\cY{\mathcal{Y}}
\newcommand\cZ{\mathcal{Z}}
\tikzset{auto,node distance =1 cm and 1 cm,semithick,
	state/.style ={circle, draw, minimum width = 0.7 cm},
	point/.style = {circle, draw, inner sep=0.04cm,fill,node contents={}},
	bidirected/.style={Latex-Latex,dashed},
	el/.style = {inner sep=2pt, align=left, sloped}
}
\newcommand{\tp}{{\scriptscriptstyle\mathsf{T}}}
\begin{document}

\title{\bf Surrogate Representation Inference for Text and Image Annotations\thanks{I thank  Naoki Egami, Gary King, Naijia Liu, Luke Miratrix, Jacob Montgomery, Soichiro Yamauchi, the members of Imai research group, and an anonymous reviewer of Harvard’s Alexander and Diviya Magaro Peer Pre-Review Program for helpful comments. I especially thank Kosuke Imai for his continuous encouragement and detailed comments.  }}  
\author{ Kentaro Nakamura\thanks{Ph.D. student, John F. Kennedy School of Government, Harvard University. Email: \href{mailto:knakamura@g.harvard.edu}{knakamura@g.harvard.edu }; URL: \href{https://k-nakam.github.io/}{https://k-nakam.github.io/} }}
\date{
Last Updated: \today%\\
%\color{red} [Preliminary draft. Please do not circulate without permission.]
}
\maketitle\thispagestyle{empty}
%\tableofcontents\thispagestyle{empty}

\begin{abstract}
\noindent As researchers increasingly rely on machine learning models and LLMs to annotate unstructured data, such as texts or images, various approaches have been proposed to correct bias in downstream statistical analysis.  However, existing methods tend to yield large standard errors and require some error-free human annotation. In this paper, I introduce \emph{Surrogate Representation Inference} (SRI), which assumes that unstructured data fully mediate the relationship between human annotations and structured variables.
The assumption is guaranteed by design provided that human coders rely only on unstructured data for annotation.  Under this setting, I propose a neural network architecture that learns a low-dimensional representation of unstructured data such that the surrogate assumption remains to be satisfied. When multiple human annotations are available, SRI can be extended to further correct non-differential measurement errors that may exist in human annotations. Focusing on text-as-outcome settings, I formally establish the identification conditions and semiparametric efficient estimation strategies that enable learning and leveraging such a low-dimensional representation. Simulation studies and a real-world application demonstrate that SRI reduces standard errors by over 50\% when machine learning classification accuracy is moderate and provides valid inference even when human annotations contain non-differential measurement errors.
\end{abstract}

\noindent {\bf Keywords:} Text-as-Data, Representation Learning, Semiparametric Statistics, Deep Learning, Double Negative Control

\newpage 
\vspace{1cm}

\section{Introduction}
With a wide availability of texts and images, the use of these unstructured data for statistical inference has become increasingly common in many fields, including political science (e.g. \citealt{grimmer2022text, stukal_why_2022, wratil_government_2023}), economics (e.g. \citealt{gentzkow2019measuring, hassan2019firm, ash2023are}), and sociology (e.g. \citealt{kozlowski2019geometry, macanovic2022text}). While researchers use machine learning and large language models (LLMs) to annotate such data, their predictions contain errors that invalidate the downstream statistical inference.
As a result, various approaches have been proposed to correct the bias in the downstream analysis by combining the human annotations and machine learning or LLM predictions (e.g. \citealt{fong_machine_2021, angelopoulos2023prediction, egami_neulips_2023, egami2024using, zrnic_cross-prediction-powered_2024, carlson_unifying_2025}). However, the existing methodology tends to yield large standard errors, especially when the quality of machine learning predictions is moderate.

In this paper, I introduce a new methodological framework, \emph{Surrogate Representation Inference} (SRI), which improves efficiency and yields statistically valid estimates. SRI improves efficiency based on the assumption that unstructured data work as a surrogate variable, fully mediating the relations between human annotations and structured variables of interest, such as predictors. This assumption is guaranteed by design if human coders rely only on texts and images for the annotation task and do not directly look at other relevant structured variables. Since unstructured data are high-dimensional, I propose using low-dimensional representations that still satisfy the surrogacy assumption. I refer to these as \emph{surrogate representations}. Because surrogate representation can leverage surrogate structure, the proposed SRI methodology can improve efficiency.

To estimate surrogate representation and perform subsequent statistical inference, I introduce a neural network architecture that jointly predicts human annotations and structured variables. 
%{\bf \color{red} Shouldn't you be emphasizing the fact that this NN learns a low-dimensional representation for surrogate?  Isn't it the main contribution you are making? Basically, what I'm suggesting above is to change the order. Currently, you say this is a surrogate at the end.  I'm suggesting that you should say it at the beginning of this paragraph and explain how this is done}. 
As a concrete example, I demonstrate the proposed SRI methodology in the setting where the text-based variable, such as sentiments or topics, serves as an outcome.
I show that the covariate-adjusted outcome mean (i.e., the average value of the text-based variable for each level of the predictors after adjusting for the set of control variables)
is nonparametrically identified using surrogate representation. This surrogate representation can be estimated with the proposed neural network architecture. Drawing on the literature of semiparametric statistics (e.g. \citealt{tsiatis_semiparametric_2006, chernozhukov_doubledebiased_2018, kennedy_semiparametric_2023}), I derive the semiparametric efficient estimation strategies tailored to the proposed neural network architectures and formalize the conditions under which the SRI estimator is valid and efficient.

The SRI framework can be further extended to adjust for measurement errors in human annotations when multiple human annotations are available. Specifically, I show that when the measurement error in human annotations is \emph{non-differential} (i.e., independent across coders given the true label and unrelated to the textual content except through the true label), the quantity of interest can be nonparametrically identified and estimated from multiple annotations using the literature of double negative control (e.g., \citealt{kuroki_measurement_2014, tchetgen_introduction_2020, zhou2024causal}). Since perfect human annotations are not required, once researchers can create a solid coding rule, they can then leverage existing crowd-sourcing platforms, such as Amazon Mechanical Turk and CrowdFlower, to generate human-annotated labels with relative ease.

I finally apply the proposed SRI methodology to the existing research on the framing of immigrants in the U.S. Congressional speeches  \citep{card_computational_2022}. I compare the proposed SRI estimator with the existing bias correction algorithm (\citealt{angelopoulos2023prediction, egami_neulips_2023,egami2024using}) that leverages both machine learning predictions and human annotations. I find that the direct use of machine learning predictions inflates the partisan difference in the framing of immigrants. Both SRI and the existing bias-correction approach yield consistent point estimates, but because tone classification is challenging, SRI significantly reduces the standard errors.

In Appendix \ref{sec::simulation}, I also validate the performance of the proposed estimator through simulation studies, in which I systematically vary the quality of the machine learning predictions. The findings reveal that SRI outperforms the existing approach across all settings except the case that the machine learning predictions are already almost perfect (above 90\% classification accuracy), with the efficiency gains becoming particularly pronounced when classification accuracy for proxy is moderate (e.g., below 85\%). I further find that when human annotations contain classification errors, the existing bias-correction approach, which relies on the assumption of perfect human annotations, suffers from substantial bias and improper confidence interval coverage, but the proposed SRI estimator with multiple human labels gives valid estimates with the smaller bias and root mean square error (RMSE) compared to the existing bias-correction methodology.

The remainder of this paper is organized as follows. Section \ref{sec::example} presents a motivating example drawn from the actual published research. Section \ref{sec::method} formalizes the proposed methodology with error-free human annotations. Section \ref{sec:proximal} then relaxes this assumption and deals with more realistic setting in which humans annotations contain non-differential measurement errors. 
Section \ref{sec::application} then applies the proposed method to the existing study introduced in the motivating example. Finally, Section \ref{sec::conclusion} concludes by summarizing the advantages and limitations of SRI and outlining directions for future research.\\

\noindent \textbf{Related Literature.}
There is a growing body of literature that aims to conduct statistical inference with a limited amount of ground-truth labels by leveraging both labeled data and machine learning predictions (e.g., \citealt{ angelopoulos2023prediction, egami_neulips_2023, egami2024using, zrnic_cross-prediction-powered_2024, carlson_unifying_2025}). A key strength of this literature is that most of them do not assume any structure of machine learning prediction errors. While this enables researchers to apply the bias-correction framework in any setting, the resulting standard error can be excessively large, especially when the machine learning quality is moderate. In addition, most of the proposed literature assumes that human annotations are error-free. The only exception in the literature is \cite{egami2024using}, which addressed this issue using multiple coders and a quasi-Bayesian approach to model uncertainty in annotations, in a manner akin to multiple imputation. However, that method does not offer formal guarantees of consistent estimation for the target parameter of interest.

%There is a growing body of literature that aims to conduct statistical inference with a limited amount of ground-truth labels by leveraging both labeled data and machine learning predictions (e.g., \citealt{fong_machine_2021, angelopoulos2023prediction, egami_neulips_2023, egami2024using, zrnic_cross-prediction-powered_2024, carlson_unifying_2025}).
%Unlike the existing approaches, the proposed SRI framework leverages the assumption that human coders only rely on the text they are annotating, but this assumption can be guaranteed by design and can be used to improve the efficiency.
%The SRI framework is also different from the existing literature in that it allows human annotations to contain non-differential measurement error. 
%The existing literature, implicitly or explicitly, assumes that human annotations are error-free.
%The only exception in the literature is \cite{egami2024using}, which addressed this issue using multiple coders and a quasi-Bayesian approach to model uncertainty in annotations, in a manner akin to multiple imputation. However, that method does not offer formal guarantees of consistent estimation for the target parameter of interest.
%I provide the methodology with formal consistency guarantee under the assumption that measurement errors are non-differential, and empirically demonstrate that SRI substantially decreases the bias using only the error-prone human annotations. In addition, I introduce a diagnostic tool that tests the non-differential error assumption using only a minimal number of researcher-generated labels.

This paper is also related to the literature on surrogate outcomes.
While recent studies develop methods using multiple surrogate outcomes (e.g., \citealt{price_estimation_2018, athey_surrogate_2019, wang_model-free_2020}), they primarily focus on low-dimensional, structured surrogate variables.
These methods do not address settings in which the surrogate variable is high-dimensional and unstructured, such as texts or images, nor do they study how dimensionality reduction can be carried out while preserving the surrogate structure needed for inference.

%The proposed SRI framework also differs from the existing literature on surrogate outcomes in that it considers surrogate variables that are high-dimensional and unstructured. While some recent studies have introduced methods using multiple surrogate outcomes (e.g., \citealt{price_estimation_2018, athey_surrogate_2019, wang_model-free_2020}), their focus remains on low-dimensional, structured surrogate variables. Because SRI uses high-dimensional, unstructured data as surrogates, I propose learning low-dimensional representations that continue to function as surrogates to improve efficiency.

Another relevant literature is double negative control and proximal causal inference (e.g., \citealt{hu_instrumental_2008, kuroki_measurement_2014, miao_identifying_2018, tchetgen_introduction_2020}).
These methods provide identification strategies when key variables are not directly observed, and have been extended to a variety of settings, including synthetic controls \citep{qiu_doubly_2024}, hidden treatments \citep{zhou2024causal}, and networks \citep{egami_identification_2023}.
However, identification and semiparametric estimation strategies for the case of \emph{hidden outcomes} have not been formally derived in the existing literature, to the best of my knowledge.

%This paper contributes to the literature of double negative control and proximal causal inference (e.g., \citealt{hu_instrumental_2008, kuroki_measurement_2014, miao_identifying_2018, tchetgen_introduction_2020}). Double negative control is a widely used technique when control variables are not directly observed, and similar techniques are extended to a variety of settings, including synthetic controls \citep{qiu_doubly_2024}, hidden treatments \citep{zhou2024causal}, and networks \citep{egami_identification_2023}.
%This paper contributes to the literature by formally deriving the identification and semiparametric estimation strategies in the case of hidden outcomes, which has not been done in the literature to the best of my knowledge.

Finally, this paper relates to representation learning for statistical inference with unstructured data.
A recurring theme in this literature is the need to reduce the dimensionality of texts or images while preserving the information that is relevant for downstream tasks (e.g., \citealt{veitch_adapting_2020, pryzant_causal_2021, wang_desiderata_2022, gui_causal_2023, imai_causal_2024, imai2025genai}).
These methods emphasize that unstructured data are often too high-dimensional to use directly, and therefore require learned representations to extract only the relevant information, such as confounding variable.\\

%Finally, this paper has broader implications for the literature on representation learning. Because texts are high-dimensional, the proposed SRI framework employs low-dimensional representations to reduce dimensionality while preserving the surrogate structure. Similar techniques have been used in the literature of statistical inference for unstructured data (e.g., \citealt{veitch_adapting_2020, pryzant_causal_2021, wang_desiderata_2022, gui_causal_2023, imai_causal_2024, imai2025genai}), since such data are often high-dimensional and thus require dimensionality reduction to extract only the relevant information, such as confounding variables.

\noindent \textbf{Contribution of this paper.} This paper addresses the above gaps in the following three ways. First, I introduce Surrogate Representation Inference, a framework that leverages an assumption that can be guaranteed by design: human coders rely only on the text they are annotating and do not directly observe the structured variables of interest. Throughout the paper, I show that this assumption can be exploited to improve efficiency in downstream statistical inference.

Second, the SRI framework allows human annotations to contain non-differential measurement error, which is ruled out in almost all existing approaches. I provide identification results and a semiparametric estimation strategy with a formal consistency guarantee for the target parameter. I also empirically demonstrate that SRI substantially reduces bias when only error-prone human annotations are available. To facilitate practical implementation, I introduce a procedure for testing whether annotation errors are non-differential in Appendix \ref{sec::guidance}.

Finally, SRI extends surrogate-outcome ideas to settings with high-dimensional, unstructured surrogate variables by drawing on the representation-learning literature. Because texts and images are high-dimensional, I propose learning low-dimensional representations that continue to function as surrogates, thereby reducing dimensionality while preserving the surrogate structure required for efficient inference.

\section{Motivating Example: Framing of Immigrants in the U.S. Congress}\label{sec::example}
Statistical inference with unstructured data often faces two fundamental challenges that can significantly affect the validity of research. First, machine learning predictions are subject to bias, and their direct use in downstream statistical inference may yield biased estimates and incorrect standard errors. The magnitude of this bias tends to increase as the quality of machine learning predictions declines, but even with modern LLMs, achieving high predictive performance remains difficult for many concepts that are not present in the training data \citep{halterman2024codebook}. Second, human annotations cannot be assumed as the ground truth, since annotators can make mistakes, especially when coding large volumes of text.

These challenges are clearly illustrated in the analysis of elite discourse on immigration by \cite{card_computational_2022}. The authors examined how congressional rhetoric on immigration evolved over 140 years and how this evolution intersected with partisan divides in anti-immigration sentiment. Given the central role of elite rhetoric in shaping public opinion, the political rhetoric on immigration has been a central topic in political science and computational social science (e.g. \citealt{abrajano_media_2017, hajdinjak_migration_2020, simonsen_when_2025}). To provide a long-term historical perspective, \cite{card_computational_2022} compiled a dataset of more than 200,000 immigration-related congressional speeches (1880–2021) from the Congressional Record and related archives.

The text-based variable of interest in this application is the tone toward immigrants expressed in congressional speeches.
The outcome was coded into three categories: pro-immigration (valuing immigrants and favoring less-restrictive policies), neutral, and anti-immigration (emphasizing threats or restrictions). To begin their analysis, the authors constructed a set of human annotations. Specifically, they randomly assigned 3,643 immigration-related speech segments to at least two independent coders. Inter-coder reliability was assessed using Krippendorff’s $\alpha$ across three historical periods. Because tone classification is inherently complex and non-binary, agreement levels were modest, with an average $\alpha$ of approximately 0.5 (see Table \ref{accuracy}, first row).
To aggregate these annotations, the authors employed a Bayesian hierarchical model that accounts for annotator bias and estimates the probability distribution of the underlying “true” labels \citep{luo_detecting_2020}, and then the label with the highest posterior probability is assigned as the inferred “true” label.
\begin{comment}
The model is specified as follows:
\begin{align*}
\tilde{L}_{i}^{(j)} &\sim \mathrm{Categorical}(\mathrm{Softmax(}[\eta_{ij1}, \eta_{ij2}, \eta_{ij3}]^\tp)) \\
\eta_{ijk} &= q_{ik} + w_{jk}\\
q_{ik} &\sim \mathcal{N}(\log p_k, \sigma_q^2)\\
w_{jk} &\sim \mathcal{N}(0, \sigma_w^2),
\end{align*}
{\bf \color{red} Does the model matter?  I would delete this and just refer to it as a Bayesian hierarchical model.}
where $i$ is the index for each speech segment, $j$ is the coder index, $k$ refers to the category where $k = 1$ for pro-immigration, $k = 2$ for neutral, and $k = 3$ for anti-immigration, $\tilde{L}_{i}^{(j)}$ is the human annotation made for segment $i$ by coder $j$, $p_k$ is the overall response for the response of category $k$ in the data, $w_{jk}$ is the bias of coder $j$ for category $k$, and $q_{ik}$ is the latent score for segment $i$ and category $k$. From these components, the model derives posterior class probabilities for each category by applying a softmax function to the latent scores $q_{ik}$, adjusted for the average annotator bias $w_{jk}$ across all coders. Mathematically, this is expressed as
$
\frac{\exp(q_{ik} + \bar{w}_{\cdot k} ) }{\sum_{k' = 1}^3 \exp(q_{ik'} + \bar{w}_{\cdot k'} )},
$
where $\bar{w}_{\cdot k}$ is the average of coder's bias for category $k$. 
\end{comment}

Using the inferred labels, the authors trained a supervised machine learning classifier to categorize each speech segment as pro-immigration, neutral, or anti-immigration. The classifier was built on a pre-trained language model (RoBERTa; \citealt{liu_roberta_2019}). To incorporate uncertainty in the inferred labels, they employed a weighted training scheme in which class probabilities served as observation weights. The resulting model achieved a classification accuracy of about 65\% across all periods (see Table \ref{accuracy}, second row). In this application, classification accuracy refers to overall agreement between machine-learning predictions and aggregated human annotations. Finally, the predicted labels were regressed on the party affiliation of the speaker (Democrat, Republican, or other), revealing that Democrats consistently framed immigrants more positively than Republicans.

\begin{table}[t]
    \centering
    \begin{tabular}{cccc}
         \toprule
         & Early (1873-1934) & Mid (1935-1956) & Modern (1957-2020) \\ 
         \midrule
         Inter-Coder Agreements & 0.43 & 0.49 & 0.51 \\
         Classification Accuracy & 0.63 & 0.69 & 0.67 \\
         \bottomrule
    \end{tabular}
    \caption{Machine‑learning classification accuracy and inter‑coder agreement for the tone‑classification task in \cite{card_computational_2022}. Inter‑coder agreement is evaluated on 3,643 human‑annotated items using Krippendorff’s $\alpha$, which ranges from 0 to 1, where 0 indicates chance‑level agreement. The human annotations are then aggregated with a Bayesian item‑response model that infers a probability distribution over labels for each item. The classification accuracy of the RoBERTa models is assessed against these aggregated human annotations.}
    \label{accuracy}
\end{table}

This example demonstrates how both challenges manifest in practice. Although the authors conducted careful robustness checks, relying solely on machine-learning predictions with 65\% classification accuracy may yield biased results and incorrect standard errors. Since only a small portion of the corpus is annotated, using annotations alone would discard most of the data. Recent studies, including \cite{angelopoulos2023prediction}, \cite{egami_neulips_2023, egami2024using}, and \cite{carlson_unifying_2025}, address this issue by combining human annotations with machine-learning predictions. However, as \cite{angelopoulos2023prediction} and \cite{egami2024using} acknowledge, these methods often produce large standard errors, particularly when classification accuracy is moderate. 
In this application, for example, while the direct use of machine learning for difference-in-means produce a standard error of 0.003, whereas the bias-correction approach with RoBERTa predictions yields a much larger standard error of 0.020.
%{\bf \color{red} Why not just show what those standard errors look like in this application?  You should add them to the table.  Without them, you are not giving a good motivation here}. 
This challenge motivates the SRI framework introduced in Section \ref{sec::method}.

Similarly, the modest inter-coder agreement illustrates why treating human annotations as perfect ground truth is problematic. When multiple annotations are available, some papers have proposed inferring true labels using parametric models (e.g. \citealt{dawid_maximum_1979, raykar_learning_2010, luo_detecting_2020}) or majority voting (e.g., \citealt{whitehill_whose_2009}). However, these methods typically lack theoretical guarantees for downstream analysis or rely on strong but untestable assumptions. I address this issue in Section \ref{sec:proximal} by extending the estimator in Section \ref{sec::method} to account for non-differential measurement error. The assumptions underlying this correction are evaluated via a formal testing procedure described in Appendix~\ref{sec::guidance}.

\section{The Proposed Methodology with Perfect Annotations} \label{sec::method}
This section outlines the proposed SRI framework. For the sake of simplicity, I begin with the case that human annotations contain no measurement errors, and relax this assumption in the next section. Throughout the paper, I focus on the case in which texts serve as outcomes, motivated by the examples in Section \ref{sec::example}. However, the proposed SRI methodology is broadly applicable to other settings where unstructured data play a role in statistical inference. I proceed by defining the quantities of interest, specifying the necessary assumptions, establishing identification, and discussing estimation strategies.

\subsection{Assumptions, quantity of interest, and nonparametric identification}
Consider a simple random sample of $n$ observations drawn from a population of interest, indexed by $i = 1, 2, \dots, n$, where for each unit $i$, the observed data consist of $(T_i, \bY_i, \bZ_i)$. 
The binary predictor of interest is denoted by $T_i \in \{0,1\}$. 
%For the sake of simplicity, I assume that it is binary and thus $\cT = \{0,1\}$. 
The set of control variables and texts are also observed for all units, represented by $\bZ_i \in \cZ$ and $\bY_i \in \cY$, respectively, where $\cZ$ and $\cY$ are the support of each variable. Suppose that the analysis focuses on a specific feature of texts (e.g., the tone of the speeches) as outcomes, for which the human annotations are available only for a subset of the data. 
Let $L_i \in \mathcal{L} \subseteq \mathbb{R}$ denote the unobserved true concept of interest and let $\tilde L_i \in \mathcal{L}$ represent the human-annotated label, both supported on $\cL$.
Human annotated labels $\tilde L_i$ are observed only for units where $S_i = 1$, with $S_i \in \{0, 1\}$ as the sampling indicator. %Table \ref{setting_table} provides a visual summary of this setup, based on the motivating example in the previous section.

The goal is to identify the average value of the text feature $L_i$ for each level of the predictor $T_i$, after adjusting for the set of control variables $\bZ_i$. Formally, the target parameter is the covariate-adjusted outcome mean under $T_i = t \in \{0,1\}$, which is denoted as
\begin{align*}
    \Psi_t := \E\bigl[\E[L_i \mid T_i = t, \bZ_i]\bigr].
\end{align*}
I say that the target parameter is identified when it is uniquely determined by the joint distribution of the observed data.

To identify this parameter of interest, the text feature of interest $L_i$ is assumed to be a deterministic function of the text $\bY_i$. This assumption rules out cases in which the same text corresponds to multiple true annotations. It is consistent with prior work in the literature on text-based inference (e.g. \citealt{egami_how_2022, imai_causal_2024}) and is formalized as follows.

\begin{assumption}[True Coding Rule] \label{coding} There exists a deterministic function $g_L: \cY \to \cL$ that maps outcome texts to a set of concepts $L_i$, i.e., 
$L_i  =  \ g_{L}(\bY_i).$
\end{assumption}

I then assume that human annotations $\tilde L_i$ do not contain classification errors, so that they always capture the true concept of interest $L_i$. This is formalized as follows.

\begin{assumption}[Perfect Annotations]\label{perfect} The human annotations $\tilde L_i$ always captures the concept of interest $L_i$ so that
$\tilde L_i = L_i$
for all $i = 1, \cdots, n$.
\end{assumption}
This assumption is often made implicitly in the literature (e.g., \citealt{fong_machine_2021, angelopoulos2023prediction, egami_neulips_2023, egami2024using}). Note that this assumption is very restrictive as it does not allow any single mistake by human coders. I relax this assumption in Section \ref{sec:proximal}.

Figure \ref{diagram_assumption} graphically illustrates the setup in this section. In the graph, an arrow with double red line represents a deterministic relationship, while an arrow with a single black line indicates a possibly stochastic relationship. The set of assumptions indicates that the human coding $\tilde L_i$ is a deterministic function of texts, which yields the conditional independence $T_i \ \indep \ \tilde L_i \mid \bY_i, \ \bZ_i$. In practice, this assumption is obtained by design as well, since researchers only need to make sure that human coders are blind to structured variables. Under this setup, coders may still infer the information of structured variables (e.g., the party affiliation of a speaker), but such inference must rely solely on the text they observe.
%Importantly, it does not require the absence of unmeasured confounding variables, since such variables influence the human annotations $\tilde L_i$ only through texts $\bY_i$ under Assumptions \ref{coding} and \ref{perfect}.

\begin{figure}[ht]
    \centering
\begin{tikzpicture}[>=Stealth,thick,node distance=2.5cm]

  % Nodes
  \node[draw,circle] (T) {$T$};
  \node[draw,circle,right=of T] (Lhat) {$\bm{Y}$};
  \node[draw,circle,right=of Lhat] (L) {$\tilde L$};
  \node[draw, circle, below=1.25cm of $(T)!0.5!(Lhat)$](C) {$\bm{Z}$};
  %\node[draw, circle, below=0.7cm of C](U) {$\bm{U}$};

  % Solid arrows
  \draw[->] (T) -- (Lhat);
  \draw[->, double, color = red] (Lhat) -- (L);
  \draw[->] (C) -- (T);
  \draw[->] (C) -- (Lhat);
  %\draw[->,dashed] (U) -- (Lhat);
  %\draw[->, dashed] (U) -- (T);
  %\draw[->,dashed]
  %  (T) to[bend left=15]
  %  node[midway] {$\times$}
  %  (L);
\end{tikzpicture}
\caption{The diagram illustrating the assumptions for the proposed method. $T$ denotes the predictors of interest, $\bm{Y}$ denotes the texts (outcomes), $\bZ$ denotes the set of all control variables, 
%$\bU$ denotes the unmeasured variables, 
and $\tilde L$ denotes the human annotated labels. An arrow with red double lines represents a deterministic relation while an arrow with a single line indicates a possibly stochastic relationship. It is assumed that when coders annotate $\tilde L$, they only look at texts $\bm{Y}$ and do not look at the predictors of interest $T$ directly. Under this setup, texts $\bm{Y}$ can be used as a surrogate variable that fully mediates $T$'s effect on $\tilde L$.
%, even if there exists unmeasured variable $\bU_i$.
}
\label{diagram_assumption}
\end{figure}
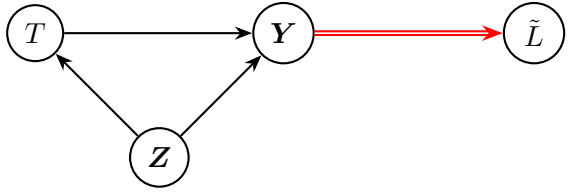

Next, human annotations are assumed to be applied to a random subset of texts, which is also a standard assumption in the literature (e.g., \citealt{fong_machine_2021, egami_neulips_2023, egami2024using}). This means that researchers should avoid the arbitrary selection of texts for annotations, such as coding only short or obvious ones.

\begin{assumption}[Human Annotation Sampling] \label{labeling} The human coders annotate the part of texts that are randomly sampled from the entire corpus such that
\begin{align}
    \{T_i, \bY_i, \bZ_i\} \; \indep \; S_i
\end{align}
with $\P(S_i = 1) > 0$.
\end{assumption}

%\subsection{Nonparametric identification}
Together, these assumptions provide the foundation for the identification strategy. The identification strategy of the SRI framework uses the fact that texts $\bY_i$ satisfy the conditional independence $T_i \ \indep \ \tilde L_i \mid \bY_i, \bZ_i$. Since the full texts $\bY_i$ are inherently high-dimensional and challenging to incorporate directly into statistical inference, I propose the use of a low-dimensional representation that fully mediates the relationship between the predictors of interest $T_i$ and human annotations $\tilde L_i$. I refer to this as \emph{surrogate representation}, a term that reflects its role in satisfying the statistical surrogacy condition \citep{prentice1989surrogate, athey_surrogate_2019}. This surrogate representation reduces the dimensionality of $\bY_i$ and $\bZ_i$ while retaining the essential information needed to make a statistical inference. Formally, surrogate representation is defined as follows:

\begin{definition}[Surrogate Representation of Texts]\label{def_surrep} I define surrogate representation as a low-dimensional representation $\bm{W}_i = \bm{f}(\bY_i, \bZ_i)$ with some deterministic function $\bm{f}$ that fully explains the relationship between $\{L_i, T_i\}$ and text and covariates $\{\bm{Y}_i, \bZ_i\}$ such that
\begin{align}
\{\bY_i, \bZ_i\} \; \indep \; \{L_i, T_i \}    \mid \bm{W}_i \label{indep_surrogate}
\end{align}
\end{definition}

Under Assumptions \ref{coding} and \ref{perfect}, Equation~\eqref{indep_surrogate} implies that the surrogate representation $\bW_i = \boldf(\bY_i, \bZ_i)$ fully mediates the relationship between the predictor $T_i$ and the human annotations $\tilde L_i = g_L(\bY_i)$, yielding $T_i ,\indep\ \tilde L_i \mid \bW_i$. The existence of such surrogate representation is trivial since the conditional probability $\eta(\bY_i, \bZ_i) =  \P(T_i, \tilde L_i\mid \bY_i, \bZ_i)$ is a natural, low-dimensional sufficient statistic that satisfies Equation \eqref{indep_surrogate}. Furthermore, it can be shown that any function of $\eta(\bY_i, \bZ_i)$ that retains this information is a valid surrogate representation, implying that surrogate representations can exist for arbitrary dimensions.

\begin{lemma}[Existence of Surrogate Representation]\label{existence}
Let $\boldf(\bY_i, \bZ_i)$ be a deterministic function of texts $\bY_i$ and covariates $\bZ_i$. Then, $\boldf(\bY_i, \bZ_i)$ is a surrogate representation satisfying equation \eqref{indep_surrogate} if and only if $\boldf(\bY_i, \bZ_i)$ is finer than $\eta(\bY_i, \bZ_i)=  \P(T_i, \tilde L_i\mid \bY_i, \bZ_i)$ in the sense that $\eta(\bY_i, \bZ_i) = b(\boldf(\bY_i, \bZ_i))$ for some function $b$.
\end{lemma}
The proof is in Appendix \ref{proof_existence}. This result is analogous to the property of balancing scores established by \cite{rosenbaum_central_1983}.
%The independence in Equation~\eqref{indep_surrogate} is also crucial for identification because it ensures that the identification formula does not depend on the specific choice of representation $\bW_i$.

%Substantively, Assumption \ref{sufficiency} states that not every aspect of the text $\bY_i$ is simultaneously relevant for the predictor of interest $T_i$ and the concept of interest $L_i$. This assumption is often reasonable, since many components of texts, such as individual writing or speaking style, grammatical habits, or discussion of unrelated topics, are typically extraneous and statistical surrogacy conditions hold even after removing these features. Importantly, Assumption \ref{sufficiency} requires only the existence of such a representation $\bW_i$ and does not impose uniqueness.

Under this setup, the target parameter can be nonparametrically identified using the surrogate representation $\bW_i$.

\begin{proposition}[Identification under Perfect Annotations]\label{identification}
Under Assumptions \ref{coding} to \ref{labeling}, the target parameter under $T_i = t$ can be identified as
\begin{align}
    \Psi_t := \E\biggl[\E[L_i \mid T_i = t, \bZ_i]\biggr] = \int_{\mathcal{\mathcal{Z}}}\int_{\mathcal{Y}} \E[\tilde{L}_{i} \mid \bm{W}_i,   S_i = 1]   dF(\bm{Y}_i \mid \bZ_i,  T_i = t) dF(\bZ_i) \label{identified}
\end{align}
where $\bm{W} = \boldf(\bY_i, \bZ_i)$ is the latent representation that satisfies Equation \eqref{indep_surrogate}. Importantly, the latent representation $\bW_i$ needs not to be unique and any latent representation with the same independence constraint leads to the same identification formula. 
\end{proposition}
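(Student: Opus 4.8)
The plan is to start from the definition $\Psi_t = \E[\E[L_i \mid T_i = t, \bZ_i]]$, introduce the representation $\bW_i$ through the law of iterated expectations, and then peel off the three relevant assumptions one at a time: use Assumptions~\ref{coding} and~\ref{sufficiency} to remove the dependence of the inner conditional mean on $T_i$, use Assumption~\ref{perfect} to replace the unobserved $L_i$ by the annotation $\tilde L_i$, and use Assumption~\ref{labeling} to insert the conditioning event $S_i = 1$. Writing the outer expectation over $\bZ_i$ and the intermediate expectation over the conditional law of $\bW_i$ given $(T_i = t, \bZ_i)$ then reproduces the two nested integrals in Equation~\eqref{identified}. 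Concretely, iterated expectations give
\begin{align*}
\E[L_i \mid T_i = t, \bZ_i] = \E\bigl[\E[L_i \mid \bW_i, T_i = t, \bZ_i] \,\big|\, T_i = t, \bZ_i\bigr],
\end{align*}
so the whole argument reduces to showing $\E[L_i \mid \bW_i, T_i = t, \bZ_i] = \E[\tilde L_i \mid \bW_i, \bZ_i, S_i = 1]$.

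The crux, and the step I expect to be the main obstacle, is a measurability argument showing that under Assumptions~\ref{coding} and~\ref{sufficiency}, $L_i$ is almost surely a function of $(\bW_i, \bZ_i)$ alone. Assumption~\ref{sufficiency} yields the marginal consequence $\bY_i \indep L_i \mid \bW_i, \bZ_i$, while Assumption~\ref{coding} makes $L_i = g_L(\bY_i)$ measurable with respect to $\bY_i$. For any bounded measurable $h$, the conditional independence gives $\E[h(L_i) \mid \bW_i, \bZ_i, \bY_i] = \E[h(L_i) \mid \bW_i, \bZ_i]$; but since $\bW_i = \boldf(\bY_i)$ and $L_i$ are both $\sigma(\bY_i)$-measurable, the left-hand side equals $h(L_i)$ almost surely. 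Hence $h(L_i) = \E[h(L_i) \mid \bW_i, \bZ_i]$ a.s., i.e.\ $L_i$ is $(\bW_i, \bZ_i)$-measurable and in particular $L_i = \E[L_i \mid \bW_i, \bZ_i]$. The care needed here is purely measure-theoretic, handling the conditional-independence/measurability interplay and null sets rigorously. It then follows immediately that augmenting the conditioning set with $T_i$ is inert,
\begin{align*}
\E[L_i \mid \bW_i, T_i = t, \bZ_i] = \E[L_i \mid \bW_i, \bZ_i],
\end{align*}
which is exactly the statement that $\bW_i$ renders $L_i$ mean-independent of $T_i$ given $\bZ_i$; the $T_i$-component of Assumption~\ref{sufficiency} and the surrogacy picture in Figure~\ref{diagram_assumption} are what make this invariance substantively plausible.

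It remains to connect $\E[L_i \mid \bW_i, \bZ_i]$ to the observable regression $\E[\tilde L_i \mid \bW_i, \bZ_i, S_i = 1]$. By Assumption~\ref{perfect}, $\tilde L_i = L_i$, so $\E[L_i \mid \bW_i, \bZ_i] = \E[\tilde L_i \mid \bW_i, \bZ_i]$. For the sampling indicator, I would note that $\tilde L_i = g_L(\bY_i)$ and $\bW_i = \boldf(\bY_i)$ are both functions of $\bY_i$, so $(\tilde L_i, \bW_i, \bZ_i)$ is a measurable function of $(\bY_i, \bZ_i)$; Assumption~\ref{labeling} gives $(\bY_i, \bZ_i) \indep S_i$, hence $(\tilde L_i, \bW_i, \bZ_i) \indep S_i$ and therefore $\E[\tilde L_i \mid \bW_i, \bZ_i] = \E[\tilde L_i \mid \bW_i, \bZ_i, S_i = 1]$, with $\P(S_i = 1) > 0$ ensuring the conditional mean is well defined. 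Substituting back and expressing the outer and intermediate expectations as integrals against $dF(\bZ_i)$ and $dF(\bW_i \mid \bZ_i, T_i = t)$ yields Equation~\eqref{identified}.

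Finally, for the invariance claim, the key observation is that the target $\Psi_t = \E[\E[L_i \mid T_i = t, \bZ_i]]$ is defined entirely through the distribution of $(L_i, T_i, \bZ_i)$ and never references $\bW_i$. Since the derivation above shows that the right-hand side of Equation~\eqref{identified} equals this same $\Psi_t$ for every representation satisfying Assumption~\ref{sufficiency}, any two valid surrogate representations must produce the identical value, so the identification formula is invariant to the (non-unique) choice of $\bW_i$.
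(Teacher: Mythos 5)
Your proof is correct, and it reaches the paper's conclusion by a route whose two pivotal steps differ from the paper's own argument. The shared skeleton is iterated expectations followed by three exchanges (drop $T_i$ from the inner conditioning, swap $L_i \to \tilde L_i$, insert $S_i = 1$). For the first exchange, the paper uses the $T$-component of Assumption \ref{sufficiency}: $T_i \indep \bY_i \mid \bW_i, \bZ_i$ together with $L_i = g_L(\bY_i)$ gives $T_i \indep L_i \mid \bW_i, \bZ_i$, hence mean independence. You instead use the $L$-component $\bY_i \indep L_i \mid \bW_i, \bZ_i$: because $L_i$ is $\sigma(\bY_i)$-measurable and $\sigma(\bY_i, \bW_i, \bZ_i) = \sigma(\bY_i, \bZ_i)$, the conditional law of $L_i$ given $(\bW_i, \bZ_i)$ is almost surely degenerate, so $L_i$ is a.s.\ a function of $(\bW_i, \bZ_i)$. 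This is a strictly stronger conclusion than the paper's conditional independence (degeneracy implies independence from any additional conditioning variable), it makes the $T_i$-removal immediate, and it isolates the substantive content of surrogacy: a valid representation retains all information about $L_i$. For the $S_i$ step, the paper verifies $\bY_i \indep S_i \mid \bW_i, \bZ_i$ by an explicit factorization over Borel sets and then specializes to $L_i$, whereas you observe that $(\tilde L_i, \bW_i, \bZ_i)$ is a measurable function of $(\bY_i, \bZ_i)$ and hence jointly independent of $S_i$ under Assumption \ref{labeling}; these are the same manipulation packaged differently, yours slightly more directly. Finally, the invariance claims are proved in genuinely different styles: the paper re-derives the formula under a second representation $\boldf'$ and checks equality through a chain of conditional-expectation identities, while you argue abstractly that the right-hand side of Equation \eqref{identified} equals the representation-free target $\Psi_t$ for every valid $\bW_i$, so all valid choices must agree. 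Your version is shorter and logically sufficient; the paper's direct computation buys an explicit verification that the two observable formulas coincide without re-invoking the full identification chain.
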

The proof is in Appendix \ref{proof_iden}. When researchers additionally assume the conditional ignorability based on the set of covariates $\bZ_i$, positivity, and consistency, then the target parameter $\Psi_t$ can be interpreted as a causal quantity (an expectation of the potential outcome under $T_i = t$).

\subsection{Estimation and inference}
Given the identification result, I next consider estimation and inference. As derived in the previous section, SRI uses the surrogate representation $\bW_i= \boldf(\bY_i, \bZ_i)$ that satisfies Equation \eqref{indep_surrogate}. Lemma \ref{existence} suggests that any representation capable of reconstructing the conditional joint probabilities of $T_i$ and $\tilde L_i$ constitutes a valid surrogate representation. Moreover, since Equation \eqref{indep_surrogate} implies $T_i \ \indep \ \tilde L_i \mid \bW_i = \boldf(\bY_i, \bZ_i)$ under Assumptions \ref{coding} and \ref{perfect},
\begin{align*}
    \P(T_i, \tilde L_i \mid \bY_i, \bZ_i) = 
    \P(T_i, \tilde L_i \mid \boldf(\bY_i, \bZ_i)) = 
    \P(T_i \mid \boldf(\bY_i, \bZ_i)) \ \P(\tilde L_i \mid \boldf(\bY_i, \bZ_i)),
\end{align*}
where I use Equation \eqref{indep_surrogate} for the first equality. Therefore, a natural estimation strategy is to learn a representation that jointly predicts both human annotations and the predictor of interest. 

To achieve this, I propose estimating $\bW_i$ using a neural network architecture, following a similar approach to DragonNet \citep{shi_adapting_2019}. Specifically, the loss function is
\begin{align}
    \arg\min_{\xi, \lambda, \zeta} \frac{1}{n} \sum_{i = 1}^n \biggl\{
    S_i \cdot
    \underbrace{\biggl(
    \overbrace{\mu(\bm{f}(\bY_i, \bZ_i; \xi); \lambda)}^{\text{Outcome Model}}  - \tilde L_{i} \biggr)^2}_{\text{Outcome Loss}} \ + \ \alpha \cdot \underbrace{\mathrm{CrossEntropy}\biggl(T_i, \overbrace{\rho_1(\bm{f}(\bY_i,  \bZ_i; \xi); \zeta)}^{\text{Surrogacy Score}} \biggr)}_{\text{Prediction Loss for Predictor $T_i$}}
    \biggr\}, \label{loss_main}
\end{align}
where $\boldf(\cdot ; \xi)$ is the surrogate representation parametrized by $\xi$, $\mu(\cdot, \lambda)$ is the outcome model for the human annotation $\tilde L_i$ given $\bm{f}(\bY_i,\bZ_i; \xi)$ parametrized by $\lambda$, $\rho_t(\cdot; \zeta)$ is the model predicting the value of predictor of interest $T_i = t$ given $\bm{f}(\bY_i, \bZ_i; \xi)$ parametrized by $\zeta$, and $\alpha$ is the hyperparameter regulating the influence of both loss components. While the model $\rho_t$ appears analogous to propensity score, since $\bY_i$ is influenced by $T_i$, the predicted probability from $\rho_t$ is referred to as the surrogacy score in the literature \citep{athey_surrogate_2019}. The architecture of this neural network is illustrated in Figure \ref{dragonnet}.

\begin{figure}[ht]
    \centering
\begin{tikzpicture}[
  node distance = 1cm and 2.0cm,
  box/.style = {draw, rectangle, minimum height=1cm, minimum width=0.5cm, align=center},
]

% -------------------------------------------------------------------------
% Input Nodes (Stacked on the left)
% -------------------------------------------------------------------------
\node[box, fill=lightgray] (z1) {$\bm{Y}_i$};
\node[box, fill=lightgray, above=0.5 of z1] (z2) {$\bZ_i$};

% -------------------------------------------------------------------------
% Representation Node (Shared)
% -------------------------------------------------------------------------
% Positioned to the right, vertically centered between Z and Y roughly
\node[box, right=2.5cm of z1, yshift=0.75cm,
      label={[align=center]below:{\small 
      \shortstack{Surrogate\\[-2pt] \small Representation $\bW_i$}
      }}] 
      (z3) {$\boldf(\bm{Y}_i, \bm{Z}_i;\xi)$};

% -------------------------------------------------------------------------
% Prediction Heads
% -------------------------------------------------------------------------
% Surrogacy Score (bottom right of z3)
\node[box, right=2.0 of z3, yshift=-1.0cm,
      label={[align=center]below:{\shortstack{Surrogacy Score\\[-2pt] 
        \scriptsize$\mathbb P(T_i=t\mid\bW_i)$}}}] 
      (g) {$\rho_t(\bW_i;\zeta)$};

% Outcome Model (top right of z3)
\node[box, right=2.0 of z3, yshift=1.0cm,
      label={[align=center]above:{\shortstack{Outcome Model\\[-2pt] 
        \scriptsize$\mathbb E[\tilde L_i\mid\bW_i]$}}}] 
      (t11) {$\mu(\bW_i;\lambda)$};

% -------------------------------------------------------------------------
% Outputs
% -------------------------------------------------------------------------
\node[box, fill=lightgray, right=1.0 of t11] (yhat) {$\tilde L_i$};
\node[box, fill=lightgray, right=1.0 of g]   (that) {$T_i$};

% -------------------------------------------------------------------------
% Arrows
% -------------------------------------------------------------------------
% Inputs to Representation
\draw[->, line width=1] (z1) -- (z3);
\draw[->, line width=1] (z2) -- (z3);

% Representation to Heads
\draw[->, line width=1] (z3) -- (t11);
\draw[->, line width=1] (z3) -- (g);

% Heads to Outputs
\draw[->, line width=1] (t11) -- (yhat);
\draw[->, line width=1] (g)   -- (that);

\end{tikzpicture}

    \caption{The architecture of neural network for the nuisance function estimation. Given the inputs $\bY_i$ (texts) and $\bZ_i$ (covariates),  the network maps them onto the shared internal representation $\bW_i = \boldf(\bY_i, \bZ_i; \xi)$ parametrized by $\xi$. This representation $\bW_i$ is used to simultaneously predict the human annotation $\tilde L_i$ (outcome model $\E[\tilde L_i \mid \bW_i]$) and the predictor of interest $T_i$ (surrogacy score $\P(T_i = t \mid \bW_i)$).}
    \label{dragonnet}
\end{figure}

To characterize the behavior of the estimator at the limit, I first derive its efficient influence function. The efficient influence function provides both a framework for semiparametric estimation and a theoretical foundation to characterize asymptotic behavior with optimal variance properties \citep{hampel_influence_1974, bickel1993efficient, vaart_asymptotic_1998, tsiatis_semiparametric_2006}.

\begin{theorem}[Efficient Influence Function] \label{eif} The efficient influence function for the target parameter under $T_i = t$ is given by
\begin{equation}
    \begin{aligned}
    &\psi_t(T_i, \tilde L_i, S_i, \bY_i, \bZ_i; \bm{f}, \rho_t, \pi_t, \mu, \Bar{m}_t, \Psi_t) \\
    &= \frac{\mathbbm{1}\{S_i = 1\}}{\P(S_i = 1)} \cdot  \frac{\rho_t(\boldf( \bY_i,  \bZ_i))}{\pi_t(\bZ_i)} \biggl( \tilde L_i - \mu( \boldf(\bY_i, \bZ_i))\biggr) \\
    &\qquad\qquad + 
    \frac{\mathbbm{1}\{T_i = t\} }{\pi_t(\bZ_i)} \biggl( \mu( \boldf(\bY_i, \bZ_i)) - \Bar{m}_t(\bZ_i) \biggr) + \Bar{m}_t(\bZ_i) - \Psi_t\label{efficientinfluencefunction}
    \end{aligned}
\end{equation}
where $\mu( \boldf(\bY_i, \bZ_i) ) = \E[\tilde L_i \mid  \bW_i]$ is the outcome model given the low-dimensional representation $\bW_i = \bm{f}(\bY_i, \bZ_i)$ and the set of control variables $\bZ_i$, $\rho_t(\bW_i) = \P(T_i = t \mid \bW_i)$ is the surrogacy score model given the low-dimensional representation $\bW_i = \bm{f}(\bY_i, \bZ_i)$, $\pi_t(\bZ_i) = \P(T_i = t \mid \bZ_i)$ is the propensity score model given the set of control variables $\bZ_i$, and $\Bar{m}_t(\bZ_i) = \int \mu( \boldf(\bY_i, \bZ_i)) dF(\bY_i \mid \bZ_i, T_i = t)$.
\end{theorem}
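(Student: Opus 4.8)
The plan is to fix one valid surrogate representation $\boldf$ (whose existence is guaranteed by Assumption~\ref{sufficiency}) and treat $\bW_i = \boldf(\bY_i)$ as an observed variable, so that the problem reduces to deriving the efficient influence function of the mediation-style functional
\[
\Psi_t = \E\bigl[\Bar{m}_t(\bZ_i)\bigr], \qquad \Bar{m}_t(\bz)=\int \mu(\bm{w},\bz)\,dF(\bm{w}\mid \bz, T_i=t),\qquad \mu(\bm{w},\bz)=\E[\tilde L_i\mid \bW_i=\bm{w},\bZ_i=\bz],
\]
for the law of $(T_i,\tilde L_i,S_i,\bW_i,\bZ_i)$. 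Proposition~\ref{identification} guarantees this functional equals the target parameter and does not depend on the choice of $\boldf$, so the derivation may be carried out with $\boldf$ held fixed. The single most useful ingredient is the change-of-measure identity from Bayes' rule,
\[
\frac{dF(\bm{w}\mid \bz, T_i=t)}{dF(\bm{w}\mid \bz)}=\frac{\P(T_i=t\mid \bm{w},\bz)}{\P(T_i=t\mid \bz)}=\frac{\rho_t(\bm{w},\bz)}{\pi_t(\bz)},
\]
which transports integrals against the treated surrogate law into expectations against the observed law and is the origin of the surrogacy-score ratio in the first term of \eqref{efficientinfluencefunction}.

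I would then obtain the influence function by the pathwise-derivative method: introduce a regular one-dimensional parametric submodel $\{P_\varepsilon\}$ through the truth at $\varepsilon=0$ with score $s$, and differentiate $\Psi_t(P_\varepsilon)$ at $\varepsilon=0$. Writing $\Psi_t$ through its three nested components---the marginal $F(\bz)$, the conditional surrogate law $F(\bm{w}\mid \bz, T_i=t)$, and the regression $\mu$---the product rule splits the derivative into three additive pieces. Perturbing the outer marginal $F(\bz)$ produces $\E[(\Bar{m}_t(\bZ_i)-\Psi_t)\,s]$, matching the term $\Bar{m}_t(\bZ_i)-\Psi_t$. Perturbing the conditional surrogate law, after using the Bayes identity above to rewrite the integral as an expectation over observed data and centering the score, yields $\E\bigl[\tfrac{\mathbbm{1}\{T_i=t\}}{\pi_t(\bZ_i)}(\mu(\bW_i,\bZ_i)-\Bar{m}_t(\bZ_i))\,s\bigr]$, matching the second term.

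The delicate piece is the contribution from perturbing the regression $\mu$, which lives under the treated surrogate law while $\tilde L_i$ is observed only on $\{S_i=1\}$. Two facts make it collapse to the stated form. Assumption~\ref{labeling} makes $S_i$ independent of $(T_i,\bY_i,\bZ_i)$, so $\P(S_i=1\mid\cdot)=\P(S_i=1)$ is constant, the inverse-probability weight is simply $\mathbbm{1}\{S_i=1\}/\P(S_i=1)$, and $\E[\tilde L_i\mid \bW_i,\bZ_i,S_i=1]=\mu(\bW_i,\bZ_i)$; and Assumption~\ref{sufficiency} gives $\tilde L_i\indep T_i\mid \bW_i,\bZ_i$, so $\mu$ does not depend on $t$ and the same $\mu$ may be integrated against the treated law. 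The derivative of $\mu$ along the submodel equals the conditional residual $\E[(\tilde L_i-\mu)\,s\mid\bW_i,\bZ_i]$, which, propagated through the outer integrals and reweighted by $\rho_t/\pi_t$ and by the sampling weight, gives $\E\bigl[\tfrac{\mathbbm{1}\{S_i=1\}}{\P(S_i=1)}\tfrac{\rho_t(\bW_i,\bZ_i)}{\pi_t(\bZ_i)}(\tilde L_i-\mu(\bW_i,\bZ_i))\,s\bigr]$, the first term of \eqref{efficientinfluencefunction}. Summing the three contributions shows $\tfrac{d}{d\varepsilon}\Psi_t(P_\varepsilon)|_{\varepsilon=0}=\E[\psi_t\, s]$ for every score $s$, and a direct check gives $\E[\psi_t]=0$. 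Because the model imposes no restriction beyond the stated conditional independences, all of which $\psi_t$ already respects, the tangent space is the full space of mean-zero square-integrable functions, so $\psi_t$ is the unique gradient and hence the efficient influence function.

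The main obstacle will be this last term: carefully tracking how a perturbation of the inner conditional mean $\mu$, supported under $dF(\bm{w}\mid\bz,T_i=t)$, is transported to an expectation over the observed data so that the surrogacy-score ratio $\rho_t/\pi_t$ and the sampling weight $\mathbbm{1}\{S_i=1\}/\P(S_i=1)$ emerge simultaneously. Getting this bookkeeping right---and confirming, via completely-random sampling, that no additional covariate-dependent correction for $S_i$ survives---is where the argument must be most careful.
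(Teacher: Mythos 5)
Your proposal is correct and takes essentially the same route as the paper: the proof in Appendix~\ref{proof_eif} likewise computes a pathwise derivative of the identified functional, splits it by the product rule into the same three contributions (the outcome regression, the conditional law of the surrogate given $(\bZ_i, T_i=t)$, and the marginal of $\bZ_i$), and recovers the three terms of Equation~\eqref{efficientinfluencefunction} using the sampling independence of $S_i$ and the ratio $\rho_t(\boldf(\bY_i),\bZ_i)/\pi_t(\bZ_i)$ exactly as you describe. The only differences are minor: the paper differentiates along point-mass contamination submodels (following Hines et al.) rather than general score-indexed submodels, and it asserts efficiency directly after verifying pathwise differentiability, whereas your closing claim that the tangent space is the \emph{full} mean-zero $L^2$ space glosses over the restriction that the MCAR sampling condition (Assumption~\ref{labeling}) places on the model --- a looseness, it should be said, no greater than the paper's own treatment of that point.
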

The proof is given in Appendix \ref{proof_eif}.

Once the efficient influence function is obtained, the asymptotically valid estimator can be constructed. Here, I propose to use a $K$-fold cross-fitting procedure \citep{van_der_laan_targeted_2010, chernozhukov_doubledebiased_2018}. Suppose that $n$ is divisible by $K$. The entire estimation procedure with $K$-fold cross-fitting is summarized as follows.
\begin{enumerate}
    \item Randomly partition the data into $K$ folds of equal size. Let $I(i) = k$ be the indicator of the observation $i$ belonging to the fold $k$.
    \item For each fold $k \in \{1, \cdots, K\}$, use $I(i) \neq k$ as training data:
    \begin{enumerate}
        \item Estimate the propensity score model $\pi_t$ using the whole sample with $I(i) \neq k$.
        \item Split the training data with $S_i = 0$ into two halves, say $I_{-1s}^{(-k)}$ and $I_{-2s}^{(-k)}$. Let the remaining training data (with $S_i = 1$) be $I_s^{(k)}$.
        \item Solve the optimization problem in Equation~\eqref{loss_main} using the data $I_s^{(k)}$ and $I_{-1s}^{(-k)}$ and obtain $\hat\mu^{(-k)}$ and $\hat\rho_t^{(-k)}$.
        \item Regress the obtained $\hat{\mu}^{(-k)}$ on $T_i = t$ and $\bZ_i$ using the data $I_{-2s}^{(-k)}$ and obtain $\hat{\Bar{m}}^{(-k)}$.
    \end{enumerate}
    \item For the level of predictors of interest $t$, solve the estimating equation 
    \begin{align}
    \frac{1}{n} \sum_{k = 1}^K \sum_{I(i) = k}\psi_t(T_i, \tilde L_i, S_i, \bY_i, \bZ_i; \hat{\bm{f}}^{(-k)}, \hat\rho_t^{(-k)}, \hat\pi_t^{(-k)}, \hat\mu^{(-k)}, \hat{\Bar{m}}_t^{(-k)}, \hat\Psi_t)  = 0 \label{est_eq}
    \end{align}
    and obtain the estimate of the target parameter $\hat \Psi_{t}$.
\end{enumerate}

Finally, I derive the asymptotic normality of the proposed estimator. For this, I assume the following regularity conditions.

\begin{assumption}[Regularity Conditions]
\label{reg} Let $c_1$ be a positive constant and $\delta_n$ be a sequence of positive constants approaching zero as the sample size $n$ increases.
\begin{enumerate}[label=(\alph*)]
    \item (Primitive Condition) The labeled outcome is bounded such that $\E\bigl[|\tilde L_i|^2\bigr]^\frac{1}{2} \leq c_1$.
    \item (Positivity) There exists a fixed constant $c_2 \in (0, 1/2]$ such that for all $t \in \{0,1\}$
    \begin{align*}
        \P( c_2 \leq \pi_t(\bZ_i) \leq 1 - c_2) = 1, \quad
        \P( c_2 \leq \hat \pi_t(\bZ_i) \leq 1 - c_2) = 1.
    \end{align*}
    In addition, the estimated surrogacy score is also almost surely bounded so that 
    \begin{align*}
        \P(\hat\rho_t(\hat \boldf(\bY_i, \bZ_i)) \leq 1) = 1
    \end{align*}
    for all $t \in \{0,1\}$.
    \item (Nuisance Functions Estimation) Nuisance functions are consistently estimated so that
    \begin{align*}
        \E\biggl[
        \biggl|\hat{\rho}_t(\hat{\boldf}(\bY_i, \bZ_i)) - {\rho}_t({\boldf}(\bY_i, \bZ_i)) \biggr|^2
        \biggr]^{\frac{1}{2}} \leq c_1, &\quad  \E\biggl[
        \biggl|\hat{\rho}_t(\hat{\boldf}(\bY_i, \bZ_i)) - {\rho}_t({\boldf}(\bY_i, \bZ_i)) \biggr|^2
        \biggr]^{\frac{1}{2}} \leq \delta_n\\
        \E\biggl[
        \biggl|\hat{\mu}(\hat{\boldf}(\bY_i, \bZ_i) - \mu({\boldf}(\bY_i, \bZ_i))\biggr|^2
        \biggr]^{\frac{1}{2}} \leq c_1, &\quad \E\biggl[
        \biggl|\hat{\mu}(\hat{\boldf}(\bY_i, \bZ_i) - \mu({\boldf}(\bY_i), \bZ_i)\biggr|^2
        \biggr]^{\frac{1}{2}} \leq \delta_n\\
        \E\biggl[
        \biggl|\hat{\pi}_t(\bZ_i) - \pi_t(\bZ_i) \biggr|^2
        \biggr]^{\frac{1}{2}} \leq c_1, &\quad \E\biggl[
        \biggl|\hat{\pi}_t(\bZ_i) - \pi_t(\bZ_i) \biggr|^2
        \biggr]^{\frac{1}{2}} \leq \delta_n\\
        \E\biggl[
        \biggl|\hat{\bar{m}}_t(\bZ_i) - \bar{m}_t(\bZ_i) \biggr|^2
        \biggr]^{\frac{1}{2}} \leq c_1, &\quad \E\biggl[
        \biggl|\hat{\bar{m}}_t(\bZ_i) - \bar{m}_t(\bZ_i) \biggr|^2
        \biggr]^{\frac{1}{2}}\leq \delta_n
    \end{align*}
    for all $t \in \{0,1\}$.
    \item (Convergence Rate of Cross Product)
    \begin{align*}
        \E\biggl[
        \biggl|\hat{\rho}_t(\hat{\boldf}(\bY_i, \bZ_i)) - {\rho}_t({\boldf}(\bY_i, \bZ_i)) \biggr|^2
        \biggr]^{\frac{1}{2}} &\cdot 
        \E\biggl[
        \biggl|\hat{\mu}(\hat{\boldf}(\bY_i, \bZ_i) - \mu({\boldf}(\bY_i, \bZ_i))\biggr|^2
        \biggr]^{\frac{1}{2}}
        \leq n^{-1/2}\delta_n\\
        \E\biggl[
        \biggl|\hat{\pi}_t(\bZ_i) - \pi_t(\bZ_i) \biggr|^2
        \biggr]^{\frac{1}{2}} &\cdot 
        \E\biggl[
        \biggl|\hat{\bar{m}}_t(\bZ_i) - \bar{m}_t(\bZ_i) \biggr|^2
        \biggr]^{\frac{1}{2}}
        \leq n^{-1/2}\delta_n
    \end{align*}
    for all $t \in \{0,1\}$.
\end{enumerate}
\end{assumption}
These are the standard conditions in the literature on semiparametric inference \citep{newey1994large, van_der_laan_targeted_2010, chernozhukov_doubledebiased_2018}, but they become difficult to verify when high-dimensional unstructured data are used directly as input. A common strategy is to project texts onto low-dimensional representations via pretrained embeddings, such as Bidirectional Encoder Representations from Transformers (BERT; \citealt{devlin_bert_2019}) or ModernBERT \citep{warner_smarter_2024}. Although these embeddings capture much of the textual content, their statistical properties remain poorly understood, and estimation errors may arise in practice.
If users care about the estimation loss from the embedding, a more principled alternative is to use the internal representation of the generative models, such as LLMs \citep{imai_causal_2024}. When a model can faithfully reconstruct an input $\bY_i$, the internal representation that actually generates the data can be extracted without any estimation error. Thus, if users can extract the internal representations of the generative models, these error‐free latent vectors can then be fed into the standard feed‐forward network with ReLU activations with sufficient depth and width, for which there is a convergence rate guarantee \citep{bauer_deep_2019, schmidt-hieber_nonparametric_2020, farrell_deep_2021}.

Given the stated regularity condition, the asymptotic normality of the proposed estimator can be derived as follows.

\begin{theorem}[Asymptotic Normality] \label{asymp_normal}
Under Assumptions~\ref{coding}--\ref{reg}, the estimator for the covariate-adjusted outcome mean $\hat\Psi_t$ obtained by solving the estimating equations in Equation~\eqref{est_eq} for $t \in \{0,1\}$ satisfies asymptotic normality:
$$
\frac{\sqrt{n}(\hat{\Psi}_t - \Psi_t)}{\sigma} \xrightarrow[]{d} \mathcal{n}(0, 1)
$$
where $\sigma^2 = \E\bigl[\bigl(\psi_t(T_i, \tilde L_i, S_i, \bY_i, \bZ_i)\bigr)^2 \bigr].$    
\end{theorem}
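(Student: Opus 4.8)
The plan is to follow the standard double/debiased machine learning argument with cross-fitting \citep{chernozhukov_doubledebiased_2018}. Since $\psi_t$ enters the estimating equation \eqref{est_eq} linearly through the single additive term $-\Psi_t$, solving \eqref{est_eq} expresses $\hat\Psi_t$ as the empirical average of the remaining plug-in terms of the efficient influence function evaluated at the fold-specific nuisance estimates. Writing $\eta = (\boldf, \rho_t, \pi_t, \mu, \bar m_t)$ for the nuisance collection, $\eta_0$ for its truth, $\hat\eta^{(-k)}$ for the fold-$k$ estimates, and $O_i = (T_i,\tilde L_i,S_i,\bY_i,\bZ_i)$, I would decompose
\begin{align*}
\sqrt{n}\,(\hat\Psi_t - \Psi_t) = \underbrace{\frac{1}{\sqrt n}\sum_{i=1}^n \psi_t(O_i;\eta_0)}_{(\mathrm{I})} \; + \; \underbrace{\frac{1}{\sqrt n}\sum_{k=1}^K\sum_{I(i)=k}\bigl[\psi_t(O_i;\hat\eta^{(-k)}) - \psi_t(O_i;\eta_0)\bigr]}_{(\mathrm{II})},
\end{align*}
where $\Psi_t$ cancels in $(\mathrm{II})$. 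Because $\psi_t$ is the efficient influence function of Theorem \ref{eif}, it has mean zero and variance $\sigma^2 = \E[\psi_t^2]$ (finite by the primitive bound and positivity of Assumption \ref{reg}(a)--(b)), so $(\mathrm{I}) \xrightarrow{d} \mathcal N(0,\sigma^2)$ by the Lindeberg--L\'evy central limit theorem for the i.i.d.\ sample. The remaining work is to show $(\mathrm{II}) = o_p(1)$.

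I would split $(\mathrm{II})$ into an empirical-process part and a bias part. Conditioning on the training fold $\{i : I(i)\neq k\}$ fixes $\hat\eta^{(-k)}$ while the evaluation observations with $I(i)=k$ are i.i.d.\ and independent of it; the centered empirical-process part then has conditional mean zero and conditional variance bounded by $\E\bigl[(\psi_t(O;\hat\eta^{(-k)}) - \psi_t(O;\eta_0))^2 \mid \hat\eta^{(-k)}\bigr]$, which vanishes in probability by the consistency rates of Assumption \ref{reg}(c) and the positivity bounds of Assumption \ref{reg}(b), so a conditional Chebyshev argument yields $o_p(1)$. For the bias part, $\sqrt n\,\bigl(\E[\psi_t(O;\hat\eta^{(-k)})\mid\hat\eta^{(-k)}] - \Psi_t\bigr)$, I would invoke Neyman orthogonality: because $\psi_t$ is the efficient influence function its Gateaux derivative in $\eta$ vanishes at $\eta_0$, so the first-order term cancels and only a second-order remainder survives. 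Expanding that remainder from the explicit form \eqref{efficientinfluencefunction} and bounding via Cauchy--Schwarz and positivity gives
\begin{align*}
\bigl\| \hat\rho_t - \rho_t \bigr\|_{2} \, \bigl\| \hat\mu - \mu \bigr\|_{2} \; + \; \bigl\| \hat\pi_t - \pi_t \bigr\|_{2} \, \bigl\| \hat{\bar m}_t - \bar m_t \bigr\|_{2},
\end{align*}
with the $L^2$ distances of the composed nuisances of Assumption \ref{reg}(c). Multiplying by $\sqrt n$ and applying the cross-product rate of Assumption \ref{reg}(d) bounds this by $\delta_n \to 0$. The extra split of the unlabeled training data into $I_{-1s}^{(-k)}$ and $I_{-2s}^{(-k)}$ is precisely what renders $\hat\mu^{(-k)}$ and $\hat{\bar m}_t^{(-k)}$ conditionally independent, so that a product of errors, rather than a first-order term, enters the bias.

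The main obstacle is that $\mu$ and $\rho_t$ are not free nuisances but compositions $\mu(\boldf(\bY_i),\bZ_i)$ and $\rho_t(\boldf(\bY_i),\bZ_i)$ through the \emph{learned} representation $\boldf$, itself estimated by the network in \eqref{loss_main}. Two issues must be resolved. First, I must ensure the moment is unbiased at the probability limit of the estimated nuisances: if $\hat\boldf$ converged to a representation failing the surrogacy independence \eqref{indep_surrogate}, the population mean of $\psi_t$ need not be zero. Here I would lean on the representation-invariance established in Proposition \ref{identification} --- any representation satisfying surrogacy yields the same identifying functional --- so it suffices that the limiting composed functions coincide with the true $\mu(\boldf(\bY_i),\bZ_i)$ and $\rho_t(\boldf(\bY_i),\bZ_i)$, which is exactly what Assumption \ref{reg}(c) posits. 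Second, the mixed-bias cancellation above must be re-derived treating $\mu(\boldf(\cdot),\cdot)$ and $\rho_t(\boldf(\cdot),\cdot)$ as single functions of $(\bY_i,\bZ_i)$; since the rate conditions in Assumption \ref{reg}(c)--(d) are stated directly on these composed objects, the representation-estimation error is absorbed into them and the usual augmented-inverse-probability-weighting algebra applies. Verifying that this cancellation genuinely survives the perturbation of $\boldf$ --- that no first-order representation term leaks into the bias --- is the delicate step I would scrutinize most carefully.

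Finally, combining $(\mathrm{I}) \xrightarrow{d} \mathcal N(0,\sigma^2)$ with $(\mathrm{II}) = o_p(1)$ through Slutsky's theorem gives $\sqrt n\,(\hat\Psi_t - \Psi_t) \xrightarrow{d} \mathcal N(0,\sigma^2)$, and dividing by $\sigma$ delivers the stated standardized limit.
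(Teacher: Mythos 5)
Your proposal is correct and takes essentially the same route as the paper's proof: the paper likewise decomposes $\sqrt{n}(\hat\Psi_t-\Psi_t)$ into the oracle influence-function average (handled by the CLT), an empirical-process term controlled by cross-fitting independence (via Lemma 2 of Kennedy, which is exactly your conditional Chebyshev argument), and a von Mises remainder that it expands explicitly into the two cross-products $\lVert\hat\rho_t-\rho_t\rVert_2\lVert\hat\mu-\mu\rVert_2+\lVert\hat\pi_t-\pi_t\rVert_2\lVert\hat{\bar m}_t-\bar m_t\rVert_2$ bounded using Assumption~\ref{reg}(d), with the estimating-equation construction forcing the plug-in bias term to vanish just as in your decomposition. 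One minor caveat: the product structure of the bias comes from the doubly robust algebra of the influence function itself (the paper's explicit computation of the remainder terms $R_1,R_2,R_3$, which also invokes Assumption~\ref{sufficiency} to replace $\bY_i$ by $\boldf(\bY_i)$), not from the $I_{-1s}^{(-k)}$/$I_{-2s}^{(-k)}$ split making $\hat\mu^{(-k)}$ and $\hat{\bar m}_t^{(-k)}$ conditionally independent, but this side remark is not load-bearing in your argument.
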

The proof is provided in Appendix \ref{proof_asymp_normal}. 
One drawback of this approach is that it disregards the machine learning predictions for the text feature of interest, even though these are often relatively easy to obtain. Accordingly, Appendix \ref{use_ml} considers a setting where such predictions are available, and I propose an estimation strategy that leverages the full set of observations by incorporating the machine learning predictions.

\section{Extension: The Proposed Methodology with Noisy Human Annotations}\label{sec:proximal}
The methodology described in the previous section enables the estimation under the assumption that human annotations contain no errors (Assumption \ref{perfect}). Although this assumption is common in the literature, human coders may still make random mistakes even when the coding rule is clearly defined.
Given that some amount of perfectly labeled data is required to obtain the sufficient statistical power, this limitation can be particularly problematic for real-world applications.

In this section, I extend the SRI framework in the previous section to relax the assumption of the perfect human annotations using the recent literature of double negative control (e.g., \citealt{hu_instrumental_2008, kuroki_measurement_2014, miao_identifying_2018, tchetgen_introduction_2020, zhou2024causal}). Specifically, I propose a methodology that requires multiple, but potentially incorrect, human annotations and still performs the efficient statistical inference. As before, I state assumptions, identification, and estimation strategies. I also write about the practical guidance for applied researchers about how to diagnose the required assumptions that are not guaranteed by design.

\subsection{Assumptions and nonparametric identification}
Consider the same setting as in the previous section, except that the true label $L_i \in \cL$ is assumed to be discrete. For simplicity, let $\cL = \{0, \cdots, C\}$ with some constant $C<\infty$. Instead of assuming the perfect annotations, suppose that there are $J$ coders who independently label texts $\bm{Y}_i$. Let $\tilde L^{(j)}_i  \in \cL$ be a label coded by the coder $j \in \{1, \cdots, J\}$. As before, I use $S_i = 1$ if $\tilde L^{(j)}_i$ is observed for all coders. Importantly, I allow for the possibility that $\tilde L^{(j)}_i \neq L_i$ for any $j$ and $i$, meaning that coders can make mistakes. For simplicity, I assume $J = 2$ (i.e., two human coders) for the remainder of this section, but the method can be generalized to the case of multiple coders.

The goal is to identify the same target parameter from the previous section using only the human-annotated labels $\tilde L_i^{(j)}$ for $j = \{1,2\}$, which are observed only for the part of the corpus and can be incorrect (i.e., $\tilde L_i^{(j)} \neq L_i = g_L(\bY_i)$). As the annotations may contain mistakes, I assume the following for the sake of formalization.

\begin{assumption}[Human Annotation Sampling] \label{labeling2} Human coders annotate a subset of texts that are randomly sampled from the entire corpus such that
\begin{align}
    \tilde L_i^{(j)} \; \indep \; S_i
\end{align}
for all coders $j$ with $\P(S_i = 1) > 0$.
\end{assumption}

\begin{assumption}[Noisy Human Annotations] \label{annotation} Each coder $j$ annotates $\bY_i$ solely based on the information within $\bY_i$ so that
\begin{align}
    \tilde L_i^{(j)} \ \indep \ \bZ_i \ \mid \ \bY_i \quad \text{and} \quad 
    \tilde L_i^{(j)} \ \indep \ T_i \ \mid \ \bY_i, \ \bZ_i 
\end{align}
for all $j \in \{1, \cdots, J\}$.
\end{assumption}
These assumptions do not substantially increase the practical burden compared to the previous setting. In particular, they can be satisfied if (1) researchers randomly select the texts to be annotated, (2) human coders do not directly observe the predictors of interest, and (3) the structured variable of interest $T_i$ is not coded by annotators. Unlike the earlier case, the third condition is essential here because annotated variables may otherwise remain correlated even after conditioning on texts. For instance, if coders classify both the speaker’s party affiliation and the tone of a speech, the classification errors in these two variables can be correlated even after conditioning on the text itself.

I then impose restrictions on the human annotations and the unobserved true label. While the human annotations may differ from the true label for some observations, I assume that (1) the mistakes made by each coder is independent and (2) the only way for texts to influence the human coding is through the true label. This is formally stated as follows:

\begin{assumption}[Conditional Independence of Human Annotations]\label{proximal} The human-annotated labels $\{ \tilde L^{(j)}_i: j = 1, \cdots J \}$ and the true label $L_i$ satisfy the followings:
\begin{enumerate}[label=(\alph*)]
    \item (Independent Coding) For all pair of $j,j'$ with $j\neq j'$,
    \begin{align}
        \tilde L_i^{(j)} \ \indep \ \tilde L_i^{(j')} \mid L_i
    \end{align}
    \item (Exclusion Restriction) For all human coders $j$, 
    \begin{align}
        \bY_i \ \indep \ \tilde L_i^{(j)} \ \mid \ L_i 
    \end{align}
\end{enumerate}
\end{assumption}
Figure \ref{diagram_assumption2} illustrates the required independence assumptions graphically. These two independence constraints in Assumption \ref{proximal} fundamentally require that errors in observed human annotations be completely non‑differential (i.e., the measurement error does not depend on true annotations $L_i$ or the contents of texts $\bY_i$). This essentially requires that the errors arise from pure mistakes or inattention rather than systematic biases related to the content of the texts. When mistakes are non‑differential, the coding errors from each coder are independent, satisfying the independent‑coding assumption. The exclusion restriction is also plausible when only the concept of interest in the text influences human annotation, which is reasonable when mistakes are random. Note that Assumption \ref{proximal} cannot be relaxed by conditioning on covariates $\bZ_i$, as they provide no additional information under Assumption \ref{annotation}.

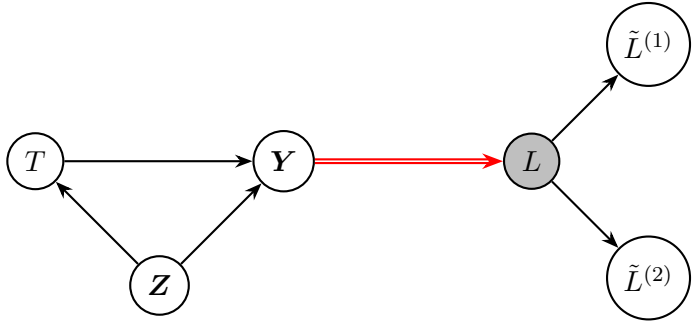
\begin{figure}[ht]
    \centering
\begin{tikzpicture}[>=Stealth, thick, node distance=2.5cm]
        % Define the nodes
        \node[draw, circle] (T) {$T$};
        \node[draw, circle, right=of T] (Y) {$\bm{Y}$};
        \node[draw, circle, right=of Y, fill = lightgray] (Ltilde) {$L$};
        \node[draw, circle, above right=1.25of Ltilde] (L1) {$\tilde L^{(1)}$};
        \node[draw, circle, below right=1.25cm of Ltilde] (L2) {$\tilde L^{(2)}$};
        \node[draw, circle, below=1.25cm of $(T)!0.5!(Lhat)$](C) {$\bm{Z}$};
        
        % Draw the arrows
        \draw[->] (T) -- (Y);
        \draw[->, double, color = red] (Y) -- (Ltilde);
        \draw[->] (C) -- (T);
        \draw[->] (C) -- (Lhat);
        %\draw[->, dotted] (Y) -- (L1);
        %\draw[->, dotted] (Y) -- (L2);
        \draw[->] (Ltilde) -- (L1);
        \draw[->] (Ltilde) -- (L2);
    \end{tikzpicture}
\caption{The diagram illustrates the assumptions for the proposed method when human annotations may differ from the unobserved true label of interest. $T$ denotes the predictors of interest, $\bm{Y}$ denotes the texts (outcomes), $L$ represents the unobserved true concept of interest, and $\bZ$ denotes the set of control variables. $\tilde L^{(j)}$ indicates the human-annotated labels provided by coder $j$ ($j = 1,2$). An arrow with red double lines represents a deterministic relationship, while an arrow with a single line indicates a potentially stochastic relationship. White nodes correspond to observed variables, whereas gray nodes indicate unobserved variables.}
\label{diagram_assumption2}
\end{figure}

To better understand how Assumption \ref{proximal} may be violated, consider the example from Section \ref{sec::example} of classifying the tones of congressional speeches on immigration. Suppose researchers classify speeches as pro‑immigration as long as any part of the speech values immigrants or favors less restrictive policies. However, if coders interpret speeches as pro‑immigration only when they both value immigrants and favor less restrictive policies, then the text content directly influences the pattern of human coders’ annotations without being fully mediated by the researchers’ definition of the label concept, thus violating the exclusion restriction assumption.

In most cases, violations of both assumptions stem from vaguely specified coding rules. When coding guidelines cannot provide specific instructions for classifying each text, coders may attempt to fill these gaps using their own reasoning and procedures. However, these well‑intentioned efforts lead to violations of Assumption \ref{proximal}. Conversely, when coding rules clearly specify classification criteria for each text type so that texts can be classified almost deterministically, most errors should result from random mistakes  rather than systematic biases. Therefore, researchers must craft clear, unambiguous coding guidelines to maintain this assumption. I discuss procedures for testing this assumption and refining coding rules in Appendix \ref{sec::guidance}.

Lastly, I assume that human labels are accurate and humans agree on their annotations most of the time. This is formalized as follows.
\begin{assumption}[Accuracy of Human Annotations]\label{monotonicity} The human-annotated labels $\{ \tilde L^{(j)}_i: j = 1, \cdots J \}$ and the true label $L_i$ satisfy the followings:
\begin{enumerate}
    \item (Accurate Human Labeling) Human coders are more likely to correctly annotate the true label so that
    \begin{align}
        \mathbb{P}(\tilde L_i^{(j)} = l \mid L_i = l) &> 0.5 \label{unobservable} %\sum_{l \neq l'} \mathbb{P}(\tilde L_i^{(j)} = l' \mid L_i = l) 
    \end{align}
    for all $l \in \cL$ and all coders $j$.
    \item (Human Labeling Agreements) Human coders generally agree on their labeling so that
    \begin{align}
    &\P(\tilde L_i^{(j)} = \tilde L_i^{(j')} = l, T_i = t\mid \bZ_i , S_i = 1) \geq \sum_{l\neq l'}\P(\tilde L^{(j)} = l', \tilde L^{(j')} = l, T_i = t\mid S_i = 1) \label{agree} 
\end{align}
for all $l \in \cL$, $t \in \{0,1\}$ and pair of coders $(j,j')$ with $j \neq j'$.
\end{enumerate}
\end{assumption}
Although these assumptions may appear restrictive, Equation~\eqref{agree} involves only the observed quantities and is thus directly testable. The only untestable component is Equation~\eqref{unobservable}. This assumption is satisfied as long as humans correctly label more than half of the texts. The assumption does not require perfection and humans can still make mistakes, but it requires that their labels be biased toward the truth, which is critical for correcting errors when combining annotations from multiple human coders. 

Under this setup, the distribution of the covariate-adjusted outcome mean can be nonparametrically identified by leveraging two or more human coding as proxies, which can then be used to identify the covariate-adjusted mean of the unobserved true label. 

\begin{proposition}[Identification without Perfect Human Annotations]\label{identification2} Under Assumptions \ref{coding}, \ref{labeling}, \ref{labeling2}--\ref{monotonicity}, I can uniquely identify the distribution of the covariate-adjusted outcome mean under $T_i = t$, i.e.,
\begin{align*}
    \Psi_{t,c} := \int \mathbb{P}(L_i = c \mid T_i = t, \bZ_i) dF(\bZ_i)
\end{align*}
from the observed data.
\end{proposition}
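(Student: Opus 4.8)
The plan is to reduce the problem to a discrete latent-variable (measurement-error) identification within strata of $\bZ_i$ and then recover $\Psi_{t,c}$ by integrating against the observed law of $\bZ_i$, using the \emph{double negative control} structure induced by the two coders. First I would dispose of the sampling indicator: by Assumptions~\ref{labeling} and \ref{labeling2} the selection $S_i$ is independent of $(T_i, \bY_i, \bZ_i)$ and of each $\tilde L_i^{(j)}$, so every conditional law of $(\tilde L_i^{(1)}, \tilde L_i^{(2)}, T_i)$ given $\bZ_i$ used below is identified from the observed ($S_i = 1$) data. I would then show that the per-coder error law is invariant to $(T_i, \bZ_i, \bY_i)$ beyond $L_i$: chaining $\tilde L_i^{(j)} \indep T_i \mid \bY_i, \bZ_i$ and $\tilde L_i^{(j)} \indep \bZ_i \mid \bY_i$ from Assumption~\ref{annotation} with the exclusion restriction $\bY_i \indep \tilde L_i^{(j)} \mid L_i$ of Assumption~\ref{proximal} and the deterministic coding rule $L_i = g_L(\bY_i)$ of Assumption~\ref{coding} gives $\P(\tilde L_i^{(j)} = a \mid \bY_i, \bZ_i, T_i) = \P(\tilde L_i^{(j)} = a \mid L_i)$. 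Hence the confusion matrices do not vary across $t$ and $z$, which is what makes the decomposition coherent; Assumption~\ref{sufficiency} is retained so that the same low-dimensional representation $\bW_i = \boldf(\bY_i)$ and the independence structure of Figure~\ref{diagram_assumption2} carry over to the estimation stage.

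Fix $z$ and work in the stratum $\bZ_i = z$. Let $\bm{A}$ and $\bm{B}$ be the $(C{+}1)\times(C{+}1)$ matrices with entries $A_{a,l} = \P(\tilde L_i^{(1)} = a \mid L_i = l)$ and $B_{b,l} = \P(\tilde L_i^{(2)} = b \mid L_i = l)$, and let $\bm{\Lambda}_t = \mathrm{diag}\big(\P(L_i = l, T_i = t \mid z)\big)_l$. Using the independent-coding condition of Assumption~\ref{proximal}(a) together with the invariance just established (equivalently, the joint conditional independence $(\tilde L_i^{(1)}, \tilde L_i^{(2)}) \indep (T_i, \bZ_i) \mid L_i$ encoded by Figure~\ref{diagram_assumption2}), the observed joint of the two annotations and the predictor factorizes as $\bm{O}_t := \big[\P(\tilde L_i^{(1)}=a, \tilde L_i^{(2)}=b, T_i=t \mid z)\big]_{a,b} = \bm{A}\,\bm{\Lambda}_t\,\bm{B}^\tp$, and summing over $t$ yields $\bm{O} := \bm{A}\,\bm{\Lambda}\,\bm{B}^\tp$ with $\bm{\Lambda} = \mathrm{diag}\big(\P(L_i=l\mid z)\big)_l$. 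The accuracy condition~\eqref{unobservable} forces each column of $\bm{A}$ and $\bm{B}$ to be strictly diagonally dominant, since $A_{l,l} > 0.5 > \sum_{a\neq l} A_{a,l}$, so $\bm{A}$ and $\bm{B}$ are invertible; assuming every latent class has positive mass in the stratum makes $\bm{\Lambda}$, and hence the observed $\bm{O}$, invertible as well.

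The recovery step is a matrix-pencil (spectral) argument. I would form $\bm{O}_t \bm{O}^{-1} = \bm{A}\,(\bm{\Lambda}_t\bm{\Lambda}^{-1})\,\bm{A}^{-1} = \bm{A}\,\mathrm{diag}\big(\P(T_i=t\mid L_i=l, z)\big)_l\,\bm{A}^{-1}$, an eigendecomposition whose eigenvalues are the conditional predictor probabilities and whose eigenvectors are the columns of $\bm{A}$. Rescaling each recovered eigenvector so its entries are nonnegative and sum to one returns a valid error distribution, and the diagonal dominance from~\eqref{unobservable} pins down the latent-class labels (the column for class $l$ is the unique one whose largest entry sits in coordinate $l$), resolving the permutation and scale indeterminacy; the testable agreement condition~\eqref{agree} guarantees that the observed $\bm{O}_t$ are diagonally dominant, so the inversion is well posed and the matching is consistent with observables. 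With $\bm{A}$ in hand, I recover the stratum prevalences $\P(L_i=l\mid z) = (\bm{A}^{-1}\bm{q})_l$ from the observed marginal $\bm{q} = \big(\P(\tilde L_i^{(1)}=a\mid z)\big)_a$, and Bayes' rule combined with the recovered eigenvalues gives $\P(L_i=c\mid T_i=t,z) = \P(T_i=t\mid L_i=c,z)\,\P(L_i=c\mid z)/\P(T_i=t\mid z)$, with $\P(T_i=t\mid z)$ observed. Integrating against the observed law of $\bZ_i$ then yields $\Psi_{t,c} = \int \P(L_i=c\mid T_i=t,z)\,dF(z)$, as claimed.

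The hard part will be the uniqueness of the latent decomposition. The spectral step requires the eigenvalues $\P(T_i=t\mid L_i=l,z)$ to be distinct across $l$, since otherwise eigenvectors are determined only up to rotations within eigenspaces; and even when they are distinct, the permutation and scaling must be fixed in a principled, partly testable way. This is precisely where Assumption~\ref{monotonicity} does the work — accuracy supplies the diagonal dominance that labels the columns, and agreement supplies the observable analogue — and a degeneracy-free alternative is to invoke Kruskal's uniqueness theorem for three-way decompositions, since $\tilde L_i^{(1)}, \tilde L_i^{(2)}$ and $T_i$ are mutually conditionally independent given $L_i$ and the Kruskal-rank inequality $(C{+}1)+(C{+}1)+\min(2,C{+}1) \ge 2(C{+}1)+2$ holds. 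I would also need to state and use the supporting positivity (every class realized within each stratum) that keeps $\bm{\Lambda}$ and $\bm{O}$ invertible.
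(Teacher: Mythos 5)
Your proposal follows essentially the same route as the paper's proof: within each stratum of $\bZ_i$ you factorize the observed joints of $(\tilde L^{(1)}_i, \tilde L^{(2)}_i, T_i)$ and of $(\tilde L^{(1)}_i, \tilde L^{(2)}_i)$ into confusion matrices times diagonal matrices, invert via a matrix pencil and eigendecomposition, and use the accuracy part of Assumption~\ref{monotonicity} to resolve scale (columns sum to one) and permutation (diagonal dominance). Your $\bm{O}_t$ and $\bm{O}$ are exactly the paper's $\bm{M}(t,\bZ_i)$ and $\bm{M}(\bZ_i)$, and your preliminary step establishing $\P(\tilde L^{(j)}_i = a \mid \bY_i, \bZ_i, T_i) = \P(\tilde L^{(j)}_i = a \mid L_i)$ mirrors the paper's derivation of the three conditional independence relations. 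The only substantive divergence is the final step: you recover $\P(L_i = c \mid T_i = t, \bZ_i)$ by Bayes' rule from the recovered eigenvalues and the prevalences $\bm{A}^{-1}\bm{q}$, whereas the paper inverts a third factorization $\bm{B}(t,\bZ_i) = \bm{A}^{(1)} \bm{\theta}(t,\bZ_i) (\bm{A}^{(2)})^{\tp}$. Both are valid, and this difference is cosmetic.

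Two remarks on your closing discussion. First, you are right that the spectral step requires the eigenvalues $\P(T_i = t \mid L_i = l, \bZ_i)$ to be distinct across $l$; the paper's proof silently requires this as well (it is not implied by Assumptions~\ref{proximal} or \ref{monotonicity}), so flagging it is a contribution rather than a defect relative to the paper. Second, however, your claim that Kruskal's theorem provides a ``degeneracy-free alternative'' is mistaken. With binary $T_i$, the third factor matrix is $2 \times (C+1)$ with columns $\bigl[\P(T_i = 0 \mid L_i = l, \bZ_i),\ \P(T_i = 1 \mid L_i = l, \bZ_i)\bigr]^{\tp}$; since each column sums to one, two columns are linearly dependent if and only if they are equal, so this matrix has Kruskal rank $2$ precisely when the conditional treatment probabilities are pairwise distinct across latent classes. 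Your inequality $(C{+}1)+(C{+}1)+\min(2,C{+}1) \ge 2(C{+}1)+2$ holds with equality and only under that same distinctness condition, so Kruskal's theorem buys nothing beyond the spectral argument here. When two classes share the same conditional law of $T_i$, the problem collapses to a two-view latent class model for separating them, and identification can genuinely fail; a complete treatment would add the distinctness requirement as an explicit assumption rather than attempt to avoid it.
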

The proof is given in Appendix \ref{proof_identification2}. 

The essence of identification is as follows. Under Assumption \ref{proximal}, for all $l,m \in \cL$ and $\bz \in \cZ$,
\begin{align*}
&\mathbb{P}(\tilde L^{(1)}_i = l, \tilde L^{(2)}_i = m, T_i = t \mid \bZ_i = \bz, S_i = 1) \\
&= \sum_{c = 0}^C
\mathbb{P}(\tilde L^{(1)}_i = l \mid L_i = c)
\P(T_i = t \mid L_i = c, \bZ_i = \bz)
\P(L_i = c \mid \bZ_i = \bz)
\mathbb{P}(\tilde L^{(2)}_i = m \mid L_i = c),
\end{align*}
and
\begin{align*}
&\mathbb{P}(\tilde L^{(1)}_i = l, \tilde L^{(2)}_i = m \mid \bZ_i = \bz, S_i = 1) \\
&= \sum_{c = 0}^C
\mathbb{P}(\tilde L^{(1)}_i = l \mid L_i = c)
\P(L_i = c \mid \bZ_i = \bz)
\mathbb{P}(\tilde L^{(2)}_i = m \mid L_i = c).
\end{align*}
These two factorizations construct the system of equations. In this system, the left-hand sides of both equations are observable and the three terms on the right-hand sides
(i.e., $\mathbb{P}(\tilde L^{(1)}_i = l \mid L_i = c)$,
$\mathbb{P}(\tilde L^{(2)}_i = m \mid L_i = c)$, and
$\P(L_i = c \mid \bZ_i = \bz)$)
are shared between equations. Since the number of equations exceeds the number of unknown parameters, the system is usually identifiable only up to scaling and permutation. However, because the coder classification probabilities $\mathbb{P}(\tilde L^{(j)}_i = l \mid L_i = c)$ sum to one across all categories (by definition of probability) and attain their maximum at $\mathbb{P}(\tilde L^{(j)}_i = c \mid L_i = c)$ (Assumption \ref{monotonicity}), all unknown parameters are in fact identified. Once the coder-specific error rates are recovered for all coders, the target quantity can be obtained by another factorization
\begin{align*}
&\mathbb{P}(\tilde L^{(1)}_i = l, \tilde L^{(2)}_i = m \mid T_i = t, \bZ_i = \bz,  S_i = 1) \\
&= \sum_{c = 0}^C
\mathbb{P}(\tilde L^{(1)}_i = l \mid L_i = c)
\mathbb{P}(L_i = c \mid T_i = t, \bZ_i = \bz)
\mathbb{P}(\tilde L^{(2)}_i = m \mid L_i = c),
\end{align*}
where the only unknown is $\mathbb{P}(L_i = c \mid T_i = t,  \bZ_i = \bz)$, which is used to calculate the parameter of interest.

\subsection{Estimation and inference}
Given the identification result, I next turn to estimation and inference. The main challenge here is that the true label $L_i$ is not observed, so it cannot be used to learn the low-dimensional surrogate representation in the same way as in the previous section. To address this, I introduce the surrogate outcome
\begin{equation*}
    \begin{aligned}
        &M_c(\tilde L^{(1)}_i, \tilde L^{(2)}_i) \\
        &= \biggl( \frac{\mathbbm{1}\{\tilde L^{(1)}_i = c\} - \P(\tilde L^{(1)}_i = c \mid L_i \neq c) }{
    \P(\tilde L^{(1)}_i = c \mid L_i = c) - \P(\tilde L_i^{(1)} = c \mid L_i \neq c)
    }\biggl)
    \biggl( \frac{\mathbbm{1}\{\tilde L_i^{(2)} = c\} - \P(\tilde L^{(2)}_i = c \mid L_i \neq c) }{
    \P(\tilde L_i^{(2)} = c \mid L_i = c) - \P(\tilde L_i^{(2)} = c \mid L_i \neq c)
    }\biggl)
    \end{aligned}
\end{equation*}
for all levels of categories $c \in \cL$.  

Crucially, this surrogate outcome is identifiable and can be estimated from the data. Specifically, each coder $j$’s classification accuracy, $\P(\tilde L^{(j)}_i = c \mid L_i)$, is recovered using the system of equations in Proposition \ref{identification2}. Although the identification formula holds for any value of the predictor $t \in \{0,1\}$, precision improves when estimates are averaged across all levels. Once $\widehat{\P(\tilde L^{(j)}_i = c \mid L_i)}$ is obtained, these estimates can be substituted into the formula above to construct $\hat{M}_c(\tilde L^{(1)}_i, \tilde L^{(2)}_i)$.

This surrogate outcome serves as a substitute for the true label and $M_c(\tilde L^{(1)}_i, \tilde L^{(2)}_i) = \mathbbm{1}\{L_i = c\}$ if both coders classify texts accurately (i.e., $\P(\tilde L^{(1)}_i = c \mid L_i = c) = \P(\tilde L^{(2)}_i = c \mid L_i = c) = 1$). Formally, it can be proven that this surrogate outcome is an unbiased estimate of the true label given either the true label or the texts: i.e.,
\begin{align}
    \E[M_c(\tilde L^{(1)}_i, \tilde L^{(2)}_i) \mid L_i] = \E[M_c(\tilde L^{(1)}_i, \tilde L_i^{(2)}) \mid \bY_i] = \mathbbm{1}\{L_i = c\}. \label{unbiased_surroagte}
\end{align}
Furthermore, it can be shown that the outcome model predicting this surrogate outcome is mathematically equivalent to the one predicting the true, unobserved label $ L_i $. See Appendix \ref{surrogate_proof} for the theoretical properties of the surrogate outcome.

Based on this surrogate outcome, I can extend the estimation strategies in the previous section.
\begin{comment}
Under Assumptions \ref{annotation}, and \ref{proximal}, the low-dimensional representation $\bW_i$ satisfies the conditional independence
\begin{align}
    \bY_i \ \indep \ \{T_i, L_i, \tilde L_i^{(1)}, \tilde L_i^{(2)} \} \mid  \bW_i, \bZ_i, \label{independence_proximal}
\end{align}
which also implies $\bY_i \ \indep \ M_c(\tilde L^{(1)}_i, \tilde L_i^{(2)}) \mid  \bW_i,  \bZ_i$. These conditions sufficiently characterize $\bW_i$ such that both the identification and estimator do not depend on the specific representations. As in the previous case, $\bW_i = \boldf(\bY_i)$ is not observed, so it is necessary estimate a surrogate representation that satisfies the independence condition above. I thus use the independence constraint to characterize the estimation procedure of the surrogate representations.
\end{comment}
Since I can learn the outcome model for the true annotation $L_i$ using the surrogate outcome $M_c(\tilde L^{(1)}_i, \tilde L^{(2)}_i)$, once the surrogate outcome $\hat{M_c}(\tilde L^{(1)}_i, \tilde L_i^{(2)})$ is estimated, I need to solve the following optimization problem
\begin{equation}
    \begin{aligned}
        \arg&\min_{\xi, \tilde\lambda_c, \zeta} \frac{1}{n} \sum_{i = 1}^n \biggl\{
        S_i \cdot
    \sum_{c \in \cL} \biggl(
    \underbrace{
    \overbrace{\tilde\mu_c(\bm{f}(\bm{Y}_i, \bZ_i; \xi); \tilde\lambda_c)}^{\text{Outcome Model}}  - \hat{M_c}(\tilde L^{(1)}_i, \tilde L_i^{(2)}) }_{\text{Outcome Loss for Category } c} \biggr)^2 \\
    %&+ \beta \cdot S_i \cdot \sum_{j = 1}^J \underbrace{ \biggl(\overbrace{\kappa^{(j)}(\bm{f}(\bm{Y}_i, \bZ_i; \xi); \nu_j)}^{\text{Human Annotation Prediction}} - \tilde L^{(j)}_i \biggr)^2}_{\text{Human Annotation Prediction Loss for Coder } j}  
    &\qquad\qquad\qquad\qquad\qquad\qquad\qquad+ \beta \cdot \underbrace{\text{CrossEntropy}\biggl(T_i, \overbrace{\rho_t(\bm{f}(\bm{Y}_i, \bZ_i; \xi); \zeta)}^{\text{Surrogacy Score}} \biggr)}_{\text{Prediction Loss for Predictor $T_i$}}
    \biggr\} \label{loss_proximal},
    \end{aligned}
\end{equation}
where \( \tilde\mu_c(\cdot; \tilde \lambda_c) \) is the outcome model for category \( L_i = c \) with the estimated surrogate outcome \( \hat{M_c}(\tilde L^{(1)}_i, \tilde L_i^{(2)}) \) given \( \bm{W}_i = \bm{f}(\bm{Y}_i, \bZ_i; \xi) \) parametrized by \( \tilde\lambda_c \), 
%$\kappa^{(j)}(\cdot; \nu_j)$ is the prediction of the human annotation by coder $j$ given \( \bm{W}_i = \bm{f}(\bm{Y}_i; \xi) \) parametrized by $\nu_j$, 
and $\beta$ are the hyperparameters regulating the influence of the two loss components. 
Importantly, the surrogate representation $\boldf(\bY_i, \bZ_i)$ learned under the loss function in Equation \eqref{loss_proximal} encodes the same conditional independence relationship as in Definition \ref{def_surrep}, because the outcome model based on the surrogate outcome $\tilde\mu$ is mathematically equivalent, in the population limit, to the outcome model based on the true outcome.
%Unlike the previous loss function, the human annotation prediction loss needs to be added to satisfy Equation~\eqref{independence_proximal}.

To characterize the behavior of the estimator at the limit, as in the previous section, I first derive its influence function.
\begin{theorem}[Influence Function] \label{theorem_eif2} The influence function for the distribution of the covariate-adjusted outcome under $T_i = t$ is given by
\begin{equation}
\begin{aligned}
    \tilde{\psi}_{t,c}(T_i, \tilde L^{(1)}_i, \tilde L^{(2)}_i, S_i, \bY_i, \bZ_i&; \boldf, \rho_t, \tilde \mu, \pi_t,  M_c, \Bar{m}_{tc}, \Psi_{t,c})\\
    &= \frac{\mathbbm{1}\{S_i = 1\}}{\P(S_i = 1)} \cdot  \frac{\rho_t(\boldf( \bY_i, \bZ_i))}{\pi_t(\bZ_i)} \biggl(M_c(\tilde L^{(1)}_i, \tilde L_i^{(2)}) - \tilde\mu_c( \boldf(\bY_i, \bZ_i))\biggr)\\
    &\quad\quad\quad + 
    \frac{\mathbbm{1}\{T_i = t\} }{\pi_t(\bZ_i)} \biggl( \tilde\mu_c( \boldf(\bY_i, \bZ_i)) - \Bar{m}_{t,c}(\bZ_i) \biggr) + \Bar{m}_{t,c}(\bZ_i) - \Psi_{t,c}
\end{aligned}
\label{eif2}
\end{equation}
where $\tilde \mu_c(\boldf(\bY_i, \bZ_i)) = \E[M_c(\tilde L_i^{(1)}, \tilde L_i^{(2)}) \mid \boldf(\bY_i, \bZ_i), S_i = 1]$, and $\Bar{m}_{t,c}(\bZ_i) = \int \tilde \mu_c(\boldf(\bY_i, \bZ_i)) dF(\bY_i \mid T_i = t, \bZ_i)$.
\end{theorem}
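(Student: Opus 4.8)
The plan is to exploit the structural equivalence between this estimand and the perfect-annotation functional of Theorem~\ref{eif}, treating the surrogate outcome $M_c(\tilde L_i^{(1)}, \tilde L_i^{(2)})$ as a pseudo-outcome that plays exactly the role the observed label $\tilde L_i$ played in the previous section. Comparing Equation~\eqref{eif2} with Equation~\eqref{efficientinfluencefunction}, the two influence functions share an identical three-term form under the substitutions $\tilde L_i \mapsto M_c(\tilde L_i^{(1)}, \tilde L_i^{(2)})$, $\mu \mapsto \tilde\mu_c$, $\bar m_t \mapsto \bar m_{t,c}$, and $\Psi_t \mapsto \Psi_{t,c}$. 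The entire argument therefore reduces to verifying that the surrogate outcome behaves, through the surrogate representation $\bW_i=\boldf(\bY_i)$ and the random sampling mechanism, exactly as the error-free label did, so that the pathwise-derivative derivation of Appendix~\ref{proof_eif} transfers term by term.

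First I would establish that the outcome model recovers the correct target. Using the unbiasedness property of Equation~\eqref{unbiased_surroagte}, namely $\E[M_c(\tilde L_i^{(1)}, \tilde L_i^{(2)}) \mid \bY_i] = \mathbbm{1}\{L_i = c\}$, together with Assumptions~\ref{annotation} and~\ref{proximal} (which jointly yield $M_c \indep \bZ_i \mid \bY_i$ because each annotation is independent of $\bZ_i$ given $\bY_i$ and the two annotations factorize given $L_i=g_L(\bY_i)$), and the tower property over the coarsening $\bW_i = \boldf(\bY_i)$, I would show
\begin{align*}
\tilde\mu_c(\boldf(\bY_i), \bZ_i) = \E[M_c(\tilde L_i^{(1)}, \tilde L_i^{(2)}) \mid \bW_i, \bZ_i, S_i = 1] = \P(L_i = c \mid \bW_i, \bZ_i),
\end{align*}
where the conditioning on $S_i = 1$ is dropped using the random-sampling Assumptions~\ref{labeling} and~\ref{labeling2}. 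This mirrors the perfect-annotation outcome model $\mu(\bW_i, \bZ_i) = \E[\tilde L_i \mid \bW_i, \bZ_i]$, with the binary target $\mathbbm{1}\{L_i = c\}$ in place of $L_i$.

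Next I would confirm the identification functional. By the sufficiency Assumption~\ref{sufficiency}, Equation~\eqref{independence_proximal} gives $L_i \indep T_i \mid \bW_i, \bZ_i$, so marginalizing $\tilde\mu_c$ over $\bW_i$ within the stratum $(T_i = t, \bZ_i)$ yields $\bar m_{t,c}(\bZ_i) = \P(L_i = c \mid T_i = t, \bZ_i)$ and hence $\E[\bar m_{t,c}(\bZ_i)] = \Psi_{t,c}$, which is the analog of the identification formula of Proposition~\ref{identification2} (paralleling Equation~\eqref{identified}) with $M_c$ as outcome. Consequently $\Psi_{t,c}$ is the same covariate-adjusted-mean functional of a surrogate-representation outcome model, recovered through an $S_i$-induced coarsening, as the functional of Theorem~\ref{eif}, and the three influence-function terms follow identically: the first is the $S_i$-sampling-corrected residual of $M_c$ against $\tilde\mu_c$, reweighted by the surrogacy-to-propensity ratio $\rho_t/\pi_t$; the second is the $T_i$-stratum correction of $\tilde\mu_c$ relative to its $\bZ$-conditional mean; the third centers $\bar m_{t,c}$ at $\Psi_{t,c}$.

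The hard part will be the first step: I must verify carefully that the unbiasedness of $M_c$ conditional on $\bY_i$ propagates through both the coarsening to $\bW_i$ and the sampling indicator $S_i$ without introducing additional terms. The other delicate point is that $M_c$ embeds the coder classification probabilities $\P(\tilde L_i^{(j)}=c\mid L_i)$; because Equation~\eqref{eif2} treats these as fixed rather than estimated, the resulting object is \emph{an} influence function rather than \emph{the} efficient one---which is precisely why Theorem~\ref{theorem_eif2} is stated as an influence function, leaving the extra variance contribution from estimating those probabilities to be handled separately in the asymptotic analysis.
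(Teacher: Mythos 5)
There is a genuine gap at the heart of your argument: the claim that the point-mass contamination derivation of Appendix~\ref{proof_eif} ``transfers term by term'' once $\tilde L_i$ is replaced by $M_c(\tilde L^{(1)}_i,\tilde L^{(2)}_i)$. In Theorem~\ref{eif}, the outcome $\tilde L_i$ is a primitive observed variable whose conditional law is a free nonparametric component that the perturbation $\cP_\epsilon$ moves; here $M_c$ is neither primitive nor fixed. Its definition embeds the coder classification probabilities $\P(\tilde L^{(j)}_i = c \mid L_i)$, which are themselves functionals of $\cP$ (identified via the matrix decomposition in Proposition~\ref{identification2}), and the variable it is unbiased for, $\mathbbm{1}\{L_i = c\}$, is latent in the observed data. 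If you freeze those probabilities at the truth and differentiate the functional $\int\int \E[M_c \mid \bW_i,\bZ_i,S_i=1]\,dF(\bW_i \mid \bZ_i, T_i=t)\,dF(\bZ_i)$, you obtain an influence function for that \emph{frozen} functional, not for $\Psi_{t,c}$: along submodels that perturb only the coder-error laws $f(\tilde L^{(j)}_i \mid L_i)$, the target $\Psi_{t,c}$ has zero derivative while the frozen functional does not, so the two functionals do not agree to first order in all directions, and a term-by-term substitution into the Theorem~\ref{eif} computation is not a proof. Your closing remark --- that treating the coder probabilities as fixed is ``precisely why'' the theorem claims only an influence function rather than the efficient one --- mischaracterizes the issue: the distinction between \emph{an} influence function and \emph{the} efficient one concerns whether the candidate is the projection onto the observed-data tangent space, not a license to ignore score directions.

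What the paper does instead, and what your proposal is missing, is the bridge between full-data and observed-data influence functions. Appendix~\ref{proof_eif2} (i) derives the full-data (oracle) efficient influence function $\psi^F_{t,c}$ by substituting the outcome $\mathbbm{1}\{L_i=c\}$ into Theorem~\ref{eif}; (ii) proves a characterization lemma, via the factorization of the oracle score into components depending on $S_i$, $\bZ_i$, $(T_i,\bZ_i)$, and $(\bY_i,T_i,\bZ_i)$, showing that any observed-data influence function $Q$ must satisfy $\E[Q\mid S_i]=\E[\psi^F_{t,c}\mid S_i]$ and $\E[Q \mid \bY_i,T_i,\bZ_i]=\E[\psi^F_{t,c}\mid \bY_i,T_i,\bZ_i]$; and (iii) verifies these conditions for the candidate by showing that the difference
\begin{align*}
\tilde\psi_{t,c}-\psi^F_{t,c} \;=\; \frac{\mathbbm{1}\{S_i=1\}}{\P(S_i=1)}\cdot\frac{\rho_t(\boldf(\bY_i),\bZ_i)}{\pi_t(\bZ_i)}\Bigl(M_c(\tilde L^{(1)}_i,\tilde L^{(2)}_i) - \mathbbm{1}\{L_i=c\}\Bigr)
\end{align*}
has zero conditional expectation given $S_i$ and given $(\bY_i,T_i,\bZ_i)$, using $\E[M_c\mid \bY_i]=\mathbbm{1}\{L_i=c\}$ from Appendix~\ref{surrogate_proof} together with Assumptions~\ref{labeling}, \ref{labeling2}, and \ref{annotation}. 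Your first step, establishing $\tilde\mu_c(\boldf(\bY_i),\bZ_i)=\P(L_i=c\mid \bW_i,\bZ_i)$, is correct and is indeed the ingredient used in (iii); but without the lemma in (ii) --- which pins down exactly which conditional-expectation conditions suffice, and relative to which class of perturbations --- the substitution argument has no justification. That lemma is precisely where the latency of $L_i$ and the distribution-dependence of $M_c$ are confronted, and it is the step your proposal skips.
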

The proof is provided in Appendix \ref{proof_eif2}. 

\begin{comment}
While this may appear different from the previous representation and only use the outcome models, it is indeed an efficient influence function since the treatment is randomized. In fact, the representation in the previous section (Equation~\ref{efficientinfluencefunction}) is simplified as
\begin{align*}
    \psi_{t}(T_i, L_i, S_i, \bm{Y}_i; \bm{f}, \pi_t, \mu, \Psi_{t}) 
    &= \frac{\mathbbm{1}\{T_i = t\}}{\P(T_i = t)} \biggl( \mu(\bm{f}(\bm{Y}_i)) - \Psi_{t} \biggr),
\end{align*}
where the equality follows because \( \mu(\bm{f}(\bm{Y}_i)) = \mathbb{E}[L_i \mid \bm{f}(\bm{Y}_i)] = \mathbb{E}[L_i \mid \bm{Y}_i] = L_i \) (the last equality holds under Assumption \ref{coding}). However, I prefer the representation in Equation~\eqref{efficientinfluencefunction} for the previous section because it possesses the rate double robustness property. On the other hand, as I show below, Equation~\eqref{eif2} requires the outcome model to converge at a rate of \( n^{-1/4} \), which is slightly stronger, as I do not utilize the surrogate score model for the treatment effect estimation.
\end{comment}

Since the estimation of the target parameter requires both the estimation of the surrogate outcome $M_c(\tilde L^{(1)}_i, \tilde L^{(2)}_i)$ and the optimization of the loss function in Equation~\eqref{loss_proximal}, I need to modify the standard cross-fitting procedure accordingly. Suppose that $n$ is divisible by $K$. The entire estimation procedure with $K$-fold cross-fitting is summarized as follows.
\begin{enumerate}
    \item Randomly partition the data into $K$ folds of equal size. Let $I(i) = k$ be the indicator of the observation $i$ belonging to the fold $k$.
    \item For each fold $k \in \{1, \cdots, K\}$, use $I(i) \neq k$ as training data:
    \begin{enumerate}
        \item Estimate the propensity score model $\hat\pi_t$ using all the sample with $I(i) \neq k$.
        \item Split the training data with $S_i = 1$ into two halves, say $I^{(-k)}_{1s}$ and  $I^{(-k)}_{2s}$. Also, split the remaining training data (with $S_i = 0$) into two halves, say $I_{-1s}^{(-k)}$ and $I_{-2s}^{(-k)}$.
        \item Use $I^{(-k)}_{1s}$ to obtain the estimated $\P(\tilde L_i^{(j)} = l \mid L_i = m)$ for all categories $(l,m)$ and all coders $j = 1,2$. By inserting these estimates,  obtain the estimated surrogate outcome $\hat M_c(\{\tilde L_i^{(1)}, \tilde L_i^{(2)}\}_{i \in I^{(-k)}_{2s}})$ on $I^{(-k)}_{2s}$.
        \item Solve the optimization problem in Equation~\eqref{loss_proximal} using the data $I^{(-k)}_{2s}$ and $I_{-1s}^{(-k)}$ and obtain the estimated outcome models $\hat{\tilde\mu}_c^{(-k)}$ for all category levels and the surrogacy score $\hat\rho_t^{(-k)}$.
        \item Regress the obtained $\hat{\tilde \mu}_c^{(-k)}$ onto $T_i = t$ and $\bZ_i$ using the data $I_{-2s}^{(-k)}$ and obtain $\hat{\Bar{m}}_{t,c}^{(-k)}$.
    \end{enumerate}
    \item For each category level $c \in \cL$ and at the level of predictors of interest $t$, solve the estimating equation 
    \begin{align}
        \frac{1}{n} \sum_{k = 1}^K \sum_{I(i) = k}\tilde{\psi}_{t,c}(T_i, \tilde L^{(1)}_i, \tilde L^{(2)}_i, S_i, \bY_i, \bZ_i; \hat\boldf^{(-k)}, \rho_t^{-(k)}, \hat{\tilde \mu}^{(-k)}, \hat\pi_t^{(-k)},  \hat{M}_c^{(-k)}, \hat{\Bar{m}}^{(-k)}_{t,c}, \hat\Psi_{t,c}) = 0 \label{est_eq_proximal}
    \end{align}
    and obtain the estimate of the target parameter $\hat \Psi_{t,c}$.
\end{enumerate}
As the procedure described above is nested, I practically recommend using large values of $K$ to have a sufficient amount of training data for obtaining the coder classification estimates and the training of the neural network.

Under this setup, I formally establish the asymptotic normality of the proposed estimator. As in Section \ref{sec::method}, I assume standard regularity conditions, but the required convergence rates can be satisfied by the proposed neural network architecture. The formal statement of these regularity conditions is provided in Assumption~\ref{reg2} of Appendix~\ref{proof_asympnormal2}.

\begin{comment}
\begin{assumption}[Regularity Conditions]\label{reg2}
Let $c_1$ be a positive constant and $\delta_n$ be a sequence of positive constants approaching zero as the sample size $n$ increases.
\begin{enumerate}
    \item (Outcome Model Estimation) Outcome models with a estimated surrogate outcome satisfy
    \begin{align*}
        &\E\biggl[
        \biggl|\widehat{\E[\hat M_c(\tilde L_i^{(1)}, \tilde L_i^{(2)}) \mid \hat\boldf(\bY_i)]} -  \E[\hat M_c(\tilde L_i^{(1)}, \tilde L_i^{(2)}) \mid \boldf(\bY_i)] \biggr|^2
        \biggr]^{\frac{1}{2}}
        \leq c_1\\
        &\E\biggl[
        \biggl|\widehat{\E[\hat M_c(\tilde L_i^{(1)}, \tilde L_i^{(2)}) \mid \hat \boldf(\bY_i)]}  - \E[\hat M_c(\tilde L_i^{(1)}, \tilde L_i^{(2)}) \mid \boldf(\bY_i)] \biggr|^2
        \biggr]^{\frac{1}{2}}
        \leq n^{-1/4}\delta_n
    \end{align*}
    for all $c \in \{0, \cdots, C\}$.
    \item (Expert Classification Rate Estimation) Expert classification rates $\P(\tilde L_i^{(j)}  = c \mid L_i)$ are estimated so that
    \begin{align*}
        &\E\biggl[
        \biggl|\widehat{\P(\tilde L_i^{(j)}  = c \mid L_i) } - \P(\tilde L_i^{(j)}  = c \mid L_i) \biggr|^2
        \biggr]^{\frac{1}{2}}
        \leq c_1\\
        &\E\biggl[
        \biggl|\widehat{\P(\tilde L_i^{(j)}  = c \mid L_i) } - \P(\tilde L_i^{(j)}  = c \mid L_i) \biggr|^2
        \biggr]^{\frac{1}{2}}
        \leq n^{-1/4}\delta_n
    \end{align*}
    for all coders $j$ and all category levels $c \in \{0, \cdots, C\}$. 
\end{enumerate}  
\end{assumption}   
\end{comment}

\begin{theorem}[Asymptotic Normality]\label{asympnormal2}
Under Assumptions \ref{coding},\ref{labeling},  \ref{labeling2}--\ref{reg2}, the estimator for the covariate-adjusted outcome mean $\hat{\Psi}_t = \sum_{c \in \cL} c \ \hat \Psi_{t,c}$ obtained by solving the estimating equations in Equation~\eqref{est_eq_proximal} satisfies asymptotic normality:
$$
\frac{\sqrt{n}(\hat{\Psi}_t - \Psi_t)}{\tilde\sigma} \xrightarrow[]{d} \mathcal{N}(0, 1)
$$
where $\tilde\sigma^2 = \E\bigl[\bigl(\sum_{c \in \cL} c \  \tilde{\psi}_{t,c}(T_i, \tilde L^{(1)}_i, \tilde L^{(2)}_i, S_i, \bY_i, \bZ_i)\bigr)^2 \bigr].$
\end{theorem}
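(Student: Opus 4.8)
The plan is to run the standard cross-fitted debiased machine learning argument \citep{chernozhukov_doubledebiased_2018} on each per-category estimator $\hat\Psi_{t,c}$ and then pass to the linear combination $\hat\Psi_t=\sum_{c\in\cL}c\,\hat\Psi_{t,c}$. Because the moment function $\tilde\psi_{t,c}$ in \eqref{eif2} is affine in $\Psi_{t,c}$ (it enters only through the additive term $-\Psi_{t,c}$), solving the estimating equation \eqref{est_eq_proximal} gives $\hat\Psi_{t,c}$ in closed form as an average of the remaining terms. Writing $O_i=(T_i,\tilde L^{(1)}_i,\tilde L^{(2)}_i,S_i,\bY_i,\bZ_i)$, I would then decompose
\[
\sqrt{n}\,(\hat\Psi_{t,c}-\Psi_{t,c})
=\frac{1}{\sqrt{n}}\sum_{i=1}^{N}\tilde\psi_{t,c}(O_i;\text{true nuisances})+\sqrt{n}\,R_{n,c},
\]
where the first term is evaluated at the true $(\boldf,\tilde\mu_c,\pi_t,\rho_t,M_c,\Bar{m}_{t,c})$ and $R_{n,c}$ gathers, fold by fold, all contributions of the estimated nuisances (including the plug-in surrogate outcome $\hat M_c$ and the empirical sampling fraction replacing $\P(S_i=1)$, which converges at the parametric rate).

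For the oracle term, Theorem \ref{theorem_eif2} guarantees $\E[\tilde\psi_{t,c}(O_i)]=0$. Under the positivity of Assumption \ref{reg2}(a) and the boundedness of $M_c$ (which holds because $\cL=\{0,\dots,C\}$ is finite and the denominators defining the surrogate outcome are bounded away from zero by Assumption \ref{monotonicity}), the summands are i.i.d.\ with finite variance. The Lindeberg--L\'evy central limit theorem then yields joint asymptotic normality of the vector $(\tilde\psi_{t,c})_{c\in\cL}$, and a Cram\'er--Wold device applied to $\sum_{c\in\cL}c\,\tilde\psi_{t,c}$ delivers the $\mathcal N(0,\tilde\sigma^2)$ limit for the oracle part of $\sqrt{n}(\hat\Psi_t-\Psi_t)=\sum_{c\in\cL}c\,\sqrt{n}(\hat\Psi_{t,c}-\Psi_{t,c})$; a consistent plug-in variance estimator and Slutsky's theorem then give the studentized conclusion.

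It remains to show $\sqrt{n}\,R_{n,c}=o_p(1)$. I would split each fold's contribution into a stochastic-equicontinuity piece and a conditional-bias piece. Conditioning on the training data used to fit $(\hat\boldf,\hat{\tilde\mu}_c,\hat\pi_t,\hat\rho_t,\hat{\Bar{m}}_{t,c},\hat M_c)$, cross-fitting makes the evaluation observations independent of the fitted nuisances, so the centered quantity $n^{-1/2}\sum_{I(i)=k}\{\hat{\tilde\psi}_{t,c}(O_i)-\tilde\psi_{t,c}(O_i)-\E[\,\cdot\mid\text{train}]\}$ has conditional variance bounded by the squared $L^2$ nuisance errors, which vanish by Assumption \ref{reg2}(b)--(d); Chebyshev's inequality makes this piece $o_p(1)$. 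The burden thus shifts entirely to the conditional bias $\sqrt{n}\,\E[\hat{\tilde\psi}_{t,c}(O_i)-\tilde\psi_{t,c}(O_i)\mid\text{train}]$.

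The conditional bias is the main obstacle. For the part with the surrogate outcome held fixed, the moment function \eqref{eif2} is Neyman orthogonal in $(\boldf,\tilde\mu_c,\pi_t,\rho_t,\Bar{m}_{t,c})$, so a second-order expansion leaves only product remainders of the schematic form $\|\hat\rho_t/\hat\pi_t-\rho_t/\pi_t\|_{2}\,\|\hat{\tilde\mu}_c-\tilde\mu_c\|_{2}$ and $\|\hat\pi_t-\pi_t\|_{2}\,\|\hat{\Bar{m}}_{t,c}-\Bar{m}_{t,c}\|_{2}$, each of order $O(n^{-1/2}\delta_n^2)=o(n^{-1/2})$ under the individual $n^{-1/4}\delta_n$ rates of Assumption \ref{reg2}(b)--(c). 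The genuinely new difficulty is that $\hat M_c$ replaces $M_c$ wherever it appears and is a ratio-type nonlinear function of the estimated coder classification rates, so it is embedded simultaneously in the outcome loss, the outcome model, and the efficient influence function. I would isolate its effect by (i) expanding $\hat M_c-M_c$ to first order in $\widehat{\P(\tilde L^{(j)}_i=c\mid L_i)}-\P(\tilde L^{(j)}_i=c\mid L_i)$, valid because the relevant denominators are bounded away from zero by Assumption \ref{monotonicity}; and (ii) noting that $\hat{\tilde\mu}_c$ targets $\E[\hat M_c\mid\hat\boldf(\bY_i),\bZ_i]$, i.e.\ the conditional mean of the \emph{plug-in} surrogate, which is exactly the object whose convergence is posited in Assumption \ref{reg2}(b). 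The unbiasedness identity \eqref{unbiased_surroagte}, namely $\E[M_c\mid\bY_i]=\mathbbm{1}\{L_i=c\}$, shows that the surrogate-outcome error contaminates the target only through $\E[\hat M_c-M_c\mid\bY_i]$, which is linear in the classification-rate errors and hence $O_p(n^{-1/4}\delta_n)$ by Assumption \ref{reg2}(d); paired with an orthogonal factor carrying a second $n^{-1/4}\delta_n$ rate, this term is again $o(n^{-1/2})$. This is precisely why the theorem requires each nuisance, including the classification rates, at the individual $n^{-1/4}$ rate rather than the paired products that sufficed in the perfect-annotation case. Collecting the oracle central limit theorem with these negligible remainders and summing over $c$ with weights $c$ establishes the stated asymptotic normality.
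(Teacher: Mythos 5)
Your overall architecture matches the paper's proof: an oracle CLT for $\sum_{c}c\,\tilde\psi_{t,c}$, cross-fitting plus a conditional Chebyshev bound for the empirical-process term, product-type remainders for the standard nuisances, and control of $\hat M_c - M_c$ through its Lipschitz dependence on the coder classification rates (the paper does this last step with an explicit mean-value-theorem bound, folds it into a single composite outcome-model error of order $o_p(n^{-1/4})$, and then defers to the proof of Theorem \ref{asymp_normal}).

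The gap is in your handling of the surrogate-outcome error, which is the one genuinely new difficulty of this theorem. You assert that $\E[\hat M_c - M_c\mid \bY_i]$, being $O_p(n^{-1/4}\delta_n)$, is ``paired with an orthogonal factor carrying a second $n^{-1/4}\delta_n$ rate.'' No such pairing is exhibited, and writing out the AIPW cancellations shows none exists: the conditional bias equals a sum of genuine error products plus the unpaired linear term
\begin{align*}
\E\biggl[\frac{\rho_t(\boldf(\bY_i),\bZ_i)}{\pi_t(\bZ_i)}\,\E\bigl[\hat M_c - M_c \mid \boldf(\bY_i),\bZ_i\bigr]\biggr],
\end{align*}
weighted by the \emph{true} surrogacy ratio. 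Neyman orthogonality of \eqref{eif2} protects against perturbing the nuisances $(\rho_t,\pi_t,\tilde\mu_c,\bar m_{t,c})$; it gives no protection against replacing the outcome $M_c$ itself by $\hat M_c$, because that replacement shifts the target of the moment equation rather than a nuisance. Concretely, expanding $\hat M_c - M_c$ in the rate errors (using conditional independence of the coders given $L_i$), this term equals $(a_1+a_2+a_1a_2)\Psi_{t,c}+b_1b_2(1-\Psi_{t,c})$, where $a_j$ is, to first order, proportional to the error in $\widehat{\P(\tilde L^{(j)}_i=c\mid L_i=c)}$ and $b_j$ to the error in $\widehat{\P(\tilde L^{(j)}_i=c\mid L_i\neq c)}$. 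The false-positive errors do pair across the two coders (that is the double-negative-control structure your intuition reaches for), but the true-positive errors enter through $a_1+a_2$, linearly and unpaired. Under Assumption \ref{reg2}(d) alone this contribution is $O_p(n^{-1/4}\delta_n)$, so $\sqrt{n}$ times it need not vanish; indeed your closing remark is backwards — individual $n^{-1/4}$ rates suffice only for errors that appear in pairs, whereas an unpaired error needs $o_p(n^{-1/2})$. To close the argument you must either exhibit a cancellation of $a_1+a_2$ (none is available from \eqref{eif2} as written) or impose a faster rate on the classification-rate estimates. I note that the paper's own proof, by bounding $\E[|\hat M_c-M_c|^2]^{1/2}=o_p(n^{-1/4})$ and then invoking the ``rate double robustness'' of Theorem \ref{asymp_normal}, relies on the same unestablished pairing; your more explicit decomposition exposes the step rather than resolves it, but as written the step fails.
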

The proof is in Appendix \ref{proof_asympnormal2}.

\section{Empirical Application}\label{sec::application}
I now apply the proposed SRI methodology to the existing study introduced in Section \ref{sec::example}. Following the original analysis, I use the party affiliation of the speaker as the predictor of interest. The outcome of interest is the framing of the immigrants in the U.S. Congressional speeches, and the original analysis used predictions from a pre-trained language model (RoBERTa; see Table \ref{accuracy} for classification accuracy). I use all observations with machine-learning predictions ($n = 248{,}616$), of which $3{,}643$ are annotated.

For the sake of efficiency comparison, I implement the proposed methodology under the assumption that human annotations are perfect (Section \ref{sec::method}). To do so, I first obtain vector representations of the texts using BERT embedding \citep{devlin_bert_2019} and feed them into the proposed neural-network architecture. I perform 2-fold cross-fitting with the architecture illustrated in Figure \ref{dragonnet}. Specifically, the network has three hidden layers with dimensions 100, 100, and 50 leading to the shared layer $\boldf(\bR_i)$. The shared layer is then connected to prediction heads for outcomes and the predictors of interest, each consisting of two layers with dimensions 50 and 1 (i.e., ultimately predicting a scalar). I use ReLU activations and solve the optimization problem in Equation \ref{loss_main} with $\alpha = 1$, employing the Adam optimizer (learning rate $=2 \times 10^{-5}$), a maximum of 200 epochs, and a batch size of 256 \citep{kingma_adam_2017}. I set aside 20 \% of the data for validation and implement early stopping with a patience of five epochs (i.e., if the validation loss fails to decrease for five consecutive epochs, training stops). Once the outcome model and surrogacy score are estimated, I use them to calculate the covariate-adjusted outcome mean for each party affiliation via the estimating equation in Equation \eqref{est_eq} and calculate the difference between Democrats and Republicans.

I compare the proposed SRI estimator with two alternatives. The first is a naïve estimator that directly uses RoBERTa’s predictions to compute the difference in means. Because RoBERTa’s predictions contain classification errors, this estimator is expected to be biased. The second is an existing bias-correction approach that combines machine-learning predictions with human annotations, often referred to as Prediction-Powered Inference (PPI; \citealt{angelopoulos2023prediction}) or, with only minor differences, Design-based Supervised Learning (DSL; \citealt{egami_neulips_2023, egami2024using}). With no control variables, these estimators for the outcome mean under $T_i = t$ is defined as\footnote{Specifically, I follow the DSL approach \citep{egami_neulips_2023, egami2024using} for $\hat\Psi_{t, \mathrm{Existing}}$ because of imbalances between the labeled and unlabeled data. The only difference between DSL and PPI is that PPI uses the average over the unbalanced subset of data for the first term, whereas DSL uses the average over the entire dataset. Formally, the PPI estimator are defined as
\begin{align*}
    \hat \Psi_{t, \mathrm{PPI}} = \frac{1}{N_t} \sum_{i = 1}^n \mathbbm{1}\{T_i = t, S_i = 0\} \hat{L}_i - \frac{1}{n_t} \sum_{i = 1}^n \mathbbm{1}\{T_i = t, S_i = 1\}  \biggl(\hat{L}_i - \tilde L_i\biggr) \quad \text{for all} \ t,
\end{align*}
where $N_t$ is the number of unlabeled observations (i.e., $N_t = \sum_{i = 1}^n \mathbbm{1}\{T_i = t, S_i = 0\}$). As long as texts are randomly assigned to annotators, these two estimators should, on average, yield the same estimates at the population level. In this application, however, although random assignment was implemented, the average tone differs between labeled and unlabeled data, making PPI somewhat sensitive. For this reason, I use the DSL estimator here, since it relies on the overall average for the first term.
} 
\begin{align*}
    \hat\Psi_{t, \mathrm{Existing}} := \frac{1}{n} \sum_{i = 1}^n \mathbbm{1}\{T_i = t\} \hat{L}_i - \frac{1}{n_t} \sum_{i = 1}^n \mathbbm{1}\{T_i = t, S_i = 1\}  \biggl(\hat{L}_i - \tilde L_i\biggr) \quad \text{for all} \ t,
\end{align*}
where $n_t$ is the number of labeled units in group $T_i = t$ (i.e., $n_t = \sum_{i = 1}^n \mathbbm{1}\{T_i = t, S_i = 1\}$), and $\hat L_i$ denotes the predictions from RoBERTa. Once the outcome mean is estimated for each value of the predictor of interest, the difference between $\hat\Psi_{1, \mathrm{Existing}}$ (Democrats) and $\hat\Psi_{0, \mathrm{Existing}}$ (Republicans) is calculated. Unlike SRI, these existing estimators do not impose independence constraints between the machine-learning predictions $\hat{L}_i$ and the human annotations $\tilde L_i$. Although such constraints are not guaranteed by design for machine learning predictions $\hat{L}_i$, their absence is expected to reduce efficiency, especially when $\hat{L}_i$ and $\tilde L_i$ diverge. See Appendix \ref{difference} for further theoretical discussion.

\begin{figure}[h]
    \centering
    \includegraphics[width=1.0\linewidth]{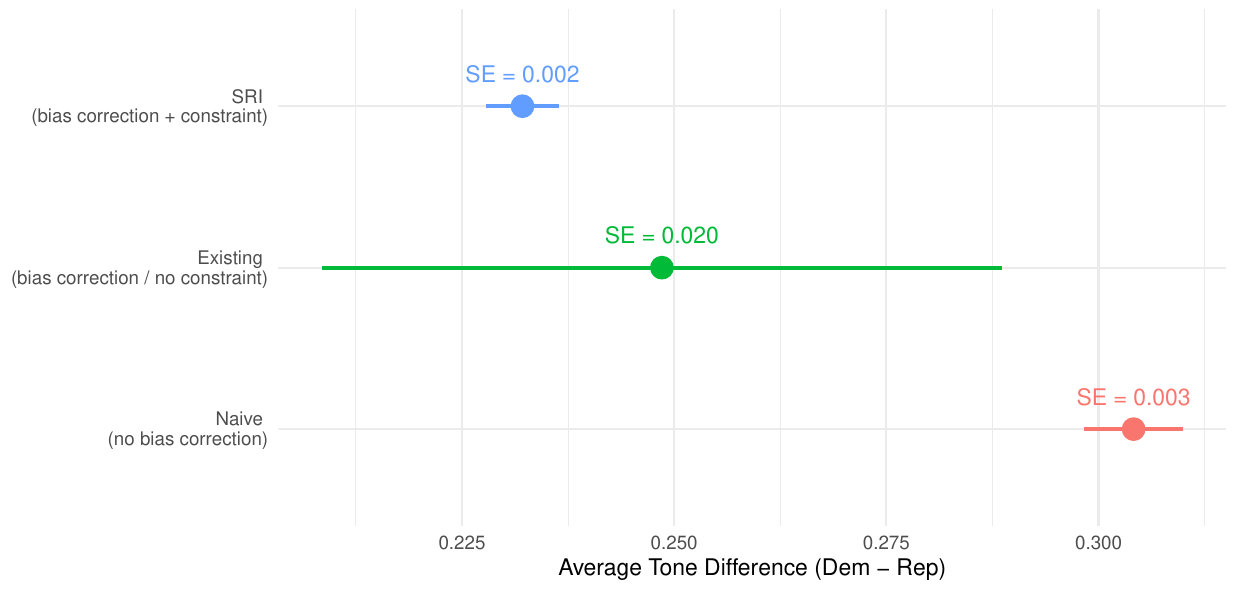}
    \caption{The estimated partisan difference in tone towards immigrants based on the data of \cite{card_computational_2022} for (1) the SRI estimator (proposed), (2) the existing bias-correction approach using both human annotations and machine learning predictions; \citealt{angelopoulos2023prediction, egami_neulips_2023, egami2024using}),
    and (3) the na\"ive estimator (directly using the machine learning predictions).}
    \label{emp_result}
\end{figure}

Figure \ref{emp_result} presents the results. The na\"ive estimator based on machine-learning predictions shows a larger partisan difference in the framing of immigrants than both the proposed SRI estimator and the existing bias-correction approaches. While both SRI and the existing bias-correction approach are consistent, the results highlight a substantial efficiency gain: by exploiting independence constraints, the SRI estimator significantly reduces the standard error compared with the existing bias-correction approaches.

\section{Conclusion}\label{sec::conclusion}
In this paper, I introduce \emph{Surrogate Representation Inference} (SRI), a framework that addresses the challenges of incorporating high-dimensional unstructured data into statistical inference with a limited number of human annotations that may contain measurement errors. The SRI framework rests on a key assumption that can be guaranteed by design: human coders rely only on unstructured data for annotations and do not directly look at the structured variables of interest. Under this assumption, unstructured data fully mediate the structured variables and human annotations, satisfying the statistical surrogacy condition. As unstructured data are high-dimensional, I propose learning their low-dimensional representations such that the surrogacy conditions continue to hold. The paper examines the setting in which a text-based variable functions as an outcome and develops identification conditions, a neural network architecture for representation learning, and semiparametric efficient estimation strategies based on the learned representations. When multiple annotations are available, SRI also corrects for non-differential measurement error in human annotations. The empirical application demonstrates that SRI significantly reduces the standard errors compared to the existing methodology. Systematic simulation studies in Appendix \ref{sec::simulation} also confirm that when the quality of the machine learning proxy is moderate, the proposed SRI methodology can reduce the standard error approximately by half. These findings highlight the potential of SRI to enhance the precision and reliability of statistical inference with unstructured data, paving the way for future applications in diverse text or image-based research settings.

While this work has primarily focused on settings where text-based variable serves as an outcome, the proposed SRI procedure can be applied to diverse settings and modalities, such as those where unstructured data serves as control or predictors. 
%The SRI can also be extended to observational causal inference settings, where the lack of randomization requires reliance on quasi-experimental designs like difference-in-difference. 
%Some extension to the observational data setting might be tricky, since the data is no longer independent and the procedure with machine learning, such as cross-fitting, might need to be updated accordingly. 
Another exciting direction for future work is the discovery of text features influenced by predictor of interest. Although this paper currently assumes researchers know which features to measure, unstructured data offers the potential to identify these features without prior specification, an advantage particularly useful for open-ended questions. Looking ahead, future research should develop a unified framework that supports data-driven discovery of outcome concepts and perform the valid and efficient statistical inference with minimal amount of labeling. Finally, it would be valuable to explore how LLM‑generated annotations can be incorporated into the SRI methodology. Although the current framework primarily addresses human annotations with non‑differential error, it remains valid when annotations are produced by LLMs, provided that the required assumptions hold. Some assumptions, such as independent coding, may be difficult to satisfy with LLMs because of overlap in their training data. Future work should therefore investigate ways to relax these assumptions or devise prompting strategies that ensure LLM annotations meet the necessary conditions.

\newpage 
\bibliography{references, ref2}

@article{wang_blei_2019,
author = {Yixin Wang and David M. Blei},
title = {The Blessings of Multiple Causes},
journal = {Journal of the American Statistical Association},
volume = {114},
number = {528},
pages = {1574--1596},
year = {2019},
publisher = {ASA Website},
doi = {10.1080/01621459.2019.1686987},


URL = { 
    
        https://doi.org/10.1080/01621459.2019.1686987}}

@inproceedings{egami_neulips_2023,
 author = {Egami, Naoki and Hinck, Musashi and Stewart, Brandon and Wei, Hanying},
 booktitle = {Advances in Neural Information Processing Systems},
 editor = {A. Oh and T. Naumann and A. Globerson and K. Saenko and M. Hardt and S. Levine},
 pages = {68589--68601},
 publisher = {Curran Associates, Inc.},
 title = {Using Imperfect Surrogates for Downstream Inference: Design-based Supervised Learning for Social Science Applications of Large Language Models},
 url = {https://proceedings.neurips.cc/paper_files/paper/2023/file/d862f7f5445255090de13b825b880d59-Paper-Conference.pdf},
 volume = {36},
 year = {2023}
}

@article{newey1994large,
  title={Large sample estimation and hypothesis testing},
  author={Newey, Whitney K and McFadden, Daniel},
  journal={Handbook of econometrics},
  volume={4},
  pages={2111--2245},
  year={1994},
  publisher={Elsevier}
}

@article{angelopoulos2023prediction,
  title={Prediction-powered inference},
  author={Angelopoulos, Anastasios N and Bates, Stephen and Fannjiang, Clara and Jordan, Michael I and Zrnic, Tijana},
  journal={Science},
  volume={382},
  number={6671},
  pages={669--674},
  year={2023},
  publisher={American Association for the Advancement of Science}
}

@article{halterman2024codebook,
  title={Codebook llms: Adapting political science codebooks for llm use and adapting llms to follow codebooks},
  author={Halterman, Andrew and Keith, Katherine A},
  journal={arXiv preprint arXiv:2407.10747},
  year={2024}
}

@article{egami2024using,
  title={Using Large Language Model Annotations for the Social Sciences: A General Framework of Using Predicted Variables in Downstream Analyses},
  author={Egami, Naoki and Hinck, Musashi and Stewart, Brandon M and Wei, Hanying},
  year={2024},
  journal={Working Paper}
}

@article{hassan2019firm,
  title={Firm-level political risk: Measurement and effects},
  author={Hassan, Tarek A and Hollander, Stephan and Van Lent, Laurence and Tahoun, Ahmed},
  journal={The quarterly journal of economics},
  volume={134},
  number={4},
  pages={2135--2202},
  year={2019},
  publisher={Oxford University Press}
}

@book{grimmer2022text,
  title={Text as data: A new framework for machine learning and the social sciences},
  author={Grimmer, Justin and Roberts, Margaret E and Stewart, Brandon M},
  year={2022},
  publisher={Princeton University Press}
}

@article{macanovic2022text,
  title={Text mining for social science--The state and the future of computational text analysis in sociology},
  author={Macanovic, Ana},
  journal={Social Science Research},
  volume={108},
  pages={102784},
  year={2022},
  publisher={Elsevier}
}

@article{ash2023are,
   author = "Ash, Elliott and Hansen, Stephen",
   title = "Text Algorithms in Economics", 
   journal= "Annual Review of Economics",
   year = "2023",
   volume = "15",
   number = "Volume 15, 2023",
   pages = "659-688",
   doi = "https://doi.org/10.1146/annurev-economics-082222-074352",
   url = "https://www.annualreviews.org/content/journals/10.1146/annurev-economics-082222-074352",
   publisher = "Annual Reviews",
   issn = "1941-1391",
   type = "Journal Article",
   keywords = "large language models",
   keywords = "transformer models",
   keywords = "topic models",
   keywords = "JEL C45",
   keywords = "text as data",
   keywords = "JEL C55",
   keywords = "JEL C18",
   keywords = "word embeddings",
   abstract = "This article provides an overview of the methods used for algorithmic text analysis in economics, with a focus on three key contributions. First, we introduce methods for representing documents as high-dimensional count vectors over vocabulary terms, for representing words as vectors, and for representing word sequences as embedding vectors. Second, we define four core empirical tasks that encompass most text-as-data research in economics and enumerate the various approaches that have been taken so far to accomplish these tasks. Finally, we flag limitations in the current literature, with a focus on the challenge of validating algorithmic output.",
  }

@article{imai2025genai,
  title={GenAI-Powered Inference},
  author={Imai, Kosuke and Nakamura, Kentaro},
  journal={arXiv preprint arXiv:2507.03897},
  year={2025}
}

@article{kozlowski2019geometry,
  title={The geometry of culture: Analyzing the meanings of class through word embeddings},
  author={Kozlowski, Austin C and Taddy, Matt and Evans, James A},
  journal={American Sociological Review},
  volume={84},
  number={5},
  pages={905--949},
  year={2019},
  publisher={SAGE Publications Sage CA: Los Angeles, CA}
}

@article{gentzkow2019measuring,
  title={Measuring group differences in high-dimensional choices: method and application to congressional speech},
  author={Gentzkow, Matthew and Shapiro, Jesse M and Taddy, Matt},
  journal={Econometrica},
  volume={87},
  number={4},
  pages={1307--1340},
  year={2019},
  publisher={Wiley Online Library}
}

@article{prentice1989surrogate,
  title={Surrogate endpoints in clinical trials: definition and operational criteria},
  author={Prentice, Ross L},
  journal={Statistics in medicine},
  volume={8},
  number={4},
  pages={431--440},
  year={1989},
  publisher={Wiley Online Library}
}

@article{zhou2024causal,
  title={Causal Inference for a Hidden Treatment},
  author={Zhou, Ying and Tchetgen, Eric Tchetgen},
  journal={arXiv preprint arXiv:2405.09080},
  year={2024}
}

@book{pesarin2010permutation,
  title={Permutation tests for complex data: theory, applications and software},
  author={Pesarin, Fortunato and Salmaso, Luigi},
  year={2010},
  publisher={John Wiley \& Sons}
}

@book{wellek2002testing,
  title={Testing statistical hypotheses of equivalence},
  author={Wellek, Stefan},
  year={2002},
  publisher={Chapman and Hall/CRC}
}

@book{bickel1993efficient,
  title={Efficient and adaptive estimation for semiparametric models},
  author={Bickel, Peter J and Klaassen, Chris AJ and Bickel, Peter J and Ritov, Ya’acov and Klaassen, J and Wellner, Jon A and Ritov, YA'Acov},
  volume={4},
  year={1993},
  publisher={Johns Hopkins University Press Baltimore}
}

@inproceedings{whitehill_whose_2009,
    title = {Whose {Vote} {Should} {Count} {More}: {Optimal} {Integration} of {Labels} from {Labelers} of {Unknown} {Expertise}},
    volume = {22},
    shorttitle = {Whose {Vote} {Should} {Count} {More}},
    url = {https://proceedings.neurips.cc/paper/2009/hash/f899139df5e1059396431415e770c6dd-Abstract.html},
    abstract = {Modern machine learning-based approaches to computer vision require very large databases of labeled images. Some contemporary vision systems already require on the order of millions of images for training (e.g., Omron face detector). While the collection of these large databases is becoming a bottleneck, new Internet-based services that allow labelers from around the world to be easily hired and managed provide a promising solution.  However, using these services to label large databases brings with it new theoretical and practical challenges: (1) The labelers may have wide ranging levels of expertise which are unknown a priori, and in some cases may be adversarial; (2) images may vary in their level of difficulty; and (3) multiple labels for the same image must be combined to provide an estimate of the actual label of the image. Probabilistic approaches provide a principled way to approach these problems. In this paper we present a probabilistic model and use it to simultaneously infer the label of each image, the expertise of each labeler, and the difficulty of each image. On both simulated and real data, we demonstrate that the model outperforms the commonly used ``Majority Vote heuristic for inferring image labels, and is robust to both adversarial and noisy labelers.},
    urldate = {2025-07-22},
    booktitle = {Advances in {Neural} {Information} {Processing} {Systems}},
    publisher = {Curran Associates, Inc.},
    author = {Whitehill, Jacob and Wu, Ting-fan and Bergsma, Jacob and Movellan, Javier and Ruvolo, Paul},
    year = {2009},
}

@misc{zrnic_cross-prediction-powered_2024,
	title = {Cross-{Prediction}-{Powered} {Inference}},
	url = {http://arxiv.org/abs/2309.16598},
	doi = {10.48550/arXiv.2309.16598},
	abstract = {While reliable data-driven decision-making hinges on high-quality labeled data, the acquisition of quality labels often involves laborious human annotations or slow and expensive scientific measurements. Machine learning is becoming an appealing alternative as sophisticated predictive techniques are being used to quickly and cheaply produce large amounts of predicted labels; e.g., predicted protein structures are used to supplement experimentally derived structures, predictions of socioeconomic indicators from satellite imagery are used to supplement accurate survey data, and so on. Since predictions are imperfect and potentially biased, this practice brings into question the validity of downstream inferences. We introduce cross-prediction: a method for valid inference powered by machine learning. With a small labeled dataset and a large unlabeled dataset, cross-prediction imputes the missing labels via machine learning and applies a form of debiasing to remedy the prediction inaccuracies. The resulting inferences achieve the desired error probability and are more powerful than those that only leverage the labeled data. Closely related is the recent proposal of prediction-powered inference, which assumes that a good pre-trained model is already available. We show that cross-prediction is consistently more powerful than an adaptation of prediction-powered inference in which a fraction of the labeled data is split off and used to train the model. Finally, we observe that cross-prediction gives more stable conclusions than its competitors; its confidence intervals typically have significantly lower variability.},
	urldate = {2025-09-15},
	publisher = {arXiv},
	author = {Zrnic, Tijana and Candès, Emmanuel J.},
	month = feb,
	year = {2024},
	note = {arXiv:2309.16598 [stat]},
	keywords = {Computer Science - Machine Learning, Statistics - Machine Learning, Statistics - Methodology},
}

@techreport{athey_surrogate_2019,
	address = {Cambridge, MA},
	title = {The {Surrogate} {Index}: {Combining} {Short}-{Term} {Proxies} to {Estimate} {Long}-{Term} {Treatment} {Effects} {More} {Rapidly} and {Precisely}},
	shorttitle = {The {Surrogate} {Index}},
	url = {http://www.nber.org/papers/w26463.pdf},
	abstract = {A common challenge in estimating the impact of interventions (e.g., job training programs, educational programs) is that many outcomes of interest (e.g., lifetime earnings or other labor market outcomes) are observed with a long delay. In biomedical settings this is often addressed by using short-term outcomes as so-called “surrogates” for the outcome of interest, e.g., tumor size as a surrogate for mortality in cancer studies. We build on this literature by combining multiple, possibly qualitatively distinct, short-term outcomes (e.g., short-run earnings and employment indicators) systematically into a “surrogate index.” Under the Prentice surrogacy assumption, which requires that the primary outcome is independent of the treatment conditional on the surrogates, we show that the average treatment effect on the surrogate index equals the treatment effect on the long-term outcome. We also relate the surrogacy assumption to a set of structural, causal assumptions. We then characterize the bias that arises from violations of each of the key assumptions, and we provide simple methods to validate these assumptions using additional observed outcomes. We apply our method to analyze the long-term impacts of a multi-site job training experiment in California. Rather than waiting a full nine years to directly observe the long-term impact, we show that it is possible to use short-term (the first six quarters) outcomes as surrogates. One could have estimated the program’s long-term impacts on mean employment rates using the employment rates observed in the first six quarters, with a 35\% reduction in standard errors.},
	language = {en},
	number = {w26463},
	urldate = {2025-09-08},
	institution = {National Bureau of Economic Research},
	author = {Athey, Susan and Chetty, Raj and Imbens, Guido and Kang, Hyunseung},
	month = nov,
	year = {2019},
	doi = {10.3386/w26463},
	pages = {w26463},
}

@article{wang_model-free_2020,
	title = {Model-free approach to quantifying the proportion of treatment effect explained by a surrogate marker},
	volume = {107},
	issn = {0006-3444},
	url = {https://doi.org/10.1093/biomet/asz065},
	doi = {10.1093/biomet/asz065},
	abstract = {In randomized clinical trials, the primary outcome, \$Y\$, often requires long-term follow-up and/or is costly to measure. For such settings, it is desirable to use a surrogate marker, \$S\$, to infer the treatment effect on \$Y\$, \${\textbackslash}Delta\$. Identifying such an \$S\$ and quantifying the proportion of treatment effect on \$Y\$ explained by the effect on \$S\$ are thus of great importance. Most existing methods for quantifying the proportion of treatment effect are model based and may yield biased estimates under model misspecification. Recently proposed nonparametric methods require strong assumptions to ensure that the proportion of treatment effect is in the range \$[0,1]\$. Additionally, optimal use of \$S\$ to approximate \${\textbackslash}Delta\$ is especially important when \$S\$ relates to \$Y\$ nonlinearly. In this paper we identify an optimal transformation of \$S\$, \$g\_\{{\textbackslash}tiny \{{\textbackslash}rm\{opt\}\}\}({\textbackslash}cdot)\$, such that the proportion of treatment effect explained can be inferred based on \$g\_\{{\textbackslash}tiny \{{\textbackslash}rm\{opt\}\}\}(S)\$. In addition, we provide two novel model-free definitions of proportion of treatment effect explained and simple conditions for ensuring that it lies within \$[0,1]\$. We provide nonparametric estimation procedures and establish asymptotic properties of the proposed estimators. Simulation studies demonstrate that the proposed methods perform well in finite samples. We illustrate the proposed procedures using a randomized study of HIV patients.},
	number = {1},
	urldate = {2025-09-03},
	journal = {Biometrika},
	author = {Wang, Xuan and Parast, Layla and Tian, Lu and Cai, Tianxi},
	month = mar,
	year = {2020},
	pages = {107--122},
}

@article{price_estimation_2018,
	title = {Estimation of the optimal surrogate based on a randomized trial},
	volume = {74},
	issn = {1541-0420},
	doi = {10.1111/biom.12879},
	abstract = {A common scientific problem is to determine a surrogate outcome for a long-term outcome so that future randomized studies can restrict themselves to only collecting the surrogate outcome. We consider the setting that we observe n independent and identically distributed observations of a random variable consisting of baseline covariates, a treatment, a vector of candidate surrogate outcomes at an intermediate time point, and the final outcome of interest at a final time point. We assume the treatment is randomized, conditional on the baseline covariates. The goal is to use these data to learn a most-promising surrogate for use in future trials for inference about a mean contrast treatment effect on the final outcome. We define an optimal surrogate for the current study as the function of the data generating distribution collected by the intermediate time point that satisfies the Prentice definition of a valid surrogate endpoint and that optimally predicts the final outcome: this optimal surrogate is an unknown parameter. We show that this optimal surrogate is a conditional mean and present super-learner and targeted super-learner based estimators, whose predicted outcomes are used as the surrogate in applications. We demonstrate a number of desirable properties of this optimal surrogate and its estimators, and study the methodology in simulations and an application to dengue vaccine efficacy trials.},
	language = {eng},
	number = {4},
	journal = {Biometrics},
	author = {Price, Brenda L. and Gilbert, Peter B. and van der Laan, Mark J.},
	month = dec,
	year = {2018},
	pmid = {29701875},
	pmcid = {PMC6393111},
	keywords = {Asymptotic linearity, Biomarkers, Biometry, Computer Simulation, Cross-validation, Dengue Vaccines, Efficient influence curve, Humans, Likelihood Functions, Outcome Assessment, Health Care, Prentice definition of a valid surrogate, Randomized Controlled Trials as Topic, Semiparametric model, Super-learner, Targeted maximum likelihood, Targeted minimum loss based estimation},
	pages = {1271--1281},
}

@article{qiu_doubly_2024,
	title = {Doubly robust proximal synthetic controls},
	volume = {80},
	issn = {0006-341X},
	url = {https://doi.org/10.1093/biomtc/ujae055},
	doi = {10.1093/biomtc/ujae055},
	abstract = {To infer the treatment effect for a single treated unit using panel data, synthetic control (SC) methods construct a linear combination of control units’ outcomes that mimics the treated unit’s pre-treatment outcome trajectory. This linear combination is subsequently used to impute the counterfactual outcomes of the treated unit had it not been treated in the post-treatment period, and used to estimate the treatment effect. Existing SC methods rely on correctly modeling certain aspects of the counterfactual outcome generating mechanism and may require near-perfect matching of the pre-treatment trajectory. Inspired by proximal causal inference, we obtain two novel nonparametric identifying formulas for the average treatment effect for the treated unit: one is based on weighting, and the other combines models for the counterfactual outcome and the weighting function. We introduce the concept of covariate shift to SCs to obtain these identification results conditional on the treatment assignment. We also develop two treatment effect estimators based on these two formulas and generalized method of moments. One new estimator is doubly robust: it is consistent and asymptotically normal if at least one of the outcome and weighting models is correctly specified. We demonstrate the performance of the methods via simulations and apply them to evaluate the effectiveness of a pneumococcal conjugate vaccine on the risk of all-cause pneumonia in Brazil.},
	number = {2},
	urldate = {2025-09-03},
	journal = {Biometrics},
	author = {Qiu, Hongxiang and Shi, Xu and Miao, Wang and Dobriban, Edgar and Tchetgen Tchetgen, Eric},
	month = jun,
	year = {2024},
	pages = {ujae055},
}

@article{egami_identification_2023,
	title = {Identification and estimation of causal peer effects using double negative controls for unmeasured network confounding},
	volume = {86},
	issn = {1369-7412},
	url = {https://www.ncbi.nlm.nih.gov/pmc/articles/PMC11009281/},
	doi = {10.1093/jrsssb/qkad132},
	abstract = {Identification and estimation of causal peer effects are challenging in observational studies for two reasons. The first is the identification challenge due to unmeasured network confounding, for example, homophily bias and contextual confounding. The second is network dependence of observations. We establish a framework that leverages a pair of negative control outcome and exposure variables (double negative controls) to non-parametrically identify causal peer effects in the presence of unmeasured network confounding. We then propose a generalised method of moments estimator and establish its consistency and asymptotic normality under an assumption about ψ-network dependence. Finally, we provide a consistent variance estimator.},
	number = {2},
	urldate = {2025-09-03},
	journal = {Journal of the Royal Statistical Society. Series B, Statistical Methodology},
	author = {Egami, Naoki and Tchetgen Tchetgen, Eric J},
	month = dec,
	year = {2023},
	pmid = {38618143},
	pmcid = {PMC11009281},
	pages = {487--511},
}

@article{simonsen_when_2025,
	title = {When {Do} {Political} {Parties} {Moralize}?: {A} {Cross}-{National} {Study} of the {Use} of {Moral} {Language} in {Political} {Communication} on {Immigration}},
	volume = {55},
	issn = {0007-1234, 1469-2112},
	shorttitle = {When {Do} {Political} {Parties} {Moralize}?},
	url = {https://www.cambridge.org/core/journals/british-journal-of-political-science/article/when-do-political-parties-moralize-a-crossnational-study-of-the-use-of-moral-language-in-political-communication-on-immigration/ACFEDBCD015BDEEA277CE689D3816E96?utm_source=chatgpt.com},
	doi = {10.1017/S0007123425000122},
	abstract = {Political communication on immigration is often highly moralized, with parties making claims to fundamental beliefs about right and wrong. Yet, what drives parties to use this rhetoric remains unexplored. Contributing to research on parties’ positional shifts on immigration, this study examines their strategic use of moral language in immigration discourse. To this end, we develop multilingual moral dictionaries to analyze parliamentary immigration speeches from eight Western democracies over six decades. While party-level factors do not explain moral language use, increased elite polarization on the issue is associated with greater moralization among all parties. Qualitative analysis shows that moral language is used overwhelmingly to attack political opponents, highlighting its divisive nature. These findings serve as a corrective to the notion that extreme, anti-immigrant, opposition parties are the main drivers of the moralization of immigration; instead, the broader political climate crucially shapes party incentives to (de)moralize.},
	language = {en},
	urldate = {2025-08-21},
	journal = {British Journal of Political Science},
	author = {Simonsen, Kristina Bakkær and Widmann, Tobias},
	month = jan,
	year = {2025},
	keywords = {computational text analysis, immigration, moral rhetoric},
	pages = {e33},
}

@incollection{hajdinjak_migration_2020,
	address = {Cham},
	title = {Migration {Policy} {Framing} in {Political} {Discourse}: {Evidence} from {Canada} and the {USA}},
	isbn = {978-3-030-29333-8},
	shorttitle = {Migration {Policy} {Framing} in {Political} {Discourse}},
	url = {https://doi.org/10.1007/978-3-030-29333-8_5},
	abstract = {How do parties discuss migration policy in legislative speeches? Legislative bodies are an arena for verbal conflicts where the parties vie for their ideological interests but also sharpen new rhetorical figures. Political parties develop policy stances strategically different from those of competing parties and elucidate those stances through legislative debates and public statements. A large body of literature argues that some issues, like migration, fall in a gap between established societal cleavages over which parties do not have robust, issue-specific ownership. More recent research suggests migration may be a part of a new transnational cleavage that pits cosmopolitan sensibilities against nationalist sentiments in a conflict over issue ownership and policy framing. Building off this debate, we hypothesize that parties discuss migration topics by diverting attention to subcomponents of migration policy over which they have established issue ownership. Using machine learning techniques, we test this assertion by measuring the differences in salience and framing of migration-related topics over time in the debates of the lower houses of Canada and the United States—the Canadian House of Commons and the United States’ House of Representatives from 1994 to 2016. We find that there are substantive differences in the emphasis on and framing of the migration policy between the two ideological blocks. Democrats in the USA and liberals in Canada emphasize subcomponents of the migration debate which they traditionally own, such as welfare and humanitarian aspects. Both conservative blocks do the same by framing their discussion of migration through a focus on security and legalistic aspect of migration. However, due to strong polarization in the USA, the differences in the emphasis on the issues traditionally owned by the two ideological camps are stronger in the USA than in Canada.},
	language = {en},
	urldate = {2025-08-21},
	booktitle = {Computational {Conflict} {Research}},
	publisher = {Springer International Publishing},
	author = {Hajdinjak, Sanja and Morris, Marcella H. and Amos, Tyler},
	editor = {Deutschmann, Emanuel and Lorenz, Jan and Nardin, Luis G. and Natalini, Davide and Wilhelm, Adalbert F. X.},
	year = {2020},
	doi = {10.1007/978-3-030-29333-8_5},
	keywords = {Canada, Issue ownership, Migration policy, Polarization, Policy framing, United States},
	pages = {83--111},
}

@article{abrajano_media_2017,
	title = {Media {Framing} and {Partisan} {Identity}: {The} {Case} of {Immigration} {Coverage} and {White} {Macropartisanship}},
	volume = {2},
	issn = {2056-6085},
	shorttitle = {Media {Framing} and {Partisan} {Identity}},
	url = {https://www.cambridge.org/core/journals/journal-of-race-ethnicity-and-politics/article/media-framing-and-partisan-identity-the-case-of-immigration-coverage-and-white-macropartisanship/B65C4FBE212D8E4A3BB2312D5CF08FA7?utm_source=chatgpt.com},
	doi = {10.1017/rep.2016.25},
	abstract = {Laboratory studies frequently find that framing changes individual issue positions. But few real-world studies have demonstrated framing induced shifts in aggregate political opinions, let alone political identities. One explanation for these divergent findings is that the competitive nature of most real-world political debates presents multiple frames that cancel each other out. We assess this proposition and the extent of real-world framing by focusing on the issue of immigration, which has been framed in largely negative terms by the media. Specifically, we assess the connection between New York Times coverage of immigration and aggregate white partisanship over the last three decades. We find that negative framing on immigration is associated with shifts toward the Republican Party—the Party linked with anti-immigrant positions. This suggests that under the right circumstances, framing can alter core political predispositions and shape the partisan balance of power.},
	language = {en},
	number = {1},
	urldate = {2025-08-21},
	journal = {Journal of Race, Ethnicity, and Politics},
	author = {Abrajano, Marisa A. and Hajnal, Zoltan and Hassell, Hans J. G.},
	month = mar,
	year = {2017},
	keywords = {Latino politics, immigration, macropartisanship, media},
	pages = {5--34},
}

@misc{berrett_conditional_2019,
	title = {The conditional permutation test for independence while controlling for confounders},
	url = {http://arxiv.org/abs/1807.05405},
	doi = {10.48550/arXiv.1807.05405},
	abstract = {We propose a general new method, the conditional permutation test, for testing the conditional independence of variables \$X\$ and \$Y\$ given a potentially high-dimensional random vector \$Z\$ that may contain confounding factors. The proposed test permutes entries of \$X\$ non-uniformly, so as to respect the existing dependence between \$X\$ and \$Z\$ and thus account for the presence of these confounders. Like the conditional randomization test of Cand{\textbackslash}`es et al. (2018), our test relies on the availability of an approximation to the distribution of \$X {\textbackslash}mid Z\$. While Cand{\textbackslash}`es et al. (2018)'s test uses this estimate to draw new \$X\$ values, for our test we use this approximation to design an appropriate non-uniform distribution on permutations of the \$X\$ values already seen in the true data. We provide an efficient Markov Chain Monte Carlo sampler for the implementation of our method, and establish bounds on the Type I error in terms of the error in the approximation of the conditional distribution of \$X{\textbackslash}mid Z\$, finding that, for the worst case test statistic, the inflation in Type I error of the conditional permutation test is no larger than that of the conditional randomization test. We validate these theoretical results with experiments on simulated data and on the Capital Bikeshare data set.},
	urldate = {2025-07-24},
	publisher = {arXiv},
	author = {Berrett, Thomas B. and Wang, Yi and Barber, Rina Foygel and Samworth, Richard J.},
	month = may,
	year = {2019},
	note = {arXiv:1807.05405 [stat]},
	keywords = {Mathematics - Statistics Theory, Statistics - Methodology, Statistics - Statistics Theory},
}

@article{arboretti_testing_2018,
	title = {Testing for {Equivalence}: {An} {Intersection}-{Union} {Permutation} {Solution}},
	volume = {10},
	issn = {null},
	shorttitle = {Testing for {Equivalence}},
	url = {https://doi.org/10.1080/19466315.2018.1447993},
	doi = {10.1080/19466315.2018.1447993},
	abstract = {The notion of testing for equivalence of two treatments is widely used in clinical trials, pharmaceutical experiments, bioequivalence, and quality control. It is essentially approached within the intersection-union (IU) principle. According to this principle, the null hypothesis is stated as the set of effects lying outside a suitably established interval and the alternative as the set of effects lying inside that interval. The solutions provided in the literature are mostly based on likelihood techniques, which in turn are rather difficult to handle, except for cases lying within the regular exponential family and the invariance principle. The main goal of the present article is to go beyond most of the limitations of likelihood-based methods, that is, to work in a nonparametric setting within the permutation frame. To obtain practical solutions, a new IU permutation test is presented and discussed. A simple simulation study for evaluating its main properties, and three application examples are also presented.},
	number = {2},
	urldate = {2025-07-24},
	journal = {Statistics in Biopharmaceutical Research},
	author = {Arboretti, R. and Carrozzo, E. and Pesarin, F. and Salmaso, L.},
	month = apr,
	year = {2018},
	note = {Publisher: ASA Website
\_eprint: https://doi.org/10.1080/19466315.2018.1447993},
	keywords = {Intersection-union principle, Mid-rank based test, Nonparametric combination, Permutation tests},
	pages = {130--138},
}

@article{hirano_efficient_2003,
	title = {Efficient {Estimation} of {Average} {Treatment} {Effects} {Using} the {Estimated} {Propensity} {Score}},
	volume = {71},
	issn = {1468-0262},
	url = {https://onlinelibrary.wiley.com/doi/abs/10.1111/1468-0262.00442},
	doi = {10.1111/1468-0262.00442},
	abstract = {We are interested in estimating the average effect of a binary treatment on a scalar outcome. If assignment to the treatment is exogenous or unconfounded, that is, independent of the potential outcomes given covariates, biases associated with simple treatment-control average comparisons can be removed by adjusting for differences in the covariates. Rosenbaum and Rubin (1983) show that adjusting solely for differences between treated and control units in the propensity score removes all biases associated with differences in covariates. Although adjusting for differences in the propensity score removes all the bias, this can come at the expense of efficiency, as shown by Hahn (1998), Heckman, Ichimura, and Todd (1998), and Robins, Mark, and Newey (1992). We show that weighting by the inverse of a nonparametric estimate of the propensity score, rather than the true propensity score, leads to an efficient estimate of the average treatment effect. We provide intuition for this result by showing that this estimator can be interpreted as an empirical likelihood estimator that efficiently incorporates the information about the propensity score.},
	language = {en},
	number = {4},
	urldate = {2025-07-23},
	journal = {Econometrica},
	author = {Hirano, Keisuke and Imbens, Guido W. and Ridder, Geert},
	year = {2003},
	note = {\_eprint: https://onlinelibrary.wiley.com/doi/pdf/10.1111/1468-0262.00442},
	keywords = {Propensity score, semiparametric efficiency, sieve estimator, treatment effects},
	pages = {1161--1189},
}

@article{raykar_learning_2010,
	title = {Learning {From} {Crowds}},
	volume = {11},
	issn = {1533-7928},
	url = {http://jmlr.org/papers/v11/raykar10a.html},
	abstract = {For many supervised learning tasks it may be infeasible (or very
expensive) to obtain objective and reliable labels. Instead, we can collect subjective (possibly noisy)
labels from multiple experts or annotators. In practice, there is  a
substantial amount of disagreement among the annotators, and hence it is
of great practical interest to address conventional supervised learning problems in
this scenario. In this paper we describe a probabilistic approach for supervised learning when we have multiple annotators providing (possibly noisy) labels but no absolute gold standard. The proposed algorithm evaluates the different experts and also gives an estimate of the actual hidden labels. Experimental results indicate that the proposed method is superior to the commonly used majority voting baseline.},
	number = {43},
	urldate = {2025-07-22},
	journal = {Journal of Machine Learning Research},
	author = {Raykar, Vikas C. and Yu, Shipeng and Zhao, Linda H. and Valadez, Gerardo Hermosillo and Florin, Charles and Bogoni, Luca and Moy, Linda},
	year = {2010},
	pages = {1297--1322},
}

@article{dawid_maximum_1979,
	title = {Maximum {Likelihood} {Estimation} of {Observer} {Error}-{Rates} {Using} the {EM} {Algorithm}},
	volume = {28},
	issn = {0035-9254},
	url = {https://www.jstor.org/stable/10.2307/2346806?origin=crossref},
	doi = {10.2307/2346806},
	abstract = {In compilinag patientrecordmanyfacetsare subjectto errorsof measurementA. modelis presentewd hichallowsindividuaelrror-ratteos be estimatefdorpolytomous facetsevenwhenthepatient's"true"responseis notavailable. The EM algorithm is shownto providea slowbutsurewayofobtainingmaximumlikelihoodestimates of the parameterosf interest.Some preliminareyxperiencies reportedand the limitationosfthemethodare described.},
	language = {en},
	number = {1},
	urldate = {2025-07-22},
	journal = {Applied Statistics},
	author = {Dawid, A. P. and Skene, A. M.},
	year = {1979},
	note = {Publisher: JSTOR},
	pages = {20},
}

@inproceedings{luo_detecting_2020,
	address = {Online},
	title = {Detecting {Stance} in {Media} {On} {Global} {Warming}},
	url = {https://aclanthology.org/2020.findings-emnlp.296/},
	doi = {10.18653/v1/2020.findings-emnlp.296},
	abstract = {Citing opinions is a powerful yet understudied strategy in argumentation. For example, an environmental activist might say, “Leading scientists agree that global warming is a serious concern,” framing a clause which affirms their own stance (“that global warming is serious”) as an opinion endorsed (”[scientists] agree”) by a reputable source (“leading”). In contrast, a global warming denier might frame the same clause as the opinion of an untrustworthy source with a predicate connoting doubt: “Mistaken scientists claim [...].” Our work studies opinion-framing in the global warming (GW) debate, an increasingly partisan issue that has received little attention in NLP. We introduce DeSMOG, a dataset of stance-labeled GW sentences, and train a BERT classifier to study novel aspects of argumentation in how different sides of a debate represent their own and each other's opinions. From 56K news articles, we find that similar linguistic devices for self-affirming and opponent-doubting discourse are used across GW-accepting and skeptic media, though GW-skeptical media shows more opponent-doubt. We also find that authors often characterize sources as hypocritical, by ascribing opinions expressing the author's own view to source entities known to publicly endorse the opposing view. We release our stance dataset, model, and lexicons of framing devices for future work on opinion-framing and the automatic detection of GW stance.},
	urldate = {2025-07-22},
	booktitle = {Findings of the {Association} for {Computational} {Linguistics}: {EMNLP} 2020},
	publisher = {Association for Computational Linguistics},
	author = {Luo, Yiwei and Card, Dallas and Jurafsky, Dan},
	editor = {Cohn, Trevor and He, Yulan and Liu, Yang},
	month = nov,
	year = {2020},
	pages = {3296--3315},
}

@misc{liu_roberta_2019,
	title = {{RoBERTa}: {A} {Robustly} {Optimized} {BERT} {Pretraining} {Approach}},
	shorttitle = {{RoBERTa}},
	url = {http://arxiv.org/abs/1907.11692},
	doi = {10.48550/arXiv.1907.11692},
	abstract = {Language model pretraining has led to significant performance gains but careful comparison between different approaches is challenging. Training is computationally expensive, often done on private datasets of different sizes, and, as we will show, hyperparameter choices have significant impact on the final results. We present a replication study of BERT pretraining (Devlin et al., 2019) that carefully measures the impact of many key hyperparameters and training data size. We find that BERT was significantly undertrained, and can match or exceed the performance of every model published after it. Our best model achieves state-of-the-art results on GLUE, RACE and SQuAD. These results highlight the importance of previously overlooked design choices, and raise questions about the source of recently reported improvements. We release our models and code.},
	urldate = {2025-07-22},
	publisher = {arXiv},
	author = {Liu, Yinhan and Ott, Myle and Goyal, Naman and Du, Jingfei and Joshi, Mandar and Chen, Danqi and Levy, Omer and Lewis, Mike and Zettlemoyer, Luke and Stoyanov, Veselin},
	month = jul,
	year = {2019},
	note = {arXiv:1907.11692 [cs]},
	keywords = {Computer Science - Computation and Language},
}

@misc{carlson_unifying_2025,
	title = {A {Unifying} {Framework} for {Robust} and {Efficient} {Inference} with {Unstructured} {Data}},
	url = {http://arxiv.org/abs/2505.00282},
	doi = {10.48550/arXiv.2505.00282},
	abstract = {This paper presents a general framework for conducting efficient inference on parameters derived from unstructured data, which include text, images, audio, and video. Economists have long used unstructured data by first extracting low-dimensional structured features (e.g., the topic or sentiment of a text), since the raw data are too high-dimensional and uninterpretable to include directly in empirical analyses. The rise of deep neural networks has accelerated this practice by greatly reducing the costs of extracting structured data at scale, but neural networks do not make generically unbiased predictions. This potentially propagates bias to the downstream estimators that incorporate imputed structured data, and the availability of different off-the-shelf neural networks with different biases moreover raises p-hacking concerns. To address these challenges, we reframe inference with unstructured data as a problem of missing structured data, where structured variables are imputed from high-dimensional unstructured inputs. This perspective allows us to apply classic results from semiparametric inference, leading to estimators that are valid, efficient, and robust. We formalize this approach with MAR-S, a framework that unifies and extends existing methods for debiased inference using machine learning predictions, connecting them to familiar problems such as causal inference. Within this framework, we develop robust and efficient estimators for both descriptive and causal estimands and address challenges like inference with aggregated and transformed missing structured data-a common scenario that is not covered by existing work. These methods-and the accompanying implementation package-provide economists with accessible tools for constructing unbiased estimators using unstructured data in a wide range of applications, as we demonstrate by re-analyzing several influential studies.},
	urldate = {2025-07-21},
	publisher = {arXiv},
	author = {Carlson, Jacob and Dell, Melissa},
	month = jul,
	year = {2025},
	note = {arXiv:2505.00282 [econ]},
	keywords = {Computer Science - Machine Learning, Economics - Econometrics},
}

@article{card_computational_2022,
	title = {Computational analysis of 140 years of {US} political speeches reveals more positive but increasingly polarized framing of immigration},
	volume = {119},
	issn = {1091-6490},
	doi = {10.1073/pnas.2120510119},
	abstract = {We classify and analyze 200,000 US congressional speeches and 5,000 presidential communications related to immigration from 1880 to the present. Despite the salience of antiimmigration rhetoric today, we find that political speech about immigration is now much more positive on average than in the past, with the shift largely taking place between World War II and the passage of the Immigration and Nationality Act in 1965. However, since the late 1970s, political parties have become increasingly polarized in their expressed attitudes toward immigration, such that Republican speeches today are as negative as the average congressional speech was in the 1920s, an era of strict immigration quotas. Using an approach based on contextual embeddings of text, we find that modern Republicans are significantly more likely to use language that is suggestive of metaphors long associated with immigration, such as "animals" and "cargo," and make greater use of frames like "crime" and "legality." The tone of speeches also differs strongly based on which nationalities are mentioned, with a striking similarity between how Mexican immigrants are framed today and how Chinese immigrants were framed during the era of Chinese exclusion in the late 19th century. Overall, despite more favorable attitudes toward immigrants and the formal elimination of race-based restrictions, nationality is still a major factor in how immigrants are spoken of in Congress.},
	language = {eng},
	number = {31},
	journal = {Proceedings of the National Academy of Sciences of the United States of America},
	author = {Card, Dallas and Chang, Serina and Becker, Chris and Mendelsohn, Julia and Voigt, Rob and Boustan, Leah and Abramitzky, Ran and Jurafsky, Dan},
	month = aug,
	year = {2022},
	pmid = {35905322},
	pmcid = {PMC9351383},
	keywords = {Congress, dehumanization, framing, immigration, metaphor},
	pages = {e2120510119},
}

@article{bauer_deep_2019,
	title = {On deep learning as a remedy for the curse of dimensionality in nonparametric regression},
	volume = {47},
	issn = {0090-5364, 2168-8966},
	url = {https://projecteuclid.org/journals/annals-of-statistics/volume-47/issue-4/On-deep-learning-as-a-remedy-for-the-curse-of/10.1214/18-AOS1747.full},
	doi = {10.1214/18-AOS1747},
	abstract = {Assuming that a smoothness condition and a suitable restriction on the structure of the regression function hold, it is shown that least squares estimates based on multilayer feedforward neural networks are able to circumvent the curse of dimensionality in nonparametric regression. The proof is based on new approximation results concerning multilayer feedforward neural networks with bounded weights and a bounded number of hidden neurons. The estimates are compared with various other approaches by using simulated data.},
	number = {4},
	urldate = {2025-06-13},
	journal = {The Annals of Statistics},
	author = {Bauer, Benedikt and Kohler, Michael},
	month = aug,
	year = {2019},
	note = {Publisher: Institute of Mathematical Statistics},
	keywords = {62G08, 62G20, Nonparametric regression, curse of dimensionality, neural networks, rate of convergence},
	pages = {2261--2285},
}

@article{schmidt-hieber_nonparametric_2020,
	title = {Nonparametric regression using deep neural networks with {ReLU} activation function},
	volume = {48},
	issn = {0090-5364},
	url = {http://arxiv.org/abs/1708.06633},
	doi = {10.1214/19-AOS1875},
	abstract = {Consider the multivariate nonparametric regression model. It is shown that estimators based on sparsely connected deep neural networks with ReLU activation function and properly chosen network architecture achieve the minimax rates of convergence (up to \${\textbackslash}log n\$-factors) under a general composition assumption on the regression function. The framework includes many well-studied structural constraints such as (generalized) additive models. While there is a lot of flexibility in the network architecture, the tuning parameter is the sparsity of the network. Specifically, we consider large networks with number of potential network parameters exceeding the sample size. The analysis gives some insights into why multilayer feedforward neural networks perform well in practice. Interestingly, for ReLU activation function the depth (number of layers) of the neural network architectures plays an important role and our theory suggests that for nonparametric regression, scaling the network depth with the sample size is natural. It is also shown that under the composition assumption wavelet estimators can only achieve suboptimal rates.},
	number = {4},
	urldate = {2025-06-13},
	journal = {The Annals of Statistics},
	author = {Schmidt-Hieber, Johannes},
	month = aug,
	year = {2020},
	note = {arXiv:1708.06633 [math]},
	keywords = {Computer Science - Machine Learning, Mathematics - Statistics Theory, Statistics - Machine Learning, Statistics - Statistics Theory},
}

@article{szekely_measuring_2007,
	title = {Measuring and testing dependence by correlation of distances},
	volume = {35},
	issn = {0090-5364, 2168-8966},
	url = {https://projecteuclid.org/journals/annals-of-statistics/volume-35/issue-6/Measuring-and-testing-dependence-by-correlation-of-distances/10.1214/009053607000000505.full},
	doi = {10.1214/009053607000000505},
	abstract = {Distance correlation is a new measure of dependence between random vectors. Distance covariance and distance correlation are analogous to product-moment covariance and correlation, but unlike the classical definition of correlation, distance correlation is zero only if the random vectors are independent. The empirical distance dependence measures are based on certain Euclidean distances between sample elements rather than sample moments, yet have a compact representation analogous to the classical covariance and correlation. Asymptotic properties and applications in testing independence are discussed. Implementation of the test and Monte Carlo results are also presented.},
	number = {6},
	urldate = {2025-06-11},
	journal = {The Annals of Statistics},
	author = {Székely, Gábor J. and Rizzo, Maria L. and Bakirov, Nail K.},
	month = dec,
	year = {2007},
	note = {Publisher: Institute of Mathematical Statistics},
	keywords = {62G10, 62H20, Distance correlation, distance covariance, multivariate independence},
	pages = {2769--2794},
}

@book{good_permutation_1994,
	address = {New York, NY},
	series = {Springer {Series} in {Statistics}},
	title = {Permutation {Tests}},
	copyright = {http://www.springer.com/tdm},
	isbn = {978-1-4757-2348-9 978-1-4757-2346-5},
	url = {http://link.springer.com/10.1007/978-1-4757-2346-5},
	language = {en},
	urldate = {2025-06-08},
	publisher = {Springer New York},
	author = {Good, Phillip},
	year = {1994},
	doi = {10.1007/978-1-4757-2346-5},
}

@inproceedings{ojala_permutation_2009,
	address = {Miami Beach, FL, USA},
	title = {Permutation {Tests} for {Studying} {Classifier} {Performance}},
	isbn = {978-1-4244-5242-2},
	url = {http://ieeexplore.ieee.org/document/5360332/},
	doi = {10.1109/ICDM.2009.108},
	abstract = {We explore the framework of permutation-based p-values for assessing the performance of classiﬁers. In this paper we study two simple permutation tests. The ﬁrst test assess whether the classiﬁer has found a real class structure in the data; the corresponding null distribution is estimated by permuting the labels in the data. This test has been used extensively in classiﬁcation problems in computational biology. The second test studies whether the classiﬁer is exploiting the dependency between the features in classiﬁcation; the corresponding null distribution is estimated by permuting the features within classes, inspired by restricted randomization techniques traditionally used in statistics. This new test can serve to identify descriptive features which can be valuable information in improving the classiﬁer performance. We study the properties of these tests and present an extensive empirical evaluation on real and synthetic data. Our analysis shows that studying the classiﬁer performance via permutation tests is effective. In particular, the restricted permutation test clearly reveals whether the classiﬁer exploits the interdependency between the features in the data.},
	language = {en},
	urldate = {2025-06-08},
	booktitle = {2009 {Ninth} {IEEE} {International} {Conference} on {Data} {Mining}},
	publisher = {IEEE},
	author = {Ojala, Markus and Garriga, Gemma C.},
	month = dec,
	year = {2009},
	pages = {908--913},
}

@misc{warner_smarter_2024,
	title = {Smarter, {Better}, {Faster}, {Longer}: {A} {Modern} {Bidirectional} {Encoder} for {Fast}, {Memory} {Efficient}, and {Long} {Context} {Finetuning} and {Inference}},
	shorttitle = {Smarter, {Better}, {Faster}, {Longer}},
	url = {http://arxiv.org/abs/2412.13663},
	doi = {10.48550/arXiv.2412.13663},
	abstract = {Encoder-only transformer models such as BERT offer a great performance-size tradeoff for retrieval and classification tasks with respect to larger decoder-only models. Despite being the workhorse of numerous production pipelines, there have been limited Pareto improvements to BERT since its release. In this paper, we introduce ModernBERT, bringing modern model optimizations to encoder-only models and representing a major Pareto improvement over older encoders. Trained on 2 trillion tokens with a native 8192 sequence length, ModernBERT models exhibit state-of-the-art results on a large pool of evaluations encompassing diverse classification tasks and both single and multi-vector retrieval on different domains (including code). In addition to strong downstream performance, ModernBERT is also the most speed and memory efficient encoder and is designed for inference on common GPUs.},
	urldate = {2025-02-08},
	publisher = {arXiv},
	author = {Warner, Benjamin and Chaffin, Antoine and Clavié, Benjamin and Weller, Orion and Hallström, Oskar and Taghadouini, Said and Gallagher, Alexis and Biswas, Raja and Ladhak, Faisal and Aarsen, Tom and Cooper, Nathan and Adams, Griffin and Howard, Jeremy and Poli, Iacopo},
	month = dec,
	year = {2024},
	note = {arXiv:2412.13663 [cs]},
	keywords = {Computer Science - Artificial Intelligence, Computer Science - Computation and Language},
}

@article{wratil_government_2023,
	title = {Government {Rhetoric} and the {Representation} of {Public} {Opinion} in {International} {Negotiations}},
	volume = {117},
	issn = {0003-0554, 1537-5943},
	url = {https://www.cambridge.org/core/journals/american-political-science-review/article/government-rhetoric-and-the-representation-of-public-opinion-in-international-negotiations/688B979DEF27F2FBBE335BC41584FE16},
	doi = {10.1017/S0003055422001198},
	abstract = {The role of domestic public opinion is an important topic in research on international negotiations, yet we know little about how exactly it manifests itself. We focus on government rhetoric during negotiations and develop a conceptual distinction between implicit and explicit manifestations of public opinion. Drawing on a database of video recordings of negotiations of the Council of the European Union and a quantitative text analysis of government speeches, we find that public opinion matters implicitly, with the exact pattern depending on governments’ stance toward the EU. Pro-EU governments are responsive to public opinion in their support for compromises and attempts to stall negotiations, whereas Euroskeptic governments tend to remain silent when confronted with a public positively disposed toward the EU. Our results show that although governments implicitly represent public opinion, they do not systematically invoke their voters explicitly, suggesting the public matters but in different ways than often assumed.},
	language = {en},
	number = {3},
	urldate = {2025-01-19},
	journal = {American Political Science Review},
	author = {Wratil, Christopher and Wäckerle, Jens and Proksch, Sven-Oliver},
	month = aug,
	year = {2023},
	pages = {1105--1122},
}

@article{kennedy_sharp_2020,
	title = {Sharp instruments for classifying compliers and generalizing causal effects},
	volume = {48},
	issn = {0090-5364},
	url = {https://projecteuclid.org/journals/annals-of-statistics/volume-48/issue-4/Sharp-instruments-for-classifying-compliers-and-generalizing-causal-effects/10.1214/19-AOS1874.full},
	doi = {10.1214/19-AOS1874},
	language = {en},
	number = {4},
	urldate = {2025-01-14},
	journal = {The Annals of Statistics},
	author = {Kennedy, Edward H. and Balakrishnan, Sivaraman and G’Sell, Max},
	month = aug,
	year = {2020},
}

@book{tsiatis_semiparametric_2006,
	address = {New York, NY},
	series = {Springer series in statistics},
	title = {Semiparametric theory and missing data},
	isbn = {978-0-387-32448-7},
	language = {en},
	publisher = {Springer},
	author = {Tsiatis, Anastasios A.},
	year = {2006},
	keywords = {average, estimator, likelihood, probability, semiparametric},
}

@article{farrell_deep_2021,
	title = {Deep {Neural} {Networks} for {Estimation} and {Inference}},
	volume = {89},
	copyright = {© 2021 The Econometric Society},
	issn = {1468-0262},
	url = {https://onlinelibrary.wiley.com/doi/abs/10.3982/ECTA16901},
	doi = {10.3982/ECTA16901},
	abstract = {We study deep neural networks and their use in semiparametric inference. We establish novel nonasymptotic high probability bounds for deep feedforward neural nets. These deliver rates of convergence that are sufficiently fast (in some cases minimax optimal) to allow us to establish valid second-step inference after first-step estimation with deep learning, a result also new to the literature. Our nonasymptotic high probability bounds, and the subsequent semiparametric inference, treat the current standard architecture: fully connected feedforward neural networks (multilayer perceptrons), with the now-common rectified linear unit activation function, unbounded weights, and a depth explicitly diverging with the sample size. We discuss other architectures as well, including fixed-width, very deep networks. We establish the nonasymptotic bounds for these deep nets for a general class of nonparametric regression-type loss functions, which includes as special cases least squares, logistic regression, and other generalized linear models. We then apply our theory to develop semiparametric inference, focusing on causal parameters for concreteness, and demonstrate the effectiveness of deep learning with an empirical application to direct mail marketing.},
	language = {en},
	number = {1},
	urldate = {2023-11-24},
	journal = {Econometrica},
	author = {Farrell, Max H. and Liang, Tengyuan and Misra, Sanjog},
	year = {2021},
	note = {\_eprint: https://onlinelibrary.wiley.com/doi/pdf/10.3982/ECTA16901},
	keywords = {Deep learning, convergence rates, neural networks, nonasymptotic bounds, program evaluation, rectified linear unit, semiparametric inference, treatment effects},
	pages = {181--213},
}

@misc{imai_causal_2024,
	title = {Causal {Representation} {Learning} with {Generative} {Artificial} {Intelligence}: {Application} to {Texts} as {Treatments}},
	shorttitle = {Causal {Representation} {Learning} with {Generative} {Artificial} {Intelligence}},
	url = {http://arxiv.org/abs/2410.00903},
	doi = {10.48550/arXiv.2410.00903},
	abstract = {In this paper, we demonstrate how to enhance the validity of causal inference with unstructured high-dimensional treatments like texts, by leveraging the power of generative Artificial Intelligence. Specifically, we propose to use a deep generative model such as large language models (LLMs) to efficiently generate treatments and use their internal representation for subsequent causal effect estimation. We show that the knowledge of this true internal representation helps separate the treatment features of interest, such as specific sentiments and certain topics, from other possibly unknown confounding features. Unlike the existing methods, our proposed approach eliminates the need to learn causal representation from the data and hence produces more accurate and efficient estimates. We formally establish the conditions required for the nonparametric identification of the average treatment effect, propose an estimation strategy that avoids the violation of the overlap assumption, and derive the asymptotic properties of the proposed estimator through the application of double machine learning. Finally, using an instrumental variables approach, we extend the proposed methodology to the settings, in which the treatment feature is based on human perception rather than is assumed to be fixed given the treatment object. We conduct simulation studies using the generated text data with an open-source LLM, Llama3, to illustrate the advantages of our estimator over the state-of-the-art causal representation learning algorithms.},
	urldate = {2024-11-12},
	publisher = {arXiv},
	author = {Imai, Kosuke and Nakamura, Kentaro},
	month = oct,
	year = {2024},
	note = {arXiv:2410.00903},
	keywords = {Computer Science - Computation and Language, Computer Science - Machine Learning, Statistics - Applications},
}

@article{ichimura_influence_2022,
	title = {The influence function of semiparametric estimators},
	volume = {13},
	issn = {1759-7323},
	url = {https://onlinelibrary.wiley.com/doi/full/10.3982/QE826},
	doi = {10.3982/QE826},
	abstract = {There are many economic parameters that depend on nonparametric first steps. Examples include games, dynamic discrete choice, average exact consumer surplus, and treatment effects. Often estimators of these parameters are asymptotically equivalent to a sample average of an object referred to as the influence function. The influence function is useful in local policy analysis, in evaluating local sensitivity of estimators, and constructing debiased machine learning estimators. We show that the influence function is a Gateaux derivative with respect to a smooth deviation evaluated at a point mass. This result generalizes the classic Von Mises (1947) and Hampel (1974) calculation to estimators that depend on smooth nonparametric first steps. We give explicit influence functions for first steps that satisfy exogenous or endogenous orthogonality conditions. We use these results to generalize the omitted variable bias formula for regression to policy analysis for and sensitivity to structural changes. We apply this analysis and find no sensitivity to endogeneity of average equivalent variation estimates in a gasoline demand application.},
	number = {1},
	urldate = {2024-06-16},
	journal = {Quantitative Economics},
	author = {Ichimura, Hidehiko and Newey, Whitney K.},
	month = jan,
	year = {2022},
	note = {Publisher: John Wiley \& Sons, Ltd},
	keywords = {C13, C14, C20, C26, C36, Influence function, NPIV, semiparametric estimation},
	pages = {29--61},
}

@article{hu_instrumental_2008,
	title = {Instrumental {Variable} {Treatment} of {Nonclassical} {Measurement} {Error} {Models}},
	volume = {76},
	issn = {0012-9682},
	url = {https://www.jstor.org/stable/4502059},
	abstract = {While the literature on nonclassical measurement error traditionally relies on the availability of an auxiliary data set containing correctly measured observations, we establish that the availability of instruments enables the identification of a large class of nonclassical nonlinear errors-in-variables models with continuously distributed variables. Our main identifying assumption is that, conditional on the value of the true regressors, some "measure of location" of the distribution of the measurement error (e.g., its mean, mode, or median) is equal to zero. The proposed approach relies on the eigenvalue-eigenfunction decomposition of an integral operator associated with specific joint probability densities. The main identifying assumption is used to "index" the eigenfunctions so that the decomposition is unique. We propose a convenient sievebased estimator, derive its asymptotic properties, and investigate its finite-sample behavior through Monte Carlo simulations.},
	number = {1},
	urldate = {2024-03-28},
	journal = {Econometrica},
	author = {Hu, Yingyao and Schennach, Susanne M.},
	year = {2008},
	note = {Publisher: [Wiley, Econometric Society]},
	pages = {195--216},
}

@article{rosenbaum_central_1983,
	title = {The {Central} {Role} of the {Propensity} {Score} in {Observational} {Studies} for {Causal} {Effects}},
	volume = {70},
	issn = {0006-3444},
	url = {https://www.jstor.org/stable/2335942},
	doi = {10.2307/2335942},
	abstract = {The propensity score is the conditional probability of assignment to a particular treatment given a vector of observed covariates. Both large and small sample theory show that adjustment for the scalar propensity score is sufficient to remove bias due to all observed covariates. Applications include: (i) matched sampling on the univariate propensity score, which is a generalization of discriminant matching, (ii) multivariate adjustment by subclassification on the propensity score where the same subclasses are used to estimate treatment effects for all outcome variables and in all subpopulations, and (iii) visual representation of multivariate covariance adjustment by a two-dimensional plot.},
	number = {1},
	urldate = {2024-01-22},
	journal = {Biometrika},
	author = {Rosenbaum, Paul R. and Rubin, Donald B.},
	year = {1983},
	note = {Publisher: [Oxford University Press, Biometrika Trust]},
	pages = {41--55},
}

@article{hines_demystifying_2022,
	title = {Demystifying {Statistical} {Learning} {Based} on {Efficient} {Influence} {Functions}},
	volume = {76},
	issn = {0003-1305},
	url = {https://doi.org/10.1080/00031305.2021.2021984},
	doi = {10.1080/00031305.2021.2021984},
	abstract = {Evaluation of treatment effects and more general estimands is typically achieved via parametric modeling, which is unsatisfactory since model misspecification is likely. Data-adaptive model building (e.g., statistical/machine learning) is commonly employed to reduce the risk of misspecification. Naïve use of such methods, however, delivers estimators whose bias may shrink too slowly with sample size for inferential methods to perform well, including those based on the bootstrap. Bias arises because standard data-adaptive methods are tuned toward minimal prediction error as opposed to, for example, minimal MSE in the estimator. This may cause excess variability that is difficult to acknowledge, due to the complexity of such strategies. Building on results from nonparametric statistics, targeted learning and debiased machine learning overcome these problems by constructing estimators using the estimand’s efficient influence function under the nonparametric model. These increasingly popular methodologies typically assume that the efficient influence function is given, or that the reader is familiar with its derivation. In this article, we focus on derivation of the efficient influence function and explain how it may be used to construct statistical/machine-learning-based estimators. We discuss the requisite conditions for these estimators to perform well and use diverse examples to convey the broad applicability of the theory.},
	number = {3},
	urldate = {2023-12-12},
	journal = {The American Statistician},
	author = {Hines, Oliver and Dukes, Oliver and Diaz-Ordaz, Karla and Vansteelandt, Stijn},
	month = jul,
	year = {2022},
	note = {Publisher: Taylor \& Francis
\_eprint: https://doi.org/10.1080/00031305.2021.2021984},
	keywords = {Data-adaptive estimation, Double machine learning, Nonparametric methods, Post-selection inference, Targeted learning},
	pages = {292--304},
}

@article{hampel_influence_1974,
	title = {The {Influence} {Curve} and {Its} {Role} in {Robust} {Estimation}},
	volume = {69},
	issn = {0162-1459},
	url = {https://www.jstor.org/stable/2285666},
	doi = {10.2307/2285666},
	abstract = {This paper treats essentially the first derivative of an estimator viewed as functional and the ways in which it can be used to study local robustness properties. A theory of robust estimation "near" strict parametric models is briefly sketched and applied to some classical situations. Relations between von Mises functionals, the jackknife and U-statistics are indicated. A number of classical and new estimators are discussed, including trimmed and Winsorized means, Huber-estimators, and more generally maximum likelihood and M-estimators. Finally, a table with some numerical robustness properties is given.},
	number = {346},
	urldate = {2023-12-12},
	journal = {Journal of the American Statistical Association},
	author = {Hampel, Frank R.},
	year = {1974},
	note = {Publisher: [American Statistical Association, Taylor \& Francis, Ltd.]},
	pages = {383--393},
}

@article{newey_semiparametric_1990,
	title = {Semiparametric {Efficiency} {Bounds}},
	volume = {5},
	issn = {0883-7252},
	url = {https://www.jstor.org/stable/2096601},
	abstract = {Semiparametric models are those where the functional form of some components is unknown. Efficiency bounds are of fundamental importance for such models. They provide a guide to estimation methods and give an asymptotic efficiency standard. The purpose of this paper is to provide an introduction to research methods and problems for semiparametric efficiency bounds. The nature of the bounds is discussed, as well as ways of calculating them. Their uses in solving estimation problems are outlined, including construction of semiparametric estimators and calculation of their limiting distribution. The paper includes new results as well as survey material.},
	number = {2},
	urldate = {2023-12-10},
	journal = {Journal of Applied Econometrics},
	author = {Newey, Whitney K.},
	year = {1990},
	note = {Publisher: Wiley},
	pages = {99--135},
}

@misc{kennedy_semiparametric_2023,
	title = {Semiparametric doubly robust targeted double machine learning: a review},
	shorttitle = {Semiparametric doubly robust targeted double machine learning},
	url = {http://arxiv.org/abs/2203.06469},
	abstract = {In this review we cover the basics of efficient nonparametric parameter estimation (also called functional estimation), with a focus on parameters that arise in causal inference problems. We review both efficiency bounds (i.e., what is the best possible performance for estimating a given parameter?) and the analysis of particular estimators (i.e., what is this estimator's error, and does it attain the efficiency bound?) under weak assumptions. We emphasize minimax-style efficiency bounds, worked examples, and practical shortcuts for easing derivations. We gloss over most technical details, in the interest of highlighting important concepts and providing intuition for main ideas.},
	urldate = {2023-11-17},
	publisher = {arXiv},
	author = {Kennedy, Edward H.},
	month = jan,
	year = {2023},
	note = {arXiv:2203.06469 [stat]},
	keywords = {Statistics - Methodology},
}

@misc{devlin_bert_2019,
	title = {{BERT}: {Pre}-training of {Deep} {Bidirectional} {Transformers} for {Language} {Understanding}},
	shorttitle = {{BERT}},
	url = {http://arxiv.org/abs/1810.04805},
	abstract = {We introduce a new language representation model called BERT, which stands for Bidirectional Encoder Representations from Transformers. Unlike recent language representation models, BERT is designed to pre-train deep bidirectional representations from unlabeled text by jointly conditioning on both left and right context in all layers. As a result, the pre-trained BERT model can be fine-tuned with just one additional output layer to create state-of-the-art models for a wide range of tasks, such as question answering and language inference, without substantial task-specific architecture modifications. BERT is conceptually simple and empirically powerful. It obtains new state-of-the-art results on eleven natural language processing tasks, including pushing the GLUE score to 80.5\% (7.7\% point absolute improvement), MultiNLI accuracy to 86.7\% (4.6\% absolute improvement), SQuAD v1.1 question answering Test F1 to 93.2 (1.5 point absolute improvement) and SQuAD v2.0 Test F1 to 83.1 (5.1 point absolute improvement).},
	urldate = {2023-11-17},
	publisher = {arXiv},
	author = {Devlin, Jacob and Chang, Ming-Wei and Lee, Kenton and Toutanova, Kristina},
	month = may,
	year = {2019},
	note = {arXiv:1810.04805 [cs]},
	keywords = {Computer Science - Computation and Language},
}

@article{gretton_kernel_2012,
	title = {A {Kernel} {Two}-{Sample} {Test}},
	volume = {13},
	issn = {1533-7928},
	url = {http://jmlr.org/papers/v13/gretton12a.html},
	abstract = {We propose a framework for analyzing and comparing distributions, which we use to construct statistical tests to determine if two samples are drawn from different distributions. Our test statistic is the largest difference in expectations over functions in the unit ball of a reproducing kernel Hilbert space (RKHS), and is called the maximum mean discrepancy (MMD). We present two distribution-free tests based on large deviation bounds for the MMD, and a third test based on the asymptotic distribution of this statistic. The MMD can be computed in quadratic time, although efficient linear time approximations are available. Our statistic is an instance of an integral probability metric, and various classical metrics on distributions are obtained when alternative function classes are used in place of an RKHS. We apply our two-sample tests to a variety of problems, including attribute matching for databases using the Hungarian marriage method, where they perform strongly. Excellent performance is also obtained when comparing distributions over graphs, for which these are the first such tests.},
	number = {25},
	urldate = {2023-11-05},
	journal = {Journal of Machine Learning Research},
	author = {Gretton, Arthur and Borgwardt, Karsten M. and Rasch, Malte J. and Schölkopf, Bernhard and Smola, Alexander},
	year = {2012},
	pages = {723--773},
}

@article{miao_identifying_2018,
	title = {Identifying causal effects with proxy variables of an unmeasured confounder},
	volume = {105},
	issn = {0006-3444},
	url = {https://doi.org/10.1093/biomet/asy038},
	doi = {10.1093/biomet/asy038},
	abstract = {We consider a causal effect that is confounded by an unobserved variable, but for which observed proxy variables of the confounder are available. We show that with at least two independent proxy variables satisfying a certain rank condition, the causal effect can be nonparametrically identified, even if the measurement error mechanism, i.e., the conditional distribution of the proxies given the confounder, may not be identified. Our result generalizes the identification strategy of Kuroki \&amp; Pearl (2014), which rests on identification of the measurement error mechanism. When only one proxy for the confounder is available, or when the required rank condition is not met, we develop a strategy for testing the null hypothesis of no causal effect.},
	number = {4},
	urldate = {2023-08-30},
	journal = {Biometrika},
	author = {Miao, Wang and Geng, Zhi and Tchetgen Tchetgen, Eric J},
	month = dec,
	year = {2018},
	pages = {987--993},
}

@article{kuroki_measurement_2014,
	title = {Measurement bias and effect restoration in causal inference},
	volume = {101},
	issn = {0006-3444},
	url = {https://doi.org/10.1093/biomet/ast066},
	doi = {10.1093/biomet/ast066},
	abstract = {This paper highlights several areas where graphical techniques can be harnessed to address the problem of measurement errors in causal inference. In particular, it discusses the control of unmeasured confounders in parametric and nonparametric models and the computational problem of obtaining bias-free effect estimates in such models. We derive new conditions under which causal effects can be restored by observing proxy variables of unmeasured confounders with/without external studies.},
	number = {2},
	urldate = {2023-08-30},
	journal = {Biometrika},
	author = {Kuroki, Manabu and Pearl, Judea},
	month = jun,
	year = {2014},
	pages = {423--437},
}

@misc{tchetgen_introduction_2020,
	title = {An {Introduction} to {Proximal} {Causal} {Learning}},
	url = {http://arxiv.org/abs/2009.10982},
	abstract = {A standard assumption for causal inference from observational data is that one has measured a sufficiently rich set of covariates to ensure that within covariate strata, subjects are exchangeable across observed treatment values. Skepticism about the exchangeability assumption in observational studies is often warranted because it hinges on investigators' ability to accurately measure covariates capturing all potential sources of confounding. Realistically, confounding mechanisms can rarely if ever, be learned with certainty from measured covariates. One can therefore only ever hope that covariate measurements are at best proxies of true underlying confounding mechanisms operating in an observational study, thus invalidating causal claims made on basis of standard exchangeability conditions. Causal learning from proxies is a challenging inverse problem which has to date remained unresolved. In this paper, we introduce a formal potential outcome framework for proximal causal learning, which while explicitly acknowledging covariate measurements as imperfect proxies of confounding mechanisms, offers an opportunity to learn about causal effects in settings where exchangeability on the basis of measured covariates fails. Sufficient conditions for nonparametric identification are given, leading to the proximal g-formula and corresponding proximal g-computation algorithm for estimation. These may be viewed as generalizations of Robins' foundational g-formula and g-computation algorithm, which account explicitly for bias due to unmeasured confounding. Both point treatment and time-varying treatment settings are considered, and an application of proximal g-computation of causal effects is given for illustration.},
	urldate = {2023-08-29},
	publisher = {arXiv},
	author = {Tchetgen, Eric J. Tchetgen and Ying, Andrew and Cui, Yifan and Shi, Xu and Miao, Wang},
	month = sep,
	year = {2020},
	note = {arXiv:2009.10982 [stat]},
	keywords = {62A01, Statistics - Methodology},
}

@article{hartman_equivalence_2018,
	title = {An {Equivalence} {Approach} to {Balance} and {Placebo} {Tests}},
	volume = {62},
	issn = {1540-5907},
	url = {https://onlinelibrary.wiley.com/doi/abs/10.1111/ajps.12387},
	doi = {10.1111/ajps.12387},
	abstract = {Recent emphasis on credible causal designs has led to the expectation that scholars justify their research designs by testing the plausibility of their causal identification assumptions, often through balance and placebo tests. Yet current practice is to use statistical tests with an inappropriate null hypothesis of no difference, which can result in equating nonsignificant differences with significant homogeneity. Instead, we argue that researchers should begin with the initial hypothesis that the data are inconsistent with a valid research design, and provide sufficient statistical evidence in favor of a valid design. When tests are correctly specified so that difference is the null and equivalence is the alternative, the problems afflicting traditional tests are alleviated. We argue that equivalence tests are better able to incorporate substantive considerations about what constitutes good balance on covariates and placebo outcomes than traditional tests. We demonstrate these advantages with applications to natural experiments.},
	language = {en},
	number = {4},
	urldate = {2023-05-22},
	journal = {American Journal of Political Science},
	author = {Hartman, Erin and Hidalgo, F. Daniel},
	year = {2018},
	note = {\_eprint: https://onlinelibrary.wiley.com/doi/pdf/10.1111/ajps.12387},
	keywords = {notion},
	pages = {1000--1013},
}

@article{stukal_why_2022,
	title = {Why {Botter}: {How} {Pro}-{Government} {Bots} {Fight} {Opposition} in {Russia}},
	volume = {116},
	issn = {0003-0554, 1537-5943},
	shorttitle = {Why {Botter}},
	url = {https://www.cambridge.org/core/journals/american-political-science-review/article/why-botter-how-progovernment-bots-fight-opposition-in-russia/D8A8A74976408CF7EC329827AFFFD3FC},
	doi = {10.1017/S0003055421001507},
	abstract = {There is abundant anecdotal evidence that nondemocratic regimes are harnessing new digital technologies known as social media bots to facilitate policy goals. However, few previous attempts have been made to systematically analyze the use of bots that are aimed at a domestic audience in autocratic regimes. We develop two alternative theoretical frameworks for predicting the use of pro-regime bots: one which focuses on bot deployment in response to offline protest and the other in response to online protest. We then test the empirical implications of these frameworks with an original collection of Twitter data generated by Russian pro-government bots. We find that the online opposition activities produce stronger reactions from bots than offline protests. Our results provide a lower bound on the effects of bots on the Russian Twittersphere and highlight the importance of bot detection for the study of political communication on social media in nondemocratic regimes.},
	language = {en},
	number = {3},
	urldate = {2023-05-22},
	journal = {American Political Science Review},
	author = {Stukal, Denis and Sanovich, Sergey and Bonneau, Richard and Tucker, Joshua A.},
	month = aug,
	year = {2022},
	note = {Publisher: Cambridge University Press},
	keywords = {notion},
	pages = {843--857},
}

@misc{kingma_adam_2017,
	title = {Adam: {A} {Method} for {Stochastic} {Optimization}},
	shorttitle = {Adam},
	url = {http://arxiv.org/abs/1412.6980},
	abstract = {We introduce Adam, an algorithm for ﬁrst-order gradient-based optimization of stochastic objective functions, based on adaptive estimates of lower-order moments. The method is straightforward to implement, is computationally efﬁcient, has little memory requirements, is invariant to diagonal rescaling of the gradients, and is well suited for problems that are large in terms of data and/or parameters. The method is also appropriate for non-stationary objectives and problems with very noisy and/or sparse gradients. The hyper-parameters have intuitive interpretations and typically require little tuning. Some connections to related algorithms, on which Adam was inspired, are discussed. We also analyze the theoretical convergence properties of the algorithm and provide a regret bound on the convergence rate that is comparable to the best known results under the online convex optimization framework. Empirical results demonstrate that Adam works well in practice and compares favorably to other stochastic optimization methods. Finally, we discuss AdaMax, a variant of Adam based on the inﬁnity norm.},
	language = {en},
	urldate = {2023-08-19},
	publisher = {arXiv},
	author = {Kingma, Diederik P. and Ba, Jimmy},
	month = jan,
	year = {2017},
	note = {arXiv:1412.6980 [cs]},
	keywords = {Computer Science - Machine Learning},
}

@misc{wang_desiderata_2022,
	title = {Desiderata for {Representation} {Learning}: {A} {Causal} {Perspective}},
	shorttitle = {Desiderata for {Representation} {Learning}},
	url = {http://arxiv.org/abs/2109.03795},
	doi = {10.48550/arXiv.2109.03795},
	abstract = {Representation learning constructs low-dimensional representations to summarize essential features of high-dimensional data. This learning problem is often approached by describing various desiderata associated with learned representations; e.g., that they be non-spurious, efficient, or disentangled. It can be challenging, however, to turn these intuitive desiderata into formal criteria that can be measured and enhanced based on observed data. In this paper, we take a causal perspective on representation learning, formalizing non-spuriousness and efficiency (in supervised representation learning) and disentanglement (in unsupervised representation learning) using counterfactual quantities and observable consequences of causal assertions. This yields computable metrics that can be used to assess the degree to which representations satisfy the desiderata of interest and learn non-spurious and disentangled representations from single observational datasets.},
	urldate = {2023-07-31},
	publisher = {arXiv},
	author = {Wang, Yixin and Jordan, Michael I.},
	month = feb,
	year = {2022},
	note = {arXiv:2109.03795 [cs, stat]},
	keywords = {Computer Science - Machine Learning, Statistics - Machine Learning, Statistics - Methodology},
}

@article{fong_machine_2021,
	title = {Machine {Learning} {Predictions} as {Regression} {Covariates}},
	volume = {29},
	issn = {1047-1987, 1476-4989},
	url = {https://www.cambridge.org/core/journals/political-analysis/article/machine-learning-predictions-as-regression-covariates/462A74A46A97C20A17CF640BDA72B826},
	doi = {10.1017/pan.2020.38},
	abstract = {In text, images, merged surveys, voter files, and elsewhere, data sets are often missing important covariates, either because they are latent features of observations (such as sentiment in text) or because they are not collected (such as race in voter files). One promising approach for coping with this missing data is to find the true values of the missing covariates for a subset of the observations and then train a machine learning algorithm to predict the values of those covariates for the rest. However, plugging in these predictions without regard for prediction error renders regression analyses biased, inconsistent, and overconfident. We characterize the severity of the problem posed by prediction error, describe a procedure to avoid these inconsistencies under comparatively general assumptions, and demonstrate the performance of our estimators through simulations and a study of hostile political dialogue on the Internet. We provide software implementing our approach.},
	language = {en},
	number = {4},
	urldate = {2023-07-18},
	journal = {Political Analysis},
	author = {Fong, Christian and Tyler, Matthew},
	month = oct,
	year = {2021},
	note = {Publisher: Cambridge University Press},
	keywords = {classification, inference, instrumental variables, machine learning},
	pages = {467--484},
}

@article{hartman_equivalence_2021,
	title = {Equivalence {Testing} for {Regression} {Discontinuity} {Designs}},
	volume = {29},
	issn = {1047-1987, 1476-4989},
	url = {https://www.cambridge.org/core/journals/political-analysis/article/equivalence-testing-for-regression-discontinuity-designs/43F77CDC6337A63AE0A5E6DC3EE01A41},
	doi = {10.1017/pan.2020.43},
	abstract = {Regression discontinuity (RD) designs are increasingly common in political science. They have many advantages, including a known and observable treatment assignment mechanism. The literature has emphasized the need for “falsification tests” and ways to assess the validity of the design. When implementing RD designs, researchers typically rely on two falsification tests, based on empirically testable implications of the identifying assumptions, to argue the design is credible. These tests, one for continuity in the regression function for a pretreatment covariate, and one for continuity in the density of the forcing variable, use a null of no difference in the parameter of interest at the discontinuity. Common practice can, incorrectly, conflate a failure to reject evidence of a flawed design with evidence that the design is credible. The well-known equivalence testing approach addresses these problems, but how to implement equivalence tests in the RD framework is not straightforward. This paper develops two equivalence tests tailored for RD designs that allow researchers to provide statistical evidence that the design is credible. Simulation studies show the superior performance of equivalence-based tests over tests-of-difference, as used in current practice. The tests are applied to the close elections RD data presented in Eggers et al. (2015b).},
	language = {en},
	number = {4},
	urldate = {2023-07-18},
	journal = {Political Analysis},
	author = {Hartman, Erin},
	month = oct,
	year = {2021},
	note = {Publisher: Cambridge University Press},
	keywords = {equivalence tests, falsification tests, regression discontinuity design},
	pages = {505--521},
}

@article{van_der_laan_targeted_2010,
	title = {Targeted {Maximum} {Likelihood} {Based} {Causal} {Inference}: {Part} {I}},
	volume = {6},
	issn = {1557-4679},
	shorttitle = {Targeted {Maximum} {Likelihood} {Based} {Causal} {Inference}},
	url = {https://www.ncbi.nlm.nih.gov/pmc/articles/PMC3126670/},
	doi = {10.2202/1557-4679.1211},
	abstract = {Given causal graph assumptions, intervention-specific counterfactual distributions of the data can be defined by the so called G-computation formula, which is obtained by carrying out these interventions on the likelihood of the data factorized according to the causal graph. The obtained G-computation formula represents the counterfactual distribution the data would have had if this intervention would have been enforced on the system generating the data. A causal effect of interest can now be defined as some difference between these counterfactual distributions indexed by different interventions. For example, the interventions can represent static treatment regimens or individualized treatment rules that assign treatment in response to time-dependent covariates, and the causal effects could be defined in terms of features of the mean of the treatment-regimen specific counterfactual outcome of interest as a function of the corresponding treatment regimens. Such features could be defined nonparametrically in terms of so called (nonparametric) marginal structural models for static or individualized treatment rules, whose parameters can be thought of as (smooth) summary measures of differences between the treatment regimen specific counterfactual distributions., In this article, we develop a particular targeted maximum likelihood estimator of causal effects of multiple time point interventions. This involves the use of loss-based super-learning to obtain an initial estimate of the unknown factors of the G-computation formula, and subsequently, applying a target-parameter specific optimal fluctuation function (least favorable parametric submodel) to each estimated factor, estimating the fluctuation parameter(s) with maximum likelihood estimation, and iterating this updating step of the initial factor till convergence. This iterative targeted maximum likelihood updating step makes the resulting estimator of the causal effect double robust in the sense that it is consistent if either the initial estimator is consistent, or the estimator of the optimal fluctuation function is consistent. The optimal fluctuation function is correctly specified if the conditional distributions of the nodes in the causal graph one intervenes upon are correctly specified. The latter conditional distributions often comprise the so called treatment and censoring mechanism. Selection among different targeted maximum likelihood estimators (e.g., indexed by different initial estimators) can be based on loss-based cross-validation such as likelihood based cross-validation or cross-validation based on another appropriate loss function for the distribution of the data. Some specific loss functions are mentioned in this article., Subsequently, a variety of interesting observations about this targeted maximum likelihood estimation procedure are made. This article provides the basis for the subsequent companion Part II-article in which concrete demonstrations for the implementation of the targeted MLE in complex causal effect estimation problems are provided.},
	number = {2},
	urldate = {2023-07-18},
	journal = {The International Journal of Biostatistics},
	author = {van der Laan, Mark J.},
	month = feb,
	year = {2010},
	pmid = {21969992},
	pmcid = {PMC3126670},
	pages = {2},
}

@book{vaart_asymptotic_1998,
	edition = {1},
	title = {Asymptotic {Statistics}},
	isbn = {978-0-511-80225-6 978-0-521-49603-2 978-0-521-78450-4},
	url = {https://www.cambridge.org/core/product/identifier/9780511802256/type/book},
	abstract = {This book is an introduction to the field of asymptotic statistics. The treatment is both practical and mathematically rigorous. In addition to most of the standard topics of an asymptotics course, including likelihood inference, M-estimation, the theory of asymptotic efficiency, U-statistics, and rank procedures, the book also presents recent research topics such as semiparametric models, the bootstrap, and empirical processes and their applications. The topics are organized from the central idea of approximation by limit experiments, which gives the book one of its unifying themes. This entails mainly the local approximation of the classical i.i.d. set up with smooth parameters by location experiments involving a single, normally distributed observation. Thus, even the standard subjects of asymptotic statistics are presented in a novel way. Suitable as a graduate or Master's level statistics text, this book will also give researchers an overview of research in asymptotic statistics.},
	language = {en},
	urldate = {2023-07-16},
	publisher = {Cambridge University Press},
	author = {Vaart, A. W. Van Der},
	month = oct,
	year = {1998},
	doi = {10.1017/CBO9780511802256},
}

@inproceedings{shi_adapting_2019,
	title = {Adapting {Neural} {Networks} for the {Estimation} of {Treatment} {Effects}},
	volume = {32},
	url = {https://proceedings.neurips.cc/paper/2019/hash/8fb5f8be2aa9d6c64a04e3ab9f63feee-Abstract.html},
	abstract = {This paper addresses the use of neural networks for the estimation of treatment effects from observational data. Generally, estimation proceeds in two stages. First, we ﬁt models for the expected outcome and the probability of treatment (propensity score). Second, we plug these ﬁtted models into a downstream estimator. Neural networks are a natural choice for the models in the ﬁrst step. Our question is: how can we adapt the design and training of the neural networks used in this ﬁrst step in order to improve the quality of the ﬁnal estimate of the treatment effect? We propose two adaptations based on insights from the statistical literature on the estimation of treatment effects. The ﬁrst is a new architecture, the Dragonnet, that exploits the sufﬁciency of the propensity score for estimation adjustment. The second is a regularization procedure, targeted regularization, that induces a bias towards models that have non-parametrically optimal asymptotic properties ‘out-of-the-box’. Studies on benchmark datasets for causal inference show these adaptations outperform existing methods.},
	urldate = {2023-06-17},
	booktitle = {Advances in {Neural} {Information} {Processing} {Systems}},
	publisher = {Curran Associates, Inc.},
	author = {Shi, Claudia and Blei, David and Veitch, Victor},
	year = {2019},
}

@article{robins_estimation_1994,
	title = {Estimation of {Regression} {Coefficients} {When} {Some} {Regressors} {Are} {Not} {Always} {Observed}},
	volume = {89},
	issn = {0162-1459},
	url = {https://www.jstor.org/stable/2290910},
	doi = {10.2307/2290910},
	abstract = {In applied problems it is common to specify a model for the conditional mean of a response given a set of regressors. A subset of the regressors may be missing for some study subjects either by design or happenstance. In this article we propose a new class of semiparametric estimators, based on inverse probability weighted estimating equations, that are consistent for parameter vector α$_{\textrm{0}}$ of the conditional mean model when the data are missing at random in the sense of Rubin and the missingness probabilities are either known or can be parametrically modeled. We show that the asymptotic variance of the optimal estimator in our class attains the semiparametric variance bound for the model by first showing that our estimation problem is a special case of the general problem of parameter estimation in an arbitrary semiparametric model in which the data are missing at random and the probability of observing complete data is bounded away from 0, and then deriving a representation for the efficient score, the semiparametric variance bound, and the influence function of any regular, asymptotically linear estimator in this more general estimation problem. Because the optimal estimator depends on the unknown probability law generating the data, we propose locally and globally adaptive semiparametric efficient estimators. We compare estimators in our class with previously proposed estimators. We show that each previous estimator is asymptotically equivalent to some, usually inefficient, estimator in our class. This equivalence is a consequence of a proposition stating that every regular asymptotic linear estimator of α$_{\textrm{0}}$ is asymptotically equivalent to some estimator in our class. We compare various estimators in a small simulation study and offer some practical recommendations.},
	number = {427},
	urldate = {2023-06-16},
	journal = {Journal of the American Statistical Association},
	author = {Robins, James M. and Rotnitzky, Andrea and Zhao, Lue Ping},
	year = {1994},
	note = {Publisher: [American Statistical Association, Taylor \& Francis, Ltd.]},
	pages = {846--866},
}

@misc{veitch_adapting_2020,
	title = {Adapting {Text} {Embeddings} for {Causal} {Inference}},
	url = {http://arxiv.org/abs/1905.12741},
	abstract = {Does adding a theorem to a paper affect its chance of acceptance? Does labeling a post with the author's gender affect the post popularity? This paper develops a method to estimate such causal effects from observational text data, adjusting for confounding features of the text such as the subject or writing quality. We assume that the text suffices for causal adjustment but that, in practice, it is prohibitively high-dimensional. To address this challenge, we develop causally sufficient embeddings, low-dimensional document representations that preserve sufficient information for causal identification and allow for efficient estimation of causal effects. Causally sufficient embeddings combine two ideas. The first is supervised dimensionality reduction: causal adjustment requires only the aspects of text that are predictive of both the treatment and outcome. The second is efficient language modeling: representations of text are designed to dispose of linguistically irrelevant information, and this information is also causally irrelevant. Our method adapts language models (specifically, word embeddings and topic models) to learn document embeddings that are able to predict both treatment and outcome. We study causally sufficient embeddings with semi-synthetic datasets and find that they improve causal estimation over related embedding methods. We illustrate the methods by answering the two motivating questions---the effect of a theorem on paper acceptance and the effect of a gender label on post popularity. Code and data available at https://github.com/vveitch/causal-text-embeddings-tf2\vphantom{\{}\}\{\vphantom{\}}github.com/vveitch/causal-text-embeddings-tf2},
	urldate = {2023-06-13},
	publisher = {arXiv},
	author = {Veitch, Victor and Sridhar, Dhanya and Blei, David M.},
	month = jul,
	year = {2020},
	note = {arXiv:1905.12741 [cs, stat]},
	keywords = {Computer Science - Computation and Language, Computer Science - Machine Learning, Statistics - Machine Learning},
}

@article{egami_how_2022,
	title = {How to make causal inferences using texts},
	volume = {8},
	url = {https://www.science.org/doi/10.1126/sciadv.abg2652},
	doi = {10.1126/sciadv.abg2652},
	abstract = {Text as data techniques offer a great promise: the ability to inductively discover measures that are useful for testing social science theories with large collections of text. Nearly all text-based causal inferences depend on a latent representation of the text, but we show that estimating this latent representation from the data creates underacknowledged risks: we may introduce an identification problem or overfit. To address these risks, we introduce a split-sample workflow for making rigorous causal inferences with discovered measures as treatments or outcomes. We then apply it to estimate causal effects from an experiment on immigration attitudes and a study on bureaucratic responsiveness.},
	number = {42},
	urldate = {2023-06-10},
	journal = {Science Advances},
	author = {Egami, Naoki and Fong, Christian J. and Grimmer, Justin and Roberts, Margaret E. and Stewart, Brandon M.},
	month = oct,
	year = {2022},
	note = {Publisher: American Association for the Advancement of Science},
	pages = {eabg2652},
}

@misc{pryzant_causal_2021,
	title = {Causal {Effects} of {Linguistic} {Properties}},
	url = {http://arxiv.org/abs/2010.12919},
	abstract = {We consider the problem of using observational data to estimate the causal effects of linguistic properties. For example, does writing a complaint politely lead to a faster response time? How much will a positive product review increase sales? This paper addresses two technical challenges related to the problem before developing a practical method. First, we formalize the causal quantity of interest as the effect of a writer's intent, and establish the assumptions necessary to identify this from observational data. Second, in practice, we only have access to noisy proxies for the linguistic properties of interest -- e.g., predictions from classifiers and lexicons. We propose an estimator for this setting and prove that its bias is bounded when we perform an adjustment for the text. Based on these results, we introduce TextCause, an algorithm for estimating causal effects of linguistic properties. The method leverages (1) distant supervision to improve the quality of noisy proxies, and (2) a pre-trained language model (BERT) to adjust for the text. We show that the proposed method outperforms related approaches when estimating the effect of Amazon review sentiment on semi-simulated sales figures. Finally, we present an applied case study investigating the effects of complaint politeness on bureaucratic response times.},
	urldate = {2023-06-10},
	publisher = {arXiv},
	author = {Pryzant, Reid and Card, Dallas and Jurafsky, Dan and Veitch, Victor and Sridhar, Dhanya},
	month = jun,
	year = {2021},
	note = {arXiv:2010.12919 [cs]},
	keywords = {Computer Science - Artificial Intelligence, Computer Science - Computation and Language},
}

@article{chernozhukov_doubledebiased_2018,
	title = {Double/debiased machine learning for treatment and structural parameters},
	volume = {21},
	issn = {1368-4221, 1368-423X},
	url = {https://academic.oup.com/ectj/article/21/1/C1/5056401},
	doi = {10.1111/ectj.12097},
	language = {en},
	number = {1},
	urldate = {2023-06-08},
	journal = {The Econometrics Journal},
	author = {Chernozhukov, Victor and Chetverikov, Denis and Demirer, Mert and Duflo, Esther and Hansen, Christian and Newey, Whitney and Robins, James},
	month = feb,
	year = {2018},
	pages = {C1--C68},
}

@misc{gui_causal_2023,
	title = {Causal {Estimation} for {Text} {Data} with ({Apparent}) {Overlap} {Violations}},
	url = {http://arxiv.org/abs/2210.00079},
	abstract = {Consider the problem of estimating the causal effect of some attribute of a text document; for example: what effect does writing a polite vs. rude email have on response time? To estimate a causal effect from observational data, we need to adjust for confounding aspects of the text that affect both the treatment and outcome -- e.g., the topic or writing level of the text. These confounding aspects are unknown a priori, so it seems natural to adjust for the entirety of the text (e.g., using a transformer). However, causal identification and estimation procedures rely on the assumption of overlap: for all levels of the adjustment variables, there is randomness leftover so that every unit could have (not) received treatment. Since the treatment here is itself an attribute of the text, it is perfectly determined, and overlap is apparently violated. The purpose of this paper is to show how to handle causal identification and obtain robust causal estimation in the presence of apparent overlap violations. In brief, the idea is to use supervised representation learning to produce a data representation that preserves confounding information while eliminating information that is only predictive of the treatment. This representation then suffices for adjustment and can satisfy overlap. Adapting results on non-parametric estimation, we find that this procedure is robust to conditional outcome misestimation, yielding a low-bias estimator with valid uncertainty quantification under weak conditions. Empirical results show strong improvements in bias and uncertainty quantification relative to the natural baseline.},
	urldate = {2023-06-05},
	publisher = {arXiv},
	author = {Gui, Lin and Veitch, Victor},
	month = feb,
	year = {2023},
	note = {arXiv:2210.00079 [cs, stat]},
	keywords = {Computer Science - Machine Learning, Statistics - Machine Learning},
}

\newpage 
\appendix
\setcounter{equation}{0}
\setcounter{figure}{0}
\setcounter{table}{0}
\setcounter{section}{0}
\renewcommand {\theequation} {S\arabic{equation}}
\renewcommand {\thefigure} {S\arabic{figure}}
\renewcommand {\thetable} {S\arabic{table}}
\renewcommand {\thesection} {S\arabic{section}}

\begin{center}
  {\LARGE \bf Appendix} 
\end{center}

\section{Theoretical Explanation of Efficiency Gains}\label{difference}
I now further investigate why the SRI framework can yield efficiency gains compared to existing methods in general. To compare Surrogate Representation Inference (SRI) with the existing bias-correction approach, I first define each semiparametric model and the corresponding tangent space. Let \(\mathcal{M}_{\text{SRI}}\) be a semiparametric model of Surrogate Representation Inference (SRI) and \(\mathcal{M}_{\text{PPI}}\) be that of the existing bias-correction methodology. Then, the observed data for each semiparametric model, denoted as \(O_{\text{SRI}}\) and \(O_{\text{PPI}}\) respectively, are written as
\begin{align*}
    \text{Surrogate Representation Inference:} \quad &O_{\text{SRI}} = \{T_i, \bY_i, \bZ_i, S_i, G_{S_i}(\tilde L_i)\}_{i = 1}^n  \\
    \text{Existing Methods:} \quad &O_{\text{PPI}} = \{T_i, \bm{h}(\mathbf{Y}_i), \mathbf{Z}_i, S_i, G_{S_i}(\tilde L_i)\}_{i = 1}^n,
\end{align*}
where \(T_i\) is the predictor of interest, \(\bm{Z}_i\) is the set of control variables, \(\tilde L_i\) is the human annotation, \(S_i\) is the indicator variable for whether the human annotation \(\tilde L_i\) is observed, \(G_{S_i}(\cdot)\) is the coarsening operator that outputs the input when \(S_i = 1\) and outputs the empty set when \(S_i = 0\), and \(\bm{h}(\bm{Y}_i)\) is the machine learning-based proxy. Importantly, texts $\bY_i$ satisfy the independence $\tilde{L}_i \ \indep \ T_i \mid \bY_i, \bZ_i$, but this is not necessarily true for \(\bm{h}(\bm{Y}_i)\). As a result, without any constraints on machine learning predictions, $\mathcal{M}_{\mathrm{SRI}} \subseteq \mathcal{M}_{\mathrm{PPI}}$
since any factorization under $\mathcal{M}_{\mathrm{SRI}}$ belongs to $\mathcal{M}_{\mathrm{PPI}}$.
As a result, the tangent space of Surrogate Representation Inference, denoted
as $\cT_{\mathrm{SRI}}$, is a linear subspace of that of the existing method $\cT_{\mathrm{PPI}}$.

In semiparametric statistics, the model's efficiency is often evaluated in terms of the \emph{semiparametric efficiency bound}, which is the analogue of the Cramér-Rao lower bound for the semiparametric problem and no semiparametric estimators can have an asymptotic variance smaller than this bound (e.g., \citealt{bickel1993efficient, newey_semiparametric_1990, vaart_asymptotic_1998, tsiatis_semiparametric_2006}). To see the efficiency gains, I compare the efficiency bounds for the two different methods.  Mathematically, they are given by
\begin{align*}
    \text{Surrogate Representation Inference:} \quad & \mathbb{E}\biggl[
    \biggl(S(O_i) - \Pi( S(O_i) \mid \mathcal{T}_{\text{SRI}}) \biggr)
    \biggl(S(O_i) - \Pi( S(O_i) \mid \mathcal{T}_{\text{SRI}}) \biggr)^\top \biggr]^{-1} \\
     \text{Existing Methods:} \quad & \mathbb{E}\biggl[
    \biggl(S(O_i) - \Pi( S(O_i) \mid \mathcal{T}_{\text{PPI}}) \biggr)
    \biggl(S(O_i) - \Pi( S(O_i) \mid \mathcal{T}_{\text{PPI}}) \biggr)^\top \biggr]^{-1}
\end{align*}
where \(O_i = \{T_i, \bm{Y}_i, \bm{Z}_i, S_i\}\) is the set of variables required to consider the score, \(S(O_i)\) is the score function, and \(\Pi(\cdot \mid \cdot)\) is the projection operator (note that I can remove \(\tilde L_i\) from \(O_i\) when considering the score since it is a deterministic function of \(\bm{Y}_i\)). As \(\mathcal{T}_{\text{SRI}} \subseteq \mathcal{T}_{\text{PPI}}\),
\begin{align*}
    \underbrace{S(O_i) - \Pi( S(O_i) \mid \mathcal{T}_{\text{SRI}})}_{\text{Efficient Score for SRI}} = \underbrace{S(O_i) - \Pi( S(O_i) \mid \mathcal{T}_{\text{PPI}})}_{\text{Efficient Score for PPI}} + \underbrace{\Pi( S(O_i) \mid \mathcal{T}_{\text{PPI}})  - \Pi( S(O_i) \mid \mathcal{T}_{\text{SRI}})}_{:= \Delta \ (\text{residual term)}}
\end{align*}
where the residual term \(\Delta\) is orthogonal to the efficient score for PPI. As a result, since $\Delta \in \cT_{\text{PPI}}$ and the efficient score belongs to the orthogonal complement of the tangent space,
\begin{align*}
    &\mathbb{E}\biggl[
    \biggl(S(O_i) - \Pi( S(O_i) \mid \mathcal{T}_{\text{SRI}}) \biggr)
    \biggl(S(O_i) - \Pi( S(O_i) \mid \mathcal{T}_{\text{SRI}}) \biggr)^\top \biggr]\\
    &=\mathbb{E}\biggl[
    \biggl(S(O_i) - \Pi( S(O_i) \mid \mathcal{T}_{\text{PPI}}) + \Delta \biggr)
    \biggl(S(O_i) - \Pi( S(O_i) \mid \mathcal{T}_{\text{PPI}}) + \Delta \biggr)^\top \biggr]\\
    &= \mathbb{E}\biggl[
    \biggl(S(O_i) - \Pi( S(O_i) \mid \mathcal{T}_{\text{PPI}}) \biggr)
    \biggl(S(O_i) - \Pi( S(O_i) \mid \mathcal{T}_{\text{PPI}}) \biggr)^\top \biggr] + \mathbb{E}[\Delta\Delta^\top]\\
    &\succeq \mathbb{E}\biggl[
    \biggl(S(O_i) - \Pi( S(O_i) \mid \mathcal{T}_{\text{PPI}}) \biggr)
    \biggl(S(O_i) - \Pi( S(O_i) \mid \mathcal{T}_{\text{PPI}}) \biggr)^\top \biggr]
\end{align*}
where the third equality is by the orthogonality of the residual term to the efficient score that lies in the orthocomplement of the tangent space. Therefore, as most of the existing literature on the bias-correction estimator works on the semiparametric model \(\mathcal{M}_{\text{PPI}}\), regardless of the form of the estimators, without imposing any additional restrictions, such as parametric models, SRI always has a lower semiparametric efficiency bound and the proposed estimator in Section \ref{sec::method} attains this bound.

From this theoretical analysis, it is expected that when the machine learning predictions already achieve high classification accuracy, there should not be much efficiency gain in reality. This is because the efficiency gains come from the fact that the machine learning predictions do not necessarily satisfy the independence constraint, but if they are almost accurate, there is no reason to expect that the residual term \(\Delta\) will become large. I confirm this conjecture in the application section (Section \ref{sec::application}) and the simulation section (Section \ref{sec::simulation}).

\newpage
\section{How to test assumptions in Section \ref{sec:proximal} that are not guaranteed by design} \label{sec::guidance}
%\subsection{How to test assumptions that are not guaranteed by design} 
While Section \ref{sec:proximal} has developed a methodology to account for human annotations with non-differential measurement errors, the proposed SRI approach relies on several additional assumptions. Three specific assumptions warrant particular attention: (1) the independent coding (Assumption \ref{proximal} (a)), which states that the coding errors are independent across coders, (2) the exclusion restriction (Assumption \ref{proximal} (b)), which posits that texts influence the human annotations solely through the true concept, and (3) accurate human labels (Assumption \ref{monotonicity} (a)), which asserts that the accuracy of the human annotated labels is more than 0.5. None of these assumptions can be directly tested or is guaranteed by the study design, thus complicating the application of the proposed methodology.

\begin{figure}[ht]
    \centering
    \begin{tikzpicture}[
    node distance=1cm and 2cm,
    box/.style={
        draw,
        rounded corners,
        minimum width=5cm,
        minimum height=1.5cm,
        align=center,
        fill=blue!10
    },
    arrow/.style={
        -Stealth,
        thick
    }
]
% Define the main boxes
\node[box] (step1) {\textbf{Step 1}:\\Create the rigid coding rule and the small\\ number of the true labels by yourself};
\node[box, below=of step1] (step2) { \textbf{Step 2}:\\Ask human coders to label texts for\\the part that you coded in Step 1};
\node[box, below=of step2] (step3) { \textbf{Step 3}:\\Check Assumptions\\ (Conditional Independence / Accuracy)};
\node[box, below=of step3] (step4) { \textbf{Step 4}:\\Ask human coders to code more texts\\ and perform statistical inference};
\node[box, right=of step4] (update) {Update the coding rule\\and go back to Step 1};

% Draw the arrows
\draw[arrow] (step1) -- (step2);
\draw[arrow] (step2) -- (step3);
\draw[arrow] (step3) -- node[midway, fill=white, align=center] {If all assumptions\\are satisfied} (step4);
\draw[arrow] (step3) -| node[near start, above, fill=white, align=center] {If assumptions\\are violated} (update);

% Add a loopback arrow from update to step1
\draw[arrow] (update) -| ($(update.east) + (1,0)$) |- (step1);

\end{tikzpicture}
    \caption{Workflow for the applied researchers to create the coding rule and the labeled data with multiple human coders. To test the conditional independence assumption (Assumption \ref{proximal}) in Step 3, the conditional permutation test is used. See the main text for the specific procedure.}
    \label{workflow}
\end{figure}

To mitigate this limitation, I propose a procedure that uses a limited number of true, gold-standard labels $L_i$ and permutation tests to evaluate these assumptions. All procedures are illustrated in Figure \ref{workflow}. The proposed procedure comprises four steps. First, researchers develop their own coding rule and apply it to code a small number of texts themselves. I assume that researchers can generate a tiny set of gold-standard labels that perfectly capture $L_i$ for these texts. Given that researchers should have a clear understanding of the classification task, it is reasonable to posit that they can produce error-free labels for a small number of observations, such as 100 texts, although scaling this process without permitting errors is challenging. Second, researchers ask multiple human coders to annotate the texts.

Upon obtaining the researcher-annotated gold-standard labels $L_i$ and the human-annotated labels $\tilde{L}^{(j)}_i$ for the tiny subset of observations, researchers can then verify all the problematic assumptions. For Assumption \ref{monotonicity} (a) (human label accuracy), researchers can verify each conditional probability and check if the assumptions are valid or not. For Assumption \ref{proximal} (independent coding and exclusion restriction), I propose to use permutation tests to diagnose the independence constraints. Permutation tests are nonparametric statistical methods that are particularly advantageous and commonly utilized when dealing with a small number of observations, as conventional statistical tests often exhibit insufficient power in such scenarios (e.g., \citealt{good_permutation_1994, ojala_permutation_2009, pesarin2010permutation, berrett_conditional_2019}). By default, the null hypothesis of the permutation test is independence, and thus I use test of equivalence so that the independence is directly tested (e.g. \citealt{wellek2002testing, hartman_equivalence_2018}). The proposed procedure is based on \cite{arboretti_testing_2018}, which formalized the permutation test for test of equivalence.

Formally, to directly evaluate the conditional independence specified in Assumption \ref{proximal}, the objective of the test is to evaluate the following null and alternative hypotheses:
\begin{align*}
    &H_0: \tilde L^{(1)}_i \  \not\!\perp\!\!\!\perp  \tilde L^{(2)}_i \ \mid \ L_i
    \quad \text{or} \quad
    \bY_i \  \not\!\perp\!\!\!\perp  \ \{\tilde L^{(1)}_i, \tilde L^{(2)}_i\} \ \mid \  L_i\\
    &H_1: \tilde L^{(1)}_i \  \indep \ \  \tilde L^{(2)}_i \ \mid \ L_i
    \quad \text{and} \quad
    \bY_i \  \indep  \ \{\tilde L^{(1)}_i, \tilde L^{(2)}_i\} \ \mid \  L_i  .
\end{align*}
To use the permutation test for conditional independence, since $L_i$ is categorical, I can simply permute the observations within each stratum defined by $L_i$ to nonparametrically assess conditional independence \citep{berrett_conditional_2019}. For exclusion restriction, due to the high dimensionality of $\bY_i$, it is necessary to employ a scalar measure of statistical independence, such as the kernel-based measure proposed by \cite{gretton_kernel_2012} or the correlational distance proposed by \cite{szekely_measuring_2007}, to construct the test statistic. The entire testing procedure is as follows:
\begin{enumerate}
    \item \textbf{Defining Equivalence Range}: Define the equivalence margin $\delta \in \R^{+}$, which specifies the range within which the difference is considered inconsequential. 
    \item \textbf{Stratification}: Stratify the labeled data based on the values of $L_i$, creating strata for each possible value. Let $I_l = \{i: L_i = l\}$ denote the set of indices for each stratum.
    \item For each stratum $L_i = l$ with $l \in \cL$, 
    \begin{enumerate}
        \item \textbf{Computing Test Statistic}: Compute the test statistic by
    \begin{align*}
        T_{l1} = \max\{0,  d(\tilde L_i^{(1)}, \tilde L_i^{(2)}) - \delta\} \quad \text{and} \quad 
        T_{l2} = \max\{0, d(\bY_i, \{\tilde L_i^{(1)}, \tilde L_i^{(2)}\}) - \delta\}
    \end{align*}
    where $d(\cdot, \cdot)$ is any user-chosen measure for statistical independence.
        \item \textbf{Permutation}: For $b = 1, \cdots, B$ (where $B$ is the maximum number of permutations),
        \begin{itemize}
            \item Randomly permute the pair $\{\tilde L_i^{(1)}, \tilde L_i^{(2)}\}$ within each strata
            \item Compute test statistic with the permuted sample, denoted as $T_{l1}^{(b)}$ and $T_{l2}^{(b)}$
        \end{itemize}
    \end{enumerate}
    \item \textbf{Aggregation}: Aggregate the test statistic over all strata by
    \begin{align*}
        T = \max_{j \in \{1,2\}}\max_{l \in \cL} T_{jl}, \quad \text{and} \quad T^{(b)} =  \max_{j \in \{1,2\}}\max_{l \in \cL}T^{(b)}_{jl} \quad \text{for each} \ \ b \in \{1, \cdots, B\}
    \end{align*}
    \item \textbf{Calculating Monte Carlo p-value}: Calculate the Monte Carlo p-value by
    \begin{align*}
        p = \frac{1 + \sum_{b = 1}^B \mathbbm{1}\{T^{(b)} \leq T\}}{B + 1}
    \end{align*}
\end{enumerate}
Regarding the implementation of this test procedure, the following two points need to be considered. First, the standard representation of texts $\bY_i$ is typically still too high‑dimensional, so the permutation test can be underpowered unless researchers have a large number of true, gold-standard labels. In practice, researchers may therefore benefit from applying dimension‑reduction techniques that preserve most of the variation in the data and then applying the proposed procedure. Second, the testing procedure is sensitive to the choice of equivalence margin: the larger the margin, the more likely researchers are to reject the null hypothesis. If researchers cannot choose the equivalence margin ex ante on the basis of substantive knowledge, they may instead focus on the equivalence interval, the smallest margin that would lead to rejection of the null hypothesis at the pre‑specified significance level, which is insensitive to the researcher's choice of equivalence range \citep{hartman_equivalence_2018, hartman_equivalence_2021}.

\newpage 
\section{Extension: Use of Machine Learning Predictions}\label{use_ml}
In many applications, researchers already have access to machine-learning predictions of the text feature of interest $L_i$. With the advent of large language models, such predictions are increasingly inexpensive to obtain at scale. However, the baseline SRI procedure described in the main text relies exclusively on human annotations and therefore does not directly exploit these auxiliary predictions.

This section extends the estimation procedure to incorporate all available information, including machine-learning predictions. The proposed extension does not alter the estimand or the identification strategy, but is designed to more effectively exploit the full set of observations using these predictions.

Suppose that, in addition to the setup described in the main text, I observe machine-learning predictions of $L_i$ for all units, denoted as $\hat L_i$, for the entire observations. I propose estimating the surrogate representations $\bW_i = \boldf(\bY_i)$ using the same neural network architecture as before but with the following modified loss function:
\begin{equation}
    \begin{aligned}
        \arg\min_{\xi, \lambda, \zeta} \frac{1}{n} &\sum_{i = 1}^n \biggl\{
    \underbrace{\biggl(
    \overbrace{\mu(\bm{f}(\bY_i; \xi), \bZ_i; \lambda)}^{\text{Outcome Model}}  -  \bigl( \hat{L}_{i} + \frac{S_i}{\P(S_i = 1)}(\tilde L_i - \hat L_i)  \bigr) \biggr)^2}_{\text{Outcome Loss}} \ \\
    &\qquad\qquad\qquad\qquad+ \ \alpha \cdot \underbrace{\mathrm{CrossEntropy}\biggl(T_i, \overbrace{\rho_1(\bm{f}(\bY_i; \xi), \bZ_i; \zeta)}^{\text{Surrogacy Score}} \biggr)}_{\text{Prediction Loss for Predictor $T_i$}}
    \biggr\} \label{loss_ml}
    \end{aligned}
\end{equation}
where $\alpha$ is the hyperparameter. 
The key departure from the original loss in Equation~\eqref{loss_main} is the replacement of the human annotation $\tilde L_i$ with the augmented outcome $\hat{L}_{i} + \frac{S_i}{\P(S_i = 1)}(\tilde L_i - \hat L_i)$, which has been widely used in the missing-data  literature \citep{robins_estimation_1994, angelopoulos2023prediction, egami_neulips_2023}.
Importantly, the surrogate representation $\boldf(\bY_i, \bZ_i)$ is learned under the same conditional independence constraints as in Section \ref{sec::method}.
%At the population limit, the outcome model for the augmented outcome is mathematically equivalent to the original outcome model, ensuring that neither identification nor the target estimand is affected.

The principal advantage of the proposed loss function is improved efficiency. Under the original loss in Equation~\eqref{loss_main}, the outcome model is trained only on the subset of units with human annotations (i.e., $S_i = 1$), which may be insufficient to achieve the desired convergence rate in Assumption \ref{reg} when the annotated sample is small. In contrast, the augmented outcome is available for all units, allowing the modified loss in Equation~\eqref{loss_ml} to leverage the full sample. Consequently, as long as the machine-learning predictions $\hat L_i$ are informative about the human annotations $\tilde L_i$, the proposed procedure yields efficiency gains relative to the baseline approach.

Because the outcome loss is modified, the influence function must be updated accordingly. The proposed influence function is
\begin{equation}
    \begin{aligned}
         &\nu_t(T_i, \tilde L_i, \hat L_i,  S_i, \bY_i, \bZ_i; \boldf, \mu, \pi_t, \rho_t, \Bar{m}_t, \Psi_t)\\
         &= \psi_t(T_i, \hat L_i,  S_i, \bY_i, \bZ_i) - \frac{S_i}{\P(S_i = 1)}\biggl(\psi_t(T_i, \hat L_i,  S_i, \bY_i, \bZ_i)
    - \psi_t(T_i, \tilde L_i,  S_i, \bY_i, \bZ_i)\biggr),
    \end{aligned}
\end{equation}
where
\begin{align*}
    \psi_t(T_i, \tilde L_i, \bY_i, \bZ_i) &=   \frac{\rho_t(\boldf( \bY_i,  \bZ_i))}{\pi_t(\bZ_i)} \biggl( \tilde L_i - \mu( \boldf(\bY_i, \bZ_i))\biggr) + 
    \frac{\mathbbm{1}\{T_i = t\} }{\pi_t(\bZ_i)} \biggl( \mu( \boldf(\bY_i, \bZ_i)) - \Bar{m}_t(\bZ_i) \biggr) + \Bar{m}_t(\bZ_i) - \Psi_t\\
    \psi_t(T_i, \hat L_i, \bY_i, \bZ_i) &=   \frac{\rho_t(\boldf( \bY_i,  \bZ_i))}{\pi_t(\bZ_i)} \biggl( \hat L_i - \mu( \boldf(\bY_i, \bZ_i))\biggr) + 
    \frac{\mathbbm{1}\{T_i = t\} }{\pi_t(\bZ_i)} \biggl( \mu( \boldf(\bY_i, \bZ_i)) - \Bar{m}_t(\bZ_i) \biggr) + \Bar{m}_t(\bZ_i) - \Psi_t.
\end{align*}

Estimation proceeds exactly as in Section~3, except that the outcome loss is constructed using the augmented outcome in Equation~\eqref{loss_ml}. In particular, the surrogate representation $\boldf(\bY_i, \bZ_i)$, the outcome model $\mu(\cdot)$, and the surrogacy score $\rho_t(\cdot)$ are jointly estimated using the same neural network architecture and cross-fitting scheme in Section \ref{sec::method}. The target parameter $\Psi_t$ is then obtained by solving the estimating equation
\begin{align}
    \frac{1}{n} \sum_{k = 1}^K \sum_{I(i) = k}\nu_t(T_i, \tilde L_i, \hat L_i,  S_i, \bY_i, \bZ_i; \hat{\bm{f}}^{(-k)}, \hat\rho_t^{(-k)}, \hat\pi_t^{(-k)}, \hat\mu^{(-k)}, \hat{\Bar{m}}_t^{(-k)}, \hat\Psi_t)  = 0. \label{est_eq3}
\end{align}

Finally, the asymptotic normality of the proposed estimator is established under the following regularity conditions.

\begin{assumption}[Regularity conditions for augmented-outcome SRI]\label{reg_ml} Let $\eta = (\boldf, \mu, \pi_t, \rho_t, \bar{m}_t)$ be a collection of nuisance functions, $\eta^*$ be the probability limit of the estimated nuisance functions. Also, to differentiate the parameter $\Psi_t$ from the probability limit of the parameter of interest, let $\Psi_t^*$ be the probability limit of the estimated target parameter $\hat\Psi_t$. 

Assume that $\norm{\nu_t(T_i, \tilde L_i, \hat L_i,  S_i, \bY_i, \bZ_i; \hat\eta, \Psi_t^*) - \nu_t(T_i, \tilde L_i, \hat L_i,  S_i, \bY_i, \bZ_i; \eta, \Psi_t^*)}_2^2 = o_p(1)$ in each fold. Also, assume that $\E[\partial \nu_t(T_i, \tilde L_i, \hat L_i,  S_i, \bY_i, \bZ_i; \eta, \Psi_t^*)/\partial \Psi_t]$ exists and there is a neighborhood $\mathcal{N}$ of $\Psi_t$ such that
\begin{enumerate}
\item[(a)] for each fold $k$, $\norm{\hat{ \eta}^{(k)} - \eta^*}_2 = o_p(1)$
\item[(b)] for all $\norm{\hat{ \eta}^{(k)} - \eta^*}_2$ small enough, $\nu_t(T_i, \tilde L_i, \hat L_i,  S_i, \bY_i, \bZ_i; \eta, \Psi_t)$ is differentiable in $\Psi_t$ on $\mathcal{N}$ with probability approaching one, and there are $C > 0$ and $\delta(T_i, \tilde L_i, \hat L_i,  S_i, \bY_i, \bZ_i; \eta)$ such that, for $\Psi_t \in \mathcal{N}$ and $\norm{\hat{ \eta}^{(k)} - \eta^*}_2$ small enough,
\begin{align*}
    &\biggl|\biggl| \frac{\partial \nu_t(T_i, \tilde L_i, \hat L_i,  S_i, \bY_i, \bZ_i; \eta, \Psi_t) }{\partial \Psi_t} - \frac{\partial \nu_t(T_i, \tilde L_i, \hat L_i,  S_i, \bY_i, \bZ_i; \eta, \Psi_t^*) }{\partial \Psi_t}\biggr|\biggr|_2 \\
    &\qquad\qquad\qquad\qquad\qquad\qquad\qquad\qquad\qquad\qquad\leq \delta(T_i, \tilde L_i, \hat L_i,  S_i, \bY_i, \bZ_i, \eta) \norm{\Psi_t - \Psi_t^*}^{\frac{1}{C}}
\end{align*}
\item[(c)] For each $k$, 
\begin{align*}
    \E[\partial  \nu_t(T_i, \tilde L_i, \hat L_i,  S_i, \bY_i, \bZ_i; \hat\eta^{(k)}, \Psi_t) / \partial \Psi_t - \nu_t(T_i, \tilde L_i, \hat L_i,  S_i, \bY_i, \bZ_i; \eta^*, \Psi_t^*) / \partial \Psi_t] = o_p(1).
\end{align*}
\end{enumerate}
\end{assumption}

\begin{theorem}[Asymptotic Normality with Augmented Outcome]\label{asymp_ml}
Under Assumptions \ref{coding} to \ref{annotation} and \ref{reg_ml}, the estimator for the covariate-adjusted mean $\hat\Psi_t$ obtained by solving the estimating equation in Equation \ref{est_eq3} for $t \in \{0,1\}$ satisfies the asymptotic normality:
\begin{align*}
    \frac{\sqrt{N}(\hat\Psi_t - \Psi_t^*) }{\breve\sigma} \xrightarrow[]{d} \mathcal{N}(0,1)
\end{align*}
where $\breve\sigma^2 = \E[ \nu_t(T_i, \tilde L_i, \hat L_i,  S_i, \bY_i, \bZ_i)^2]$.
\end{theorem}
The proof is omitted, as the result follows directly from Proposition~1 of \citet{egami2024using}. %I evaluate how the use of the augmented outcome improves performance through simulation studies in Appendix \ref{sec::simulation}.

\newpage
\section{Theoretical Results}
\subsection{Proof of Lemma \ref{existence}}\label{proof_existence}
\begin{proof}
I first prove that $\eta(\bY_i, \bZ_i) = \P(T_i, \tilde L_i \mid \bY_i, \bZ_i)$ satisfies $\{\bY_i, \bZ_i\} \ \indep \ \{L_i, T_i\} \mid \eta(\bY_i, \bZ_i)$. Now,
\begin{align*}
    \P(T_i, \tilde L_i \mid \eta(\bY_i, \bZ_i)) &= \E[\P(T_i, \tilde L_i \mid \eta(\bY_i, \bZ_i), \bY_i, \bZ_i) \mid \eta(\bY_i, \bZ_i)]\\
    &= \E[\P(T_i, \tilde L_i \mid \bY_i, \bZ_i) \mid \eta(\bY_i, \bZ_i)]\\
    &= \E[\eta(\bY_i, \bZ_i) \mid \eta(\bY_i, \bZ_i)] \\
    &= \eta(\bY_i, \bZ_i) =  \P(T_i, \tilde L_i \mid \bY_i, \bZ_i)\\
    &= \P(T_i, \tilde L_i \mid \bY_i, \bZ_i, \eta(\bY_i, \bZ_i))
\end{align*}
which means that 
\begin{align*}
    \{T_i, \tilde L_i\} \ \indep \{\bY_i, \bZ_i\} \mid \eta(\bY_i, \bZ_i)
\end{align*}
Therefore, $\eta(\bY_i, \bZ_i)$ is a trivial example of surrogate representations.

Then, I prove the main statement: $\{T_i, \tilde L_i\} \ \indep \{\bY_i, \bZ_i\} \mid \boldf(\bY_i, \bZ_i)$ holds if and only if $\eta(\bY_i, \bZ_i) = b(\boldf(\bY_i, \bZ_i))$ for some function $b$. Suppose that there exists some function $b$ such that $\eta(\bY_i, \bZ_i) = b(\boldf(\bY_i, \bZ_i))$. Then,
\begin{align*}
    \P(T_i, \tilde L_i \mid \boldf(\bY_i, \bZ_i)) &= \E[\P(T_i, \tilde L_i \mid \boldf(\bY_i, \bZ_i), \bY_i, \bZ_i) \mid  \boldf(\bY_i, \bZ_i) ]\\
    &= \E[\P(T_i, \tilde L_i \mid \bY_i, \bZ_i) \mid  \boldf(\bY_i, \bZ_i) ]\\
    &= \E[\eta(\bY_i, \bZ_i) \mid  \boldf(\bY_i, \bZ_i) ]\\
    &= \E[b(\boldf(\bY_i, \bZ_i)) \mid  \boldf(\bY_i, \bZ_i) ]\\
    &= b(\boldf(\bY_i, \bZ_i))\\
    &= \eta(\bY_i, \bZ_i) =  \P(T_i, \tilde L_i \mid \bY_i, \bZ_i)\\
    &= \P(T_i, \tilde L_i \mid \bY_i, \bZ_i, \boldf(\bY_i, \bZ_i))
\end{align*}
where the first line is by the law of iterated expectation.
Therefore,
\begin{align*}
    \{T_i, \tilde L_i\} \ \indep \  \bY_i, \bZ_i \ \mid \boldf(\bY_i, \bZ_i)
\end{align*}
and thus $\boldf(\bY_i, \bZ_i)$ is a surrogate representation.

For the converse, suppose that $\boldf(\bY_i, \bZ_i)$ is a surrogate representation, but $\boldf(\bY_i, \bZ_i)$ is not finer than $\eta(\bY_i, \bZ_i)$. This means that there exist $(\bY_j, \bZ_j)$ and $(\bY_k, \bZ_k)$ with $j \neq k$ such that $\boldf(\bY_j, \bZ_j) = \boldf(\bY_k, \bZ_k)$ but $\eta(\bY_j, \bZ_j) \neq \eta(\bY_k, \bZ_k)$. However, then
\begin{align*}
    \eta(\bY_j, \bZ_j) = \P(T_j, \tilde L_j \mid \bY_j, \bZ_j)  \neq \P(T_k, \tilde L_k \mid \bY_k, \bZ_k) = \eta(\bY_k, \bZ_k)
\end{align*}
Since $\boldf(\bY_j, \bZ_j) = \boldf(\bY_k, \bZ_k)$, this means that within each strata of $\boldf(\bY_i, \bZ_i)$, the independence does not hold; i.e., $\{T_i, \tilde L_i\} \not\!\perp\!\!\!\perp \{\bY_i, \bZ_i\} \mid \boldf(\bY_i, \bZ_i)$. However, this contradicts with the statement that $\boldf(\bY_i, \bZ_i)$ is a surrogate representation. Therefore, to be a surrogate representation, $\boldf(\bY_i, \bZ_i)$ must be finer than $\eta(\bY_i, \bZ_i)$.
\end{proof}

\newpage 

\subsection{Proof of Proposition \ref{identification} (Identification of Target Parameter)} \label{proof_iden}
\begin{proof} 
By Definition \ref{def_surrep}, $\bW_i$ satisfies $T_i \; \indep \; \{\bY_i, \bZ_i\} \mid \bW_i$. Thus, the use of Assumption~\ref{coding} yields
\begin{align}
    T_i \; \indep \; L_i \mid \bW_i, \bZ_i. \label{sufficiency_L}
\end{align}
Now, notice that Assumption~\ref{labeling} indicates the marginal independence $\bW_i \; \indep \; S_i$ because $\bW_i$ is the deterministic function of $\bY_i$ and $\bZ_i$. Under Assumption~\ref{labeling}, I can claim
\begin{align}
    \bY_i \; \indep \; S_i \; \mid \bW_i, \bZ_i,
\end{align}
because
\begin{align*}
    \P(\bY_i = \bm{y}, S_i = s \mid \bW_i \in A, \bZ_i \in B) &= \frac{\P(\bY_i = \bm{y}, S_i = s, \bW_i \in A, \bZ_i \in B )}{\P(\bW_i \in A, \bZ_i \in B)}\\
    &= \frac{\P(\bY_i = \bm{y}, \bW_i \in A, \bZ_i \in B) \; \P(S_i = s)}{\P(\bW_i \in A, \bZ_i \in B)}\\
    &= \P(\bY_i = \bm{y} \mid \bW_i \in A, \bZ_i \in B) \; \P(S_i = s)\\
    &=  \P(\bY_i = \bm{y} \mid \bW_i \in A, \bZ_i \in B) \; \P(S_i = s  \mid \bW_i \in A, \bZ_i \in B)
\end{align*}
for any $\bm{y} \in \cY$ and any pair of Borel sets $A, B$. In the above transformation, I apply Assumption~\ref{labeling} and the fact that $\bW_i$ is the deterministic function of $\bY_i$ in the second line, and the final line is because $\{\bW_i, \bZ_i\} \; \indep \; S_i$. By applying Assumption \ref{coding}, I obtain
\begin{align}
    L_i \; \indep \; S_i \; \mid \bW_i, \bZ_i. \label{randomlabel}
\end{align}
Therefore, 
\begin{align*}
    \Psi_t := \E\biggl[ \E[L_i \mid T_i = t, \bZ_i] \biggr] &= \int_{\cZ} \int_{\mathcal{W}} \E[L_i \mid T_i = t, \bm{W}_i, \bZ_i]  dF(\bm{W}_i \mid \bZ_i,  T_i = t) dF(\bZ_i)\\
    &= \int_{\cZ} \int_{\mathcal{W}} \E[L_i \mid \bm{W}_i, \bZ_i]  dF(\bm{W}_i \mid \bZ_i,  T_i = t) dF(\bZ_i)\\
    &= \int_{\cZ} \int_{\mathcal{W}} \E[L_i \mid \bm{W}_i]  dF(\bm{W}_i \mid \bZ_i,  T_i = t) dF(\bZ_i)\\
    &= \int_{\cZ} \int_{\mathcal{W}} \E[ L_i \mid \bm{W}_i, S_i = 1]  dF(\bm{W}_i \mid \bZ_i,  T_i = t) dF(\bZ_i)\\
    &= \int_{\cZ} \int_{\mathcal{W}} \E[\tilde L_i \mid \bm{W}_i, S_i = 1]  dF(\bm{W}_i \mid \bZ_i,  T_i = t) dF(\bZ_i)
\end{align*}
where $\mathcal{W}$ is the support of the latent representation $\bW_i = \boldf(\bY_i, \bZ_i)$. In the transformations above, I use the law of iterated expectation in the first line, Equation \ref{sufficiency_L}
in the second line, $L_i \ \indep \ \bZ_i \mid \bW_i$ (Equation \eqref{indep_surrogate}) in the third line, Equation \ref{randomlabel} in the fourth line, and Assumption \ref{perfect} in the last line. As $\tilde L_i$ is observed under $S_i = 1$ by definition, I can estimate both $\E[\tilde L_i \mid \bm{W}_i, \bZ_i, S_i = 1] $ and $\P(\bm{W}_i \mid \bZ_i,T_i = t)$ from the data. 

Finally, I show that $\bm{W}$ does not need to be unique. Notice that the formula above is equivalent to
\begin{align*}
    \Psi_t = \int_{\cZ} \int_{\mathcal{Y}} \E[\tilde L_i \mid \boldf(\bY_i,  \bZ_i), S_i = 1]  dF(\bY_i \mid \bZ_i,  T_i = t) dF(\bZ_i)
\end{align*}
where $g_{L}$ is uniquely specified by researchers but $\bm{f}$ is estimated from data and might not be unique. Suppose that there exists another $\bW' = \bm{f}'(\bY_i, \bZ_i)$ that satisfies $\{T_i, L_i\} \; \indep \; \{\bm{Y}_i, \bZ_i\} \mid \bW'$. Then,
\begin{align*}
     \Psi_t &= \int_{\cZ} \int_{\mathcal{Y}} \E[ \tilde  L_i \mid \boldf'(\bY_i, \bZ_i), S_i = 1]  dF(\bY_i \mid \bZ_i,  T_i = t) dF(\bZ_i)\\
&= \int_{\cZ} \int_{\mathcal{Y}} \E[\tilde  L_i \mid \boldf'(\bY_i, \bZ_i), \bY_i, \bZ_i, S_i = 1]  dF(\bY_i \mid \bZ_i,  T_i = t) dF(\bZ_i)\\
&= \int_{\cZ} \int_{\mathcal{Y}} \E[\tilde  L_i \mid \bY_i, \bZ_i, S_i = 1]  dF(\bY_i \mid \bZ_i,  T_i = t) dF(\bZ_i)\\
&= \int_{\cZ} \int_{\mathcal{Y}} \E[\tilde L_i \mid \boldf(\bY_i, \bZ_i), S_i = 1]  dF(\bY_i \mid \bZ_i,  T_i = t) dF(\bZ_i).
\end{align*}
Hence, I can say that the different $\bm{f}$ still leads to the same identification formula.

\begin{comment}
from the proof above, regardless of $Z$ or $\mathcal{Z}$, 
\begin{align*}
    &\E[L_i \mid T_i = t] = \int_{\mathcal{Z}} \E[l_i \mid \bm{Z}_i, S_i = 1]   dF(\bm{Z}_i \mid T_i = t)\\
    &\E[L_i \mid T_i = t] = \int_{\mathcal{Z}} \E[l_i \mid \Tilde{\bm{Z}}_i, S_i = 1]   dF(\Tilde{\bm{Z}}_i \mid T_i = t)
\end{align*}
which means that regardless of the estimated $\bm{Z}$, I uniquely obtain the same identification formula without assuming the identification of the latent representations, sometimes called pinpointability (e.g., \citealt{wang_blei_2019}). Thus, the quantity of interest is uniquely identified.     
\end{comment}
\end{proof}

\newpage
\subsection{Proof of Theorem \ref{eif} (Efficient Influence Function)} \label{proof_eif}
\begin{proof}
I split the proof to the two parts.

\medskip

\noindent \textbf{STEP 1: Deriving Influence function}: Firstly, I derive the influence function. Here, I use the technique called point-mass contamination approach \citep{kennedy_semiparametric_2023, hines_demystifying_2022}. Following \cite{hines_demystifying_2022}, I assume that $\bZ$, $L$, and $\bY$ are continuous only for the notational convenience, but as \cite{hines_demystifying_2022} discussed, the entire proof works with discrete variables or a mix of discrete and continuous variables.

Let $\cP$ be a true data distribution and $\Psi_t(\cP)$ be an estimand under $T = t \in \{0,1\}$ and true data distribution $\mathcal{P}$. That is,
\begin{align*}
\Psi_{t}(\cP) &= \int_{\cZ}  \int_{\cY} \mu_{\cP}(\boldf_\cP(\bm{y}, \bm{z}))   f_{\cP}(\bm{y} \mid \bm{z},  T_i = t) f_{\cP}(\bm{z}) d\bm{y}d \bm{z}
\end{align*}
where $\mu_{\cP}(\boldf_\cP(\bY_i, \bZ_i)) = \mathbb{E}_\cP[\tilde L_i \mid \boldf_\cP(\bY_i, \bZ_i),  S_i = 1] = \mathbb{E}_\cP[\tilde L_i \mid \bW_i, S_i = 1]$, $\E_\cP[\cdot]$ is the expectation operator under the distribution $\cP$, and $f_\cP(\cdot)$ is the PDF under the true distribution $\cP$. Then, consider the parametric submodel $\cP_{\epsilon} = (1 - \epsilon) \cP + \epsilon \tilde{\cP}$, where $\tilde{\cP}$ is the point mass at $(\tilde{t}, \tilde{\bm{y}}, \tilde{l}, \tilde{\bz})$ and $\epsilon \in [0,1]$. For $t \in \{0,1\}$, consider the influence of perturbation in the direction of $\Tilde{\cP}$:
\begin{align*}
    \frac{d}{d \epsilon}\Psi_{t}(\cP_\epsilon) \biggl|_{\epsilon = 0} = \frac{d}{d \epsilon} \int_{\cZ}  \int_{\cY}  \mu_{\cP_\epsilon}(\boldf_{\cP_\epsilon}(\bm{y}, \bm{z}) )   f_{\cP_\epsilon}(\bm{y} \mid \bm{z},  T_i = t) f_{\cP_\epsilon}(\bm{z}) d\bm{y} d\bm{z} \biggl|_{\epsilon = 0} 
\end{align*}
where $\mu_{\cP_\epsilon}(\boldf_{\cP_\epsilon}(\bY_i, \bZ_i)) = \mathbb{E}_{\cP_\epsilon}[\tilde L_i \mid \boldf_{\cP_\epsilon}(\bY_i, \bZ_i), S_i = 1]$ is the outcome model under $\cP_\epsilon$, $\E_{\cP_\epsilon}[\cdot]$ is the expectation operator under $\cP_\epsilon$, and $F_{\cP_\epsilon}(\cdot)$ is the CDF under $\cP_\epsilon$. As $\mu_{\cP_\epsilon}(\boldf_{\cP_\epsilon}(\bY_i, \bZ_i))$ can be regarded as a composite function, for the sake of simplicity, let $m_{\cP}(\bY_i, \bZ_i) = \mu_{\cP}(\boldf_{\cP}(\bY_i), \bZ_i)$ and $m_{\cP_\epsilon}(\bY_i, \bZ_i) = \mu_{\cP_\epsilon}(\boldf_{\cP_\epsilon}(\bY_i, \bZ_i))$. Then, under the regularity condition, I can switch the derivative and integrals, and thus the chain rule yields
\begin{align*}
    &\frac{d}{d \epsilon}  \int_{\cZ}  \int_{\cY} m_{\cP_\epsilon}(\bm{y}, \bm{z}) \,  f_{\cP_\epsilon}(\bm{y} \mid \bm{z},  T_i = t) f_{\cP_\epsilon}(\bm{z}) d\bm{y} d\bm{z}\biggl|_{\epsilon = 0} \\
    &\quad\quad = \int_{\cZ}  \int_{\cY} \frac{d}{d \epsilon} m_{\cP_\epsilon}(\bm{y}, \bm{z}) \biggl|_{\epsilon = 0} \,  f_{\cP}(\bm{y} \mid \bm{z},  T_i = t) f_{\cP}(\bm{z}) d\bm{y} d\bm{z} \\
    &\quad\quad\quad\quad\quad\quad +  \int_{\cZ} \int_{\cY}   m_\cP(\bm{y}, \bm{z})  \frac{d}{d \epsilon} \,
    f_{\cP_\epsilon}(\bm{y} \mid \bm{z},  T_i = t)
    \biggl|_{\epsilon = 0} f_{\cP}(\bm{z}) d\bm{y} d\bm{z}\\
    &\quad\quad\quad\quad\quad\quad + \int_{\cZ}  \int_{\cY}  m_\cP(\bm{y}, \bm{z})  f_{\cP}(\bm{y} \mid \bm{z},  T_i = t)  \frac{d}{d \epsilon} \,f_{\cP_\epsilon}(\bm{z})\biggl|_{\epsilon = 0} d\bm{y} d\bm{z}
\end{align*}
For the first component,
\begin{align*}
    &\int_{\cZ}  \int_{\cY} \frac{d}{d \epsilon} m_{\cP_\epsilon}(\bm{y}, \bm{z}) \biggl|_{\epsilon = 0} \,  f_{\cP}(\bm{y} \mid \bm{z},  T_i = t) f_{\cP}(\bm{z}) d\bm{y} d\bm{z}\\
    &= \int_{\cZ}  \int_{ \mathcal{Y}} \biggl\{ \frac{d}{d \epsilon} \int_{\cL} l f_{\cP_{\epsilon}}(l \mid \boldf(\bm{y}), \bm{z}, S_i = 1) d l  \; \biggl|_{\epsilon = 0} \biggr\} \,\,f_{\cP}(\bm{y} \mid \bm{z},  T_i = t) f_{\cP}(\bm{z}) d\bm{y}d\bm{z} \\
    &= \int_{\cZ}  \int_{ \mathcal{Y}} \biggl\{ \frac{d}{d \epsilon} \int_{\cL} l f_{\cP_{\epsilon}}(l \mid \bm{y}, \bm{z}, S_i = 1) d l  \; \biggl|_{\epsilon = 0} \biggr\} \,\,f_{\cP}(\bm{y} \mid \bm{z},  T_i = t) f_{\cP}(\bm{z}) d\bm{y}d\bm{z} \\
    &= \int_{\cZ} \int_{ \mathcal{Y}} \biggl\{ \frac{ \mathbbm{1}\{\bm{y} = \tilde{\bm{y}}, \bm{z} = \tilde{\bz}, S_i = 1\} }{f_\cP(\bm{y}, \bm{z}, S_i = 1)}  \biggl( \Tilde{l} - \E_{\cP}[\tilde L_i \mid \bm{y}, \bm{z}, S_i = 1] \biggr) \biggr\} \,f_{\cP}(\bm{y} \mid \bm{z},  T_i = t) f_{\cP}(\bm{z}) d\bm{y}d\bm{z}\\
    &= \frac{\mathbbm{1}\{ S_i = 1 \} }{\mathbb{P}(S = 1)} \int_{\cZ}  \int_{ \mathcal{Y}} \biggl\{ \frac{ \mathbbm{1}\{\bm{y} = \tilde{\bm{y}}, \bm{z} = \tilde{\bz}\} }{f_\cP(\bm{y}, \bm{z})}  \biggl( \Tilde{l} - \E_{\cP}[\tilde L_i \mid \bm{y}, \bm{z}, S_i = 1] \biggr) \biggr\} \,f_{\cP}(\bm{y} \mid \bm{z},  T_i = t) f_{\cP}(\bm{z}) d\bm{y}d\bm{z}\\
    &= \frac{\mathbbm{1}\{ S_i = 1 \} }{\mathbb{P}(S = 1)} \int_{\cZ}  \int_{ \mathcal{Y}}  \mathbbm{1}\{\bm{y} = \tilde{\bm{y}}, \bm{z} = \tilde{\bz} \}  \biggl( \Tilde{l} - \E_{\cP}[\tilde L_i \mid \bm{y}, \bm{z}, S_i = 1] \biggr) \,\frac{\,f_{\cP}(\bm{y} \mid \bm{z},  T_i = t) f_{\cP}(\bm{z})}{f_{\cP}(\bm{y}, \bm{z})} d\bm{y}d\bm{z}\\
    &= \frac{\mathbbm{1}\{ S_i = 1 \} }{\mathbb{P}(S = 1)} \int_{\cZ} \int_{ \mathcal{Y}}  \mathbbm{1}\{\bm{y} = \tilde{\bm{y}}, \bm{z} = \tilde{\bz} \}  \biggl( \Tilde{l} - \E_{\cP}[\tilde L_i \mid \bm{y}, \bm{z}, S_i = 1] \biggr) \,\frac{\,f_{\cP}(\bm{y} \mid \bm{z},  T_i = t)}{f_{\cP}(\bm{y} \mid \bm{z})} d\bm{y}d\bm{z}\\
    &= \frac{\mathbbm{1}\{S_i = 1\}}{\P(S_i = 1)} \cdot  \frac{\P(T_i = t \mid \bY_i = \tilde{\bm{y}}, \bZ_i = \tilde\bz)}{\P(T_i = t \mid \bZ_i = \tilde\bz)} \biggl(\Tilde{l} - \E_{\cP}[\tilde L_i \mid \bm{y}, \bm{z}, S_i = 1] \biggr)\\
    &= \frac{\mathbbm{1}\{S_i = 1\}}{\P(S_i = 1)} \cdot  \frac{\P(T_i = t \mid \boldf(\bY_i, \bZ_i) = \boldf(\tilde{\bm{y}}, \tilde{\bm{z}}))}{\P(T_i = t \mid \bZ_i = \tilde\bz)} \biggl(\Tilde{l} - \E_{\cP}[\tilde L_i \mid \boldf(\bm{y},  \bm{z}), S_i = 1] \biggr)\\
    &= \frac{\mathbbm{1}\{S_i = 1\}}{\P(S_i = 1)} \cdot  \frac{\P(T_i = t \mid \boldf(\bY_i, \bZ_i) = \boldf(\tilde{\bm{y}}, \tilde{\bm{z}}))}{\P(T_i = t \mid \bZ_i = \tilde\bz)} \biggl(\Tilde{l} - m_\cP(\tilde{\bm{y}}, \tilde\bz)\biggr)
\end{align*}
where the third equality is because of the influence function for the conditional outcome mean function (see Example 6 of \citealt{hines_demystifying_2022}), the fourth equality is by $S_i \; \indep \; \bY_i$, and the second and eighth equalities are by $\bY_i \ \indep \ \{T_i, L_i\} \mid \boldf(\bY_i, \bZ_i)$.

For the second component,
\begin{align*}
    &\int_{\cZ} \int_{\cY}    m_\cP(\bm{y}, \bm{z}) \frac{d}{d \epsilon} f_{\cP_\epsilon}(\bm{y} \mid \bm{z},  T_i = t) \biggl|_{\epsilon = 0} f_{\cP}(\bm{z}) d\bm{y}d\bm{z}\\
    &=  \int_{\cZ}  \int_{\cY}   m_\cP(\bm{y}, \bm{z}) \biggl( \frac{d}{d \epsilon} \frac{f_{\cP_\epsilon}(\bm{y}, \bm{z},  T_i=t)}{f_{\cP_\epsilon}( \bm{z}, T_i=t)}\biggl|_{\epsilon = 0} \biggr) f_{\cP}(\bm{z}) d\bm{y}d\bm{z}\\
    &= \int_{\cZ}  \int_{ \mathcal{Y}}  m_\cP(\bm{y}, \bm{z}) \biggl(
    \frac{
    \{\frac{d}{d\epsilon}
    f_{\cP_\epsilon}(\bm{y}, \bm{z}, T_i=t)
    |_{\epsilon = 0}
    \}
    f_{\cP}(\bm{z}, T_i = t)
    }{\{f_{\cP}(\bm{z}, T_i = t)\}^2}
    - \frac{
    \{f_{\cP}(\bm{y}, \bm{z}, T_i=t)
    \{
    \frac{d}{d\epsilon}f_{\cP_\epsilon}(\bm{z}, T_i=t)|_{\epsilon = 0}
    \}
    }{\{f_{\cP}(\bm{z}, T_i = t)\}^2} \biggr) f_{\cP}(\bm{z})d\bm{y}d\bm{z}
    \\
    &= \int_{\cZ}  \int_{ \mathcal{Y}}  m_\cP(\bm{y}, \bm{z}) \biggl(
    \frac{
    \{
     \mathbbm{1}\{\bm{y} = \bm{\Tilde{y}}, \bm{z} = \Tilde{\bz}, T_i = t\}
     - f_{\cP}(\bm{y}, \bm{z}, T_i = t)
    \} f_{\cP}(\bm{z}, T_i  = t)
    }{\{f_{\cP}(\bm{z}, T_i = t)\}^2}\\
    &\quad\quad\quad\quad\quad\quad\quad\quad - \frac{f_{\cP}(\bm{y}, \bm{z}, T_i=t)
    \{
    \mathbbm{1}\{T_i = t, \bm{z} = \Tilde{\bz}\} - f_{\cP}(\bm{z} ,T_i = t)
    \}}{\{f_{\cP}(\bm{z}, T_i = t)\}^2}
    \biggr)f_{\cP}(\bm{z}) d\bm{y}d\bm{z}\\
    &= \int_{\cZ}  \int_{ \mathcal{Y}}  m_\cP(\bm{y}, \bm{z}) \biggl(
    \frac{
    \{
    \mathbbm{1}\{\bm{y} = \bm{\Tilde{y}}, \bm{z} = \Tilde{\bz}, T_i = t\}
    f_{\cP}(\bm{z}, T_i = t) - f_{\cP}(\bm{y}, \bm{z}, T_i=t)\mathbbm{1}\{T_i = t, \bm{z} = \Tilde{\bz}\}
    }{\{f_{\cP}(\bm{z}, T_i = t)\}^2}
    \biggr)f_{\cP}(\bm{z}) d\bm{y}d\bm{z}\\
    &= \mathbbm{1}\{T_i = t\} \int_{\cZ}  \int_{ \mathcal{Y}}  m_\cP(\bm{y}, \bm{z}) \biggl(
    \mathbbm{1}\{\bm{y} = \bm{\Tilde{y}}\} - f_{\cP}(\bm{y} \mid \bm{z}, T_i=t)\}
    \biggr) \mathbbm{1}\{\bm{z} = \Tilde{\bz}\} \frac{f_{\cP}(\bm{z})}{f_{\cP}(\bm{z}, T_i = t)} d\bm{y}d\bm{z}\\
    &= \frac{\mathbbm{1}\{T_i = t\} }{\P(T_i = t \mid \bZ_i = \tilde{\bm{z}} )} \biggl( m_\cP(\Tilde{\bm{y}}, \Tilde{\bz}) - \int_{ \mathcal{Y}} m_\cP(\bm{y}, \tilde{\bz})f_{\cP}(\bm{y} \mid \bm{Z}_i = \Tilde{\bm{z}}, T_i = t) \biggr)
\end{align*}
where the fourth line is by Example 2 of \cite{hines_demystifying_2022}
\begin{align*}
    \frac{d}{d\epsilon}
    f_{\cP_\epsilon}(\bm{y}, \bm{z}, T_i=t)
    |_{\epsilon = 0} &= 
    \frac{d}{d\epsilon} \biggl(\epsilon \mathbbm{1}\{ \bm{y} = \Tilde{\bm{y}}, 
    \bm{z} = \tilde{\bm{z}},
    T_i = t \} + (1 - \epsilon) f_{\cP}(\bm{y}, \bm{z}, T_i=t)\bigg)\\
    &= \mathbbm{1}\{\bm{y} = \Tilde{\bm{y}}, \bm{z} = \tilde{\bm{z}}, T_i = t\} - f_{\cP}(\bm{y}, \bm{z}, T_i=t)
\end{align*}
Finally, for the last component,
\begin{align*}
     &\int_{\cZ}  \int_{\cY}  m_\cP(\bm{y}, \bm{z}) f_{\cP}(\bm{y} \mid \bm{z},  T_i = t)  \frac{d}{d \epsilon} \,f_{\cP_\epsilon}(\bm{z})\biggl|_{\epsilon = 0} d\bm{y}d\bm{z}\\
     &= \int_{\cZ}  \int_{\cY}  m_\cP(\bm{y}, \bm{z}) dF_{\cP}(\bm{y} \mid \bm{z},  T_i = t) \biggl(\mathbbm{1}\{\bm{z} = \Tilde{\bz}\} - f_{\cP}(\bm{z})\biggr)d\bm{y}d\bm{z}\\
     &= \int_{\cY}  m_\cP(\bm{y}, \bm{z}) f_{\cP}(\bm{y} \mid \bZ_i = \Tilde{\bz},  T_i = t)d\bm{y}  - \Psi_t
\end{align*}
Therefore, 
\begin{align*}
    &\frac{d}{d \epsilon}\Psi_{t}(\cP_\epsilon) \biggl|_{\epsilon = 0} = \frac{\mathbbm{1}\{S_i = 1\}}{\P(S_i = 1)} \cdot  \frac{\rho_t(\boldf( \Tilde{\bm{y}}), \Tilde{\bm{z}})}{\pi_t(\Tilde{\bm{z}})} \biggl(\Tilde{l} - m(\tilde{\bm{y}}, \tilde\bz)\biggr) + 
    \frac{\mathbbm{1}\{T_i = t\} }{\pi_t(\Tilde{\bm{z}})} \biggl( m(\Tilde{\bm{y}}, \Tilde{\bz}) - \Bar{m}_t(\tilde{z}) \biggr) + \Bar{m}_t(\tilde{z}) - \Psi_t
\end{align*}
where
\begin{align*}
    \pi_t(\Tilde{\bm{z}}) &= \P(T_i = t \mid \bZ_i = \tilde\bz)\\
    \rho_t(\boldf( \Tilde{\bm{y}}, \Tilde{\bm{z}})) &= \P(T_i = t \mid \boldf(\bY_i) = \boldf(\tilde{\bm{y}},  \tilde\bz))\\
    \Bar{m}_t(\tilde{\bm{z}}) &= \int_{ \mathcal{Y}} m(\bm{y}, \tilde{\bz})f(\bm{y} \mid \bZ_i = \Tilde{\bm{z}}, T_i = t)d\bm{y}.
\end{align*}
Because $S_i$ and $T_i$ are assumed to be binary, the variance of indicator function is bounded and thus its variance is finite, meaning that the estimand is pathwise differentiable. Hence, the influence function is given by
\begin{equation}
\begin{aligned}
    &\psi_t(T_i, \tilde L_i, S_i, \bY_i; \boldf, \mu, \pi_t, \rho_t, \Bar{m}_t, \Psi_t) \\
    &\qquad\qquad= \frac{\mathbbm{1}\{S_i = 1\}}{\P(S_i = 1)} \cdot  \frac{\rho_t(\boldf( \bY_i, \bZ_i))}{\pi_t(\bZ_i)} \biggl(\tilde L_i - \mu( \boldf(\bY_i, \bZ_i))\biggr)\\
    &\qquad\qquad\qquad\qquad\qquad\qquad+ 
    \frac{\mathbbm{1}\{T_i = t\} }{\pi_t(\bZ_i)} \biggl( \mu( \boldf(\bY_i, \bZ_i)) - \Bar{m}_t(\bZ_i) \biggr) + \Bar{m}_t(\bZ_i) - \Psi_t. \label{eif_formula}
\end{aligned}
\end{equation}

\medskip

\noindent \textbf{STEP 2: Deriving Efficient Influence Function}: Since the model is semiparametric due to the independence constraint, the derivation of efficient influence function requires the projection onto the tangent space. Therefore, based on the previous step, I then confirm if the influence function derived in the previous step is an efficient influence function or not. Let $O_i = \{S_i, T_i, \bY_i, \bZ_i\}$ be the observed data. Now, the full data distribution is factorized into
\begin{align*}
    &f(T_i, \bY_i, \bZ_i, S_i)\\
    &= f_S(S_i)
    f_{\bZ}(\bZ_i) 
    f_{\bY \mid \bZ}(\bY_i \mid \bZ_i) f_{T\mid \bY, \bZ_i}(T_i \mid \bY_i, \bZ_i)f(\tilde L_i \mid S_i, T_i, \bY_i, \bZ_i)^{S_i}\\
    &= f_S(S_i)f_{\bZ}(\bZ_i) 
    f_{\bY \mid \bZ}(\bY_i \mid \bZ_i) f_{T\mid \bY, \bZ_i}(T_i \mid \bY_i, \bZ_i)f(\tilde L_i \mid T_i, \bY_i, \bZ_i)^{S_i}\\
    &= f_S(S_i)f_{\bZ}(\bZ_i) 
    f_{\bY \mid \bZ}(\bY_i \mid \bZ_i) f_{T\mid \bY, \bZ_i}(T_i \mid \bY_i, \bZ_i)f(\tilde L_i \mid \bY_i, \bZ_i, S_i = 1).
\end{align*}
By Theorem 4.5 of \cite{tsiatis_semiparametric_2006}, the tangent space of the nonparametric model $\Lambda$ is expressed as a direct sum of the orthogonal subspaces: i.e., 
\begin{align}
    \Lambda = \biggl\{ 
    \Lambda_{S} \ \oplus \ \Lambda_{\bZ}  \oplus  \Lambda_{\bY \mid \bZ} \ \oplus \ \Lambda_{T\mid \bY, \bZ} \ \oplus \ \Lambda_{\tilde L \mid \bY, \bZ, S_i = 1}\biggr\} \label{factorization}
\end{align}
where
\begin{align*}
    &\Lambda_{S} = \biggl\{ \alpha_0(S_i) \in \mathcal{H}  : \mathbb{E}[\alpha_0(S_i)] = 0 \biggr\}\\
    &\Lambda_{\bZ} = \biggl\{ \alpha_1(\bZ_i) \in \mathcal{H}  : \mathbb{E}[\alpha_1(\bZ_i)] = 0 \biggr\}\\
    &\Lambda_{\bY \mid \bZ} = \biggl\{ \alpha_2(\bY_i, \bZ_i) \in \mathcal{H}  : \mathbb{E}[\alpha_2(\bY_i, \bZ_i) \mid \bZ_i] = 0 \biggr\}\\
    &\Lambda_{T \mid \bY, \bZ} = \biggl\{ \alpha_3(T_i, \bY_i, \bZ_i) \in \mathcal{H}  : \mathbb{E}[\alpha_3(T_i, \bY_i, \bZ_i) \mid \bY_i, \bZ_i] = 0 \biggr\}\\
    &\Lambda_{\tilde L \mid \bY, \bZ} = \biggl\{ \mathbbm{1}\{S_i = 1\}\alpha_4(\tilde L_i, \bY_i, \bZ_i) \in \mathcal{H}  : \mathbb{E}[\alpha_4(\tilde L_i, \bY_i, \bZ_i) \mid \bY_i, \bZ_i, S_i = 1] = 0 \biggr\}
\end{align*}
where $\mathcal{H}$ is the Hilbert space of the full data with mean 0 and finite variance equipped with the covariance inner product.

Now, recall that efficient influence function is the influence function that resides in the tangent space (Theorem 4.3 of \citealt{tsiatis_semiparametric_2006}). Then, notice that Equation \eqref{eif_formula} is written as
\begin{align*}
    \psi_t(T_i, \tilde L_i, S_i, \bY_i; \boldf, \mu, \pi_t, \rho_t, \Bar{m}_t, \Psi_t) = \psi_{1t}(\bZ_i) + \psi_{2t}(T_i, \bY_i, \bZ_i) +
    \psi_{3t}(\bY_i, \bZ_i) + 
    \psi_{4t}(S, \tilde L_i, \bZ_i, \bY_i)
\end{align*}
where
\begin{align*}
    &\psi_{1t}(\bZ_i) =  \Bar{m}_t(\bZ_i) - \Psi_t\\
    & \psi_{2t}(T_i, \bY_i, \bZ_i)  = \frac{\mathbbm{1}\{T_i = t\} - \rho_t(\boldf(\bY_i, \bZ_i)) }{\pi_t(\bZ_i)} \biggl( \mu( \boldf(\bY_i, \bZ_i)) - \Bar{m}_t(\bZ_i) \biggr)\\
    &\psi_{3t}(\bY_i, \bZ_i) = \frac{\rho_t(\boldf(\bY_i, \bZ_i))}{\pi_t(\bZ_i)}\biggl( \mu( \boldf(\bY_i, \bZ_i)) - \Bar{m}_t(\bZ_i) \biggr) \\
    &\psi_{4t}(S, \tilde L_i, \bY_i, \bZ_i) = \frac{\mathbbm{1}\{S_i = 1\}}{\P(S_i = 1)} \cdot  \frac{\rho_t(\boldf( \bY_i, \bZ_i))}{\pi_t(\bZ_i)} \biggl(\tilde L_i - \mu( \boldf(\bY_i, \bZ_i))\biggr)
\end{align*}
and thus I need to check if $\psi_{1t}(\bZ_i) \in \Lambda_{\bZ_i}$, $\psi_{2t}(T_i, \bY_i, \bZ_i) \in \Lambda_{T \mid \bY, \bZ}$, 
$\psi_{3t}(\bY_i, \bZ_i) \in \Lambda_{\bY \mid \bZ}$
and $\psi_{4t}(S, \tilde L_i, \bZ_i, \bY_i) \in \Lambda_{\tilde L \mid \bY, \bZ}$. 
\begin{enumerate}
    \item $\psi_{1t}(\bZ_i) \in \Lambda_{\bZ_i}$: To show $\psi_{1t}(\bZ_i) \in \Lambda_{\bZ_i}$, I need to show $\E[\psi_{1t}(\bZ_i)] = 0$. Then,
    \begin{align*}
        \E[\psi_{1t}(\bZ_i)] &= \E\biggl[\Bar{m}_t(\bZ_i) - \Psi_t \biggr]\\
        &= \E\biggl[\int \mu( \boldf(\bY_i, \bZ_i)) dF(\bY_i \mid \bZ_i, T_i = t) - \Psi_t \biggr] = 0.
    \end{align*}
    \item $\psi_{2t}(T_i, \bY_i, \bZ_i) \in \Lambda_{T \mid \bY, \bZ}$: Similarly, to show the statement, I need to show $\E[\psi_{2t}(T_i, \bY_i, \bZ_i) \mid \bY_i, \bZ_i] = 0$. Then,
    \begin{align*}
        \E[\psi_{2t}(T_i, \bY_i, \bZ_i) \mid \bY_i, \bZ_i] &= \E \biggl[\frac{\mathbbm{1}\{T_i = t\} - \rho_t(\boldf(\bY_i, \bZ_i)) }{\pi_t(\bZ_i)} \biggl( \mu( \boldf(\bY_i, \bZ_i)) - \Bar{m}_t(\bZ_i) \biggr) \mid \bY_i, \bZ_i \biggr]\\
        &= \frac{\P(T_i = t \mid \bY_i, \bZ_i) - \rho_t(\boldf(\bY_i, \bZ_i)) }{\pi_t(\bZ_i)} \biggl( \mu( \boldf(\bY_i, \bZ_i)) - \Bar{m}_t(\bZ_i) \biggr)\\
        &= \frac{\P(T_i = t \mid \boldf(\bY_i, \bZ_i)) - \rho_t(\boldf(\bY_i, \bZ_i)) }{\pi_t(\bZ_i)} \biggl( \mu( \boldf(\bY_i, \bZ_i)) - \Bar{m}_t(\bZ_i) \biggr)\\
        &= 0
    \end{align*}
    \item $\psi_{3t}(\bY_i, \bZ_i) \in \Lambda_{\bY_i \mid \bZ_i}$: Similarly, to show the statement, I need to show $\E[\psi_{3t}( \bY_i, \bZ_i) \mid \bZ_i] = 0$. Now,
    \begin{align*}
        \E[\psi_{3t}(\bY_i, \bZ_i)\mid \bZ_i] 
        &= \frac{1}{\pi_t(\bZ_i)}\biggl\{\E\biggl[ \rho_t(\boldf(\bY_i, \bZ_i))\mu( \boldf(\bY_i, \bZ_i)) \mid \bZ_i \biggr] - 
        \Bar{m}_t(\bZ_i) \E\biggl[ \rho_t(\boldf(\bY_i, \bZ_i)) \mid \bZ_i \biggr]
        \biggr\}.
    \end{align*}
    Then,
    \begin{align*}
        \E\biggl[ \rho_t(\boldf(\bY_i, \bZ_i)) \mid \bZ_i \biggr] &=
        \E\biggl[ \P(T_i = t \mid \boldf(\bY_i, \bZ_i)) \mid \bZ_i \biggr]\\
        &= \E\biggl[ \P(T_i = t \mid \bY_i, \bZ_i) \mid \bZ_i \biggr] = \P(T_i = t \mid \bZ_i) = \pi_t(\bZ_i)
    \end{align*}
    and
    \begin{align*}
         \E\biggl[ \rho_t(\boldf(\bY_i, \bZ_i))\mu( \boldf(\bY_i, \bZ_i)) \mid \bZ_i \biggr]
         &= 
         \E\biggl[ 
         \E[ \mathbbm{1}\{T_i = t\} \mid \boldf(\bY_i, \bZ_i)]
         \mu( \boldf(\bY_i, \bZ_i)) \mid \bZ_i \biggr]\\
         &= 
         \E\biggl[ 
         \E[ \mathbbm{1}\{T_i = t\} \mid \bY_i, \bZ_i]
         \mu( \boldf(\bY_i, \bZ_i)) \mid \bZ_i \biggr]\\
         &= 
         \E\biggl[ 
         \E[ \mathbbm{1}\{T_i = t\} \mu( \boldf(\bY_i, \bZ_i)) \mid \bY_i, \bZ_i]
          \mid \bZ_i \biggr]\\
          &= 
         \E\biggl[ 
         \mathbbm{1}\{T_i = t\} \mu( \boldf(\bY_i, \bZ_i))
          \mid \bZ_i \biggr]\\
          &= \E\biggl[ 
         \E[\mathbbm{1}\{T_i = t\} \mu( \boldf(\bY_i, \bZ_i))
         \mid T_i, \bZ_i]
          \mid \bZ_i \biggr]\\
          &= \E\biggl[ 
         \mathbbm{1}\{T_i = t\} \E[\mu( \boldf(\bY_i, \bZ_i))
         \mid T_i, \bZ_i]
          \mid \bZ_i \biggr]\\
          &= \E[\mu( \boldf(\bY_i, \bZ_i))
         \mid T_i = t, \bZ_i] \ \P(T_i = t \mid \bZ_i)\\
         &= \Bar{m}_t(\bZ_i) \pi(\bZ_i).
    \end{align*}
    Therefore, $\E[\psi_{3t}(\bY_i, \bZ_i)\mid \bZ_i] = 0$.
    \item $\psi_{4t}(S, \tilde L_i, \bY_i, \bZ_i)$: Finally, I show $\E[\psi_{4t}(S, \tilde L_i, \bY_i, \bZ_i) \mid \bY_i, \bZ_i, S_i = 1] = 0$.
    \begin{align*}
        &\E[\psi_{4t}(S, \tilde L_i, \bY_i, \bZ_i) \mid \bY_i, \bZ_i, S_i = 1]\\
        &= 
        \frac{1}{\P(S_i = 1)} \cdot  
        \E\biggl[
        \frac{\rho_t(\boldf( \bY_i, \bZ_i))}{\pi_t(\bZ_i)} \biggl(\tilde L_i - \mu( \boldf(\bY_i, \bZ_i))\biggr) \mid \bY_i, \bZ_i, S_i = 1 \biggr]\\
        &= 
        \frac{1}{\P(S_i = 1)} \cdot  
        \biggl\{
        \frac{\rho_t(\boldf( \bY_i, \bZ_i))}{\pi_t(\bZ_i)} \biggl(\E[\tilde L_i \mid \bY_i, \bZ_i, S_i = 1] - \mu( \boldf(\bY_i, \bZ_i))\biggr) \mid \bY_i, \bZ_i, S_i = 1 \biggr\}\\
        &= 
        \frac{1}{\P(S_i = 1)} \cdot  
        \biggl\{
        \frac{\rho_t(\boldf( \bY_i, \bZ_i))}{\pi_t(\bZ_i)} \biggl(\E[\tilde L_i \mid \bY_i, \bZ_i] - \mu( \boldf(\bY_i, \bZ_i))\biggr) \mid \bY_i, \bZ_i, S_i = 1 \biggr\}\\
        &= 
        \frac{1}{\P(S_i = 1)} \cdot  
        \biggl\{
        \frac{\rho_t(\boldf( \bY_i, \bZ_i))}{\pi_t(\bZ_i)} \biggl(\E[\tilde L_i \mid \boldf(\bY_i, \bZ_i)] - \mu( \boldf(\bY_i, \bZ_i))\biggr) \mid \bY_i, \bZ_i, S_i = 1 \biggr\}\\
        &= 0.
    \end{align*}
\end{enumerate}
Therefore, since the derived influence function resides in the tangent space, I can conclude that it is the efficient influence function.
\end{proof}

\begin{remark}
The derived efficient influence function has the same algebraic form as the efficient influence function obtained when conditioning on $(\bY_i, \bZ_i)$ directly rather than surrogate representation $\boldf(\bY_i, \bZ_i)$. This is because under Definition \ref{def_surrep}, I have
\begin{align*}
    &\mu(\boldf(\bY_i, \bZ_i)) = \E[\tilde L_i \mid \bY_i, \bZ_i, S_i = 1]\\
    &\rho_t(\boldf(\bY_i, \bZ_i)) = \E[T_i \mid \bY_i, \bZ_i].
\end{align*}
Consequently, the perturbations of $\boldf$ is already absorbed into perturbations of the induced nuisance functions, and the effect of estimating $\boldf$ is handled via the convergence rate conditions stated for the composite functions $\mu  \circ \boldf$ and $\rho_t  \circ \ \boldf$ (Assumptions \ref{reg} (c)).
\end{remark}

\newpage
\subsection{Proof of Theorem \ref{asymp_normal} (Asymptotic Normality)}\label{proof_asymp_normal}
For this proof, I use the following lemma to deal with the so-called empirical process term. See \cite{kennedy_semiparametric_2023} for the detailed explanation.

\begin{lemma}[Lemma 2 of \citealt{kennedy_sharp_2020}] \label{empiricalprocess_lemma} Let $\hat{f}(z)$ be a function estimated from a sample $(Z_{n+1}, \cdots, Z_N)$, and let $\hat{\cP}_n$ be the empirical measure over $(Z_1, \cdots, Z_n)$ (i.e., $\hat\cP_n = \frac{1}{n}\sum_{i=1}^n \delta_{Z_i}$), which is independent of $(Z_{n+1}, \cdots, Z_N)$. Then,
\begin{align*}
    \{\hat{\cP}_n - \cP\}(\hat{f} - f) = O_p\biggl(\frac{\norm{\hat{f} - f}_2}{\sqrt{n}}\biggr)
\end{align*}
where $\norm{\cdot}_2$ is $L_2$ norm and $\cP(f) = \cP\{f(Z_i)\} = \int f(z)d\cP(z)$.
\end{lemma}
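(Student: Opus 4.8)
The plan is to establish this lemma through a conditioning argument that converts the empirical-process term into a conditionally centered i.i.d.\ average and then controls it with Chebyshev's inequality. Write $g := \hat f - f$. The starting observation is that $g$ is a deterministic function once we condition on the auxiliary sample $\cD^{c} := (Z_{n+1},\dots,Z_N)$ from which $\hat f$ is built, while the evaluation points $(Z_1,\dots,Z_n)$ are i.i.d.\ $\cP$ and independent of $\cD^c$. I would first rewrite
\[
\{\hat\cP_n - \cP\}(g) = \frac{1}{n}\sum_{i=1}^n \bigl\{ g(Z_i) - \cP(g) \bigr\},
\]
so that, given $\cD^c$, this is an average of $n$ i.i.d.\ mean-zero terms with $g$ held fixed.

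Next I would compute the first two conditional moments. Since each $Z_i$ ($i\le n$) has law $\cP$ and $g$ is frozen, $\E[\{\hat\cP_n-\cP\}(g)\mid\cD^c]=0$, and by independence across $i$,
\[
\V\bigl(\{\hat\cP_n-\cP\}(g)\mid\cD^c\bigr)=\frac{1}{n}\V\bigl(g(Z_1)\mid\cD^c\bigr)\le \frac{1}{n}\E\bigl[g(Z_1)^2\mid\cD^c\bigr]=\frac{\norm{g}_2^2}{n}.
\]
The essential point here is that $\norm{g}_2^2=\int(\hat f-f)^2\,d\cP$ is $\cD^c$-measurable, hence a fixed constant under the conditioning.

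The final step is a conditional Chebyshev bound followed by integrating out $\cD^c$. For any $\eta>0$, the moment computation above gives
\[
\P\!\left( \frac{\bigl|\{\hat\cP_n-\cP\}(g)\bigr|}{\norm{g}_2/\sqrt{n}} > \eta \;\Big|\; \cD^c \right)\le \frac{1}{\eta^2},
\]
and because this bound does not depend on $\cD^c$, the tower property returns the identical unconditional bound. Letting $\eta\to\infty$ shows the normalized quantity is tight, i.e.\ $O_p(1)$, which is precisely $\{\hat\cP_n-\cP\}(g)=O_p(\norm{g}_2/\sqrt n)$.

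I expect the only real subtlety to be the normalization by the random quantity $\norm{g}_2$: dividing by it is legitimate only because conditioning on $\cD^c$ freezes $\norm{g}_2$ into a constant, and on the degenerate event $\{\norm{g}_2=0\}$ the numerator vanishes as well (so the ratio is set to $0$ there). The conditioning is what makes the argument clean — it simultaneously fixes $g$ and $\norm{g}_2$, decouples the estimation error from the evaluation sample, and yields a Chebyshev tail free of $\cD^c$ so that integrating out the conditioning is immediate; in particular no Donsker or entropy conditions on the underlying function class are required, which is the whole purpose of sample splitting.
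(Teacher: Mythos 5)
Your proposal is correct and coincides with the standard argument for this result: the paper does not prove the lemma itself but imports it as Lemma 2 of \citealt{kennedy_sharp_2020}, whose proof is precisely your sample-splitting argument --- condition on the training sample so that $g=\hat f-f$ and $\norm{g}_2$ are frozen, note the conditional mean is zero and the conditional variance is at most $\norm{g}_2^2/n$, apply conditional Chebyshev, and integrate out the conditioning since the tail bound is free of it. Your treatment of the random normalizer, including the degenerate event $\{\norm{g}_2=0\}$, matches the cited proof, so nothing is missing.
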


\begin{proof}
Let $\hat\cP_n$ be a probability distribution consistent with the estimated nuisance functions $\hat\pi$, $\hat\mu$, and $\hat\boldf$, $\Psi_t(\cP)$ be an estimator constructed from $n$ i.i.d. sample from $\cP$, $O_i$ be a specific observed data point indexed by $i$, and $\psi_t(O_i, \cP)$ be an efficient influence function of the observation $O_i$ under the distribution $\cP$.  

Now, consider the parametric submodel $\cP_{\epsilon} = (1 - \epsilon) \cP + \epsilon \hat{\cP}_n$. Recall that the von Mises expansion yields
\begin{align}
    \Psi_t(\hat{\mathcal{P}}_n) = \Psi_t(\mathcal{P}) + \frac{d \Psi(\mathcal{P}_\epsilon)}{d\epsilon}\biggl|_{\epsilon = 1} (0-1) + R(\hat{\mathcal{P}}_n, \mathcal{P}) \label{vonmises}
\end{align}
where $R(\hat{\mathcal{P}}_n, \mathcal{P})$ denotes the remainder term of the expansion. Then, by Equation (14) of \cite{hines_demystifying_2022},
\begin{align*}
    &\sqrt{n}\biggl(\Psi_t(\hat{\mathcal{P}}_n) - \Psi_t(\mathcal{P})\biggr) = \frac{1}{\sqrt{n}}\sum_{i = 1}^n \psi_t(O_i, \cP) - H_1 + H_2 - \sqrt{n}R(\hat{\mathcal{P}}_n, \mathcal{P})
\end{align*}
where
\begin{align*}
    H_1 = \frac{1}{\sqrt{n}}\sum_{i = 1}^n \psi_t(O_i, \hat{\cP}_n), \quad H_2 = \sqrt{n}\{\hat{\cP}_n - \cP\}(\hat{\psi}_t - \psi_t)
    %\int  \biggl( \psi(o, \hat{\cP_n}) -  \psi(o, \cP)\biggr) d\biggl(\hat{\cP}_n(o) - \cP(o)\biggr)
\end{align*}
and $\hat\psi_t = \psi_t(\hat\cP_n)$ is the influence function constructed from the empirical distribution $\hat\cP_n$. %Note that this is the standard result in the literature of semiparametric statistics (e.g., \citealt{chernozhukov_doubledebiased_2018, hines_demystifying_2022, kennedy_semiparametric_2023}).

Now, I analyze each term. The term $H_1$ is called bias-term, and as I solve the estimating equation so that this term becomes zero, $H_1 = 0$ by construction.  The second term $H_2$ is called the empirical process term, and I use Lemma \ref{empiricalprocess_lemma} to deal with this term. To do this, I mush show that $\hat\psi_t$ is consistent for $\psi_t$ in $L_2$ norm. Now,
\begin{align*}
    H_2 &= \sqrt{n}\{\hat{\cP}_n - \cP\}(\hat{\psi}_t - \psi_t) =  \sqrt{n}\{\hat{\cP}_n - \cP\}\biggl(
    \frac{S_i}{\P(S_i = 1)} (H_{21} - H_{22}) + 
    \mathbbm{1}\{T_i = t\} H_{23}
    + H_{24}\biggr)
\end{align*}
where
\begin{align*}
    &H_{21} = \biggl\{\frac{\rho_t(\boldf( \bY_i, \bZ_i))}{\pi_t(\bZ_i)} 
    - 
    \frac{\hat\rho_t(\hat\boldf( \bY_i, \bZ_i)) }{ \hat\pi_t(\bZ_i) } \biggr\}\tilde L_i\\
    &H_{22} = \frac{\rho_t(\boldf( \bY_i, \bZ_i))}{\pi_t(\bZ_i)} \mu( \boldf(\bY_i, \bZ_i))
    - 
    \frac{ \hat\rho_t(\hat\boldf( \bY_i, \bZ_i)) }{\hat\pi_t(\bZ_i) } \hat\mu( \boldf(\bY_i, \bZ_i))\\
    &H_{23} = \frac{ \mu( \boldf(\bY_i, \bZ_i)) - \Bar{m}_t(\bZ_i) }{\pi_t(\bZ_i)} 
- \frac{ \hat\mu( \hat\boldf(\bY_i, \bZ_i)) - \hat{\Bar{m}}_t(\bZ_i) }{ \hat{\pi}_t(\bZ_i) } \\
    &H_{24} = \Bar{m}_t(\bZ_i) - \hat{\Bar{m}}_t(\bZ_i).
\end{align*}
%Note that $\Psi(\cP)$ and $\Psi(\hat\cP)$ terms both disappear since they are non-random and thus $(\hat\cP_n - \cP)\{ \Psi(\cP)\} = 0$ and the same for $\Psi(\hat\cP)$. 
Then, I analyze each term. For $H_{21}$,
\begin{align*}
    &\E \biggl [ |H_{21}|^2\biggr]^{\frac{1}{2}} =
    \E \biggl [ \biggl|
     \biggl\{\frac{\rho_t(\boldf( \bY_i, \bZ_i))}{\pi_t(\bZ_i)} 
    - 
    \frac{\hat\rho_t(\hat\boldf( \bY_i, \bZ_i)) }{ \hat\pi_t(\bZ_i) } \biggr\}\tilde  L_i
    \biggr|^2\biggr]^{\frac{1}{2}} \\
    &\leq \E \biggl [ \biggl|\biggl\{\frac{\rho_t(\boldf( \bY_i, \bZ_i))}{\pi_t(\bZ_i)} 
    - 
    \frac{\hat\rho_t(\hat\boldf( \bY_i, \bZ_i)) }{ \hat\pi_t(\bZ_i) }\biggr\}\biggr|^2\biggr]^{\frac{1}{2}} \E[|\tilde L_i|^2]^{\frac{1}{2}}\\
    &\leq c_1 \E \biggl [ \biggl|\frac{\rho_t(\boldf( \bY_i, \bZ_i))}{\pi_t(\bZ_i)} 
    - 
    \frac{\hat\rho_t(\hat\boldf( \bY_i, \bZ_i)) }{ \hat\pi_t(\bZ_i) } \biggr|^2\biggr]^{\frac{1}{2}}\\
    &=  c_1 \E \biggl [ \biggl|
    \frac{{\hat\pi_t(\bZ_i)}\rho_t(\boldf( \bY_i, \bZ_i)) - 
    \pi_t(\bZ_i)  {\hat\rho_t(\hat\boldf( \bY_i, \bZ_i))}
    }{\pi_t(\bZ_i){\hat\pi_t(\bZ_i)} }
    \biggr|^2\biggr]^{\frac{1}{2}}\\
    &\leq  
    c_1 \E \biggl [ \biggl| \frac{1}{\pi_t(\bZ_i){\hat\pi_t(\bZ_i)} } \biggr|^2\biggr]^\frac{1}{2} \E\biggl[ \biggl|  {\hat\pi_t(\bZ_i)}\rho_t(\boldf( \bY_i, \bZ_i)) - 
    \pi_t(\bZ_i)  {\hat\rho_t(\hat\boldf( \bY_i, \bZ_i))}
    \biggr|^2\biggr]^{\frac{1}{2}}
    \\
    &\leq  \frac{c_1}{c_2^2} \E \biggl [ \biggl|
    \rho_t(\boldf( \bY_i, \bZ_i))
    \{{\hat\pi_t(\bZ_i)} - 
    \pi_t(\bZ_i)
    \} + \pi_t(\bZ_i)  \{\rho_t(\boldf( \bY_i, \bZ_i))
    - {\hat\rho_t(\hat\boldf( \bY_i, \bZ_i))}\}
    \biggr|^2\biggr]^{\frac{1}{2}}\\
    &\leq \frac{c_1}{c_2^2} \biggl\{ 
    \E \biggl [ \biggl|
    \rho_t(\boldf( \bY_i, \bZ_i))
    \{ {\hat\pi_t(\bZ_i)} - 
    \pi_t(\bZ_i)
    \}\biggr|^2\biggr]^{\frac{1}{2}} + \E\biggl[ \biggl| 
    \pi_t(\bZ_i)  \{\rho_t(\boldf( \bY_i, \bZ_i))
    - {\hat\rho_t(\hat\boldf( \bY_i, \bZ_i))}\}
    \biggr|^2\biggr]^{\frac{1}{2}}
    \biggr\}\\
    &\leq \frac{c_1}{c_2^2}
    \E\biggl[|\rho_t(\boldf( \bY_i, \bZ_i))|^2\biggr]^{\frac{1}{2}}
    \biggl\{ 
    \E \biggl [ \biggl|
    \{{\hat\pi_t(\bZ_i)} - 
    \pi_t(\bZ_i)
    \}\biggr|^2\biggr]^{\frac{1}{2}}+ \E\biggl[|\pi_t(\bZ_i)|^2\biggr]^{\frac{1}{2}}
    \E\biggl[ \biggl| 
      \rho_t(\boldf( \bY_i, \bZ_i))
    - {\hat \rho_t(\hat \boldf( \bY_i, \bZ_i))}
    \biggr|^2\biggr]^{\frac{1}{2}}
    \biggr\}\\
    &\leq c_1 (1/c_2^2 + 1)\delta_n
\end{align*}
where the third, sixth, and ninth lines are by Cauchy-Schwartz inequality and the eighth line is by triangular inequality.

For $H_{22}$,
\begin{align*}
    &\E \biggl [ |H_{22}|^2\biggr]^{\frac{1}{2}}
    = 
    \E\biggl[ 
    \biggl| 
    \frac{\rho_t(\boldf( \bY_i, \bZ_i))}{\pi_t(\bZ_i)} \mu( \boldf(\bY_i, \bZ_i))
    - 
    \frac{{\hat\rho_t(\hat\boldf( \bY_i, \bZ_i))} }{{\hat\pi_t(\bZ_i)} } {\hat\mu( \hat\boldf(\bY_i, \bZ_i))}
    \biggr|^{2}
    \biggr]^\frac{1}{2}\\
    &= \E\biggl[ 
    \biggl| 
    \biggl(
    \frac{\rho_t(\boldf( \bY_i, \bZ_i))}{\pi_t(\bZ_i) 
    }
    - 
    \frac{{\hat\rho_t(\hat\boldf( \bY_i, \bZ_i))} }{{\hat\pi_t(\bZ_i)} 
    }
    \biggr)
    \mu( \boldf(\bY_i, \bZ_i))
    -
    \frac{{\hat\rho_t(\hat\boldf( \bY_i, \bZ_i))} }{{\hat\pi_t(\bZ_i)} }
    \biggl(
    \mu( \boldf(\bY_i, \bZ_i)) -
    {\hat\mu( \hat\boldf(\bY_i, \bZ_i))} \biggr)
    \biggr|^{2}
    \biggr]^\frac{1}{2}\\
    &\leq 
    \E\biggl[ 
    \biggl|
    \biggl(\frac{\rho_t(\boldf( \bY_i, \bZ_i))}{\pi_t(\bZ_i) 
    }
    - 
    \frac{{\hat\rho_t(\hat\boldf( \bY_i, \bZ_i))} }{{\hat\pi_t(\bZ_i)} 
    }
    \biggr)
    \mu( \boldf(\bY_i, \bZ_i)) \biggr|^{2}
    \biggr]^\frac{1}{2}
    +
    \E\biggl[ 
    \biggl|\frac{{\hat\rho_t(\hat\boldf( \bY_i, \bZ_i))} }{{\hat\pi_t(\bZ_i)} }
    \biggl(
    \mu( \boldf(\bY_i, \bZ_i)) -
    {\hat\mu( \hat\boldf(\bY_i, \bZ_i))} \biggr) \biggr|^{2}
    \biggr]^\frac{1}{2}\\
    &\leq 
    \E\biggl[ 
    \biggl|
    \frac{\rho_t(\boldf( \bY_i, \bZ_i))}{\pi_t(\bZ_i) 
    }
    - 
    \frac{{\hat\rho_t(\hat\boldf( \bY_i, \bZ_i))} }{{\hat\pi_t(\bZ_i)} 
    } \biggr|^{2}
    \biggr]^\frac{1}{2}
    \E\biggl[|\mu( \boldf(\bY_i, \bZ_i))|^2\biggr]^\frac{1}{2}\\
    &\quad\quad\quad\quad\quad\quad\quad +
    \E\biggl[ 
    \biggl|\frac{{\hat\rho_t(\hat\boldf( \bY_i, \bZ_i))} }{{\hat\pi_t(\bZ_i)} } \biggr|^{2}
    \biggr]^\frac{1}{2}
    \E\biggl[ 
    \biggl|
    \mu( \boldf(\bY_i, \bZ_i)) -
    {\hat\mu( \hat\boldf(\bY_i, \bZ_i))} \biggr|^{2}
    \biggr]^\frac{1}{2}\\
    &\leq (1/c_2^2 + 1)\delta_n \cdot \E\biggl[|\mu( \boldf(\bY_i, \bZ_i))|^2\biggr]^\frac{1}{2} + \E\biggl[ 
    \biggl|\frac{{\hat\rho_t(\hat\boldf( \bY_i, \bZ_i))} }{{\hat\pi_t(\bZ_i)} } \biggr|^{2}
    \biggr]^\frac{1}{2} \cdot c_1\delta_n\\
    &\leq c_1(1 + 1/c_2^2 + 1/c_2)\delta_n
\end{align*}
where the third line is by triangular inequality, the fourth line is by Cauchy-Schwartz inequality, and the last line is by $\E[|\tilde  L_i|^2]^{\frac{1}{2}}$ (Assumption \ref{reg}).

For $H_{23}$,
\begin{align*}
    &\E \biggl [ |H_{23}|^2\biggr]^{\frac{1}{2}}
    = 
    \E\biggl[ 
    \biggl|
    \frac{ \mu( \boldf(\bY_i, \bZ_i)) - \Bar{m}_t(\bZ_i) }{\pi_t(\bZ_i)} 
- \frac{ {\hat\mu( \hat\boldf(\bY_i, \bZ_i))} - {\hat{\Bar{m}}_t(\bZ_i)} }{ {\hat\pi_t(\bZ_i)}}
    \biggr|^2
    \biggr]^\frac{1}{2}\\
    &= \E\biggl[ 
    \biggl|
    \frac{ \{ \mu( \boldf(\bY_i, \bZ_i)) - \Bar{m}_t(\bZ_i)\}\hat\pi_t(\bZ_i)
    - \{ \hat \mu( \hat\boldf(\bY_i, \bZ_i)) - \hat{\Bar{m}}_t(\bZ_i)\}\pi_t(\bZ_i)
    }{\pi_t(\bZ_i))\hat \pi_t(\bZ_i)}
    \biggr|^2
    \biggr]^\frac{1}{2}\\
    &= \E\biggl[ 
    \biggl|
    \frac{ \{ \mu( \boldf(\bY_i, \bZ_i)) - \Bar{m}_t(\bZ_i)\} \{\hat\pi_t(\bZ_i) - \pi_t(\bZ_i)\}
    }{\pi_t(\bZ_i))\hat \pi_t(\bZ_i)}\\
    &\quad\quad\quad\quad\quad\quad\quad - \frac{\pi_t(\bZ_i) \{\hat  \mu( \hat\boldf(\bY_i, \bZ_i)) -  \mu( \boldf(\bY_i, \bZ_i))\} + \pi_t(\bZ_i)\{ \hat{\Bar{m}}_t(\bZ_i) - \Bar{m}_t(\bZ_i)}{\pi_t(\bZ_i))\hat \pi_t(\bZ_i)}
    \biggr|^2
    \biggr]^\frac{1}{2}\\
    &\leq \frac{1}{c_2^2} \biggl\{ \E\biggl[ \biggl| \{ \mu( \boldf(\bY_i, \bZ_i)) - \Bar{m}_t(\bZ_i)\} \{\hat\pi_t(\bZ_i) - \pi_t(\bZ_i)
    \}  \\
    &\quad\quad\quad\quad\quad\quad\quad +\pi_t(\bZ_i) \{\hat  \mu( \hat\boldf(\bY_i, \bZ_i)) -  \mu( \boldf(\bY_i, \bZ_i))\} + \pi_t(\bZ_i)\{ \hat{\Bar{m}}_t(\bZ_i) - \Bar{m}_t(\bZ_i) 
    \biggr|^2 \biggr]^\frac{1}{2}
    \biggr\}\\
    &\leq \frac{1}{c_2^2} \biggl\{ \E\biggl[ \biggl| \{ \mu( \boldf(\bY_i, \bZ_i)) - \Bar{m}_t(\bZ_i)\} \{\hat\pi_t(\bZ_i) - \pi_t(\bZ_i)\} \biggr|^2 \biggr]^\frac{1}{2}  \\
    &\quad\quad\quad\quad\quad\quad\quad +\
    \E\biggl[ \biggl| \pi_t(\bZ_i) \{\hat  \mu( \hat\boldf(\bY_i, \bZ_i)) -  \mu( \boldf(\bY_i, \bZ_i))\} \biggr|^2 \biggr]^\frac{1}{2}
    + \E\biggl[ \biggl| \pi_t(\bZ_i)\{ \hat{\Bar{m}}_t(\bZ_i) - \Bar{m}_t(\bZ_i)\} \biggr|^2 \biggr]^\frac{1}{2}
    \biggr\}\\
    &\leq \frac{1}{c_2^2} \biggl\{ \E\biggl[ \biggl| \{ \mu( \boldf(\bY_i, \bZ_i)) - \Bar{m}_t(\bZ_i)\} \biggr|^2 \biggr]^\frac{1}{2}  \E\biggl[ \biggl| \{\hat\pi_t(\bZ_i) - \pi_t(\bZ_i)\} \biggr|^2 \biggr]^\frac{1}{2}  \\
    &\quad\quad\quad\quad + \E[|\pi_t(\bZ_i)|^2]^\frac{1}{2} \
    \E\biggl[ \biggl|  \{\hat  \mu( \hat\boldf(\bY_i, \bZ_i)) -  \mu( \boldf(\bY_i, \bZ_i))\} \biggr|^2 \biggr]^\frac{1}{2}
    + \E[|\pi_t(\bZ_i)|^2]^\frac{1}{2} \E\biggl[ \biggl|\{ \hat{\Bar{m}}_t(\bZ_i) - \Bar{m}_t(\bZ_i)\} \biggr|^2 \biggr]^\frac{1}{2}
    \biggr\}\\
    &\leq \frac{1}{c_2^2} \biggl\{ \E\biggl[ \biggl| \{ \mu( \boldf(\bY_i, \bZ_i)) - \Bar{m}_t(\bZ_i)\} \biggr|^2 \biggr]^\frac{1}{2} \delta_n + \delta_n + \delta_n \biggr\}\\
    &= \frac{1}{c_2^2}\delta_n \biggl\{ \E\biggl[ \biggl| \{ \E[\tilde L_i \mid \boldf(\bY_i, \bZ_i))] - 
    \E[\E[\tilde L_i \mid \boldf(\bY_i, \bZ_i))] \mid \bZ_i, T_i = t]
    \} \biggr|^2 \biggr]^\frac{1}{2} + 2 \biggr\}\\
    &\leq \frac{1}{c_2^2}\delta_n \biggl\{ \E\biggl[ \biggl| \{ \E[\tilde L_i \mid \boldf(\bY_i, \bZ_i)] 
    \} \biggr|^2 \biggr]^\frac{1}{2} + \E\biggl[ \biggl| \{ \E[\E[\tilde L_i \mid \boldf(\bY_i, \bZ_i)] \mid \bZ_i, T_i = t]
    \} \biggr|^2 \biggr]^\frac{1}{2} + 2 \biggr\}\\
    &\leq \frac{1}{c_2^2}\delta_n \biggl\{ \E\biggl[ \{ \E[ |\tilde L_i|^2 \mid \boldf(\bY_i, \bZ_i)] 
    \} \biggr]^\frac{1}{2} + \E\biggl[ \{ \E[\E[|\tilde L_i |^2 \mid \boldf(\bY_i, \bZ_i)] \mid \bZ_i, T_i = t]
    \}  \biggr]^\frac{1}{2} + 2 \biggr\}\\
    &\leq \frac{1}{c_2^2}\delta_n \biggl\{ \E\biggl[ \{ \E[ |\tilde L_i|^2 \mid \boldf(\bY_i, \bZ_i), T_i = t] 
    \} \biggr]^\frac{1}{2} + \E\biggl[ \{ \E[\E[|\tilde L_i |^2 \mid \boldf(\bY_i, \bZ_i), T_i = t] \mid \bZ_i, T_i = t]
    \}  \biggr]^\frac{1}{2} + 2 \biggr\}\\
    &\leq \frac{1}{c_2^2}\delta_n \biggl\{ \E\biggl[ \{ \E[ |\tilde L_i|^2 \mid \boldf(\bY_i, \bZ_i), T_i = t] 
    \} \biggr]^\frac{1}{2} + \E\biggl[ \{ \E[|\tilde L_i |^2 \mid \bZ_i, T_i = t]
    \}  \biggr]^\frac{1}{2} + 2 \biggr\}\\
    &\leq \frac{1}{c_2^2}\delta_n \biggl\{ \E[ |\tilde L_i|^2]^\frac{1}{2} + \E[ |\tilde L_i|^2]^\frac{1}{2} + 2\biggr\}\\
    &= \frac{1}{c_2^2}(2c_1 + 2)\delta_n
\end{align*}
where the fourth and sixth inequalities are by Cauchy-Schwartz inequality, the fifth and ninth lines are by triangular inequality, and the tenth line is by Jensen's inequality.

Finally, I assume that $\E[|H_{24}|^2]^\frac{1}{2} \leq \delta_n$ in Assumption \ref{reg}. Therefore,
\begin{align*}
    \E\biggl[ |\hat{\psi}_t - \psi_t|^2 \biggr]^{\frac{1}{2}} &= 
    \E\biggl[ \biggl|\frac{S_i}{\P(S_i = 1)} (H_{21} - H_{22}) + 
    \mathbbm{1}\{T_i = t\} H_{23}
    + H_{24} \biggr|^2 \biggr]^{\frac{1}{2}}\\
    &\leq \E\biggl[ \biggl| \frac{S_i}{\P(S_i = 1) } (H_{21} - H_{22} ) \biggr|^2 \biggr]^{\frac{1}{2}} + \E\biggl[ \biggl|\mathbbm{1}\{T_i = t\} H_{23} \biggr|^2 \biggr]^{\frac{1}{2}}
    + \E\biggl[ |H_{24}|^2 \biggr]^{\frac{1}{2}}
    \\
    &\leq \frac{1}{\sqrt{\P(S_i = 1)}} \E\biggl[ \biggl| H_{21} - H_{22} \biggr|^2 \biggr]^{\frac{1}{2}} + \E\biggl[ | H_{23} |^2 \biggr]^{\frac{1}{2}}
    + \E\biggl[ |H_{24} |^2 \biggr]^{\frac{1}{2}}
    \\
    &\leq \frac{1}{\sqrt{\P(S_i = 1)}} \biggl( \E\biggl[|H_{21}|^2 \biggr]^\frac{1}{2} + \E\biggl[|H_{22}|^2 \biggr]^\frac{1}{2} \biggr) + \E\biggl[ | H_{23} |^2 \biggr]^{\frac{1}{2}}
    + \E\biggl[ |H_{24} |^2 \biggr]^{\frac{1}{2}}\\
    &\leq \delta_n'
\end{align*}
where $\delta_n'$ is another sequence of random variables that converges to 0 as $n$ goes to infinity.  In the above transformation, the second and the fourth lines are by triangular inequality and the third line is by Cauchy-Schwartz inequality. Thus, by applying this Lemma \ref{empiricalprocess_lemma},
\begin{align*}
    H_2 = \sqrt{n}\{\hat{\cP}_n - \cP\}(\hat{\psi}_t - \psi_t) = O_p\biggl(\norm{\hat{\psi}_t - \psi_t}_2\biggr) = O_p(o_p(1)) = o_p(1).
\end{align*}
Thus, to derive the asymptotic normality, I only need to control the reminder term of the von Mises expansion. Rewriting Equation \ref{vonmises}, I can write the reminder term as
\begin{align*}
    &R(\hat{\mathcal{P}}_n, \mathcal{P}) = - \Psi_t(\hat{\mathcal{P}}_n) + \Psi_t(\mathcal{P}) - \int \psi_t(O_i, \hat{\cP}_n) d \mathcal{P}(O_i)\\
    &= - \Psi_t(\hat{\mathcal{P}}_n) + \Psi_t(\mathcal{P}) - 
    \E_{\cP}\biggl[\frac{\mathbbm{1}\{S_i = 1\}}{\P(S_i = 1)} \cdot  \frac{
    \E_{\hat\cP_n}[\mathbbm{1}\{T_i = t\} \mid \hat{\boldf}(\bY_i, \bZ_i) ]
    }{\E_{\hat\cP_n}[\mathbbm{1}\{T_i = t\} \mid \bZ_i ]} \biggl(\tilde L_i - \E_{\hat{\cP}_n}[\tilde L_i \mid \hat{\boldf}(\bY_i, \bZ_i)] \biggr)\\
    &\quad\quad\quad\quad\quad\quad +
    \frac{\mathbbm{1}\{T_i = t\} }{\E_{\hat\cP_n}[\mathbbm{1}\{T_i = t\} \mid \bZ_i ]} \biggl( \E_{\hat{\cP}_n}[\tilde L_i \mid \hat{\boldf}(\bY_i, \bZ_i)] - 
    \E_{\hat{\cP}_n}[\E_{\hat{\cP}_n}[\tilde L_i \mid \hat{\boldf}(\bY_i, \bZ_i)] \mid \bZ_i, T_i = t]
    \biggr)\\
    &\quad\quad\quad\quad\quad\quad\quad\quad\quad\quad\quad\quad+ \E_{\hat{\cP}_n}[\E_{\hat{\cP}_n}[ \tilde L_i \mid \hat{\boldf}(\bY_i, \bZ_i)] \mid \bZ_i, T_i = t] - \Psi_t(\hat\cP_n)\biggr]\\
    &=  - R_{1} - R_{2} - R_{3}
\end{align*}
where $\E_{\cP}[\cdot]$ is the expectation operator with respect to the population distribution $\cP$ and $\E_{\hat\cP_n}[\cdot]$ is the expectation operator with respect to the empirical distribution $\hat\cP_n$, and 
\begin{align*}
    R_1 &= \E_{\cP}\biggl[ \frac{\mathbbm{1}\{S_i = 1\}}{\P(S_i = 1)} \cdot  \frac{
    \E_{\hat\cP_n}[\mathbbm{1}\{T_i = t\} \mid \hat{\boldf}(\bY_i, \bZ_i) ]
    }{\E_{\hat\cP_n}[\mathbbm{1}\{T_i = t\} \mid \bZ_i ]} \biggl( \tilde L_i - \E_{\hat{\cP}_n}[ \tilde L_i \mid \hat{\boldf}(\bY_i, \bZ_i)] \biggr)\biggr],\\
    R_2 &= \E_{\cP}\biggl[\frac{\mathbbm{1}\{T_i = t\} }{\E_{\hat\cP_n}[\mathbbm{1}\{T_i = t\} \mid \bZ_i ]} \biggl( \E_{\hat{\cP}_n}[\tilde L_i \mid \hat{\boldf}(\bY_i, \bZ_i)] - 
    \E_{\hat{\cP}_n}[\E_{\hat{\cP}_n}[\tilde L_i \mid \hat{\boldf}(\bY_i, \bZ_i)] \mid \bZ_i, T_i = t]
    \biggr)\biggr],\\
    R_3 &= \E_\cP\biggl[\E_{\hat{\cP}_n}[\E_{\hat{\cP}_n}[ \tilde L_i \mid \hat{\boldf}(\bY_i, \bZ_i)] \mid \bZ_i, T_i = t] - \E_\cP [\E_{\cP}[\tilde L_i \mid \bZ_i, T_i = t]]\biggr].
\end{align*}

For $R_1$,
\begin{align*}
    R_1 &= \E_{\cP}\biggl[ \frac{\mathbbm{1}\{S_i = 1\}}{\P(S_i = 1)} \cdot  \frac{
    \E_{\hat\cP_n}[\mathbbm{1}\{T_i = t\} \mid \hat{\boldf}(\bY_i, \bZ_i) ]
    }{\E_{\hat\cP_n}[\mathbbm{1}\{T_i = t\} \mid \bZ_i ]} \biggl(\tilde L_i - \E_{\hat{\cP}_n}[\tilde L_i \mid \hat{\boldf}(\bY_i, \bZ_i)] \biggr)\biggr]\\
    &= \E_{\cP}\biggl[ \frac{
    \E_{\hat\cP_n}[\mathbbm{1}\{T_i = t\} \mid \hat{\boldf}(\bY_i, \bZ_i) ]
    }{\E_{\hat\cP_n}[\mathbbm{1}\{T_i = t\} \mid \bZ_i ]} \biggl(\tilde L_i - \E_{\hat{\cP}_n}[\tilde L_i \mid \hat{\boldf}(\bY_i, \bZ_i)] \biggr)\biggr]\\
    &= \E_{\cP}\biggl\{ \E_{\cP}\biggl[ \frac{
    \E_{\hat\cP_n}[\mathbbm{1}\{T_i = t\} \mid \hat{\boldf}(\bY_i, \bZ_i) ]
    }{\E_{\hat\cP_n}[\mathbbm{1}\{T_i = t\} \mid \bZ_i ]} \biggl(\tilde L_i - \E_{\hat{\cP}_n}[\tilde L_i \mid \hat{\boldf}(\bY_i, \bZ_i)] \biggr) \mid \bY_i, \bZ_i\biggr]\biggr\}\\
    &=  \E_{\cP}\biggl[ \frac{
    \E_{\hat\cP_n}[\mathbbm{1}\{T_i = t\} \mid \hat{\boldf}(\bY_i, \bZ_i) ]
    }{\E_{\hat\cP_n}[\mathbbm{1}\{T_i = t\} \mid \bZ_i ]} \biggl( \E_\cP[\tilde L_i \mid \bY_i, \bZ_i]- \E_{\hat{\cP}_n}[\tilde L_i \mid \hat{\boldf}(\bY_i, \bZ_i)] \biggr)\biggr]\\
    &=  \E_{\cP}\biggl[ \frac{
    \E_{\hat\cP_n}[\mathbbm{1}\{T_i = t\} \mid \hat{\boldf}(\bY_i, \bZ_i) ]
    }{\E_{\hat\cP_n}[\mathbbm{1}\{T_i = t\} \mid \bZ_i ]} \biggl( \E_\cP[\tilde L_i \mid \boldf(\bY_i, \bZ_i)]- \E_{\hat{\cP}_n}[ \tilde L_i \mid \hat{\boldf}(\bY_i, \bZ_i)] \biggr)\biggr]
\end{align*}
where the second line is by Assumption \ref{labeling}, the third equality is by the law of iterated expectation, and the last line is by the definition of surrogate representation (Definition \ref{def_surrep}).

For $R_2$,
\begin{align*}
    R_2 &=  \E_{\cP}\biggl[\frac{\mathbbm{1}\{T_i = t\} }{\E_{\hat\cP_n}[\mathbbm{1}\{T_i = t\} \mid \bZ_i ]} \biggl( \E_{\hat{\cP}_n}[\tilde L_i \mid \hat{\boldf}(\bY_i, \bZ_i)] - 
    \E_{\hat{\cP}_n}[\E_{\hat{\cP}_n}[\tilde L_i \mid \hat{\boldf}(\bY_i, \bZ_i)] \mid \bZ_i, T_i = t]
    \biggr)\biggr]\\
    &= \E_{\cP}\biggl[\frac{\mathbbm{1}\{T_i = t\} }{\E_{\hat\cP_n}[\mathbbm{1}\{T_i = t\} \mid \bZ_i ]} \biggl( \E_{\hat{\cP}_n}[\tilde L_i \mid \hat{\boldf}(\bY_i, \bZ_i)] -\E_{\cP}[\tilde L_i \mid \boldf(\bY_i, \bZ_i)]\biggr)\biggr]\\
    &\quad\quad + \E_{\cP}\biggl[\frac{\mathbbm{1}\{T_i = t\} }{\E_{\hat\cP_n}[\mathbbm{1}\{T_i = t\} \mid \bZ_i ]} \biggl( \E_{\cP}[\tilde L_i \mid \boldf(\bY_i, \bZ_i)]
    - 
    \E_{\hat{\cP}_n}[\E_{\hat{\cP}_n}[\tilde L_i \mid \hat{\boldf}(\bY_i, \bZ_i)] \mid \bZ_i, T_i = t] \biggr)\biggr]\\
    &= \E_{\cP}\biggl[\frac{\mathbbm{1}\{T_i = t\} }{\E_{\hat\cP_n}[\mathbbm{1}\{T_i = t\} \mid \bZ_i ]} \biggl( \E_{\hat{\cP}_n}[\tilde L_i \mid \hat{\boldf}(\bY_i, \bZ_i)] -\E_{\cP}[\tilde L_i \mid \boldf(\bY_i, \bZ_i)]\biggr)\biggr]\\
    &\quad\quad + \E_{\cP}\biggl\{ \E_{\cP}\biggl[\frac{\mathbbm{1}\{T_i = t\} }{\E_{\hat\cP_n}[\mathbbm{1}\{T_i = t\} \mid \bZ_i ]} \biggl( \E_{\cP}[\tilde L_i \mid \boldf(\bY_i, \bZ_i)]
    - 
     \E_{\hat{\cP}_n}[\E_{\hat{\cP}_n}[\tilde L_i \mid \hat{\boldf}(\bY_i, \bZ_i)] \mid \bZ_i, T_i = t] \biggr)\biggr] \mid T_i, \bZ_i \biggr\}\\
    &= \E_{\cP}\biggl[\frac{\mathbbm{1}\{T_i = t\} }{\E_{\hat\cP_n}[\mathbbm{1}\{T_i = t\} \mid \bZ_i ]} \biggl( \E_{\hat{\cP}_n}[\tilde L_i \mid \hat{\boldf}(\bY_i, \bZ_i)] -\E_{\cP}[\tilde L_i \mid \boldf(\bY_i, \bZ_i)]\biggr)\biggr]\\
    &\quad\quad + \E_{\cP}\biggl[\frac{\mathbbm{1}\{T_i = t\} }{\E_{\hat\cP_n}[\mathbbm{1}\{T_i = t\} \mid \bZ_i ]} \biggl( \E_{\cP}[ \E_{\cP}[\tilde L_i \mid \boldf(\bY_i, \bZ_i)] \mid \bZ_i, T_i = t]
    - \E_{\hat{\cP}_n}[\E_{\hat{\cP}_n}[\tilde L_i \mid \hat{\boldf}(\bY_i, \bZ_i)] \mid \bZ_i, T_i = t] \biggr)\biggr]
\end{align*}
where the third equality is by the law of iterated expectation.

For $R_3$,
\begin{align*}
    R_3 &= \E_\cP\biggl[\E_{\hat{\cP}_n}[\E_{\hat{\cP}_n}[\tilde L_i \mid \hat{\boldf}(\bY_i, \bZ_i)] \mid \bZ_i, T_i = t] - \E_\cP [\E_{\cP}[\tilde L_i \mid \bZ_i, T_i = t]]\biggr]\\
    &= \E_\cP\biggl[\E_{\hat{\cP}_n}[\E_{\hat{\cP}_n}[\tilde L_i \mid \hat{\boldf}(\bY_i, \bZ_i)] \mid \bZ_i, T_i = t] - \E_{\cP}[\E_{\cP}[ \tilde L_i \mid \boldf(\bY_i, \bZ_i), T_i = t] \mid \bZ_i, T_i = t]]\biggr]\\
    &= \E_\cP\biggl[\E_{\hat{\cP}_n}[\E_{\hat{\cP}_n}[\tilde L_i \mid \hat{\boldf}(\bY_i, \bZ_i)] \mid \bZ_i, T_i = t] - \E_{\cP}[\E_{\cP}[ \tilde L_i \mid \boldf(\bY_i, \bZ_i)] \mid \bZ_i, T_i = t]]\biggr]
\end{align*}
where the second line is by the law of iterated expectation and the third line is by the definition of surrogate representation (Definition \ref{def_surrep}).

Therefore, by combining all the terms, I obtain
\begin{align*}
    &R(\hat{\mathcal{P}}_n, \mathcal{P}) = -R_1 - R_2 - R_3\\
    &= - \E_{\cP}\biggl[ \biggl(\frac{
    \E_{\hat\cP_n}[\mathbbm{1}\{T_i = t\} \mid \hat{\boldf}(\bY_i, \bZ_i) ] - \mathbbm{1}\{T_i = t\}
    }{\E_{\hat\cP_n}[\mathbbm{1}\{T_i = t\} \mid \bZ_i ]}
    \biggr) \biggl( \E_\cP[\tilde L_i \mid \boldf(\bY_i, \bZ_i)]- \E_{\hat{\cP}_n}[\tilde L_i \mid \hat{\boldf}(\bY_i, \bZ_i)] \biggr)\biggr]\\
    &- 
    \E_{\cP}\biggl[\biggl(\frac{\mathbbm{1}\{T_i = t\} }{\E_{\hat\cP_n}[\mathbbm{1}\{T_i = t\} \mid \bZ_i ]} - 1\biggr) \biggl( \E_{\cP}[ \E_{\cP}[\tilde L_i \mid \boldf(\bY_i, \bZ_i)] \mid \bZ_i, T_i = t]
    - \E_{\hat{\cP}_n}[\E_{\hat{\cP}_n}[\tilde L_i \mid \hat{\boldf}(\bY_i, \bZ_i)] \mid \bZ_i, T_i = t] \biggr)\biggr]\\
    &= - \E_{\cP}\biggl[ \biggl(\frac{
    \E_{\hat\cP_n}[\mathbbm{1}\{T_i = t\} \mid \hat{\boldf}(\bY_i, \bZ_i) ] -  \E_{\cP}[\mathbbm{1}\{T_i = t\} \mid \boldf(\bY_i, \bZ_i) ] 
    }{\E_{\hat\cP_n}[\mathbbm{1}\{T_i = t\} \mid \bZ_i ]}
    \biggr) \biggl( \E_\cP[\tilde L_i \mid \boldf(\bY_i, \bZ_i)]- \E_{\hat{\cP}_n}[\tilde L_i \mid \hat{\boldf}(\bY_i, \bZ_i)] \biggr)\biggr]\\
    &- \E_{\cP}\biggl[\biggl(\frac{\E_{\cP}[\mathbbm{1}\{T_i = t\} \mid \bZ_i ] - \E_{\hat\cP_n}[\mathbbm{1}\{T_i = t\} \mid \bZ_i ] }{\E_{\hat\cP_n}[\mathbbm{1}\{T_i = t\} \mid \bZ_i ]}\biggr) \\
    &\quad\quad\quad\quad\quad\quad\quad\quad\quad \biggl( \E_{\cP}[ \E_{\cP}[\tilde L_i \mid \boldf(\bY_i, \bZ_i)] \mid \bZ_i, T_i = t]
    - \E_{\hat{\cP}_n}[\E_{\hat{\cP}_n}[\tilde L_i \mid \hat{\boldf}(\bY_i, \bZ_i)] \mid \bZ_i, T_i = t] \biggr)\biggr].
\end{align*}
Therefore,
\begin{align*}
    &|R(\hat{\mathcal{P}}_n, \mathcal{P})| \leq \frac{1}{\hat\pi(\bZ_i)} \biggl\{\E_\cP\biggl[  \biggl(\hat{\rho_t}(\hat{\boldf}(\bY_i, \bZ_i)) - \rho_t(\boldf(\bY_i, \bZ_i)) \biggr)^2
    \biggr]^{\frac{1}{2}}
    \E_\cP\biggl[
    \biggl(
    \mu(\boldf(\bY_i, \bZ_i))  - 
    \hat{\mu}(\hat{\boldf}(\bY_i, \bZ_i))
    \biggr)^2
    \biggr]^{\frac{1}{2}} \\
    &\quad\quad\quad\quad\quad\quad\quad\quad\quad\quad\quad\quad\quad\quad\quad + \E_\cP\biggl[  \biggl(\hat{\pi_t}(\bZ_i) - \pi_t(\bZ_i) \biggr)^2
    \biggr]^{\frac{1}{2}}
    \E_\cP\biggl[
    \biggl(
    \bar{m}_t(\bZ_i)  - 
    \hat{\bar{m}}_t(\bZ_i)
    \biggr)^2
    \biggr]^{\frac{1}{2}}\biggr\}\\
    &= o_p(n^{-1/2}),
\end{align*}
where the first inequality is by Cauchy-Schwartz inequality and the last equality is from the convergence rate of the cross-product of the nuisance functions in Assumption \ref{reg}. Now, I can also show $\V[\psi_t(O_i, \cP)] = \E\biggl[\psi_t(O_i, \cP)^2 \biggr] < \infty$, as each $H_{21}, H_{22}, H_{23}, H_{24}$ is bounded by constant (the proof is immediate since I only need to replace $\delta_n$ with constant $c_1$ in the earlier proof). Therefore, by applying the central limit theorem, I finally obtain the asymptotic normality:
\begin{align*}
    \sqrt{n}\biggl(\hat{\Psi_t} - \Psi_t \biggr) &= 
    \frac{1}{\sqrt{n}}\sum_{i = 1}^n \psi_t(O_i, \cP)  + o_p(1) \xrightarrow[]{d} \mathcal{N}\biggl(0, \E\biggl[\psi_t(O_i, \cP)^2 \biggr]\biggr).
\end{align*}
\end{proof}

\newpage
\subsection{Proof of Proposition \ref{identification2} (Identification with Noisy Human Annotations)} \label{proof_identification2}

\begin{proof}
Here, I prove the case with $J = 2$. Firstly, I prove the required independence for the identification. Notice that
\begin{align*}
    \P(\tilde L_i^{(j)} = l, \bZ_i = \bz, \mid L_i) &= 
    \int_{\cY} \P(\tilde L_i^{(j)} = l, \bZ_i = \bz \mid L_i, \bY_i) dF(\bY_i \mid L_i)\\
    &=\int_{\cY}  \P(\tilde L_i^{(j)} = l, \bZ_i = \bz \mid \bY_i ) dF(\bY_i  \mid L_i)\\
    &= \int_{\cY} \P(\tilde L_i^{(j)} = l \mid \bY_i) \P(\bZ_i = \bz \mid \bY_i) dF(\bY_i \mid L_i)\\
    &=\int_{\cY} \P(\tilde L_i^{(j)} = l \mid L_i) \P(\bZ_i = \bz \mid \bY_i) dF(\bY_i \mid L_i)\\
    &=  \P(\tilde L_i^{(j)} = l \mid L_i) \int_{\cY} \P(\bZ_i = \bz \mid \bY_i) dF(\bY_i \mid L_i)\\
    &=  \P(\tilde L_i^{(j)} = l \mid L_i) \int_{\cY}  \P(\bZ_i = \bz \mid \bY_i, L_i) dF(\bY_i \mid L_i)\\
    &= \P(\tilde L_i^{(j)} = l \mid L_i) \P(\bZ_i = \bz \mid L_i)
\end{align*}
where the second and sixth equalities are by Assumption \ref{coding}, the third equality is by Assumption \ref{annotation}, and the fourth equality is by Assumption \ref{coding} and \ref{proximal} ($\tilde L_i \ \indep \ \bY_i \mid L_i$).
Therefore, I have
\begin{align}
    \tilde L_i^{(j)} \ \indep \ \bZ_i \mid L_i. \label{tildeL_Z_given_L}
\end{align}
Based on this, I have
\begin{align*}
    \P(\tilde L_i = l \mid \bY_i, L_i, \bZ_i) = \P(\tilde L_i = l \mid \bY_i, L_i) = \P(\tilde L_i = l \mid L_i) = \P(\tilde L_i = l \mid L_i, \bZ_i)
\end{align*}
where the first equality is by Assumption \ref{annotation}, the second is by Assumption \ref{proximal} (exclusion restriction), and the last equality is by Equation~\eqref{tildeL_Z_given_L}. Thus, I have
\begin{align}
    \tilde L_i^{(j)} \ \indep \ \bY_i \mid L_i, \bZ_i. \label{tildeL_Y_given_LZ}
\end{align}
Also,
\begin{align*}
    \P(\tilde L_i^{(j)} = l, T_i = t \mid L_i, \bZ_i) &= 
    \int_{\cY} \P(\tilde L_i^{(j)} = l, T_i = t \mid L_i, \bZ_i, \bY_i) dF(\bY_i \mid L_i, \bZ_i)\\
    &= \int_{\cY} \P(\tilde L_i^{(j)} = l, T_i = t \mid \bZ_i, \bY_i) dF(\bY_i \mid L_i, \bZ_i)\\
    &= \int_{\cY} \P(\tilde L_i^{(j)} = l \mid \bZ_i, \bY_i)
    \P(T_i = t \mid \bZ_i, \bY_i)
    dF(\bY_i \mid L_i, \bZ_i)\\
    &= \int_{\cY} \P(\tilde L_i^{(j)} = l \mid \bZ_i, L_i, \bY_i)
    \P(T_i = t \mid \bZ_i, \bY_i)
    dF(\bY_i \mid L_i, \bZ_i)\\
    &= \int_{\cY} \P(\tilde L_i^{(j)} = l \mid \bZ_i, L_i)
    \P(T_i = t \mid \bZ_i, \bY_i)
    dF(\bY_i \mid L_i, \bZ_i)\\
    &= \P(\tilde L_i^{(j)} = l \mid \bZ_i, L_i) \int_{\cY}
    \P(T_i = t \mid \bZ_i, \bY_i)
    dF(\bY_i \mid L_i, \bZ_i)\\
    &= \P(\tilde L_i^{(j)} = l \mid \bZ_i, L_i) \int_{\cY} 
    \P(T_i = t \mid L_i, \bZ_i, \bY_i)
    dF(\bY_i \mid L_i, \bZ_i)\\
    &= \P(\tilde L_i^{(j)} = l \mid \bZ_i, L_i) \P(T_i = t \mid L_i, \bZ_i)
\end{align*}
where the second and fourth equalities are by Assumption \ref{coding}, the third equality is by Assumption \ref{annotation}, and the fifth equality is by Equation~\eqref{tildeL_Y_given_LZ}. Therefore, I have
\begin{align}
    \tilde L_i^{(j)} \ \indep \ T_i \ \mid \ L_i, \bZ_i. \label{tildeL_t_given_L_Z}
\end{align}
Based on these independence conditions, I prove the identification. Now,
\begin{align*}
    &\mathbb{P}(\tilde L^{(1)}_i = l, \tilde L^{(2)}_i = m, T_i = t \mid \bZ_i, S_i = 1)\\ &= 
    \sum_{c = 0}^C
    \mathbb{P}(\tilde L^{(1)}_i = l, \tilde L^{(2)}_i = m, T_i = t\mid L_i = c,  \bZ_i , S_i = 1) \ \P(L_i = c \mid \bZ_i , S_i = 1) \\
    &= \sum_{c = 0}^C
    \mathbb{P}(\tilde L^{(1)}_i = l, \tilde L^{(2)}_i = m, T_i = t\mid L_i = c,  \bZ_i ) \ \P(L_i = c \mid \bZ_i )\\
    &= \sum_{c = 0}^C
    \mathbb{P}(\tilde L^{(1)}_i = l, \tilde L^{(2)}_i = m \mid L_i = c,  \bZ_i ) \ \P(T_i = t \mid  L_i = c,  \bZ_i )  \ \P(L_i = c \mid \bZ_i )\\
    &= \sum_{c = 0}^C
    \mathbb{P}(\tilde L^{(1)}_i = l, \tilde L^{(2)}_i = m \mid L_i = c) \ \P(T_i = t \mid  L_i = c,  \bZ_i )  \ \P(L_i = c \mid \bZ_i )\\
    &= \sum_{c = 0}^C
    \mathbb{P}(\tilde L^{(1)}_i = l \mid L_i = c) \ 
    \P(T_i = t \mid  L_i = c,  \bZ_i )  \ \P(L_i = c \mid \bZ_i ) \
    \mathbb{P}(\tilde L^{(2)}_i = m \mid L_i = c)
\end{align*}
where the second equality is by Assumptions \ref{labeling} and \ref{labeling2}, the third equality is by Equation~\eqref{tildeL_t_given_L_Z}, the fourth equality is by Equation~\eqref{tildeL_Z_given_L}, and the last equality is by Assumption \ref{proximal} (independent coding).

I can also have another decomposition as follows:
\begin{align*}
    &\mathbb{P}(\tilde L^{(1)}_i = l, \tilde L^{(2)}_i = m \mid \bZ_i , S_i = 1)\\
    &= \sum_{c = 0}^C \mathbb{P}(\tilde L^{(1)}_i = l, \tilde L^{(2)}_i = m \mid L_i = c, \bZ_i , S_i = 1) \ \P(L_i = c \mid \bZ_i , S_i = 1)\\
    &= \sum_{c = 0}^C \mathbb{P}(\tilde L^{(1)}_i = l, \tilde L^{(2)}_i = m \mid L_i = c, \bZ_i ) \ \P(L_i = c \mid \bZ_i )\\
    &= \sum_{c = 0}^C \mathbb{P}(\tilde L^{(1)}_i = l, \tilde L^{(2)}_i = m \mid L_i = c) \ \P(L_i = c \mid \bZ_i )\\
    &= \sum_{c = 0}^C \mathbb{P}(\tilde L^{(1)}_i = l \mid L_i = c) \ \P(L_i = c \mid \bZ_i ) \  \mathbb{P}(\tilde L^{(2)}_i = m \mid L_i = c)
\end{align*}
where the second equality is by Assumptions \ref{labeling} and \ref{labeling2} (i.e., random sampling of human annotations), the third equality is by Equation~\eqref{tildeL_Z_given_L}, and the last equality is by Assumption \ref{proximal} (independent coding).

Using the matrix notation, the factorization above can be written as
\begin{align*}
    &\bm{M}(t, \bm{Z}_i) = \bm{A}^{(1)} \bm{D}(t, \bm{Z}_i) \bm{D}(\bm{Z}_i) (\bm{A}^{(2)})^\tp = \bm{A}^{(1)} \bm{D}(\bm{Z}_i) \bm{D}(t, \bm{Z}_i)  (\bm{A}^{(2)})^\tp\\
    &\bm{M}(\bZ_i) = \bm{A}^{(1)}
    \bm{D}(\bm{Z}_i) (\bm{A}^{(2)})^\tp
\end{align*}
where
\begin{align*}
    \bm{D}(t, \bZ_i) &= \mathrm{diag}( [\mathbb{P}(T_i = t \mid L_i = 0, \bZ_i ), \cdots, \mathbb{P}(T_i = t \mid L_i = C, \bZ_i )]^\tp )\\
    \bm{D}(\bm{Z}_i) &= \mathrm{diag}( [\mathbb{P}(L_i = 0 \mid \bZ_i ), \cdots, \mathbb{P}(L_i = C \mid \bZ_i )]^\tp )\\ 
    \bm{A}^{(j)} &=
    \begin{bmatrix}
        \mathbb{P}(\tilde L^{(j)}_i = 0 \mid L_i = 0) & \cdots &  \mathbb{P}(\tilde L^{(j)}_i = 0 \mid L_i = C)\\
        \vdots & \ddots & \vdots \\
        \mathbb{P}(\tilde L^{(j)}_i = C \mid L_i = 0) & \cdots & \mathbb{P}(\tilde L^{(j)}_i = C \mid L_i = C)
    \end{bmatrix} \quad j = 1,2\\
    \bm{M}(t, \bm{Z}_i) &= 
    \begin{bmatrix}
        \mathbb{P}(\tilde L^{(1)}_i = 0, \tilde L^{(2)}_i = 0, T_i = t \mid \bZ_i , S_i = 1) & \cdots & \mathbb{P}(\tilde L^{(1)}_i = 0, \tilde L^{(2)}_i = C, T_i = t \mid \bZ_i , S_i = 1)\\
        \vdots & \ddots & \vdots \\
        \mathbb{P}(\tilde L^{(1)}_i = C, \tilde L^{(2)}_i = 0, T_i = t \mid \bZ_i , S_i = 1) & \cdots & \mathbb{P}(\tilde L^{(1)}_i = C, \tilde L^{(2)}_i = C, T_i = t \mid \bZ_i , S_i = 1)
    \end{bmatrix}\\
    \bm{M}(\bm{Z}_i) &= 
    \begin{bmatrix}
        \mathbb{P}(\tilde L^{(1)}_i = 0, \tilde L^{(2)}_i = 0 \mid \bZ_i , S_i = 1) & \cdots & \mathbb{P}(\tilde L^{(1)}_i = 0, \tilde L^{(2)}_i = C \mid \bZ_i , S_i = 1)\\
        \vdots & \ddots & \vdots \\
        \mathbb{P}(\tilde L^{(1)}_i = C, \tilde L^{(2)}_i = 0 \mid \bZ_i , S_i = 1) & \cdots & \mathbb{P}(\tilde L^{(1)}_i = C, \tilde L^{(2)}_i = C \mid \bZ_i , S_i = 1)
    \end{bmatrix}.
\end{align*}
Now, notice that by Assumption \ref{monotonicity}, I have
\begin{align*}
    \mathbb{P}(\tilde L_i^{(j)} = l \mid L_i = l) &> \sum_{l\neq l'} \mathbb{P}(\tilde L_i^{(j)} = l' \mid  L_i = l) \\
    \P(\tilde L^{(j)} = \tilde L^{(j')} = l, T_i = t\mid \bZ_i , S_i = 1) &> \sum_{l\neq l'}\P(\tilde L^{(j)} = l', \tilde L^{(j')} = l, T_i = t\mid \bZ_i , S_i = 1)
\end{align*}
and by marginalizing over $T_i = t$,
\begin{align*}
    \P(\tilde L^{(j)} = \tilde L^{(j')} = l\mid \bZ_i , S_i = 1) > \sum_{l\neq l'}\P(\tilde L^{(j)} = l', \tilde L^{(j')} = l\mid \bZ_i , S_i = 1).
\end{align*}
These indicate the strong diagonal dominance, so $\bm{A}^{(j)}$, $\bm{M}(t, \bZ_i)$, and $\bm{M}(\bZ_i)$ are all invertible. Thus, I can write
\begin{align*}
    &\bm{M}(t, \bm{Z}_i) \bm{M}(\bm{Z}_i)^{-1} = \bm{A}^{(1)} \bm{D}(t, \bm{Z}_i) (\bm{A}^{(1)})^{-1}\\
    &\bm{M}(\bm{Z}_i)^\tp \bm{M}(t, \bm{Z}_i)^{-\tp} = \bm{A}^{(2)} \bm{D}(t)  (\bm{A}^{(2)})^{-1}
\end{align*}
where $X^{-\tp}$ denotes the transpose and inverse of the matrix $X$. Hence, I can use the standard eigenvalue decomposition to identify $\bm{A}^{(j)}$ and $\bm{D}(t, \bm{Z}_i)$ / $\bm{D}(\bm{Z}_i)$ up to scale and permutation. In this case, however, for the scale, since the column of $\bm{A}^{(j)}$ sums up to 1, the scale is fixed. The only problem is the permutation, but under Assumption \ref{monotonicity}, it is guaranteed that the diagonal element of $\bm{A}^{(j)}$ is the largest and unique, which fixes the challenge of permutation. As a result, I can pinpoint $\mathbb{P}(\tilde L^{(1)}_i = l \mid L_i = c)$ and $\mathbb{P}(\tilde L^{(2)}_i = l \mid L_i = c)$. Notice that I can also have
\begin{align*}
    &\mathbb{P}(\tilde L^{(1)}_i = l, \tilde L^{(2)}_i = m \mid T_i = t, \bZ_i , S_i = 1)\\
    &= \sum_{c = 0}^C \mathbb{P}(\tilde L^{(1)}_i = l, \tilde L^{(2)}_i = m \mid L_i = c, T_i = t, \bZ_i , S_i = 1) \ \P(L_i = c \mid T_i = t, \bZ_i , S_i = 1)\\
    &= \sum_{c = 0}^C \mathbb{P}(\tilde L^{(1)}_i = l, \tilde L^{(2)}_i = m \mid L_i = c, T_i = t, \bZ_i ) \ \P(L_i = c \mid T_i = t, \bZ_i )\\
    &= \sum_{c = 0}^C \mathbb{P}(\tilde L^{(1)}_i = l, \tilde L^{(2)}_i = m \mid L_i = c) \ \P(L_i = c \mid T_i = t, \bZ_i )\\
    &= \sum_{c = 0}^C \mathbb{P}(\tilde L^{(1)}_i = l \mid L_i = c) \ \P(L_i = c \mid T_i = t, \bZ_i ) \  \P(\tilde L^{(2)}_i = m \mid L_i = c)
\end{align*}
where the second one is by Assumptions \ref{labeling} and \ref{labeling2} (i.e., random sampling of human annotations), the third one is by exclusion restriction and conditional ignorability, the last one is by independent coding assumption. Hence, once $\mathbb{P}(\tilde L^{(1)}_i = l \mid L_i = c)$ and $\mathbb{P}(\tilde L^{(2)}_i = l \mid L_i = c)$ are obtained, I can insert them into the decomposition
\begin{equation}
    \begin{aligned}
        \mathbb{P}(\tilde L^{(1)}_i = l, \tilde L^{(2)}_i &= m \mid T_i = t, \bZ_i , S_i = 1)\\
        &= \sum_{c = 0}^C \mathbb{P}(\tilde L^{(1)}_i = l \mid L_i = c) \; \mathbb{P}(L_i = c \mid T_i = t, \bZ_i )\; \mathbb{P}(\tilde L^{(2)}_i = m \mid L_i = c)
    \end{aligned}
\end{equation}
and obtain $\mathbb{P}(L_i = c \mid T_i = t, \bZ_i )$ and can obtain the quantity of interest, since
\begin{align*}
    \Psi_{t,c} = \int_{\cZ} \mathbb{P}(L_i = c \mid T_i = t, \bZ_i) dF(\bZ_i).
\end{align*} 
In matrix form, the factorization above is written as
\begin{align*}
    \bm{B}(t, \bm{Z}_i) = \bm{A}^{(1)} \bm{\theta}(t, \bm{Z}_i) \{ \bm{A}^{(2)} \}^\tp
\end{align*}
where
\begin{align*}
    \bm{B}(t, \bm{Z}_i) &= 
    \begin{bmatrix}
        \mathbb{P}(\tilde L^{(1)}_i = 0, \tilde L^{(2)}_i = 0 \mid  T_i = t, \bZ_i , S_i = 1) & \cdots & \mathbb{P}(\tilde L^{(1)}_i = 0, \tilde L^{(2)}_i = C \mid T_i = t, \bZ_i , S_i = 1)\\
        \vdots & \ddots & \vdots \\
        \mathbb{P}(\tilde L^{(1)}_i = C, \tilde L^{(2)}_i = 0 \mid T_i = t, \bZ_i , S_i = 1) & \cdots & \mathbb{P}(\tilde L^{(1)}_i = C, \tilde L^{(2)}_i = C \mid T_i = t, \bZ_i , S_i = 1)
    \end{bmatrix}\\
   \bm{\theta}(t, \bm{Z}_i) &= \mathrm{diag}( [\mathbb{P}(L_i = 0 \mid T_i = t, \bZ_i ), \cdots, \mathbb{P}(L_i = C \mid T_i = t, \bZ_i )]^\tp )
\end{align*}
\begin{comment}
While each element of $\bm{B}(t)$ is observed only for $S_i = 1$, I can apply the similar proof as in Proposition \ref{identification} and obtain
\begin{align*}
    \mathbb{P}(\tilde L^{(1)}_i = l, \tilde L^{(2)}_i = m \mid  T_i = t) = 
    \int_{\mathcal{W}} \mathbb{P}(\tilde L^{(1)}_i = l \mid \bW_i, S_i = 1) \ \mathbb{P}(\tilde L^{(2)}_i = l \mid \bW_i, S_i = 1) \ dF(\bW_i \mid T_i = t)
\end{align*}
and this holds regardless of the choice of $\bW_i = \boldf(\bY_i)$ (see proof of Proposition \ref{identification}).     
\end{comment}
Hence, I can identify $\bm{\theta}(t, \bm{Z}_i)$ by
\begin{align*}
    \bm{\theta}(t, \bm{Z}_i) = (\bm{A}^{(1)})^{-1} \bm{B}(t, \bm{Z}_i) ( \bm{A}^{(2)})^{-\tp}
\end{align*}
\end{proof}

\newpage
\subsection{Theoretical Properties of Surrogate Outcome}\label{surrogate_proof}
I start with proving Equations~\eqref{unbiased_surroagte}. Notice that
\begin{align*}
    &\E[M_c(\tilde L_i^{(1)}, \tilde L_i^{(2)}) \mid L_i]\\
    &= \biggl( \frac{\P(\tilde L^{(1)} = c \mid L_i) - \P(\tilde L^{(1)} = c \mid L_i \neq c) }{
    \P(\tilde L^{(1)} = c \mid L_i = c) - \P(\tilde L^{(1)} = c \mid L_i \neq c)
    }\biggl)
    \biggl( \frac{\P(\tilde L^{(2)} = c \mid L_i) - \P(\tilde L^{(2)} = c \mid L_i \neq c) }{
    \P(\tilde L^{(2)} = c \mid L_i = c) - \P(\tilde L^{(2)} = c \mid L_i \neq c)
    }\biggl)\\
    &= \mathbbm{1}\{L_i = c\} \times \mathbbm{1}\{L_i = c\} = \mathbbm{1}\{L_i = c\}
\end{align*}
where the second equality is because when $L_i \neq c$, each component becomes 0 (the numerator becomes 0), whereas when $L_i = c$ each component becomes 1, which means that each component can be represented as an indicator function. This means that $M_c(\cdot, \cdot)$ is an unbiased surrogate for the unobserved indicator function $\mathbbm{1}\{L_i = c\}$ given the true label. As $\tilde L^{(j)}_i$ is observed for all $j \in \{1,2\}$ and I can estimate the probability $\P(\tilde L^{(j)}_i \mid L_i)$ as seen in the proof of Proposition \ref{identification2}, $M_c(\tilde L_i^{(1)}, \tilde L_i^{(2)})$ is identifiable from the observed data. Also, notice that
\begin{align*}
    &\E[M_c(\tilde L_i^{(1)}, \tilde L_i^{(2)}) \mid L_i, \bY_i]\\
    &= \biggl( \frac{\P(\tilde L^{(1)} = c \mid L_i, \bY_i) - \P(\tilde L^{(1)} = c \mid L_i \neq c) }{
    \P(\tilde L^{(1)} = c \mid L_i = c) - \P(\tilde L^{(1)} = c \mid L_i \neq c)
    }\biggl)
    \biggl( \frac{\P(\tilde L^{(2)} = c \mid L_i, \bY_i) - \P(\tilde L^{(2)} = c \mid L_i \neq c) }{
    \P(\tilde L^{(2)} = c \mid L_i = c) - \P(\tilde L^{(2)} = c \mid L_i \neq c)
    }\biggl)\\
    &= \biggl( \frac{\P(\tilde L^{(1)} = c \mid L_i) - \P(\tilde L^{(1)} = c \mid L_i \neq c) }{
    \P(\tilde L^{(1)} = c \mid L_i = c) - \P(\tilde L^{(1)} = c \mid L_i \neq c)
    }\biggl)
    \biggl( \frac{\P(\tilde L^{(2)} = c \mid L_i) - \P(\tilde L^{(2)} = c \mid L_i \neq c) }{
    \P(\tilde L^{(2)} = c \mid L_i = c) - \P(\tilde L^{(2)} = c \mid L_i \neq c)
    }\biggl)\\
    &= \mathbbm{1}\{L_i = c\} \times \mathbbm{1}\{L_i = c\} = \mathbbm{1}\{L_i = c\}
\end{align*}
where the second equality is because $\tilde L^{(j)}_i \; \indep \; \bY_i \mid L_i$ (i.e., exclusion restriction). Hence, by exploiting the independence, I have
\begin{align*}
    \mathbbm{1}\{L_i = c\} &= \E[M_c(\tilde L_i^{(1)}, \tilde L_i^{(2)}) \mid L_i, \bY_i]\\
    &= \E[M_c(\tilde L_i^{(1)}, \tilde L_i^{(2)}) \mid \bY_i] \\
    &= \E[M_c(\tilde L_i^{(1)}, \tilde L_i^{(2)}) \mid \bY_i, \bZ_i] %\\
    %&= \E[M_c(\tilde L_i^{(1)}, \tilde L_i^{(2)}) \mid \boldf(\bY_i, \bZ_i)] = \tilde\mu_c(\boldf(\bY_i, \bZ_i)) %\\
    %&= \E[M_c(\tilde L_i^{(1)}, \tilde L_i^{(2)}) \mid T_i, \bY_i]
\end{align*}
where the second equality is by Assumption \ref{coding} and the third equality is by Assumption \ref{annotation}. Therefore,
\begin{align*}
    \tilde \mu_c(\boldf(\bY_i, \bZ_i)) &= \E[M_c(\tilde L_i^{(1)}, \tilde L_i^{(2)}) \mid \boldf(\bY_i, \bZ_i)]\\
    &= \E[ \E[M_c(\tilde L_i^{(1)}, \tilde L_i^{(2)}) \mid \boldf(\bY_i, \bZ_i), \bY_i, \bZ_i] \mid \boldf(\bY_i, \bZ_i)]\\
    &= \E[ \E[M_c(\tilde L_i^{(1)}, \tilde L_i^{(2)}) \mid \bY_i, \bZ_i] \mid \boldf(\bY_i, \bZ_i)]\\
    &= \E[ \mathbbm{1}\{L_i = c\} \mid \boldf(\bY_i, \bZ_i)]\\
    &= \mu_c(\boldf(\bY_i, \bZ_i))
\end{align*}
which proves the claim mentioned after Equation~\eqref{unbiased_surroagte}.

\newpage
\subsection{Proof of Theorem \ref{theorem_eif2} (Influence function without true labels)} \label{proof_eif2}

Following Theorem \ref{eif}, I first derive the influence function for $\Psi_{t,c}$ with the full data.
\begin{lemma}[Influence function]
The influence function for the distribution of the covariate-adjusted outcome under $T_i = t$ is given by
\begin{align*}
    &\psi_{t,c}^F(T_i, L_i, S_i, \bY_i, \bZ_i; \boldf, \mu_c, \pi_t, \rho_t, \Bar{m}_{tc}, \Psi_{t,c})\\
    &= \frac{\mathbbm{1}\{S_i = 1\}}{\P(S_i = 1)} \cdot  \frac{\rho_t(\boldf( \bY_i, \bZ_i))}{\pi_t(\bZ_i)} \biggl(\mathbbm{1}\{L_i = c\} - \mu_c( \boldf(\bY_i, \bZ_i))\biggr)\\
    &\qquad\qquad + 
    \frac{\mathbbm{1}\{T_i = t\} }{\pi_t(\bZ_i)} \biggl( \mu( \boldf(\bY_i, \bZ_i)) - \Bar{m}_t(\bZ_i) \biggr) + \Bar{m}_{tc}(\bZ_i) - \Psi_t.
\end{align*}
where $\mu_{c}(\boldf(\bY_i)) = \P[L_i = c \mid \boldf(\bY_i)]$.
\end{lemma}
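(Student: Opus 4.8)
The plan is to recognize that the full-data problem is an exact instance of the perfect-annotation problem of Section \ref{sec::method}, so that the efficient influence function follows from Theorem \ref{eif} by a single substitution rather than a fresh derivation. Concretely, the target $\Psi_{t,c} = \int \P(L_i = c \mid T_i = t, \bZ_i)\, dF(\bZ_i) = \E\bigl[\E[\mathbbm{1}\{L_i = c\} \mid T_i = t, \bZ_i]\bigr]$ is precisely the covariate-adjusted outcome mean $\Psi_t$ of Proposition \ref{identification} with the scalar label $\tilde L_i$ replaced by the binary outcome $\mathbbm{1}\{L_i = c\}$. Because $L_i = g_L(\bY_i)$ (Assumption \ref{coding}) is a deterministic function of the text, this indicator is observed exactly whenever $S_i = 1$, so it plays the role that $\tilde L_i$ plays in Theorem \ref{eif}, and its outcome regression is $\mu_c(\boldf(\bY_i), \bZ_i) = \E[\mathbbm{1}\{L_i = c\} \mid \bW_i, \bZ_i] = \P(L_i = c \mid \bW_i, \bZ_i)$.

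The steps I would carry out are as follows. First, I would verify that the three structural ingredients used in the derivation of Theorem \ref{eif} are still in force: the surrogacy independence $T_i \indep \bY_i \mid \bW_i, \bZ_i$ (Assumption \ref{sufficiency}, which gives $T_i \indep \mathbbm{1}\{L_i = c\} \mid \bW_i, \bZ_i$ through Assumption \ref{coding}); the random-sampling marginal independence $\bY_i \indep S_i$, hence $L_i \indep S_i$ (Assumption \ref{labeling}); and the conditional version $\bY_i \indep S_i \mid \bW_i, \bZ_i$ already established inside the proof of Proposition \ref{identification}. Second, with the substitution $\tilde L_i \mapsto \mathbbm{1}\{L_i = c\}$, $\mu \mapsto \mu_c$, $\bar m_t \mapsto \bar m_{t,c}$, and $\Psi_t \mapsto \Psi_{t,c}$ in place, I would run the point-mass contamination (Gateaux derivative) computation of Appendix \ref{proof_eif} verbatim: perturbing along $\cP_\epsilon = (1-\epsilon)\cP + \epsilon \tilde\cP$ and differentiating the identified functional produces exactly the three contributions — the $S_i$-weighted outcome-residual term, the $T_i$-weighted regression-residual term, and the outer mean term — since the same two simplifications ($S_i \indep \bY_i$, and the $\rho_t/\pi_t$ ratio emerging from the conditional-mean influence function) go through unchanged.

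Finally I would confirm pathwise differentiability and hence that the candidate is the \emph{efficient} influence function, which is if anything easier here than in Theorem \ref{eif}: the outcome $\mathbbm{1}\{L_i = c\}$ is bounded in $[0,1]$, so the primitive moment condition (the analogue of Assumption \ref{reg}(a)) holds automatically and every indicator-weighted term has finite variance. I do not expect a genuine obstacle; the only point requiring care is bookkeeping — ensuring that the independence relations invoked in the original derivation are precisely those guaranteed by the maintained assumptions of this section (Assumptions \ref{coding}, \ref{labeling}, \ref{sufficiency}) rather than by the now-dropped perfect-annotation Assumption \ref{perfect}, which is never actually needed once $\mathbbm{1}\{L_i = c\}$ is taken to be the full-data outcome.
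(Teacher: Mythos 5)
Your proposal is correct and matches the paper's own proof: the paper likewise rewrites $\Psi_{t,c}$ as the covariate-adjusted mean of the full-data outcome $\mathbbm{1}\{L_i = c\}$ and then obtains the result by substituting $\E_{\cP}[\mathbbm{1}\{L_i = c\} \mid \boldf(\bY_i), \bZ_i]$ for the outcome model in the point-mass contamination derivation of Theorem \ref{eif}. Your additional bookkeeping (verifying the independence relations come from Assumptions \ref{coding}, \ref{labeling}, \ref{sufficiency} alone, and that boundedness of the indicator gives pathwise differentiability) is consistent with, and slightly more explicit than, what the paper records.
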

\begin{proof}
    Following the proof of Theorem \ref{eif}, I use the point-mass contamination approach. Now, the identification formula is
    \begin{align*}
        \Psi_{t,c}(\cP) &= \E_{\cP} \biggl[\E_{\cP}[\mathbbm{1}\{L_i = c\} \mid T_i = t, \bZ_i] \biggr]\\
        &= \E_{\cP}\biggl[\E_{\cP}[\E_{\cP}[\mathbbm{1}\{L_i = c\} \mid \boldf(\bY_i, \bZ_i)] \mid T_i = t, \bZ_i] \biggr]\\
        &= \int_{\cZ} \int_{\cY} \E_{\cP}[\mathbbm{1}\{L_i = c\} \mid \boldf(\bY_i, \bZ_i)]  dF(\bY_i \mid T_i = t, \bZ_i) dF(\bZ_i).
    \end{align*}
    Hence, by replacing only the original outcome model with $\E_{\cP}[\mathbbm{1}\{L_i = c\} \mid \boldf(\bY_i, \bZ_i)]$ in the proof of Theorem \ref{eif}, I obtain the following:
    \begin{align*}
    &\psi_{t,c}^F(T_i, L_i, S_i, \bY_i, \bZ_i; \boldf, \mu_c, \pi_t, \rho_t, \Bar{m}_{tc}, \Psi_{t,c})\\
    &= \frac{\mathbbm{1}\{S_i = 1\}}{\P(S_i = 1)} \cdot \frac{\rho_t(\boldf(\bY_i))}{\pi_t(\bZ_i)} \biggl(\mathbbm{1}\{L_i = c\} - \mu_c(\boldf(\bY_i))\biggr)\\
    &\qquad \qquad + \frac{\mathbbm{1}\{T_i = t\} }{\pi_t(\bZ_i)} \biggl( \mu_c(\bm{f}(\bY_i)) - \Bar{m}_{tc}(\bZ_i) \biggr) + \Bar{m}_{tc}(\bZ_i) - \Psi_{t,c}.
\end{align*}
\end{proof}

\noindent Based on this result, I have a guess that if I replace the hidden outcome $L$ with some estimable but unbiased substitute, it works as an influence function. To formally prove this, I first derive the most important lemma that characterizes the relationship between the observed data influence function and the full data influence function.

\begin{lemma}[Characterization of the observed data influence function] \label{characterization}
Any observed influence function $Q$ and any full data influence function $Q^F$ must satisfy
\begin{equation}
    \begin{aligned}
        \E[Q \mid S_i] &= \E[Q^F \mid S_i]\\
    \E[Q \mid \bY_i, T_i, \bZ_i] &= \E[Q^F \mid \bY_i, T_i, \bZ_i]
    \end{aligned}
\end{equation}
\end{lemma}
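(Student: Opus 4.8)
The plan is to use the single defining property of influence functions: for any regular parametric submodel with score $S$ lying in the model's tangent space, an influence function $\varphi$ for the target functional satisfies $\E[\varphi\,S] = \dot\Psi_{t,c}(S)$, the pathwise derivative of $\Psi_{t,c}$ along that submodel. Since $Q$ and $Q^F$ are influence functions for the \emph{same} functional $\Psi_{t,c}$, they must produce identical inner products against any score that belongs simultaneously to the observed-data and the full-data tangent spaces. The two conditional-expectation identities will then come out by testing against two carefully chosen shared subspaces of scores, and by recalling that although $Q$ and $Q^F$ are individually non-unique, all influence functions share the same projection onto the tangent space, so $\E[Q\,S]$ and $\E[Q^F\,S]$ are pinned down for $S$ in the tangent space.

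The two subspaces I would work with are the mean-zero functions of the coarsening variable, $\Lambda_S = \{\alpha_0(S_i): \E[\alpha_0(S_i)] = 0\}$, and the mean-zero functions of the always-observed design block, $\Lambda_{\bZ T \bY} = \{g(\bZ_i,T_i,\bY_i): \E[g(\bZ_i,T_i,\bY_i)] = 0\}$. Both are genuinely common to the full-data and observed-data experiments: $S_i$, $\bZ_i$, $T_i$, and $\bY_i$ are all fully observed, so perturbing the marginal law of $S_i$, or the joint law of $(\bZ_i,T_i,\bY_i)$, yields the identical score in either experiment and induces the identical perturbation of $\Psi_{t,c}$. Consequently, for every $\alpha_0 \in \Lambda_S$,
\[
\E[(Q - Q^F)\,\alpha_0(S_i)] = \E[Q\,\alpha_0(S_i)] - \E[Q^F\,\alpha_0(S_i)] = \dot\Psi_{t,c}(\alpha_0) - \dot\Psi_{t,c}(\alpha_0) = 0,
\]
and, by the same reasoning, $\E[(Q - Q^F)\,g(\bZ_i,T_i,\bY_i)] = 0$ for every mean-zero $g$.

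I would then finish with the standard projection argument. Because $\E[Q] = \E[Q^F] = 0$, the function $\E[Q - Q^F \mid S_i]$ is itself a mean-zero element of $\Lambda_S$; substituting it for $\alpha_0$ gives $\E\big[(\E[Q - Q^F \mid S_i])^2\big] = 0$, hence $\E[Q \mid S_i] = \E[Q^F \mid S_i]$ almost surely. Taking instead $g = \E[Q - Q^F \mid \bY_i,T_i,\bZ_i]$, which is a mean-zero function of $(\bY_i,T_i,\bZ_i)$ and therefore lies in $\Lambda_{\bZ T \bY}$, the same computation yields $\E[Q \mid \bY_i,T_i,\bZ_i] = \E[Q^F \mid \bY_i,T_i,\bZ_i]$, completing the lemma.

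The main obstacle is not any calculation but the justification in the middle step: I must argue rigorously that $\Lambda_S$ and $\Lambda_{\bZ T \bY}$ sit inside \emph{both} tangent spaces and carry matching pathwise derivatives. This rests on two structural facts already available in the excerpt — that $f(S_i)$ factors out of the joint law, guaranteed by the independence $S_i \indep \{T_i,\bY_i,\bZ_i,\tilde L_i^{(1)},\tilde L_i^{(2)}\}$ from Assumptions~\ref{labeling} and~\ref{labeling2}, and that the joint law of the fully observed block $(\bZ_i,T_i,\bY_i)$ can be perturbed freely (the conditional-independence restrictions of the model concern $S_i$, $L_i$, and the $\tilde L_i^{(j)}$, not the internal law of $(\bZ_i,T_i,\bY_i)$), so that its scores in the coarsened experiment coincide with those in the full experiment. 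Once these identifications are secured, the value of $\dot\Psi_{t,c}$ never needs to be evaluated explicitly, and the conclusion holds for an arbitrary admissible choice of the non-unique influence functions $Q$ and $Q^F$.
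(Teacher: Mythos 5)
Your proof is correct and follows essentially the same route as the paper's: both pair $Q$ and $Q^F$ against scores generated by perturbing only the fully observed components (the marginal of $S_i$ and the joint law of the $(\bZ_i, T_i, \bY_i)$ block), then apply the projection trick — substituting $\E[Q - Q^F \mid \cdot\,]$ as the test function — to turn the resulting orthogonality into equality of conditional expectations. The only cosmetic differences are that the paper splits your block of mean-zero functions of $(\bZ_i,T_i,\bY_i)$ into three variationally independent pieces $g_z + g_t + g_y$ (producing two extra conditions it then notes are redundant) and reaches the pairing identity via the coarsened-data relation $\E[Q\,\E[U^F \mid \text{observed data}]] = \E[Q\,U^F]$, whereas you argue directly that these scores are realized identically in both experiments.
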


\begin{proof}
This proof goes analogous to that of Lemma 2 in \cite{zhou2024causal}. Let $G_{S_i}(\cdot)$ be a coarsening operator with coarsening variable $S_i$, which just means that
\begin{align*}
    G_{S_i}(\{\tilde L^{(j)}_i\}_{j = 1}^J) =
    \begin{cases}
    \{\tilde L^{(1)}_i, \cdots, \tilde L^{(j)}_i\} \quad \text{ if } S_i = 1\\
    \emptyset \quad \text{otherwise}
    \end{cases}
\end{align*} 
and $Q = Q(S_i, T_i, \bY_i, \bZ_i, G_{S_i}(\tilde L^{(1)}_i, \tilde L^{(2)}_i))$ be any observed data influence function. Then, $Q$ must satisfy the following \citep{tsiatis_semiparametric_2006, ichimura_influence_2022}:
\begin{align*}
    \lim_{\epsilon \to 0} \frac{\Psi_{t,c}(\cP_\epsilon) - \Psi_{t,c}(\cP) }{\epsilon} = \E\biggl[ Q \; \mathbb{E}[U^F \mid S_i, \bZ_i,  T_i, \bY_i, G_{S_i}(\tilde L^{(1)}_i, \tilde L^{(2)}_i)] \biggr] = \E\biggl[ Q \ U^F\biggr]
\end{align*}
where $U^F$ is the full data oracle score (see Chapter 7 of \citealt{tsiatis_semiparametric_2006} for the theoretical details of the observed data score function). Similarly, for the full data influence function, denoted as $Q^F$, I also have
\begin{align*}
    \lim_{\epsilon \to 0} \frac{\Psi_{t,c}(\cP_\epsilon) - \Psi_{t,c}(\cP) }{\epsilon} = \E\biggl[ Q^F U^F\biggr].
\end{align*}
Now, by factorization, $U^F$ (i.e., the full data score I considered in the previous setting) is written as
\begin{align*}
    U^F = g_s(S_i) +  g_z(\bZ_i) +   g_t(T_i, \bZ_i) + g_y(\bY_i, T_i, \bZ_i).
\end{align*}
for any $g_s \in \Lambda_S^F$, $g_z \in \Lambda_Z^F$, $g_t \in \Lambda_T^F$, and $g_y \in \Lambda_Y^F$, where
\begin{align*}
    &\Lambda_{S}^F = \biggl\{ g_s(S_i) \in \mathcal{H}^F  : \mathbb{E}[g_s(S_i)] = 0 \biggr\}\\
    &\Lambda_{Z}^F = \biggl\{ g_z(\bZ_i) \in \mathcal{H}^F: \mathbb{E}[g_z(\bZ_i)] = 0 \biggr\}\\
    &\Lambda_{T}^F = \biggl\{ g_t(T_i, \bZ_i) \in \mathcal{H}^F: \mathbb{E}[g_t(T_i, \bZ_i) \mid \bZ_i] = 0 \biggr\}\\
    &\Lambda_{\bY}^F = \biggl\{ g_{y}(\bY_i, T_i, \bZ_i) \in \mathcal{H}^F: \mathbb{E}[g_{y}(\bY_i, T_i, \bZ_i) \mid T_i, \bZ_i] = 0 \biggr\},
\end{align*}
and $\mathcal{H}^F$ is the Hilbert space of the oracle data (the data I are working with in the previous section's setting). Hence,
\begin{align*}
    &\lim_{\epsilon \to 0} \frac{\Psi_{t,c}(\cP_\epsilon) - \Psi_{t,c}(\cP) }{\epsilon} \\
    &= 
    \begin{cases}
        \E\biggl[ \E[Q \mid S_i] g_s(S_i) + \E[Q \mid \bZ_i]g_z(\bZ_i) + \E[Q \mid T_i]\ g_t(T_i, \bZ_i) + \E[Q \mid T_i, \bY_i]\ g_y(\bY_i, T_i, \bZ_i) \biggr]\\
        \E\biggl[ \E[Q^F \mid S_i] g_s(S_i) + \E[Q^F \mid \bZ_i]g_z(\bZ_i)  + \E[Q^F \mid T_i]\ g_t(T_i, \bZ_i) + \E[Q^F \mid T_i, \bY_i]\ g_y(\bY_i, T_i, \bZ_i) \biggr].
    \end{cases}
\end{align*}
Thus, to align these two expressions for any $g_s \in \Lambda_S^F$, $g_s \in \Lambda_T^F$, and $g_y \in \Lambda_Y^F$, I must have
\begin{align*}
    \E[Q \mid S_i] &= \E[Q^F \mid S_i],\\
    \E[Q \mid \bZ_i] &= \E[Q^F \mid \bZ_i],\\
    \E[Q \mid T_i, \bZ_i] &= \E[Q^F \mid T_i, \bZ_i],\\
    \E[Q \mid \bY_i, T_i, \bZ_i] &= \E[Q^F \mid \bY_i, T_i, \bZ_i].
\end{align*}
Note that the second and third conditions are redundant since the fourth implies the second and the third by the law of iterated expectations.
\end{proof}

\noindent Using these derived results, I prove Theorem \ref{theorem_eif2}.

\begin{proof}
From Lemma \ref{characterization}, I know the relationship between the full data influence function and the observed data influence function. Now, let
\begin{align*}
    Q^F &= \psi_{t,c}^F(T_i, L_i, S_i, \bY_i, \bZ_i; \boldf, \mu_c, \pi, \Psi_{t,c})
\end{align*}
and check if the expression
\begin{align*}
    \tilde{\psi}_{t,c}(T_i, \tilde L^{(1)}_i, \tilde L^{(2)}_i, S_i, \bY_i, \bZ_i&; \boldf, \tilde \mu, \pi_t, M_c, \Bar{m}_{tc}, \Psi_{t,c})\\
    &= \frac{\mathbbm{1}\{S_i = 1\}}{\P(S_i = 1)} \cdot  \frac{\rho_t(\boldf( \bY_i, \bZ_i))}{\pi_t(\bZ_i)} \biggl(M_c(\tilde L^{(1)}_i, \tilde L_i^{(2)}) - \tilde\mu_c( \boldf(\bY_i, \bZ_i))\biggr)\\
    &\quad\quad\quad + 
    \frac{\mathbbm{1}\{T_i = t\} }{\pi_t(\bZ_i)} \biggl( \tilde\mu_c( \boldf(\bY_i, \bZ_i)) - \Bar{m}_{t,c}(\bZ_i) \biggr) + \Bar{m}_{t,c}(\bZ_i) - \Psi_{t,c}
\end{align*}
satisfies the condition in the previous lemma. Now, because outcome model with the surrogate outcome model (i.e., $\tilde \mu_c$) is mathematically equivalent to $\mu_c$,
\begin{align*}
    \tilde{\psi}_{t,c}(\cdot) - \psi_{t,c}^F(\cdot) = \frac{\mathbbm{1}\{S_i = 1\}}{\P(S_i = 1)} \cdot \frac{\rho_t(\boldf( \bY_i, \bZ_i))}{\pi_t(\bZ_i)} \biggl\{ M_c(L_i^{(1)}, L_i^{(2)}) - \mathbbm{1}\{L_i = c\}\biggr\}.
\end{align*}
Thus,
\begin{itemize}
    \item $\E[\tilde{\psi}_{t,c}(\cdot) - \psi_{t,c}^F(\cdot)\mid S_i] = 0$: Now,
    \begin{align*}
        &\E[\tilde{\psi}_{t,c}(\cdot) - \psi_{t,c}^F(\cdot)\mid S_i]\\
        &= \E \biggl[ 
        \frac{\mathbbm{1}\{S_i = 1\}}{\P(S_i = 1)} \cdot \frac{\rho_t(\boldf( \bY_i, \bZ_i))}{\pi_t(\bZ_i)} \biggl\{ M_c(L_i^{(1)}, L_i^{(2)}) - \mathbbm{1}\{L_i = c\}\biggr\}
        \mid S_i \biggr]\\
        &= \frac{\mathbbm{1}\{S_i = 1\}}{\P(S_i = 1)} \cdot \E \biggl[ 
         \frac{\rho_t(\boldf( \bY_i, \bZ_i))}{\pi_t(\bZ_i)} \biggl\{ M_c(L_i^{(1)}, L_i^{(2)}) - \mathbbm{1}\{L_i = c\}\biggr\} \biggr]\\
        &= \frac{\mathbbm{1}\{S_i = 1\}}{\P(S_i = 1)} \cdot \E \biggl[ \E\biggl\{
         \frac{\rho_t(\boldf( \bY_i, \bZ_i))}{\pi_t(\bZ_i)} \biggl\{ M_c(L_i^{(1)}, L_i^{(2)}) - \mathbbm{1}\{L_i = c\}\biggr\} \mid \bY_i, \bZ_i \biggr\} \biggr]\\
         &= \frac{\mathbbm{1}\{S_i = 1\}}{\P(S_i = 1)} \cdot \E \biggl[ \frac{\rho_t(\boldf( \bY_i, \bZ_i))}{\pi_t(\bZ_i)} \E\biggl\{
          \biggl\{ M_c(L_i^{(1)}, L_i^{(2)}) - \mathbbm{1}\{L_i = c\}\biggr\} \mid \bY_i, \bZ_i \biggr\} \biggr]\\
          &= \frac{\mathbbm{1}\{S_i = 1\}}{\P(S_i = 1)} \cdot \E \biggl[ \frac{\rho_t(\boldf( \bY_i, \bZ_i))}{\pi_t(\bZ_i)}
          \biggl\{ \E[M_c(L_i^{(1)}, L_i^{(2)}) \mid \bY_i, \bZ_i] - \E[\mathbbm{1}\{L_i = c\} \mid \bY_i, \bZ_i] \biggr\} \biggr]\\
          &= \frac{\mathbbm{1}\{S_i = 1\}}{\P(S_i = 1)} \cdot \E \biggl[ \frac{\rho_t(\boldf( \bY_i, \bZ_i))}{\pi_t(\bZ_i)}
          \biggl\{ \E[M_c(L_i^{(1)}, L_i^{(2)}) \mid \bY_i] - \E[\mathbbm{1}\{L_i = c\} \mid \bY_i, \bZ_i] \biggr\} \biggr]\\
          &= \frac{\mathbbm{1}\{S_i = 1\}}{\P(S_i = 1)} \cdot \E \biggl[ \frac{\rho_t(\boldf( \bY_i, \bZ_i))}{\pi_t(\bZ_i)} 
          \biggl\{ \mathbbm{1}\{L_i = c\} - \mathbbm{1}\{L_i = c\} \biggr\} \biggr] = 0
    \end{align*}
    where the third line is by the law of iterated expectation, 
    the sixth line is by Assumption \ref{annotation},
    and the last line is by the property of the surrogate outcomes and Assumption \ref{coding}.
    \item $\E[\tilde{\psi}_{t,c}(\cdot) - \psi_{t,c}^F(\cdot)\mid \bY_i, T_i, \bZ_i] = 0$: Similarly,
    \begin{align*}
        &\E[\tilde{\psi}_{t,c}(\cdot) - \psi_{t,c}^F(\cdot)\mid \bY_i, T_i, \bZ_i]\\
        &= \E\biggl[\frac{\mathbbm{1}\{S_i = 1\}}{\P(S_i = 1)} \cdot \frac{\rho_t(\boldf( \bY_i, \bZ_i))}{\pi_t(\bZ_i)} \biggl\{ M_c(L_i^{(1)}, L_i^{(2)}) - \mathbbm{1}\{L_i = c\}\biggr\}
        \mid \bY_i, T_i, \bZ_i \biggr]\\
        &= \frac{\rho_t(\boldf( \bY_i, \bZ_i))}{\pi_t(\bZ_i)} \E\biggl[\frac{\mathbbm{1}\{S_i = 1\}}{\P(S_i = 1)} \cdot \biggl\{ M_c(L_i^{(1)}, L_i^{(2)}) - \mathbbm{1}\{L_i = c\}\biggr\}
        \mid \bY_i, T_i, \bZ_i \biggr]\\
        &= \frac{\rho_t(\boldf( \bY_i, \bZ_i))}{\pi_t(\bZ_i)} \E\biggl[ M_c(L_i^{(1)}, L_i^{(2)}) - \mathbbm{1}\{L_i = c\}
        \mid \bY_i, T_i, \bZ_i \biggr]\\
        &= \frac{\rho_t(\boldf( \bY_i, \bZ_i))}{\pi_t(\bZ_i)} \E\biggl[ M_c(L_i^{(1)}, L_i^{(2)}) - \mathbbm{1}\{L_i = c\}
        \mid \bY_i\biggr] = 0
    \end{align*}
\end{itemize}
where the fourth line is by Assumptions \ref{labeling} and \ref{labeling2}, and the last line is the exact same logic as in the previous case.
\end{proof}

\newpage
\subsection{Proof of Theorem \ref{asympnormal2} (Asymptotic Normality)} \label{proof_asympnormal2}

For this proof, I assume the following regularity conditions.

\begin{assumption}[Regularity Conditions] 
\label{reg2} Let $c_1$ be a positive constant and $\delta_n$ be a sequence of positive constants approaching zero as the sample size $n$ increases.
\begin{enumerate}[label=(\alph*)]
    \item (Positivity) There exists a fixed constant $c_2 \in (0, 1/2]$ such that for all $t \in \{0,1\}$
    \begin{align*}
        \P( c_2 \leq \pi_t(\bZ_i) \leq 1 - c_2) = 1, \quad
        \P( c_2 \leq \hat \pi_t(\bZ_i) \leq 1 - c_2) = 1.
    \end{align*}
    In addition, the estimated surrogacy score is also almost surely bounded so that 
    \begin{align*}
        \P(\hat\rho_t(\hat \boldf(\bY_i, \bZ_i)) \leq 1) = 1
    \end{align*}
    for all $t \in \{0,1\}$.
    \item (Outcome Model Estimation) Outcome models with an estimated surrogate outcome satisfy
    \begin{align*}
        &\E\biggl[
        \biggl|\widehat{\E[\hat M_c(\tilde L_i^{(1)}, \tilde L_i^{(2)}) \mid \hat\boldf(\bY_i, \bZ_i)]} -  \E[\hat M_c(\tilde L_i^{(1)}, \tilde L_i^{(2)}) \mid \boldf(\bY_i, \bZ_i) ] \biggr|^2
        \biggr]^{\frac{1}{2}}
        \leq c_1\\
        &\E\biggl[
        \biggl|\widehat{\E[\hat M_c(\tilde L_i^{(1)}, \tilde L_i^{(2)}) \mid \hat \boldf(\bY_i, \bZ_i)]}  - \E[\hat M_c(\tilde L_i^{(1)}, \tilde L_i^{(2)}) \mid \boldf(\bY_i, \bZ_i)] \biggr|^2
        \biggr]^{\frac{1}{2}}
        \leq n^{-1/4}\delta_n
    \end{align*}
    for all $c \in \cL$.
    \item (Nuisance Function Estimation) Surrogate scores are consistently estimated so that
     \begin{align*}
        \E\biggl[
        \biggl|\hat{\rho}_t(\hat{\boldf}(\bY_i, \bZ_i)) - {\rho}_t({\boldf}(\bY_i, \bZ_i)) \biggr|^2
        \biggr]^{\frac{1}{2}} \leq c_1, &\quad  \E\biggl[
        \biggl|\hat{\rho}_t(\hat{\boldf}(\bY_i, \bZ_i)) - {\rho}_t({\boldf}(\bY_i, \bZ_i)) \biggr|^2
        \biggr]^{\frac{1}{2}} \leq n^{-1/4}\delta_n\\
        \E\biggl[
        \biggl|\hat{\pi}_t(\bZ_i) - \pi_t(\bZ_i) \biggr|^2
        \biggr]^{\frac{1}{2}} \leq c_1, &\quad \E\biggl[
        \biggl|\hat{\pi}_t(\bZ_i) - \pi_t(\bZ_i) \biggr|^2
        \biggr]^{\frac{1}{2}} \leq n^{-1/4}\delta_n\\
        \E\biggl[
        \biggl|\hat{\bar{m}}_{t,c}(\bZ_i) - \bar{m}_{t,c}(\bZ_i) \biggr|^2
        \biggr]^{\frac{1}{2}} \leq c_1, &\quad \E\biggl[
        \biggl|\hat{\bar{m}}_{t,c}(\bZ_i) - \bar{m}_{t,c}(\bZ_i) \biggr|^2
        \biggr]^{\frac{1}{2}}\leq n^{-1/4}\delta_n
    \end{align*}
    for all $t \in \{0,1\}$ and $c \in \cL$.
    \begin{comment}
    \item (Convergence Rate of Cross Product)
    \begin{align*}
        \E\biggl[
        \biggl|\hat{\pi}_t(\hat{\boldf}(\bY_i) - \pi_t(\boldf(\bY_i)) \biggr|^2
        \biggr]^{\frac{1}{2}} \cdot 
        \E\biggl[
        \biggl|\widehat{\E[\hat M_c(\tilde L_i^{(1)}, \tilde L_i^{(2)}) \mid \hat \boldf(\bY_i)]}  - \E[\hat M_c(\tilde L_i^{(1)}, \tilde L_i^{(2)}) \mid \boldf(\bY_i)] \biggr|^2
        \biggr]^{\frac{1}{2}}
        \leq n^{-1/2}\delta_n
    \end{align*}
    \end{comment}
    \item (Coder Classification Rate Estimation) Human coder classification rates $\P(\tilde L_i^{(j)}  = c \mid L_i)$ are estimated so that
    \begin{align*}
        &\E\biggl[
        \biggl|\widehat{\P(\tilde L_i^{(j)}  = c \mid L_i) } - \P(\tilde L_i^{(j)}  = c \mid L_i) \biggr|^2
        \biggr]^{\frac{1}{2}}
        \leq c_1, \\
        &\E\biggl[
        \biggl|\widehat{\P(\tilde L_i^{(j)}  = c \mid L_i) } - \P(\tilde L_i^{(j)}  = c \mid L_i) \biggr|^2
        \biggr]^{\frac{1}{2}}
        \leq n^{-1/4}\delta_n
    \end{align*}
    for all coders $j$ and all category levels $c \in \cL$.
\end{enumerate}
\end{assumption}
As in Section \ref{sec::method}, the required rates are standard in the literature, and various machine learning models can satisfy them. For the outcome model, the required convergence is evaluated with respect to the predicted surrogate outcome $\hat M_c(\tilde L_i^{(1)}, \tilde L_i^{(2)})$. This implies that Assumption \ref{reg2}(b) focuses on convergence based on these predicted surrogate outcomes. Since the estimation process involves two steps and treats the predicted surrogate outcome as fixed in the second stage, the stated condition aligns with the proposed estimation procedure and can be satisfied using a standard neural network architecture \citep{bauer_deep_2019, schmidt-hieber_nonparametric_2020, farrell_deep_2021}.

\begin{proof}
While the influence function used in Theorem \ref{asympnormal2} takes the mathematically equivalent form to Theorem \ref{asymp_normal}, the main difference is that the outcome model is learned with the predicted surrogate outcome and the regularity conditions in Assumption \ref{reg2} only have the convergence of the outcome model given the predicted outcomes. Therefore, I need to bound the $L^2$ norm of the difference between the estimated outcome model with the predicted surrogate outcome and the true outcome model. Then, notice that 
\begin{align*}
    &\E\biggl[\biggl| \widehat{\E[\hat M_c(\tilde L_i^{(1)}, \tilde L_i^{(2)}) \mid \hat \boldf(\bY_i, \bZ_i)]} - \E[M_c(\tilde L_i^{(1)}, \tilde L_i^{(2)}) \mid \boldf(\bY_i, \bZ_i)] \biggr|^2\biggr]^{\frac{1}{2}}\\
    &= \E\biggl[\biggl| \widehat{\E[\hat M_c(\tilde L_i^{(1)}, \tilde L_i^{(2)}) \mid \hat \boldf(\bY_i, \bZ_i)]} -\E[\hat M_c(\tilde L_i^{(1)}, \tilde L_i^{(2)}) \mid \boldf(\bY_i, \bZ_i)]\\
    &\quad\quad\quad + \E[\hat M_c(\tilde L_i^{(1)}, \tilde L_i^{(2)}) - M_c(\tilde L_i^{(1)}, \tilde L_i^{(2)}) \mid \boldf(\bY_i, \bZ_i)]\biggr|^2\biggr]^{\frac{1}{2}}\\
    &\leq \E\biggl[\biggl| \widehat{\E[\hat M_c(\tilde L_i^{(1)}, \tilde L_i^{(2)}) \mid \hat \boldf(\bY_i, \bZ_i)]} -\E[\hat M_c(\tilde L_i^{(1)}, \tilde L_i^{(2)}) \mid \boldf(\bY_i, \bZ_i)] \biggr|^2\biggr]^{\frac{1}{2}}\\
    &\quad\quad\quad + \E\biggl[\biggl| \E[\hat M_c(\tilde L_i^{(1)}, \tilde L_i^{(2)}) - M_c(\tilde L_i^{(1)}, \tilde L_i^{(2)}) \mid \boldf(\bY_i, \bZ_i)]\biggr|^2\biggr]^{\frac{1}{2}}\\
    &= o_p(n^{-1/4}) + \E\biggl[\biggl| \E[\hat M_c(\tilde L_i^{(1)}, \tilde L_i^{(2)}) - M_c(\tilde L_i^{(1)}, \tilde L_i^{(2)}) \mid \boldf(\bY_i, \bZ_i)]\biggr|^2\biggr]^{\frac{1}{2}}
\end{align*}
where the inequality in the third line is by the triangular inequality and the fourth line is by Assumption \ref{reg2}. Thus, I only need to show that the second term is asymptotically negligible at the same rate. Then,
\begin{align*}
    &\E\biggl[\biggl| \E[\hat M_c(\tilde L_i^{(1)}, \tilde L_i^{(2)}) - M_c(\tilde L_i^{(1)}, \tilde L_i^{(2)}) \mid \boldf(\bY_i, \bZ_i)]\biggr|^2\biggr]^{\frac{1}{2}}\\
    &\leq 
    \E\biggl\{ \E\biggl[ \biggl|\hat M_c(\tilde L_i^{(1)}, \tilde L_i^{(2)}) - M_c(\tilde L_i^{(1)}, \tilde L_i^{(2)}) \biggr|^2 \mid \boldf(\bY_i, \bZ_i)\biggr]\biggr\}^{\frac{1}{2}}\\
    &= \E\biggl[ \biggl|\hat M_c(\tilde L_i^{(1)}, \tilde L_i^{(2)}) - M_c(\tilde L_i^{(1)}, \tilde L_i^{(2)})\biggr|^2 \biggr]^{\frac{1}{2}}
\end{align*}
where the first inequality is by Jensen's inequality and the second line is by the law of iterated expectation. I thus only need to bound the $L^2$ convergence of surrogate outcome. Let
\begin{align*}
    g(X, u,v) = \frac{X - u}{v - u}, \quad X \in \{0,1\},\; 0 \leq u < v \leq 1.
\end{align*}
As $u < v$, there always exists a constant $c_3 > 0$ such that $v - u \geq c_3$. Then, by fixing $X$, the quotient rule yields
\begin{align*}
\begin{cases}
    \partial_u g &= \frac{-(u-v) - (X-u)(-1)}{(u-v)^2} = \frac{-(u-v) + (X-u)}{(u-v)^2} = \frac{X-v}{(u-v)^2}\\
    \partial_v g &= \frac{-(X-u)}{(u-v)^2} =  \frac{X-u}{(u-v)^2}
\end{cases}
\end{align*}
Hence, for any $(u,v)$, I have
\begin{align*}
    |\partial_u g| = \frac{|X-v|}{(v-u)^2} \leq \frac{1}{(v-u)^2} \leq \frac{1}{c_3^2}, \quad |\partial_v g| = \frac{|X-u|}{(v-u)^2} \leq \frac{1}{(v-u)^2} \leq \frac{1}{c_3^2}
\end{align*}
Let $\gamma(t) = [u_1, v_1]^\tp + t \cdot[u_2, v_2]^\tp$ and set $h(t) = g(\gamma(t))$. By mean value theorem, there exists $t^* \in (0,1)$ such that
\begin{align*}
    g(X, u_2, v_2) - g(X, u_1, v_1) = h(1) - h(0) = h'(t^*)
\end{align*}
Notice that by chain rule, $h'(t) = \nabla g(\gamma(t)) \cdot ([u_2, v_2]^\tp - [u_1, v_1]^\tp)$. Hence, by triangular inequality,
\begin{equation}
    \begin{aligned}
        |g(X, u_2, v_2) - g(X, u_1, v_1)| &\leq |\partial_u g(\gamma(t^*))| \cdot |u_2 - u_1| + |\partial_v g(\gamma(t^*))| \cdot |v_2 - v_1| \\
    &\leq \frac{1}{c_3^2}\biggl(|u_1 - u_2| + |v_1 - v_2|\biggr). \label{meanvalue}
    \end{aligned}
\end{equation}
Also, notice that
\begin{align*}
    |X- u| = 
    \begin{cases}
        u \quad (x=0)\\
        1-u \quad (x = 1)
    \end{cases}
    \leq 1
\end{align*}
and hence
\begin{align}
    |g(X,u,v)| = \frac{|X-u|}{v-u} \leq \frac{1}{v-u} \leq \frac{1}{c_3}. \label{inequality_g}
\end{align}
Now, let
\begin{align*}
    p_c^{(j)} = \P(\tilde L^{(j)}_i = c \mid L_i \neq c), \quad q_c^{(j)} = \P(\tilde L^{(j)}_i = c \mid L_i = c), \quad X^{(j)}_c = \mathbbm{1}\{\tilde L^{(j)}_i = c\}
\end{align*}
for all $j$ and $k \in \{1, \cdots, K\}$, and denote $\hat{p}_c^{(j)} = \widehat{\P(\tilde L^{(j)}_i = c \mid L_i \neq c)}$ and $\hat{q}_c^{(j)} = \widehat{\P(\tilde L^{(j)}_i = c \mid L_i = c)}$ as the predicted quantities. Using the derived inequality, I obtain
\begin{align*}
    &\biggl| \hat M_c(\tilde L_i^{(1)}, \tilde L_i^{(2)}) - M_c(\tilde L_i^{(1)}, \tilde L_i^{(2)}) \biggr|\\
    &=  \biggl| g\bigl(X_c^{(1)}, \hat{p}_c^{(1)}, \hat{q}_c^{(1)}\bigr) g\bigl(X_c^{(2)}, \hat{p}_c^{(2)}, \hat{q}_c^{(2)}\bigr) - g\bigl(X_c^{(1)}, {p}_c^{(1)}, {q}_c^{(1)}\bigr) g\bigl(X_c^{(2)}, {p}_c^{(2)}, {q}_c^{(2)}\bigr)  \biggr|\\
    &=  \biggl| g\bigl(X_c^{(1)}, \hat{p}_c^{(1)}, \hat{q}_c^{(1)}\bigr) \biggl(
    g\bigl(X_c^{(2)}, \hat{p}_c^{(2)}, \hat{q}_c^{(2)}\bigr) - g\bigl(X_c^{(2)}, {p}_c^{(2)}, {q}_c^{(2)}\bigr) \biggr) \\
    &\quad\quad\quad\quad + g\bigl(X_c^{(2)}, {p}_c^{(2)}, {q}_c^{(2)}\bigr) \biggl(
    g\bigl(X_c^{(1)}, \hat{p}_c^{(1)}, \hat{q}_c^{(1)}\bigr) - g\bigl(X_c^{(1)}, {p}_c^{(1)}, {q}_c^{(1)}\bigr) \biggr) 
    \biggr|\\
    &\leq \biggl| g\bigl(X_c^{(1)}, \hat{p}_c^{(1)}, \hat{q}_c^{(1)}\bigr) \biggl(
    g\bigl(X_c^{(2)}, \hat{p}_c^{(2)}, \hat{q}_c^{(2)}\bigr) - g\bigl(X_c^{(2)}, {p}_c^{(2)}, {q}_c^{(2)}\bigr) \biggr)\biggr| \\
    &\quad\quad\quad\quad + \biggl| g\bigl(X_c^{(2)}, {p}_c^{(2)}, {q}_c^{(2)}\bigr) \biggl(
    g\bigl(X_c^{(1)}, \hat{p}_c^{(1)}, \hat{q}_c^{(1)}\bigr) - g\bigl(X_c^{(1)}, {p}_c^{(1)}, {q}_c^{(1)}\bigr) \biggr) 
    \biggr|\\
    &\leq |g\bigl(X_c^{(1)}, \hat{p}_c^{(1)}, \hat{q}_c^{(1)}\bigr) |\biggl| 
    g\bigl(X_c^{(2)}, \hat{p}_c^{(2)}, \hat{q}_c^{(2)}\bigr) - g\bigl(X_c^{(2)}, {p}_c^{(2)}, {q}_c^{(2)}\bigr)\biggr| \\
    &\quad\quad\quad\quad + |g\bigl(X_c^{(2)}, {p}_c^{(2)}, {q}_c^{(2)}\bigr)| \biggl| 
    g\bigl(X_c^{(1)}, \hat{p}_c^{(1)}, \hat{q}_c^{(1)}\bigr) - g\bigl(X_c^{(1)}, {p}_c^{(1)}, {q}_c^{(1)}\bigr) 
    \biggr|\\
    &\leq \frac{1}{c_3}\biggl( \biggl| 
    g\bigl(X_c^{(2)}, \hat{p}_c^{(2)}, \hat{q}_c^{(2)}\bigr) - g\bigl(X_c^{(2)}, {p}_c^{(2)}, {q}_c^{(2)}\bigr)\biggr| + \biggl| 
    g\bigl(X_c^{(1)}, \hat{p}_c^{(1)}, \hat{q}_c^{(1)}\bigr) - g\bigl(X_c^{(1)}, {p}_c^{(1)}, {q}_c^{(1)}\bigr) 
    \biggr| \biggr)\\
    &\leq \frac{1}{c_3^3}\biggl(|\hat{p}_c^{(1)} - {p}_c^{(1)}| + |\hat{q}_c^{(1)} - {q}_c^{(1)}| + |\hat{p}_c^{(2)} - {p}_c^{(2)}| + |\hat{q}_c^{(2)} - {q}_c^{(2)}| \biggr)
\end{align*}
where the first inequality is by triangular inequality, the second is by Cauchy-Schwartz inequality, the third is by Equation~\eqref{inequality_g}, and the last is by Equation~\eqref{meanvalue}. Now, by Cauchy-Schwartz,
\begin{align*}
    (a_1 + a_2 + a_3 + a_4)^2 = 
    \biggl(\underbrace{\begin{bmatrix}
        a_1 & a_2 & a_3 & a_4
    \end{bmatrix}^\tp}_{:= \bm{a}} \underbrace{\begin{bmatrix}
        1 & 1 & 1 & 1
    \end{bmatrix}}_{:= \mathbf{1}^\tp}
    \biggr)^2 \leq \norm{\bm{a}}^2 \norm{\mathbf{1}}^2 = 4 (a_1^2 + a_2^2 + a_3^2 + a_4^2)
\end{align*}
for any $a_1, a_2, a_3, a_4$. Applying this, I obtain
\begin{align*}
    \biggl| \hat M_c(\tilde L_i^{(1)}, \tilde L_i^{(2)}) - M_c(\tilde L_i^{(1)}, \tilde L_i^{(2)}) \biggr|^2 &\leq \frac{4}{c_3^6}\biggl(|\hat{p}_c^{(1)} - {p}_c^{(1)}|^2 + |\hat{q}_c^{(1)} - {q}_c^{(1)}|^2 + |\hat{p}_c^{(2)} - {p}_c^{(2)}|^2 + |\hat{q}_c^{(2)} - {q}_c^{(2)}|^2 \biggr).
\end{align*}
By taking the expectation, I finally get
\begin{align*}
    &\E \biggl[ \biggl| \hat M_c(\tilde L_i^{(1)}, \tilde L_i^{(2)}) - M_c(\tilde L_i^{(1)}, \tilde L_i^{(2)}) \biggr|^2 \biggr]\\
    &\leq \frac{4}{c_3^6}\biggl\{ \E\biggl[|\hat{p}_c^{(1)} - {p}_c^{(1)}|^2\biggr]  + \E\biggl[|\hat{q}_c^{(1)} - {q}_c^{(1)}|^2\biggr] + \E\biggl[|\hat{p}_c^{(2)} - {p}_c^{(2)}|^2\biggr] + \E\biggl[|\hat{q}_c^{(2)} - {q}_c^{(2)}|^2 \biggr] \biggr\}\\
    &= \frac{4}{c_3^6} \biggl(o_p(n^{-1/2}) + o_p(n^{-1/2}) + o_p(n^{-1/2}) + o_p(n^{-1/2})\biggr) = o_p(n^{-1/2})
\end{align*}
where the last line is by using Assumption \ref{reg2}. Hence, combined with the first inequality I derived, I prove that the outcome model with the estimated surrogate outcomes achieves $L^2$ convergence under all the stated conditions at the convergence rate $n^{-1/4}$, which in the end implies the rate double robustness I have in Assumption \ref{reg}. The exact same proof applies to the part that the $L^2$ norm is bounded by the constant $c_1$. As the efficient influence function takes mathematically equivalent expressions, the rest of the proof follows directly from Theorem \ref{asymp_normal}.   
\end{proof}

\newpage

\newpage 

\section{Simulation Studies}\label{sec::simulation}
\subsection{Case 1: Perfect Human Annotations}
\subsubsection{Simulation setup}
I conduct simulation studies to evaluate the empirical performance of the proposed SRI estimator and to compare it with existing estimators. Because it is extremely difficult to generate realistic text data that are systematically influenced by predictors in a synthetic experiment, I instead generate high-dimensional vectors with dimensions comparable to standard sentence embeddings and allow them to depend on predictors through the data-generating process. Specifically, I generate data for $n=20,000$ units according to the following process:
\begin{equation}
    \begin{aligned}
            &T_i \sim \mathrm{Bernoulli}(0.5)\\
            &\text{For } j \in [1, \cdots, 768]\\
            &\quad \alpha_{1j} \sim \mathrm{Unif}[0.0, 1.0]\\
            &\quad Y_{ij} = \alpha_{1j} T_i + \epsilon_{ij}, \quad \epsilon_{ij} \sim \mathcal{N}(0, 1)\\
            &\tilde  L_i = \mathbbm{1}\{\sigma(1.0 + 0.2\cdot  \mathbf{1}^\intercal \bm{Y}_i) > 0.5\}.
    \end{aligned} \label{dgp}
\end{equation}
I then artificially generate machine-predicted labels $\hat{L}_i$ with accuracies ranging from 70\% to 90\% by randomly flipping the value of $\tilde  L_i$ according to the selected error percentage (e.g., when the accuracy of $\tilde  L_i$ s set to 90\%, 10\% of the values of $\hat{L}_i$ are randomly changed).

To implement the proposed SRI estimator, I use 2-fold cross-fitting with the neural network architecture illustrated in Figure~\ref{dragonnet}. The network consists of three shared layers with dimensions 100, 100, and 50, producing the shared representation $\boldf(\bR_i)$. This shared layer is then connected to two prediction heads—one for the outcome and one for the predictor of interest—each composed of two layers with dimensions 50 and 1, respectively. I use ReLU activations and solve the optimization problem in Equation~\ref{loss_main} with $\alpha = 1$ using the Adam optimizer, a learning rate of 0.00002, a maximum of 200 epochs, and a batch size of 256 \citep{kingma_adam_2017}. I reserve 20\% of the data for validation and apply early stopping with a patience of five epochs to prevent overfitting (i.e., training stops if the validation loss does not decrease for five consecutive epochs). Once the outcome model and surrogacy score are estimated, I compute the difference in outcome means using the estimating equation in Equation~\eqref{est_eq}. %I also implement the exact same procedure with the machine learning prediction-based augmented outcome, which is proposed in Section \ref{use_ml}.

As in the empirical application in Section~\ref{sec::application}, I evaluate the performance of the proposed estimator by comparing it with existing bias-correction methods, Prediction-Powered Inference (PPI, \citealt{angelopoulos2023prediction} and Design-Based Supervised Learning (DSL; \citealt{egami_neulips_2023, egami2024using}). For simplicity, I omit control variables $\bZ_i$ from this simulation. When $\bZ_i$ is included, the proposed estimator must additionally estimate the propensity score, which may introduce efficiency differences relative to alternative methods that do not require this step \citep{hirano_efficient_2003}. Given the simplicity of the data-generating process, this simulation design allows me to clearly assess the extent to which the imposed independence constraint improves efficiency.
 
\subsubsection{Simulation results}

\begin{figure}[t]
    \centering
    \includegraphics[width=1.0\linewidth]{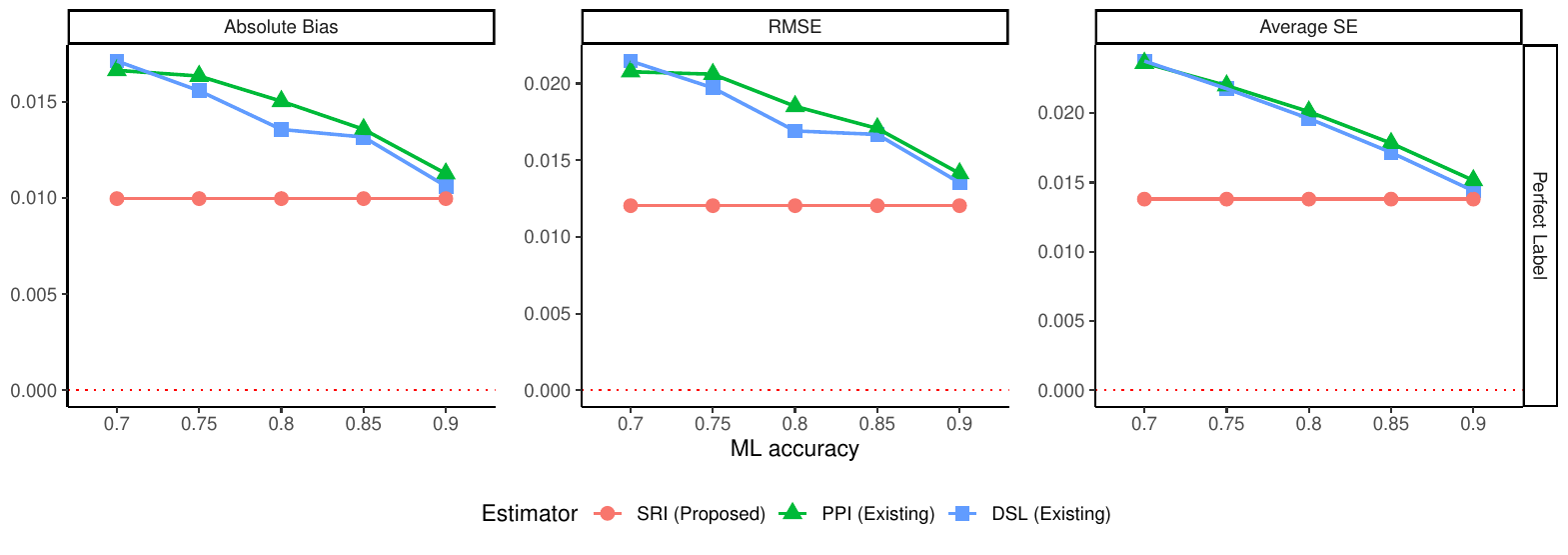}
    \caption{Performance comparison between the Surrogate Representation Inference (SRI, proposed) and the existing bias-correction algorithms that corrects measurement error from machine learning predictions (PPI; \citealt{angelopoulos2023prediction} and DSL; \citealt{egami_neulips_2023, egami2024using}). I use the human annotations with no mistakes, and vary the classification accuracy of machine learning predictions (from 0.70 to 0.90). All results are based on 200 Monte Carlo simulations.}
    \label{result}
\end{figure}

Figure~\ref{result} presents the results of the simulation studies with perfectly observed human-annotated labels based on 200 Monte Carlo replications. Numerical results are reported in Table~\ref{tab:result1}. 

The results show that the proposed SRI method outperforms existing bias-correction techniques across all levels of classification accuracy for the proxy labels $\hat{L}_i$. Although existing bias-correction approaches correct for prediction errors, they suffer from large root mean squared errors (RMSEs) and large standard errors, particularly when classification accuracy ranges from 70\% to 80\%. As a consequence, their estimated standard errors are also inflated. For example, at 80\% classification accuracy, the average standard error for the existing approach is 0.020, compared to 0.014 for the proposed SRI estimator. This difference indicates that even at moderately high classification accuracy levels, such as 80\%, the proposed estimator achieves a substantial reduction in estimation uncertainty.

Overall, these results confirm that the SRI estimator provides a useful alternative to existing bias-correction techniques, especially in settings where researchers only have access to machine-learning predictions with moderate classification accuracy.

\begin{table}[h]
    \centering
   \begin{tabular}{lccccc}
    \toprule
        & & & &  \multicolumn{2}{c}{95\% Conf. Int} \\
        & \multicolumn{1}{c}{ML Acc.}  
        & \multicolumn{1}{c}{Bias} 
        & \multicolumn{1}{c}{RMSE} 
        & \multicolumn{1}{c}{Coverage} 
        & \multicolumn{1}{c}{Avg. Length} \\
    \midrule
        Surrogate Representation Inference (SRI) 
        & -- 
        & 0.001 
        & 0.012 
        & 0.985 
        & 0.054 \\
    \midrule
        %Surrogate Representation Inference (SRI) & 0.70 & -0.006 & 0.022 & 0.955 & 0.085 \\
        %with Augmented Outcome & 0.75 & -0.003 & 0.019 & 0.965 & 0.079 \\
        % & 0.80 &  0.000 & 0.017 & 0.975 & 0.073 \\
        % & 0.85 &  0.003 & 0.017 & 0.935 & 0.064 \\
        % & 0.90 &  0.004 & 0.014 & 0.945 & 0.055 \\
    %\midrule
        Existing Approach (DSL) & 0.70 & -0.001 & 0.022 & 0.955 & 0.093 \\
         & 0.75 & -0.003 & 0.020 & 0.950 & 0.086 \\
         & 0.80 & -0.001 & 0.017 & 0.955 & 0.077 \\
         & 0.85 &  0.000 & 0.017 & 0.945 & 0.067 \\
         & 0.90 &  0.002 & 0.014 & 0.940 & 0.056 \\
    \midrule
        Existing Approach (PPI) & 0.70 & -0.002 & 0.021 & 0.970 & 0.093 \\
         & 0.75 & -0.003 & 0.021 & 0.950 & 0.086 \\
        & 0.80 & -0.002 & 0.019 & 0.960 & 0.079 \\
         & 0.85 &  0.000 & 0.017 & 0.945 & 0.070 \\
         & 0.90 &  0.002 & 0.014 & 0.935 & 0.060 \\
    \bottomrule
\end{tabular}
    \caption{Simulation Results with 200 Monte Carlo Trials with Perfect Human Annotations}
    \label{tab:result1}
\end{table}

\subsection{Case 2: Human Annotations with Non-Differential Errors}
\subsubsection{Simulation setup}\label{sec::sim_proximal_setup}
Next, I examine the validity of the proposed methodology in Section \ref{sec:proximal} under a setting in which researchers do not observe the true label $L_i$ and instead have access only to two noisy proxies, $\tilde L_i^{(1)}$ and $\tilde L_i^{(2)}$. I use the same data-generating process as in Equation~\eqref{dgp} with $n = 20{,}000$, but without observing the true human-annotated label $L_i$. Specifically, I artificially generate two imperfect human-annotated labels, $\tilde L_i^{(1)}$ and $\tilde L_i^{(2)}$, by randomly flipping the value of $L_i$ with classification accuracy ranging from 80\% to 90\%, in the same manner used to construct the machine-learning predictions $\hat L_i$. Because the classification accuracy of the imperfect labels is controlled and the errors are generated at random, all assumptions introduced in Section \ref{sec:proximal} are satisfied.

For estimation, I implement the proposed methodology using the 5-fold cross-fitting procedure described in Section \ref{sec:proximal}. In each fold, I first estimate the coder classification probabilities using half of the sample and then train the neural network on the remaining half. As in the previous case, the neural network consists of two hidden layers with identical dimensions leading to a shared representation layer $\boldf(\bR_i)$. This shared layer is connected to separate prediction heads for the predictor of interest, the human-annotated label, and the outcome for each category class. Each head adopts the same architecture as in the previous case. I use ReLU activations and solve the optimization problem in Equation~\eqref{loss_proximal} with $\beta = 1.0$, employing the Adam optimizer with the same hyperparameters and early-stopping criteria.

As before, I compare the proposed SRI estimator with existing bias-correction approaches, PPI and DSL, but in this setting I use noisy human-annotated labels in conjunction with machine-learning predictions. Because the two noisy human-annotated labels are artificially generated in the symmetric way, I use the first proxy as the human-annotated label for the existing approaches. For the machine-learning predictions, I again vary the classification accuracy from 70\% to 90\%. Since the ground-truth human-annotated labels are unobserved, all estimators except the proposed SRI estimator are expected to exhibit substantial bias and large RMSE.

\subsubsection{Simulation results}

\begin{figure}[t]
    \centering
    \includegraphics[width=1.0\linewidth]{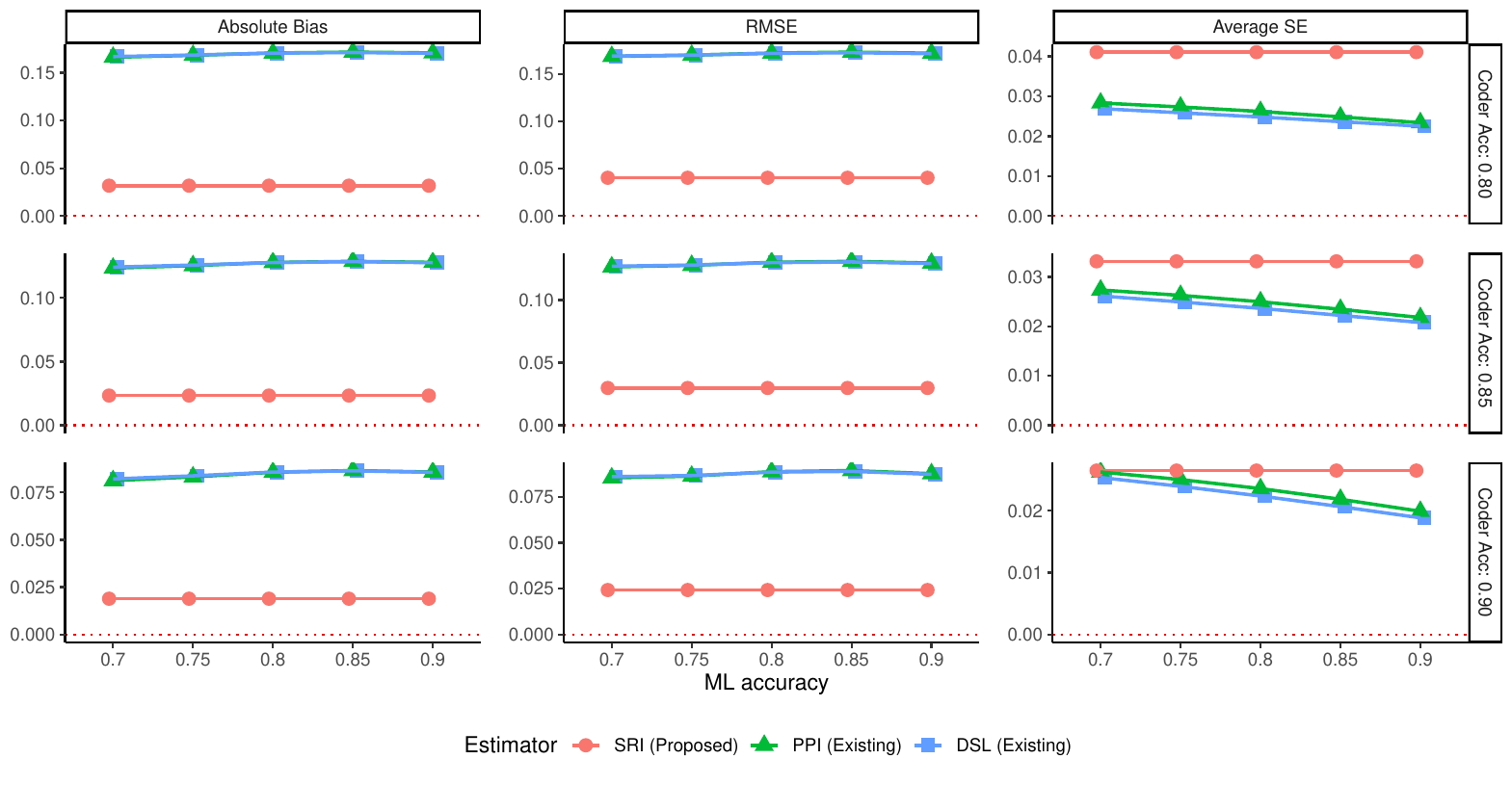}
    \caption{Performance comparison between the Surrogate Representation Inference (SRI, proposed) and the existing bias-correction algorithm that corrects measurement error from machine learning predictions (PPI; \citealt{angelopoulos2023prediction} and DSL; \citealt{egami_neulips_2023, egami2024using}). I use the human annotations with non-differential errors, and vary both the classification accuracy of machine learning predictions (from 0.70 to 0.90, x-axis) and the classification accuracy of human annotations (from 0.80 to 0.90, y-axis). All results are based on 200 Monte Carlo simulations.}
    \label{result2}
\end{figure}

Figure \ref{result2} presents the results of the simulation study based on 200 Monte Carlo trials using two human-annotated labels that contain non-differential errors. Numerical results are reported in Table~\ref{tab:result2}.

The results show that the proposed SRI methodology successfully corrects the bias introduced by noisy human annotations, yielding smaller absolute bias and RMSE than the existing bias-correction approaches across all levels of machine-learning classification accuracy. In contrast to the previous scenario, the performance of the existing bias-correction approaches does not improve as the quality of machine-learning predictions increases, because the human annotations contain unknown non-differential errors.

Across all settings, however, the SRI estimator generally exhibits larger standard errors than the existing bias-correction approaches, particularly when the classification accuracy of the human annotations is lower. This occurs because the proposed estimator infers the latent true labels from multiple noisy annotations; when those annotations are more error-prone, the resulting estimates become noisier. Therefore, although the SRI estimator is theoretically guaranteed to remain unbiased in the presence of random annotation errors, researchers should nevertheless strive to obtain high-quality human annotations to minimize estimation variance.

\begin{table}[]
    \centering
    \begin{tabular}{lcccccc}
\toprule
 & & & & & \multicolumn{2}{c}{95\% Conf. Int} \\
Method 
& Coder Acc.
& ML Acc.
& Bias 
& RMSE 
& Coverage 
& Avg. Length \\
\midrule
SRI & 0.90 & -- & 0.009 & 0.024 & 0.970 & 0.104 \\
 & 0.85 & -- & 0.012 & 0.030 & 0.975 & 0.130 \\
 & 0.80 & -- & 0.013 & 0.041 & 0.960 & 0.161 \\
\midrule
% ---------- Coder Acc = 0.90 ----------
DSL & 0.90 & 0.70 & -0.082 & 0.086 & 0.080 & 0.100 \\
 &  & 0.75 & -0.084 & 0.087 & 0.025 & 0.094 \\
 &  & 0.80 & -0.086 & 0.089 & 0.015 & 0.088 \\
 &  & 0.85 & -0.087 & 0.089 & 0.005 & 0.081 \\
 &  & 0.90 & -0.086 & 0.087 & 0.000 & 0.074 \\
\midrule
% ---------- Coder Acc = 0.85 ----------
DSL & 0.85 & 0.70 & -0.124 & 0.127 & 0.000 & 0.103 \\
 &  & 0.75 & -0.126 & 0.128 & 0.000 & 0.098 \\
 &  & 0.80 & -0.128 & 0.130 & 0.000 & 0.092 \\
 &  & 0.85 & -0.128 & 0.130 & 0.000 & 0.087 \\
 &  & 0.90 & -0.128 & 0.129 & 0.000 & 0.082 \\
\midrule
% ---------- Coder Acc = 0.80 ----------
DSL & 0.80 & 0.70 & -0.167 & 0.169 & 0.000 & 0.105 \\
& & 0.75 & -0.169 & 0.170 & 0.000 & 0.101 \\
& & 0.80 & -0.171 & 0.172 & 0.000 & 0.097 \\
& & 0.85 & -0.171 & 0.173 & 0.000 & 0.093 \\
& & 0.90 & -0.170 & 0.172 & 0.000 & 0.088 \\
\midrule
PPI & 0.90 & 0.70 & -0.081 & 0.085 & 0.110 & 0.103 \\
& & 0.75 & -0.083 & 0.086 & 0.045 & 0.098 \\
& & 0.80 & -0.086 & 0.089 & 0.025 & 0.093 \\
& & 0.85 & -0.087 & 0.089 & 0.030 & 0.086 \\
& & 0.90 & -0.086 & 0.088 & 0.005 & 0.078 \\

\midrule
PPI & 0.85 & 0.70 & -0.123 & 0.126 & 0.005 & 0.107 \\
& & 0.75 & -0.125 & 0.128 & 0.000 & 0.103 \\
& & 0.80 & -0.128 & 0.130 & 0.005 & 0.098 \\
& & 0.85 & -0.129 & 0.131 & 0.005 & 0.092 \\
& & 0.90 & -0.128 & 0.130 & 0.000 & 0.085 \\
\midrule
PPI & 0.80 & 0.70 & -0.166 & 0.169 & 0.000 & 0.111 \\
& & 0.75 & -0.168 & 0.170 & 0.000 & 0.107 \\
& & 0.80 & -0.171 & 0.172 & 0.000 & 0.103 \\
& & 0.85 & -0.172 & 0.173 & 0.000 & 0.098 \\
& & 0.90 & -0.171 & 0.172 & 0.000 & 0.092 \\
\bottomrule
\end{tabular}
    \caption{Simulation Results with 200 Monte Carlo Trials with Imperfect Human Annotations}
    \label{tab:result2}
\end{table}

\end{document}